\def\1{\bm{1}}
\newcommand{\hgrad}[1]{{\nabla \widetilde{f}}(#1)}
\def\indicator#1{{\mathbbm{1}\left\{ #1 \right\}}}
\def\pistar{\pi^*}
\def\pit{\pi_{\theta_t}}
\def\pis{\pi_{\theta_s}}
\def\pitheta{\pi_{\theta}}
\def\piz{\bar{\pi}_{z}}
\def\pitt{{\pi_{\theta_{t+1}}}}
\def\thetat{{\theta_t}}
\def\thetatt{{\theta_{t+1}}}
\def\thetas{{\theta_s}}
\def\thetass{{\theta_{s+1}}}
\def\etat{{\eta_t}}
\DeclarePairedDelimiter\parens{\lparen}{\rparen}
\DeclarePairedDelimiter\angles{\langle}{\rangle}
\renewcommand{\norm}[1]{\left\lVert#1\right\rVert}
\def\dpd#1{{\angles*{ #1 }}}
\def\eps{{\epsilon}}
\def\normsq#1{{\left\lVert#1\right\rVert^2_2}}
\def\rt{{\textnormal{t}}}
\DeclareMathAlphabet{\mathsfit}{\encodingdefault}{\sfdefault}{m}{sl}
\SetMathAlphabet{\mathsfit}{bold}{\encodingdefault}{\sfdefault}{bx}{n}
\def\gA{{\mathcal{A}}}
\def\gC{{\mathcal{C}}}
\def\gF{{\mathcal{F}}}
\def\gX{{\mathcal{X}}}
\def\sI{{\mathbb{I}}}
\def\sP{{\mathbb{P}}}
\def\sR{{\mathbb{R}}}
\newcommand{\E}{\mathbb{E}}
\newcommand{\R}{\mathbb{R}}
\newcommand{\softmax}{\mathrm{softmax}}
\newcommand{\Var}{\mathrm{Var}}
\DeclareMathOperator*{\argmax}{arg\,max}
\DeclareMathOperator*{\argmin}{arg\,min}
\def\cA{{\mathcal{A}}}
\def\cE{{\mathcal{E}}}
\def\Var{\text{Var}}
\newcommand{\z}{z}
\newcommand{\gradf}[1]{\nabla f(#1)}
\newcommand{\tht}{{\theta_{t}}}
\newcommand{\thtt}{{\theta_{t+1}}}
\newcommand{\zt}{{z_{t}}}
\newcommand{\ztt}{z_{t+1}}
\renewcommand{\gradf}[1]{\nabla f(#1)}
\newcommand{\LinSPG}{\texttt{Lin-SPG}}
\begin{document}

\doparttoc
\faketableofcontents
\parttoc

\author{\name Max Qiushi Lin \email maxqslin@gmail.com \\
       \addr  Simon Fraser University 
       \AND
       \name Jincheng Mei  \email jcmei@google.com \\
       \addr Google DeepMind 
       \AND
       \name Matin Aghaei \email matin\_aghaei@sfu.ca \\
       \addr Simon Fraser University
       \AND
       \name Michael Lu \email michael\_lu\_3@sfu.ca \\
       \addr Simon Fraser University
       \AND
       \name Bo Dai \email bodai@google.com \\
       \addr Google DeepMind
       \AND
       \name Alekh Agarwal  \email alekhagarwal@google.com \\
       \addr Google Research
       \AND
       \name Dale Schuurmans \email daes@ualberta.ca\\
       \addr Google DeepMind and University of Alberta
       \AND 
       \name Csaba Szepesv{\' a}ri \email szepi@google.com \\
       \addr  Google DeepMind and University of Alberta
       \AND
       \name Sharan Vaswani \email vaswani.sharan@gmail.com \\
       \addr Simon Fraser University
}

\title{Rethinking the Global Convergence of Softmax Policy Gradient with Linear Function Approximation}

\maketitle

\begin{abstract}
Policy gradient (PG) methods have played an essential role in the empirical successes of reinforcement learning. In order to handle large state-action spaces, PG methods are typically used with \textit{function approximation}. In this setting, the \textit{approximation error} in modeling problem-dependent quantities is a key notion for characterizing the global convergence of PG methods. We focus on Softmax PG with linear function approximation (referred to as $\LinSPG$) and demonstrate that the approximation error is irrelevant to the algorithm's global convergence even for the stochastic bandit setting. Consequently, we first identify the necessary and sufficient conditions on the feature representation that can guarantee the asymptotic global convergence of $\LinSPG$. Under these feature conditions, we prove that $T$ iterations of $\LinSPG$ with a problem-specific learning rate result in an $O(1/T)$ convergence to the optimal policy. Furthermore, we prove that $\LinSPG$ with any \emph{arbitrary} constant learning rate can ensure asymptotic global convergence to the optimal policy.
\end{abstract}

\textbf{Keywords:} Softmax Policy Gradient, Linear Function Approximation, Stochastic Bandits, Global Convergence, Approximation Error
\section{Introduction}
\label{sec:introduction}

Policy gradient (PG) methods~\citep{williams1991function,sutton1999policy,konda1999actor,kakade2001natural} are an important class of algorithms in reinforcement learning (RL). These algorithms have been the backbone of prominent successes of RL in real-world applications such as controlling robots~\citep{kober2013reinforcement} and aligning large language models~\citep{uc2023survey}. Therefore, a deeper understanding of these methods is essential for developing more principled and effective algorithms. 

Although the policy optimization objective is non-concave~\citep{agarwal2021theory}, PG methods have been shown to achieve global convergence in the simplified \textit{tabular} setting, where there is one parameter per state-action pair~\citep{agarwal2021theory,mei2020global,mei2022role,yuan2022general,yuan2022linear,mei2024stepsize,bhandari2021linear,lan2023policy}. However, it is impractical to parameterize the policy by explicitly enumerating over the states and actions. Hence, it is common to use \textit{function approximation} techniques (e.g., neural networks) to parameterize the policy~\citep{schulman2015trust,schulman2017proximal,haarnoja2018soft} and generalize across related states and actions. Consequently, understanding the behavior of PG methods under function approximation is crucial in practice.  

Throughout this paper, we will consider the classic Softmax PG method~\citep[Section 2.7]{sutton2018reinforcement}. As a representative policy-based method, Softmax PG lays the foundation for widely used RL methods, including REINFORCE~\citep{williams1992simple}, actor-critic \citep{konda1999actor,haarnoja2018soft}, TRPO~\citep{schulman2015trust}, and PPO~\citep{schulman2017proximal}. In the function approximation setting, \citet{sutton2000policy} analyzed the convergence of the standard Softmax PG method with a \textit{compatible function approximation}, i.e., one that can exactly represent the policy value function. Using a compatible function approximation ensures that the resulting policy gradient is unbiased, and Softmax PG can converge to a stationary point of the policy optimization objective~\citep{sutton2000policy}. However, when the exact policy values are not realizable by the function approximation, the \textit{approximation error} is typically used to characterize how well the function approximation can capture the relevant problem quantities. Using the concept of approximation error, global convergence results for PG methods~\citep{abbasi2019politex,agarwal2020pc,cayci2021linear,chen2022finite,alfano2022linear,asad2024fast} have been recently established in the following additive form,
\begin{align}
\label{eq:standard_global_convergence_reesult_additive_approximation_error}
    \text{suboptimality gap} \leq \text{optimization error} + \text{approximation error},
\end{align}
implying that if the approximation error is small, a diminishing optimization error leads to a small suboptimality gap. However, an additive bound like~\cref{eq:standard_global_convergence_reesult_additive_approximation_error} has the inherent weakness that the approximation error will never be zero if the function approximation is not able to exactly represent the desired quantities.

We show that such an approximation error perspective is overly demanding when attempting to characterize the global convergence of the Softmax PG method. Specifically, we focus on stochastic bandits~\citep{lattimore2020bandit}, and analyze the convergence of Softmax PG with linear function approximation (referred to as $\LinSPG$) with a fixed set of features. In particular, we make the following contributions. 

\vspace{0.5ex}
\noindent \textbf{Contribution 1}: In~\cref{sec:limitations}, we construct two examples with similar non-zero approximation error, and show that $\LinSPG$ can converge to the optimal policy for one example but fail to converge for the other. Furthermore, these examples are in the so-called \textit{exact} setting where the algorithm has complete knowledge of the mean rewards and there is no randomness in the updates. Consequently, we conclude that the failure of $\LinSPG$ is related to the feature representation and that the approximation error is not a meaningful metric for characterizing global convergence.

Given this result, we aim to answer the following question -- \textit{under what conditions on the features is $\LinSPG$ guaranteed to converge to the optimal policy?} 

\vspace{0.5ex}
\noindent \textbf{Contribution 2}: In~\cref{sec:deterministic_linear_bandits}, we consider $\LinSPG$ in the exact setting and identify the necessary and sufficient conditions (on the features) that guarantee its asymptotic global convergence. Intuitively, we show that guaranteeing global convergence requires that linear transformations (computed using the features) can retain the relative ordering of the rewards.  

\vspace{0.5ex}
\noindent \textbf{Contribution 3}: In~\cref{sec:stochastic_linear_bandits}, we consider the standard stochastic bandit setting with unknown noisy rewards and analyze the convergence of $\LinSPG$ with on-policy sampling~\citep{mei2021understanding,mei2022role,mei2023stochastic}. We prove that under the same feature conditions as in the exact setting, $\LinSPG$ with a problem-specific constant learning rate ensures monotonic improvement in the expected reward. We use this property to show that the resulting algorithm achieves almost-sure asymptotic global convergence to the optimal policy. Furthermore, we prove that $\LinSPG$ converges to the optimal policy at an $O(1 / T)$ rate, matching the analogous result in the tabular setting~\citep{mei2023stochastic}.

\vspace{0.5ex}
\noindent \textbf{Contribution 4}: The analysis in~\cref{sec:stochastic_linear_bandits} relies on a carefully chosen small-enough learning rate that helps exploit the objective's smoothness and control the noise in the stochastic policy gradient. One disadvantage of this approach is that the learning rate depends on unknown problem-dependent quantities, limiting the practical utility of the resulting algorithm. Recently,~\citet{mei2024stepsize} proved that tabular Softmax PG with any \emph{arbitrary} constant learning rate can achieve asymptotic global convergence in the stochastic bandit setting. 

In~\cref{sec:stochastic_linear_bandits_with_arbitrary_learning_rates}, we generalize this result to the linear function approximation setting. Specifically, we prove that under the same feature conditions and with any arbitrary constant learning rate, $\LinSPG$ is guaranteed to converge to the optimal policy. In addition, we prove that the average suboptimality asymptotically decreases at an $O(\ln(T) / T)$ rate.
\section{Problem Formulation}
\label{sec:problem_formulation}

We study the policy optimization problem for $K$-armed stochastic bandits~\citep{lattimore2020bandit} specified by a true mean reward vector $r \in \R^K$. In particular, for each action $a \in [K] \coloneqq \{ 1, 2, \dots, K \}$, $r(a) := \int_{-R_{\max}}^{R_{\max}} x \, P_a(x) \mu(dx)$, where $R_{\max} > 0$ is the reward range, $\mu$ is a finite measure over $[-R_{\max}, R_{\max}]$, and $P_a(x) \ge 0$ is the probability density function with respect to $\mu$. We define $R_a$ to be the reward distribution for the action $a$ defined by the density $P_a$ and the base measure $\mu$. For simplicity, we first introduce the following assumption.

\begin{assumption}[Unique True Mean Reward] \label{assumption:no_identical_arms}
For all $i, j \in [K]$, if $i \neq j$, $r(i) \neq r(j)$.
\end{assumption}
\cref{assumption:no_identical_arms} ensures that the mean rewards for all actions are distinct, thus guaranteeing a unique optimal action. This assumption has been widely used by existing works~\citep{mei2024ordering,mei2024stepsize} to ensure convergence to strict one-hot policies. Moreover, assuming a unique optimal action simplifies the formulation of subsequent feature-related assumptions. We believe that our results would continue to hold without~\cref{assumption:no_identical_arms}.  

The objective is to find a parametric policy $\pi_\theta$ that maximizes the expected reward:
\begin{align}
\label{eq:expected_reward_maximization}
    \sup_{\theta \in \sR^d} \dpd{\pi_\theta, r} \,,
\end{align}
where $\theta \in \sR^d$ is the parameter to be learned  and $\pi_\theta = \softmax(X \theta)$ is referred to as a log-linear policy~\citep{agarwal2021theory,yuan2022linear}. Specifically, for each action $a \in [K]$, the policy can be represented as
\begin{align}
\label{eq:log_linear_policy}
    \pi_{\theta}(a) = \text{softmax}(X\theta)(a) = \frac{ \exp( [X \theta](a) ) }{ \sum_{a^\prime \in [K]}{ \exp( [X \theta](a^\prime) ) } } = \frac{ \exp( \langle x_a, \theta \rangle )}{ \sum_{a^\prime \in [K]}{ \exp( \langle x_{a^\prime}, \theta \rangle)  } } \,,
\end{align}
where $X \in \sR^{K \times d}$ ($d < K$) is the given feature matrix and $x_a \in \sR^d$ is the feature vector corresponding to arm $a$. We also define the logits as $z_\theta \coloneq X \theta \in \R^K$. With some abuse of notation, the policy can be equivalently expressed in terms of the logits, i.e., $\pi_{\theta} = \pi_{z_\theta} \coloneq \softmax(z_\theta)$.

There are two major difficulties with the policy optimization problem in~\cref{eq:expected_reward_maximization}. First, due to the softmax transform,~\cref{eq:expected_reward_maximization} is a non-concave maximization problem w.r.t. $\theta$~\citep[Proposition 1]{mei2020global}. Second, since $d < K$, both $\pi_\theta$ and $X \theta$ are restricted to low-dimensional manifolds, implying that some specific policies and rewards can be unrealizable by the linear function approximation. In particular, the parametric log-linear policy $\pi_\theta = \softmax(X \theta)$ cannot well approximate every policy in the $K$-dimensional probability simplex, and the logit $z_\theta \in \sR^K$ might not well approximate the true mean reward $r \in \sR^K$. 

\vspace{1ex}

\noindent \textbf{Notation.} Without the loss of generality, we assume $r(1) > r(2) > \cdots > r(K)$ as ties between distinct actions cannot occur under~\cref{assumption:no_identical_arms}. The optimal action $a^\star$ is the one with the largest true mean reward, i.e., $a^\star \coloneqq \arg\max_{a} r(a)$. Throughout, we use $r(1)$ and $r(a^\star)$ interchangeably, and note that the optimal policy $\pi^*$ assigns all its probability mass to action $a^\star$, i.e. $\pi^*(a^\star) = 1$ and $\pi^*(a) = 0$ for all $a \neq a^\star$. Also, under~\cref{assumption:no_identical_arms}, we can define the non-zero reward gap as $\Delta \coloneq \min_{i, j} \abs{r(i) - r(j)} > 0$. Besides, we denote $\lambda_{\max} (M)$ (resp., $\lambda_{\min} (M)$) as the largest (resp., smallest) eigenvalue of any square matrix $M$.

\section{Limitations of Approximation Error in Characterizing Convergence}
\label{sec:limitations}

A common first step in characterizing the convergence of PG methods~\citep{agarwal2021theory,mei2020global} is to consider the \textit{exact} setting, where the true rewards are known (\cref{sec:det-spg}). In~\cref{subsec:approx_error_achieveable,subsec:approx_error_irrelevant}, we show that even for this simple setting, the approximation error is not a useful structural measure to characterize the global convergence of Softmax PG with linear function approximation (referred to as $\LinSPG$).   

\subsection{$\LinSPG$ in the Exact Setting}
\label{sec:det-spg}
$\LinSPG$ is an instantiation of gradient ascent, which updates the learnable parameter by using the gradient calculated by the chain rule:
\begin{align*}
    \frac{d \, \dpd{\pi_{\theta_t}, r}}{d \theta_t} = \frac{d \, (X \theta_t)}{d \theta_t} \left( \frac{d \, \pi_{\theta_t}}{d \, (X \theta_t)} \right)^\top \frac{d \ \dpd{\pi_{\theta_t}, r}}{d \pi_{\theta_t}} = X^\top ( \text{diag}{(\pi_{\theta_t})} - \pi_{\theta_t} \pi_{\theta_t}^\top ) \ r.
\end{align*}
Since the rewards are assumed to be known in the exact setting, the above gradient can be calculated exactly.~\cref{alg:det_spg} gives the pseudo-code for the resulting algorithm.

\begin{algorithm}[H]
    \centering
    \caption{$\LinSPG$ in the Exact Setting}
    \label{alg:det_spg}
    \begin{algorithmic}
        \STATE {\bfseries input:} Initial parameters $\theta_1 \in \R^d$, learning rate $\eta > 0$
        \FOR{$t = 1, 2, \cdots, T$}
            \STATE $\theta_{t+1} = \theta_{t} + \eta \, X^\top ( \text{diag}{(\pi_{\theta_t})} - \pi_{\theta_t} \pi_{\theta_t}^\top ) \ r$
        \ENDFOR
        \STATE {\bfseries return:} Final policy $\pi_{\theta_{T+1}} = \softmax(X \theta_{T+1})$
    \end{algorithmic}
\end{algorithm}

The convergence results of PG methods with linear function approximation are commonly expressed in terms of the approximation error~\citep{abbasi2019politex,abbasi2019exploration,agarwal2020pc,cayci2021linear,chen2022finite,alfano2022linear,asad2024fast}. The approximation error captures the expressivity of the feature matrix and is defined as: 
\begin{align}
\eps_{\text{approx}} \coloneq \min_{w \in \R^d} \norm{Xw - r} \,.
\label{eq:approx-error-def}
\end{align}

In the special case when $d = K$ and $X = \mathbf{I}_K$, we have $\eps_{\text{approx}} = 0$. However, in general, even with linearly realizable rewards (zero approximation error), establishing the global convergence of $\LinSPG$ is an open question~\citep{agarwal2021theory}. One intuitive reason why this is difficult is that, compared to the regression-based updates of natural policy gradient~\citep{kakade2001natural} with linear function approximation~\citep{yuan2022linear,alfano2022linear}, the gradient update in $\LinSPG$ is less directly connected to the concept of approximation error.


In the next section, we specify problem instances with comparable approximation errors that result in vastly different convergence behavior of $\LinSPG$. In particular, we demonstrate that zero approximation error is not a necessary condition for global convergence. 

\subsection{Global Convergence is Achievable with Non-zero Approximation Error}
\label{subsec:approx_error_achieveable}

We consider two concrete scenarios, each with $4$ actions and $2$-dimensional feature vectors describing each action. Since $d < K$, we note that not every policy is realizable using the resulting log-linear policy parametrization.
\begin{example}
\label{eg:first_example}
$K = 4$, $d = 2$, $X^\top = \begin{bmatrix} 0 & -1 & 0 & 2 \ \vspace{0.5ex}\\
    -2 & 0 & 1 & 0 \ \end{bmatrix}$ and $r = \left(9, 8, 7, 6 \right)^\top$. The approximation error is $\epsilon_{\text{approx}} = \min\limits_{w \in \sR^d}{ \left\| X w - r \right\|_2} = \big\| X \left( X^\top X \right)^{-1} X^\top r - r \big\|_2 = \sqrt{202.6} \approx 14.2338$.
\end{example}
Note that the approximation error is larger than any suboptimality gap, i.e., for any policy $\pi$, $\dpd{\pi^* - \pi, r} \le 3 < \epsilon_{\text{approx}}$, where $\pi^* = \argmax_{\pi \in \Delta_K} \dpd{\pi, r}$. Despite the non-zero approximation error,~\cref{alg:det_spg} can be shown to reach a global maximum.

\begin{restatable}{proposition}{convergencewithapproxmiationerror}
\label{prop:softmax_pg_npg_global_convergence_first_example}
With a specific constant learning rate $\eta > 0$ and any initialization $\theta_1 \in \sR^d$,~\cref{alg:det_spg} guarantees that $\lim_{t \to \infty} \pit(a^*) = 1$ on~\Cref{eg:first_example}.
\end{restatable}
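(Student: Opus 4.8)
The plan is to follow the induced dynamics on the logits $z_t = X\theta_t \in \R^4$, which always lie in the two-dimensional column space of $X$. Writing $\theta_t = (a_t, b_t)$, the logits are $z_t = (-2b_t,\, -a_t,\, b_t,\, 2a_t)$, and the update becomes $z_{t+1} = z_t + \eta\, XX^\top g_t$, where $g_t = (\mathrm{diag}(\pi_{\theta_t}) - \pi_{\theta_t}\pi_{\theta_t}^\top) r$ has entries $g_t(a) = \pi_{\theta_t}(a)\bigl(r(a) - \langle \pi_{\theta_t}, r\rangle\bigr)$. First I would record the standard consequences of smooth gradient ascent: for the stated small-enough $\eta$, the objective $J(\theta) = \langle \pi_\theta, r\rangle$ is non-decreasing and bounded above by $r(1)$, hence converges, and the ascent lemma yields $\|X^\top g_t\| = \|\nabla J(\theta_t)\| \to 0$.

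The engine of the argument is an ordering (comonotonicity) identity: for any $w \in \R^2$, $\langle \theta_{t+1} - \theta_t, w\rangle = \eta\,\langle X^\top g_t, w\rangle = \eta\, \Cov_{\pi_{\theta_t}}(r,\, Xw)$. Whenever $Xw$ is comonotone with $r$ (ordered the same way), this covariance is nonnegative, so $\langle \theta_t, w\rangle$ is non-decreasing. Taking the strictly comonotone direction $w^\star = (-1,-1)$, for which $Xw^\star = (2,1,-1,-2)$ shares exactly the ordering of $r$, gives $\Cov_{\pi_{\theta_t}}(r, Xw^\star) = \langle X^\top g_t, w^\star\rangle \to 0$; since the covariance of two strictly comonotone functions vanishes only at a point mass, every limit point of $\pi_{\theta_t}$ is a vertex $e_a$ of the simplex. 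Because the increments $\theta_{t+1} - \theta_t \to 0$, the limit set of $\pi_{\theta_t}$ is connected, and a connected subset of $\{e_1,e_2,e_3,e_4\}$ is a single vertex; hence $\pi_{\theta_t}$ converges to exactly one vertex.

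It remains to eliminate the three suboptimal vertices. For $e_3$ and $e_4$ I would use the two extreme comonotone directions $w^{(1)} = (-2,-1)$ and $w^{(2)} = (-1,-2)$, giving $Xw^{(1)} = (2,2,-1,-4)$ and $Xw^{(2)} = (4,1,-2,-2)$, both weakly comonotone with $r$: the quantities $-2a_t - b_t$ and $-a_t - 2b_t$ are non-decreasing, whereas convergence to $e_3$ forces $b_t \to +\infty$ and convergence to $e_4$ forces $a_t \to +\infty$, each driving one of these quantities to $-\infty$, a contradiction. The delicate case is the second-best vertex $e_2$, which the comonotone cone (whose directions all have nonpositive first coordinate) cannot separate. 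Here I would analyze the logit gap $p_t = z_2 - z_1 = 2b_t - a_t$ directly: $(p_{t+1}-p_t)/\eta = g_2 - 2g_4 - 4g_1 + 2g_3$, and a first-order expansion near $e_2$ gives $(p_{t+1}-p_t)/\eta \approx -5\pi_{\theta_t}(1) - \pi_{\theta_t}(3) + 6\pi_{\theta_t}(4)$. The crucial point is that realizability — the column-space constraints $z_1 + 2z_3 = 0$ and $2z_2 + z_4 = 0$ — forces $\pi(1)\pi(3)^2 = \pi(2)^2\pi(4)$, so near $e_2$ the term $\pi_{\theta_t}(4)$ is third-order small relative to $\pi_{\theta_t}(1), \pi_{\theta_t}(3)$; thus $p_{t+1} - p_t < 0$ in a neighborhood of $e_2$, contradicting the requirement $p_t \to +\infty$ that $\pi_{\theta_t} \to e_2$ would entail. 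Hence the limit vertex is $e_1 = \pi^*$, i.e. $\lim_{t\to\infty}\pi_{\theta_t}(1) = 1$.

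The main obstacle is precisely this last step: the monotonicity/comonotonicity machinery that cleanly handles $e_3, e_4$ and excludes interior limit points does not distinguish $e_2$, so one must descend to a local, example-specific analysis of the logit-gap dynamics and exploit the realizability constraint to show the would-be attracting gap actually shrinks. A secondary technical point is that the optimizer sits at infinity (here $b_t \to -\infty$), so every convergence claim must be phrased for the policies $\pi_{\theta_t}$, which live in the compact simplex, rather than for the unbounded iterates $\theta_t$.
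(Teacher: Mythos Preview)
Your proposal is correct but takes a genuinely different route from the paper. The paper's proof is very short: it simply verifies that \Cref{eg:first_example} satisfies both the reward-ordering-preservation assumption (with $w=(-1,-1)^\top$, giving $Xw=(2,1,-1,-2)^\top$) and the general feature condition (\cref{assumption:general_feature_conditions}), and then invokes the main $K$-arm convergence theorem (\cref{theorem:deterministic_linear_bandits}). Your covariance/comonotonicity argument for convergence to a single vertex is essentially the content of the paper's \cref{lemma:monotonicity_and_onehotpolicy}, but your elimination of the suboptimal vertices differs. For $e_2$ in particular, the paper (via \cref{theorem:deterministic_linear_bandits}) shows $\pi_{\theta_t}(1)/\pi_{\theta_t}(k)\to\infty$ for every $k\geq 3$ by lower-bounding the ``progress'' quantity $P_t=\sum_i\langle x_i-x_2,\,x_1-x_k\rangle\,\pi_{\theta_t}(i)\,(r(i)-\langle\pi_{\theta_t},r\rangle)$ using \cref{assumption:general_feature_conditions}, and then deduces $\langle\pi_{\theta_t},r\rangle>r(2)$; your local logit-gap analysis exploiting the realizability identity $\pi(1)\pi(3)^2=\pi(2)^2\pi(4)$ is a clean and more elementary alternative for this specific $X$, but it is ad hoc and does not point toward the general sufficient condition that the paper's approach extracts. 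One small remark: your elimination of $e_3$ is slightly terse --- $b_t\to+\infty$ alone does not force $-a_t-2b_t\to-\infty$, since $a_t$ could compensate; you also need the logit gap $z_t(3)-z_t(2)=a_t+b_t\to+\infty$, after which $-a_t-2b_t=-(a_t+b_t)-b_t\to-\infty$ contradicts its monotonicity.
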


The complete proof is provided in~\cref{appendix:characterizing_convergence}. To further illustrate the intuition behind~\cref{prop:softmax_pg_npg_global_convergence_first_example}, we visualize the optimization landscape and show the expected reward over the parameter space in~\cref{fig:examples_landsacpe_a}. Specifically, for each $\theta \in \sR^2$, we calculate the expected reward $\dpd{\pi_{\theta}, r}$ for the log-linear policy $\pi_\theta$ defined in~\cref{eq:log_linear_policy} and color the optimization landscape with respect to its value.
\begin{figure}[htbp]
  \centering
  \begin{subfigure}[b]{0.42\textwidth}
    \includegraphics[width=\textwidth]{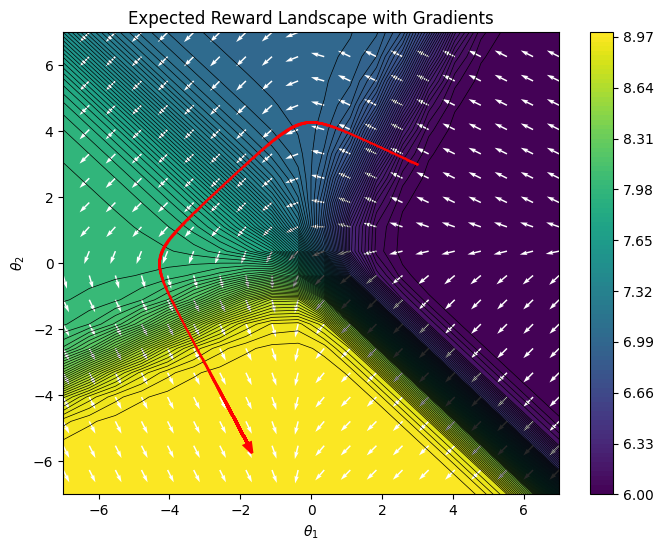}
    \caption{\cref{alg:det_spg} running on~\cref{eg:first_example}}
    \label{fig:examples_landsacpe_a}
  \end{subfigure}
  \begin{subfigure}[b]{0.42\textwidth}
    \includegraphics[width=\textwidth]{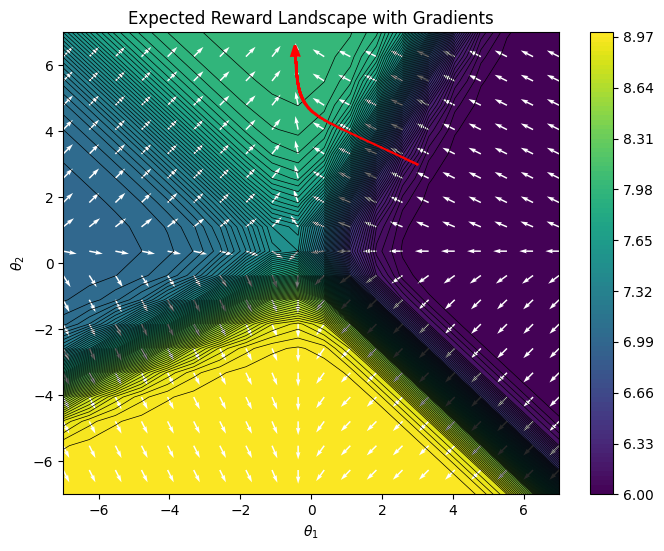}
    \caption{\cref{alg:det_spg} running on~\cref{eg:second_example}}
    \label{fig:examples_landsacpe_b}
  \end{subfigure}
  \caption{Visualization of the optimization landscape for~\cref{eg:first_example} (left) and~\cref{eg:second_example} (right). These two examples share the same reward vector but have different features, which leads to different optimization landscapes. Starting at the same initialization, the red arrows demonstrate the optimization trajectories of running \cref{alg:det_spg} using the same learning rate. Despite both examples having similar approximation error, $\LinSPG$ can converge to the optimal action in~\cref{eg:first_example} but fails to do so in~\cref{eg:second_example}.}
\end{figure}
We run $\LinSPG$ on \cref{eg:first_example} with $\theta_1 = (3, 3)^\top$. In~\cref{fig:examples_landsacpe_a}, we show the optimization trajectory for $10^4$ iterations of $\LinSPG$ with a learning rate of $\eta = 0.2$. We observe that the expected reward $\dpd{\pi_{\theta_t}, r} \to 9 = r(a^*)$, showcasing the global convergence to the optimal policy. In summary,~\Cref{eg:first_example} shows that $\LinSPG$ is able to achieve global convergence on problem instances with non-zero approximation error. 

\subsection{Global Convergence is Irrelevant to Non-zero Approximation Error}
\label{subsec:approx_error_irrelevant}

We construct an alternative problem instance that has a similar approximation error as in~\cref{eg:first_example}, but we show that $\LinSPG$ fails to converge to the optimal policy. Hence, we conclude that the approximation error is not able to correctly characterize the scenarios where $\LinSPG$ leads to global convergence.

\begin{example}
\label{eg:second_example}
$K = 4$, $d = 2$, $X^\top = \begin{bmatrix} 0 & 0 & -1 & 2 \ \vspace{0.5ex}\\
    -2 & 1 & 0 & 0 \ \end{bmatrix}  \in \sR^{d \times K}$, and $r = \left(9, 8, 7, 6 \right)^\top \in \sR^K$. The approximation error is $\big\| X \left( X^\top X \right)^{-1} X^\top r - r \big\|_2 = \sqrt{205} \approx 14.3178$.
\end{example}

The only difference between \cref{eg:first_example,eg:second_example} is that the second and third columns of $X^\top$ have been exchanged. The approximation error remains similar to that of~\cref{eg:first_example}. However, as shown in~\cref{fig:examples_landsacpe_b}, using the same initialization and learning rate, $\dpd{\pi_{\theta_t}, r} \to 8 = r(2) < r(a^*)$, demonstrating convergence to a suboptimal policy. We note that these examples can be rescaled to have the same approximation errors while retaining the same convergence behavior of $\LinSPG$.

The above examples demonstrate the limitations of using the approximation error and motivate the following question: \textit{What are the sufficient and necessary conditions that characterize the global convergence of $\LinSPG$?}
\section{Global Convergence: Exact Setting}
\label{sec:deterministic_linear_bandits}

In this section, we analyze the conditions under which $\LinSPG$ achieves global convergence to the optimal policy in the exact setting. Specifically, our objective is to characterize the feature and reward structure required to ensure the global convergence of~\cref{alg:det_spg}. 

To gain some intuition, consider~\cref{eg:first_example}, where $\LinSPG$ achieves global convergence. From the optimization landscape shown in~\cref{fig:examples_landsacpe_a}, when following the gradient, there appears to be a monotonic path to the optimal policy from any initialization point. Intuitively, this arises because the rewards of the actions seem to be nicely ``ordered''. 
For example, starting from $\theta_1 = (6,8)^\top$ such that $\dpd{\pi_{\theta_1}, r} \approx 6$, $\LinSPG$ can improve its expected reward eventually to $\dpd{\pi_{\theta_t}, r} \approx 7$ since there exists a suboptimal plateau with higher reward $7$ right beside the lowest plateau with reward $6$. Next, $\LinSPG$ continues to improve its expected reward eventually to $\dpd{\pi_{\theta_t}, r} \approx 8$ by ``climbing'' toward another neighboring plateau with a higher reward. Finally, this process ends with the algorithm successfully reaching the optimal plateau with reward $r(a^\star) = 9$.  In contrast, in \cref{eg:second_example}, as shown in \cref{fig:examples_landsacpe_b}, $\LinSPG$ gets stuck on a bad plateau with a local maximum reward of $8$. Visually, $\LinSPG$ stops improving its expected reward on this suboptimal plateau, because it is ``surrounded'' by two lower plateaus with rewards $6$ and $7$. This breaks the nice ``ordering'' of the expected reward landscape and traps the gradient ascent trajectory on a suboptimal plateau from which there is no monotonic ascent to global optimality.

Based on the above intuition, we conjecture that an ``ordering structure'' between the rewards is a key property behind the global convergence of $\LinSPG$. We instantiate such a conjecture for~\cref{eg:first_example,eg:second_example}. In particular, we determine whether a linear transformation computed using the feature matrix $X \in \sR^{K \times d}$ can preserve the same action ordering as the original reward vector $r\in \sR^K$. 

For instance, in~\cref{eg:first_example}, note that with $w = (-1, -1)^\top \in \sR^d$, we have $r^\prime \coloneqq X w = \left(2, 1, -1, -2 \right)^\top \in \sR^K$,
which preserves the ordering of $r = (9,8,7,6)$, meaning that for all $i, j \in [K]$, $r(i) > r(j)$ if and only if $r^\prime(i) > r^\prime(j)$. In this example, as stated in \cref{prop:softmax_pg_npg_global_convergence_first_example}, $\LinSPG$ can converge to the optimal action. In contrast, for \cref{eg:second_example}, it is impossible to find any $w \in \sR^d$ such that $X \, w$ preserves the ordering of the rewards $r$. 
To see why, consider any $w = (w(1), w(2))^\top$ and note that  $r^\prime \coloneqq X w = \left( -2 \cdot w(2), w(2), - w(1), 2 \cdot w(1) \right)^\top$. To preserve the reward order, we require both $-2 \cdot w(2) > w(2)$ (which would imply $w(2) < 0$) and $- w(1) > 2 \cdot w(1)$ (which would imply $w(1) < 0$).  
Together, these two conditions imply that $w(2) < 0 < - w(1)$, which means that $r^\prime(2) < r^\prime(3)$, and hence this reverses the order of the second and third actions. As shown in~\cref{eg:second_example}, this is an instance where PG can fail to reach a global optimum. 

We formalize the above intuition and introduce the following assumption.
\vspace{-1ex}
\begin{assumption}[Reward Ordering Preservation {\citep{mei2024ordering}}] \label{assumption:reward_ordering_preservation}
There exists a $w \in \R^d$ such that $r^\prime = Xw$ preserves the ordering of the reward $r$, i.e., $r^\prime(i) > r^\prime(j)$ if and only if $r(i) > r(j)$.
\end{assumption}

\cref{assumption:reward_ordering_preservation} implies that the feature representation is expressive enough for a linear transformation to retain the relative ordering of the true rewards. This condition is weaker than requiring the exact realization of the true rewards and instead focuses on preserving their relative order. Under the aforementioned assumptions, we can choose a specific learning rate for $\LinSPG$ and establish a monotonic improvement guarantee on the expected reward. The complete proof is provided in \cref{appendix:deterministic_linear_bandit_additional_lemmas}.

\begin{restatable}{lemma}{monotonicityandonehotpolicy}
\label{lemma:monotonicity_and_onehotpolicy}
Under~\cref{assumption:no_identical_arms,assumption:reward_ordering_preservation}, \cref{alg:det_spg} with the learning rate
\begin{equation}
\label{eq:step_size_for_deterministic_bandits}
0 < \eta < \frac{4}{9 \, R_{\max} \, \lambda_{\max}(X^\top X)},
\end{equation}
ensures that
\begin{enumerate}[nolistsep,leftmargin=*,label=(\roman*)]
    \item For all finite $t \geq 1$, $\dpd{\pitt, r} > \dpd{\pit, r}$.
    \item There exists an action $a \in [K]$ such that $\lim_{t \to \infty} \pit(a) = 1$.
\end{enumerate}
\end{restatable}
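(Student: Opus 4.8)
The plan is to treat $f(\theta) \coloneqq \dpd{\pi_\theta, r}$ as a smooth (but non-concave) objective and run the standard gradient-ascent template, with \cref{assumption:reward_ordering_preservation} supplying the two nonstandard ingredients: strict improvement at every finite iterate (part (i)) and forced concentration of the policy (part (ii)). First I would establish that $f$ is $\beta$-smooth in $\theta$ with a constant $\beta$ proportional to $R_{\max}\,\lambda_{\max}(X^\top X)$: the logit-space map $z \mapsto \dpd{\softmax(z), r}$ has a Hessian bounded in operator norm by a multiple of $R_{\max}$, and the chain rule through the linear layer $\theta \mapsto X\theta$ contributes a factor $\lambda_{\max}(X^\top X)$. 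The stated range for $\eta$ is exactly $(0, 2/\beta)$, so the ascent lemma
\begin{equation*}
\dpd{\pitt, r} \ge \dpd{\pit, r} + \eta\Big(1 - \tfrac{\beta \eta}{2}\Big)\,\norm{\nabla f(\thetat)}^2
\end{equation*}
has a strictly positive coefficient on $\norm{\nabla f(\thetat)}^2$, which already gives the non-strict version of (i).

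To upgrade this to the strict inequality, I must show $\nabla f(\thetat) \neq 0$ at every finite $t$; this is the only place the feature structure enters. Writing the logit-space gradient componentwise as $[\nabla_z f](a) = \pit(a)\,\big(r(a) - \dpd{\pit, r}\big)$ and letting $r^\prime = Xw$ be the order-preserving vector from \cref{assumption:reward_ordering_preservation}, the chain rule gives
\begin{equation*}
\dpd{w, \nabla f(\thetat)} = \dpd{Xw, \nabla_z f} = \sum_{a} r^\prime(a)\,\pit(a)\,\big(r(a) - \dpd{\pit, r}\big) = \Cov_{\pit}(r^\prime, r).
\end{equation*}
Because a softmax at finite $\thetat$ has full support, and because $r^\prime$ and $r$ are strictly comonotonic (they share the strict ordering, and all coordinates are distinct by \cref{assumption:no_identical_arms}), Chebyshev's sum inequality yields $\Cov_{\pit}(r^\prime, r) > 0$. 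Hence $\nabla f(\thetat) \neq 0$, and the ascent lemma gives $\dpd{\pitt, r} > \dpd{\pit, r}$, proving (i).

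For part (ii), since $\dpd{\pit, r}$ is strictly increasing and bounded above by $r(a^\star)$, it converges; telescoping the ascent lemma shows $\sum_t \norm{\nabla f(\thetat)}^2 < \infty$, so $\norm{\nabla f(\thetat)} \to 0$ and in particular $\Cov_{\pit}(r^\prime, r) = \dpd{w, \nabla f(\thetat)} \to 0$. Using the identity $\Cov_{\pit}(r^\prime, r) = \tfrac12 \sum_{a, b}\pit(a)\pit(b)\big(r^\prime(a) - r^\prime(b)\big)\big(r(a) - r(b)\big)$ together with the uniform constant $c \coloneqq \min_{a \neq b}\big(r^\prime(a) - r^\prime(b)\big)\big(r(a) - r(b)\big) > 0$, I obtain $1 - \sum_a \pit(a)^2 \le \tfrac{2}{c}\,\Cov_{\pit}(r^\prime, r) \to 0$; since $\sum_a \pit(a)^2 \le \max_a \pit(a)$, this forces $\max_a \pit(a) \to 1$. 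To pin down a single action, observe that if the $\argmax$ switched infinitely often between two actions $a \neq b$, the convergent sequence $\dpd{\pit, r}$ would have to equal both $r(a)$ and $r(b)$, contradicting \cref{assumption:no_identical_arms}. Hence the concentrating action is eventually fixed, giving $\lim_{t\to\infty} \pit(a) = 1$ for some $a \in [K]$.

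The routine components are the smoothness computation and the ascent lemma. The conceptual crux — and the expected main obstacle — is recognizing that $\dpd{w, \nabla f(\theta)}$ equals the policy-induced covariance $\Cov_{\pi_\theta}(Xw, r)$: this single identity simultaneously rules out finite stationary points (strict positivity of the covariance of comonotonic, distinct-valued variables) and drives the concentration in (ii) (the quadratic lower bound on that covariance in terms of $1 - \sum_a \pit(a)^2$). The remaining fiddly points are matching the exact smoothness constant so that the step-size bound reads precisely $2/\beta$, and verifying that the iterates $\thetat$ remain finite for all finite $t$ (so that $\pit$ indeed has full support), which follows since each update adds a finite vector to a finite $\theta_1$.
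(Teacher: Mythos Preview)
Your proposal is correct and follows essentially the same approach as the paper: both invoke the smoothness constant $\beta = \tfrac{9}{2}R_{\max}\lambda_{\max}(X^\top X)$ to get the ascent lemma, and both rule out finite stationary points via the identity $\dpd{w,\nabla f(\theta)} = \Cov_{\pi_\theta}(Xw, r)$ (this is exactly the paper's \cref{lemma:alternative_expression_covariance}) together with strict comonotonicity of $r'$ and $r$. Your passage from $\Cov_{\pit}(r',r)\to 0$ to a one-hot limit via the quantitative bound $1-\sum_a\pit(a)^2 \le \tfrac{2}{c}\,\Cov_{\pit}(r',r)$, followed by pinning the action through convergence of $\dpd{\pit,r}$, is a slightly more direct variant of the paper's argument (which detours through $\norm{\theta_t}\to\infty$ by compactness before reaching the same conclusion), but the underlying mechanism is identical.
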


\begin{proofsketch}
Since the softmax transform is smooth~\citep{agarwal2021theory,mei2020global} and the feature matrix $X$ has bounded values, $\dpd{\pi_{\theta}, r}$ is $L$-smooth with $L=\frac{9 \, R_{\max} \, \lambda_{\max}(X^\top X)}{2}$ (\cref{lemma:smoothness_expected_reward_log_linear_policy}). This implies that $\LinSPG$ with a constant learning rate $0 < \eta < 2 / L$ will result in a monotonic increase in the expected reward, i.e.,
\begin{align*}
    \dpd{\pi_{\theta_{t+1}}, r}- \dpd{\pi_{\theta_t}, r} \geq \frac{1}{2 L} \, \norm{ \frac{d \, \dpd{\pit, r}}{d \theta_t} }_2^2 \geq 0.
    \numberthis \label{eq:monotonic_improvement}
\end{align*}
Note that $\dpd{\pi_{\theta}, r}$ is upper bounded by $r(a^\star)$. According to the monotone convergence theorem, we have $\lim_{t \to \infty} \dpd{\pi_{\theta_t}, r} \leq r(a^\star)$. Therefore, $\lim_{t \to \infty} \norm{ \frac{d \ \dpd{\pit, r}}{d \theta_t} }_2 = 0$.
Furthermore, a special co-variance structure of $\LinSPG$ (\cref{lemma:alternative_expression_covariance}) shows that $\norm{\frac{d \, \dpd{\pit, r}}{d \theta_t}}_2 \to 0$ only when $\| \theta_t \|_2 \to \infty$, meaning that there are no stationary points in any finite region. Hence $\pi_{\theta_t}$ is guaranteed to approach a one-hot policy as $t \to \infty$.
\end{proofsketch}

In~\cref{appendix:three_armed_deterministic_linear_bandits_necessity}, we construct~\cref{eg:necesity_feature_conditions} to demonstrate that even when~\cref{assumption:no_identical_arms,assumption:reward_ordering_preservation} are satisfied,~\cref{alg:det_spg} is not guaranteed for global convergence. Consequently, we require an additional assumption, which will be introduced in the next section. 

\subsection{Warm up: Global Convergence for $K = 3$}
\label{sec:deterministic_linear_bandits_warmup}

We begin by examining the three-armed bandit case as an illustrative example. Note that~\cref{assumption:reward_ordering_preservation} requires that there exists a direction $w \in \R^d$ such that $r^\prime = X w$ (i.e., the projection of $X$ onto this direction $w$) preserves the ordering of the reward. Since~\cref{assumption:reward_ordering_preservation} is not sufficient to guarantee global convergence (as shown in~\cref{eg:necesity_feature_conditions}), a natural way to strengthen this assumption is to require that the features preserve the reward ordering when projecting onto more than one direction. In order to gain some intuition, consider a simplified setting where $\theta \in \sR^2$. Assume that there exist two orthogonal directions $u$ and $v$ ($u, v \in \R^2$ and $\norm{u} = \norm{v} = 1$) such that $r^u \coloneq X \, u$ and $r^v \coloneq X \, v$ both preserve the ordering of the rewards. Then, we can rewrite the features as $x_i = r^u(i) \, u + r^v(i) \, v$ for all $i \in [3]$. Given this expression, we consider the following feature-dependent quantity:
\begin{align*}
\MoveEqLeft
\dpd{x_2 - x_3, x_1 - x_3} \\
=& \, \dpd{(r^u(2) - r^u(3)) \, u + (r^v(2) - r^v(3)) \, v, (r^u(1) - r^u(3)) \, u + (r^v(1) - r^v(3)) \, v} \\
=& \, (r^u(2) - r^u(3)) (r^u(1) - r^u(3)) \, \normsq{u} + (r^v(2) - r^v(3)) (r^v(1) - r^v(3)) \, \normsq{v} \tag{$\dpd{u, v}$ = 0} \\
>& \, 0 \tag{$r^u$ and $r^v$ preserve the reward ordering}
\end{align*}
Formalizing the above intuition, we state another key feature condition that is required to guarantee global convergence.
\begin{assumption}[Feature Condition ($K=3$)]
\label{assumption:feature_conditions_for_three_armed_linear_bandits}
The feature matrix $X$ satisfies that \\ $\dpd{x_2 - x_3, x_1 - x_3} > 0$.
\end{assumption}

Our next result shows that in the three-armed bandit setting, the above assumptions are sufficient to ensure convergence to the optimal action. The complete proof can be found in~\cref{appendix:three_armed_deterministic_linear_bandits_sufficiency}.

\begin{restatable}{theorem}{threearmeddeterministiclinearbandits}
\label{theorem:three_armed_deterministic_linear_bandits}
Given a reward vector $r \in \mathbb{R}^{3}$ and a feature matrix $X \in \mathbb{R}^{3 \times d}$ such that $d \leq 3$ and \cref{assumption:no_identical_arms,assumption:reward_ordering_preservation,assumption:feature_conditions_for_three_armed_linear_bandits} are satisfied, \cref{alg:det_spg} with a constant learning rate as in \cref{eq:step_size_for_deterministic_bandits} is guaranteed to converge to the optimal policy.  
\end{restatable}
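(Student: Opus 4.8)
The plan is to combine \cref{lemma:monotonicity_and_onehotpolicy} with a direct analysis of the two suboptimal corners. By that lemma, the expected reward $\langle \pi_{\theta_t}, r\rangle$ is strictly increasing in $t$ and $\pi_{\theta_t}(a) \to 1$ for some action $a \in \{1,2,3\}$; it then remains only to show $a = a^\star = 1$. Ruling out $a = 3$ is immediate: since $\pi_{\theta_1}$ has full support, $\langle \pi_{\theta_1}, r\rangle > r(3)$, and monotonicity forces $\langle \pi_{\theta_t}, r\rangle > r(3)$ for all $t$, so the limiting reward strictly exceeds $r(3)$. The entire difficulty is therefore to preclude convergence to the \emph{middle} action $2$.

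First I would record the special structure of the gradient for $K=3$. Writing $\bar r_t = \langle \pi_{\theta_t}, r\rangle$ and using $\sum_a \pi_{\theta_t}(a)(r(a)-\bar r_t) = 0$ to recenter the update at $x_2$, the exact ascent direction becomes $\pi_{\theta_t}(1)(r(1)-\bar r_t)\,(x_1-x_2) + \pi_{\theta_t}(3)(\bar r_t - r(3))\,(x_2-x_3)$, a \emph{nonnegative} combination of $x_1 - x_2$ and $x_2 - x_3$ (both coefficients are $\ge 0$ because $r(3) \le \bar r_t \le r(1)$). Every parameter increment is thus confined to the cone generated by these two difference vectors, which is precisely what makes \cref{assumption:feature_conditions_for_three_armed_linear_bandits} the governing condition.

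The core step is to track the two logit gaps $u_t = \langle x_2 - x_1, \theta_t\rangle = z_{\theta_t}(2) - z_{\theta_t}(1)$ and $v_t = \langle x_2 - x_3, \theta_t\rangle = z_{\theta_t}(2) - z_{\theta_t}(3)$, both of which would diverge to $+\infty$ if $\pi_{\theta_t}(2) \to 1$. Substituting the gradient form into the increments yields a linear recursion governed by $A = \|x_1-x_2\|^2$, $B = \|x_2-x_3\|^2$, and $F = \langle x_2-x_1, x_2-x_3\rangle = B - D$, where $D = \langle x_2-x_3, x_1-x_3\rangle$. I would then introduce the potentials $\Phi_t = B u_t - F v_t$ and $\Psi_t = A v_t - F u_t$ and verify that $\Phi_{t+1}-\Phi_t = -\eta\,\pi_{\theta_t}(1)(r(1)-\bar r_t)\,\delta \le 0$ and $\Psi_{t+1}-\Psi_t = \eta\,\pi_{\theta_t}(3)(\bar r_t-r(3))\,\delta \ge 0$, where $\delta = AB - F^2 = \det \mathrm{Gram}(x_1-x_2, x_2-x_3) \ge 0$ by Cauchy--Schwarz. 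So $\Phi_t$ is nonincreasing and $\Psi_t$ nondecreasing \emph{for free}. I would extract one further inequality from monotonicity: since $\bar r_t < r(2)$ we have $\pi_{\theta_t}(1)(r(1)-r(2)) < \pi_{\theta_t}(3)(r(2)-r(3))$, hence $v_t - u_t = \log(\pi_{\theta_t}(1)/\pi_{\theta_t}(3))$ is bounded above by a constant $M$.

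The contradiction then follows by a short case split, using \cref{assumption:feature_conditions_for_three_armed_linear_bandits} in the form $D > 0$, i.e. $F < B$. If $F \le 0$, then $u_t = (A\Phi_t + F\Psi_t)/\delta$ is bounded above (as $\Phi_t \le \Phi_1$ and $F\Psi_t \le F\Psi_1$), contradicting $u_t \to \infty$. If $F > 0$, then $u_t \to \infty$ forces $F\Psi_t = \delta u_t - A\Phi_t \to +\infty$, so $\Psi_t \to +\infty$; now when $A \le F$ the bound $v_t \le u_t + M$ gives $\Psi_t = Av_t - Fu_t \le (A-F)u_t + AM \le AM$, contradicting $\Psi_t \to \infty$, whereas when $A > F$ the identity $v_t - u_t = ((F-A)\Phi_t + D\Psi_t)/\delta$ with $D > 0$ drives $v_t - u_t \to +\infty$, contradicting $v_t - u_t \le M$. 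Either way action $2$ is excluded, so $a = 1$. The main obstacle is conceptual rather than computational: the suboptimal corner lies ``at infinity'', so the finite no-stationary-point argument underlying \cref{lemma:monotonicity_and_onehotpolicy} does not rule it out, and the work lies in discovering the two Gram-determinant potentials together with the monotonicity-induced cap on $v_t - u_t$, which jointly pin down the asymptotics of the logit gaps. I would also handle the degenerate case $\delta = 0$ (collinear features $x_1,x_2,x_3$) separately, where the dynamics collapse to a one-dimensional ordering argument.
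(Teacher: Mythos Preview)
Your argument is correct and genuinely different from the paper's. Both proofs share the setup (\cref{lemma:monotonicity_and_onehotpolicy} plus the elimination of action $3$) and ultimately hinge on the same quantity $v_t-u_t=\log(\pi_{\theta_t}(1)/\pi_{\theta_t}(3))$, but they reach the contradiction from opposite sides. The paper works \emph{directly} with this ratio: it shows that under \cref{assumption:feature_conditions_for_three_armed_linear_bandits} the increment $P_t$ in $v_t-u_t$ is strictly positive, and then invokes the auxiliary result $\sum_{t}(1-\pi_{\theta_t}(2))=\infty$ (\cref{lemma:multi_arm_sum_to_infty}) to force $v_t-u_t\to\infty$, which makes $\langle\pi_{\theta_t},r\rangle>r(2)$ eventually. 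You instead use monotonicity to cap $v_t-u_t\le M$ \emph{a priori}, build the two Gram-determinant potentials $\Phi_t,\Psi_t$ (which are monotone purely from $p_t,q_t\ge 0$), and then run a case split on the sign and size of $F$ to show that $u_t,v_t\to\infty$ is incompatible with the potential bounds and the cap. Your route is more linear-algebraic, makes the role of the Gram structure explicit, and notably avoids \cref{lemma:multi_arm_sum_to_infty} altogether; the price is the case analysis and the separate treatment of $\delta=0$, which (as you note) collapses to a one-line affine argument. The paper's route is shorter and is the one that scales to $K>3$ in \cref{theorem:deterministic_linear_bandits}, where the single-ratio viewpoint carries over directly while your two-potential construction would need a higher-dimensional analogue.
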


\begin{proofsketch}
Under~\cref{assumption:no_identical_arms,assumption:reward_ordering_preservation}, according to ~\cref{lemma:monotonicity_and_onehotpolicy}, we have that, for all finite $t \geq 1$, 
\begin{align*}
\dpd{\pi_{\thtt}, r} > \dpd{\pit, r}, \numberthis \label{eq:monotonicity_det_main_text}
\end{align*}
and $\lim_{t \to \infty} \pit(a) = 1$ for some action $a \in \{1, 2, 3\}$.
For any finite initialization $\theta_1$, we have $\dpd{\pi_{\theta_1}, r} > r(3)$ and hence $\dpd{\pit, r} > \dpd{\pi_{\theta_1}, r} > r(3)$. This implies that $\lim_{t \to \infty} \pit(3) < 1$. 

Hence, to complete the proof, we need to show that $\lim_{t \to \infty} \pit(2) \neq 1$. We prove this by contradiction. Suppose that $\lim_{t \to \infty} \pit(2) = 1$. In this case, we will show that $\lim_{t \to \infty} \frac{\pit(1)}{\pit(3)} = \infty$, which in turn implies that $\dpd{\pit, r} > r(2)$ for all large enough $t$. This contradicts the assumption that $\lim_{t \to \infty} \pit(2) = 1$. Specifically, by the update in~\cref{alg:det_spg}, we can first derive that, for all $t \geq 1$,
 \begin{align*}
   \frac{\pitt(1)}{\pitt(3)} = \frac{\pit(1)}{\pit(3)} \, \exp\parens*{\eta \, \underbrace{\parens*{\sum_{i=1}^3 \dpd{x_i, x_1 - x_3} \, \pit(i) \, (r(i) - \dpd{\pit, r})}}_{\coloneq P_t}}.
\end{align*}
Under~\cref{assumption:feature_conditions_for_three_armed_linear_bandits}, we can guarantee that $P_t > 0$ for all $t \geq 1$ and hence $\frac{\pit(1)}{\pit(3)}$ monotonically increases with $t$. In particular, using~\cref{assumption:feature_conditions_for_three_armed_linear_bandits} and recursing, we can directly show that,
\begin{align}
\frac{\pit(1)}{\pit(3)} > \frac{\pi_{\theta_1}(1)}{\pi_{\theta_1}(3)} \exp(\eta \, \normsq{x_1 - x_3} \, (r(1) - r(2)) \, \sum_{s=1}^{t-1} \pi_{\theta_s}(1)).
\label{eq:3-arm-proof-intermediate}
\end{align}
Since $\frac{\pitt(1)}{\pitt(3)} > \frac{\pit(1)}{\pit(3)}$ for all finite $t \geq 1$, we can show that
\[
\sum_{s=1}^t (1 - \pis(2)) < \parens*{1 + \frac{\pi_{\theta_1}(3)}{\pi_{\theta_1}(1)}} \sum_{s=1}^t \pis(1).
\]
Moreover, under \cref{assumption:reward_ordering_preservation}, ~\cref{lemma:multi_arm_sum_to_infty} shows that $\sum_{s=1}^\infty (1-\pit(2)) = \infty$.  Combining the above relations, we can conclude that $\sum_{s=1}^\infty \pis(1) = \infty$, meaning that as $t \to \infty$, the optimal action will be pulled infinitely often. Together with~\cref{eq:3-arm-proof-intermediate}, this implies that $\lim_{t \to \infty} \frac{\pit(1)}{\pit(3)} = \infty$. Hence, for all large enough $t$,
\begin{align*}
    r(2) - \dpd{\pit, r} &= \pi_{\theta_t}(1) \, ( r(2) - r(1)) + \pi_{\theta_t}(3) \, ( r(2) - r(3))  \\
    &= \pi_{\theta_t}(3) \, ( r(2) - r(3)) \, \bigg[ - \underbrace{\frac{ r(1) - r(2) }{ r(2) - r(3) }}_{> 0} \, \underbrace{\frac{ \pi_{\theta_t}(1) }{ \pi_{\theta_t}(3) }}_{\to \infty} + 1 \bigg] < 0. 
\end{align*}
Therefore, $\dpd{\pit, r} > r(2)$ for all large enough $t$. This contradicts the assumption that $\lim_{t \to \infty} \pit(2) = 1$, which completes the proof.
\end{proofsketch}

In~\cref{appendix:three_armed_deterministic_linear_bandits_necessity}, we construct multiple examples to show that \cref{assumption:reward_ordering_preservation,assumption:feature_conditions_for_three_armed_linear_bandits} are both independently necessary to achieve global convergence. In the next section, we consider the general $K$-armed bandit setting ($K \geq 3$) and study the conditions required for the global convergence. 

\subsection{Guarantee of Global Convergence for $K \geq 3$}
\label{sec:deterministic_linear_bandits_K_geq_3}

The most direct way to extend~\cref{assumption:feature_conditions_for_three_armed_linear_bandits} to the general $K$-armed setting is to construct features such that the reward ordering can be preserved when projecting onto $d$ different orthogonal directions, i.e., there exists a set of $d$ orthogonal vectors denoted as $\{u_1, \cdots, u_d\}$ ($u_d \in \R^d$ and $\norm{u_p} = 1$ for all $p \in [d]$) such that $r^p \coloneq X \, u_p$ preserves the reward ordering for all $p \in [d]$. Since $\{u_1, \cdots, u_d\}$ are orthogonal unit vectors, we have $x_i = \sum_{p=1}^d r^p(i) \, u_p$. In order to gain some intuition, consider three actions $i$, $j$, and $k$ such that $r(i) > r(j)$ and $r(i) > r(k)$ and consider the following feature-dependent quantity,
\begin{align*}
\dpd{x_i - x_j, x_{a^\star} - x_k} &= \dpd{ \sum_{p=1}^d (r^p(i) - r^p(j)) \, u_p, \sum_{q=1}^d (r^q(a^\star) - r^q(k)) \, u_q} \\
&= \sum_{p=1}^d (r^p(i) - r^p(j)) (r^p(a^\star) - r^p(k)) \, \normsq{u_p} \tag{$\forall p \neq q, \dpd{u_p, u_q} = 0$} \\
&> 0 \tag{$\forall p \in [d]$, $r^p$ preserves the reward ordering} 
\end{align*}

Additionally, as we can see in the last step, we do not require a strict inequality of the reward preservation for every three actions $i$, $j$, and $k$. Formalizing the above intuition, we can generalize the feature conditions 
in~\cref{assumption:feature_conditions_for_three_armed_linear_bandits}.
\vspace{-1ex}
\begin{assumption}[Feature Conditions]\label{assumption:general_feature_conditions}
For any three actions $i$, $j$, and $k$ such that $r(i) > r(j)$ and $r(i) > r(k)$, the feature matrix $X$ satisfies
\begin{align*}
\dpd{x_i - x_j, x_{a^\star} - x_k}
\begin{cases}
> 0 &\text{If $i = a^\star$ or $j = k$}\\
\geq 0 &\text{Otherwise}
\end{cases}
\end{align*}
\end{assumption}

\begin{remark}
In the special case when $K=3$,~\cref{assumption:feature_conditions_for_three_armed_linear_bandits} can be derived from~\cref{assumption:general_feature_conditions} by setting $i=2$, $j=3$, and $k=3$. However, compared to~\cref{assumption:feature_conditions_for_three_armed_linear_bandits},~\cref{assumption:general_feature_conditions} is a stronger assumption since it also requires the condition $\dpd{x_1 - x_2, x_1 - x_3} > 0$ to hold.
\end{remark}

\begin{remark}
In the special case when $d = K$ and $X = \mathbf{I}_K$, the features for each action correspond to one-hot vectors, and we can recover the standard multi-armed bandit setting with tabular parameterization. In this case, for all $i, j \in [K]$ such that $i \neq j$, we have $\dpd{x_i, x_i} = 1$ and $\dpd{x_i, x_j} = 0$. Consequently, this tabular setting satisfies the above feature condition. Hence, the subsequent proof techniques remain applicable for this simplified setting.    
\end{remark}

We now show that $\LinSPG$ can achieve global convergence in the exact setting for the general $K$-armed bandit ($K \geq 3$). The complete proof is provided in~\cref{appendix:deterministic_linear_bandits}.

\begin{restatable}{theorem}{deterministiclinearbandits}
\label{theorem:deterministic_linear_bandits}
Given a reward vector $r \in \mathbb{R}^{K}$ and a feature matrix $X \in \mathbb{R}^{K \times d}$ such that $d \leq K$ and~\cref{assumption:no_identical_arms,assumption:reward_ordering_preservation,assumption:general_feature_conditions} are satisfied,~\cref{alg:det_spg} with a constant learning rate as in~\cref{eq:step_size_for_deterministic_bandits} converges to the optimal policy.  
\end{restatable}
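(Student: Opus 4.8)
The plan is to deduce the theorem from \cref{lemma:monotonicity_and_onehotpolicy} together with a contradiction argument that generalizes the proof of \cref{theorem:three_armed_deterministic_linear_bandits}. By \cref{lemma:monotonicity_and_onehotpolicy}, under \cref{assumption:no_identical_arms,assumption:reward_ordering_preservation} the expected reward $\dpd{\pit, r}$ is strictly increasing in $t$ and $\pit$ converges to some one-hot policy $e_a$, $a\in[K]$. Any finite $\theta_1$ gives $\dpd{\pi_{\theta_1}, r} > r(K)$, and since $\dpd{\pit, r}$ only increases, the limit cannot be the worst arm. It therefore suffices to rule out $\lim_{t\to\infty}\pit(m)=1$ for every suboptimal $m\in\{2,\dots,K\}$, since the only remaining possibility is $a=a^\star=1$. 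Throughout I will use the consequence that, under the hypothesis $\pit\to e_m$, strict monotonicity forces $\dpd{\pit, r} < r(m)$ for all $t$ (the limit is approached from below), which is what lets me sign the relevant terms.

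The engine of the proof is the multiplicative ratio recursion. Writing the exact update direction as $g_t = \sum_{l\in[K]} x_l\,\pit(l)\,(r(l)-\dpd{\pit, r})$, \cref{alg:det_spg} yields for any arms $i,j$ that $\frac{\pitt(i)}{\pitt(j)} = \frac{\pit(i)}{\pit(j)}\exp(\eta\,\dpd{x_i-x_j, g_t})$. Since $\sum_l\pit(l)(r(l)-\dpd{\pit, r})=0$, I can recenter at the denominator arm, and taking $i=a^\star=1$ and $j=K$ gives
\[
\dpd{x_1-x_K,\, g_t} = \sum_{l\neq K}\dpd{x_l-x_K,\, x_1-x_K}\,\pit(l)\,(r(l)-\dpd{\pit, r}).
\]
Here \cref{assumption:general_feature_conditions} (applied in the strict $j=k$ case, which requires $r(l)>r(K)$) guarantees $\dpd{x_l-x_K,\, x_1-x_K}>0$ for every $l\neq K$, so each summand's sign is that of $r(l)-\dpd{\pit, r}$. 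For arms $l\leq m$ this factor is positive because $r(l)\geq r(m)>\dpd{\pit, r}$; in particular the $l=1$ term is bounded below by $\normsq{x_1-x_K}\,\pit(1)\,(r(1)-r(m))$. In parallel, I would generalize \cref{lemma:multi_arm_sum_to_infty} to obtain $\sum_t(1-\pit(m))=\infty$, from which $\sum_t\pit(1)=\infty$ follows after comparing $\pit(1)$ to the total non-$m$ mass; i.e. the optimal arm is sampled infinitely often, the argument again relying on $\norm{g_t}\to0$ excluding finite stationary points.

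Setting aside for a moment the intermediate arms (treated below), the positive $l=1$ contribution alone makes the increments of $\ln(\pit(1)/\pit(K))$ sum to at least $\eta\,\normsq{x_1-x_K}(r(1)-r(m))\sum_s\pis(1)=\infty$, so $\pit(1)/\pit(K)\to\infty$. Feeding this into the decomposition
\[
\dpd{\pit, r}-r(m) = \sum_{l<m}\pit(l)\,(r(l)-r(m)) - \sum_{l>m}\pit(l)\,(r(m)-r(l)),
\]
lower-bounding the first (positive) sum by the $l=1$ term and upper-bounding the second by $(r(m)-r(K))\sum_{l>m}\pit(l)$, the divergence $\pit(1)/\sum_{l>m}\pit(l)\to\infty$ would give $\dpd{\pit, r}>r(m)$ for all large $t$, contradicting $\dpd{\pit, r}<r(m)$ and thereby excluding the limit arm $m$.

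The main obstacle is precisely the one that disappears when $K=3$: the arms whose reward lies strictly between $r(m)$ and $r(K)$. For such an intermediate arm $l$ the factor $r(l)-\dpd{\pit, r}$ turns negative once $\dpd{\pit, r}$ climbs above $r(l)$, so it contributes a wrong-sign term to the recursion for $\ln(\pit(1)/\pit(K))$, and, more seriously, it prevents me from bounding $\sum_{l>m}\pit(l)$ by $\pit(K)$ alone. I expect to resolve this by a downward induction on the arm ordering — excluding the limit arm $K$, then $K-1$, and so on — so that when ruling out arm $m$ I may use the control already established on the strictly-worse arms $l>m$, in combination with the non-strict inequalities of \cref{assumption:general_feature_conditions} that keep the cross terms among worse arms from overwhelming the strictly positive contributions of the better arms. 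Making this domination uniform in $t$ is the crux of the general-$K$ argument; the remaining steps are the smoothness and monotonicity bookkeeping already supplied by \cref{lemma:monotonicity_and_onehotpolicy}.
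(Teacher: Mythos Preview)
Your framework is right---use \cref{lemma:monotonicity_and_onehotpolicy} for monotone increase and one-hot convergence, rule out arm $K$, then contradict convergence to any intermediate $m$ via the multiplicative ratio recursion---but you miss the one idea that makes the general-$K$ case go through without any induction. In the paper's proof, after assuming $\lim_{t\to\infty}\dpd{\pit,r}=r(a)$ for some $a\in\{2,\dots,K-1\}$, the progress term is recentered at the presumed limit arm $a$, not at $K$. For each $k\in\{a+1,\dots,K\}$ one writes
\[
P_t=\sum_{i\neq a}\dpd{x_i-x_a,\,x_1-x_k}\,\pit(i)\,(r(i)-\dpd{\pit,r}),
\]
and now \emph{every} summand is nonnegative: for $i<a$ both factors are positive (using \cref{assumption:general_feature_conditions} with $r(i)>r(a)$, $r(i)>r(k)$), and for $i>a$ one flips the sign of the inner product and uses \cref{assumption:general_feature_conditions} with the triple $(a,i,k)$, while $\dpd{\pit,r}-r(i)>0$ once $t$ is large enough that $\dpd{\pit,r}>r(a+1)$. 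This gives $P_t\geq C\,(1-\pit(a))$ outright, and \cref{lemma:multi_arm_sum_to_infty} then forces $\pit(1)/\pit(k)\to\infty$ for every $k>a$, which is exactly what the final contradiction needs.

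By contrast, your recentering at $K$ creates the wrong-sign intermediate-arm problem you describe, and the downward induction you propose does not resolve it: having ruled out arm $l>m$ as a \emph{limit} gives you no quantitative bound on $\pit(l)$ under the separate hypothesis $\pit\to e_m$, so you cannot argue that the negative contributions from such $l$ are dominated. You also need $\pit(1)/\pit(k)\to\infty$ for all $k>m$, not just $k=K$, to control $\sum_{l>m}\pit(l)$ in the final step. The fix is simply to recenter at the assumed limit arm and to run the ratio argument against every worse arm $k$; the non-strict inequalities in \cref{assumption:general_feature_conditions} are tailored precisely so that this recentering yields nonnegative coefficients throughout.
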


\begin{proofsketch}
The proof has a similar structure to the one for~\cref{theorem:three_armed_deterministic_linear_bandits}. In particular, using~\cref{lemma:monotonicity_and_onehotpolicy}, we know that $\lim_{t \to \infty} \dpd{\pit, r} = r(a)$ for some action $a \in [K]$. Following the same reasoning as in~\cref{theorem:three_armed_deterministic_linear_bandits}, we know that $a \neq K$. Hence, we only need to show that $a \notin \{2, 3, \cdots, K-1\}$.

We will prove this by contradiction. Assume that $\lim_{t \to \infty} \dpd{\pit, r} = r(a)$ for some $a \in \{2, 3, \dots, K-1\}$. Therefore, there exists a large enough $\tau$ such that for all finite $t \geq \tau$, $r(a) > \dpd{\pit, r} > r(a + 1)$. Consider any action $k \in [a+1, K]$. Similar to the proof of~\cref{theorem:three_armed_deterministic_linear_bandits}, we can establish that
\[
\frac{\pitt(1)}{\pitt(k)} = \frac{\pit(1)}{\pit(k)} \, \exp\parens*{\eta \, \underbrace{\parens*{\sum_{i=1}^K \dpd{x_i, x_1 - x_{k}} \, \pit(i) \, (r(i) - \dpd{\pit, r})}}_{\coloneq P_t}},
\]
Under~\cref{assumption:general_feature_conditions}, we can guarantee that $P_t > 0$ and hence $\frac{\pit(1)}{\pit(k)}$ monotonically increases with $t$. In particular, using~\cref{assumption:general_feature_conditions} and recursing the above inequality until $\tau$, we have, for all finite $t \geq \tau$,
\begin{align}
\frac{\pi_{\theta_{t}}(1)}{\pi_{\theta_{t}}(k)} \geq \frac{\pi_{\theta_\tau}(1)}{\pi_{\theta_\tau}(k)} \, \exp(\eta \, C \, \sum_{s=\tau}^{t -1} (1 - \pi_{\theta_s}(a))) > 0,
\label{eq:general-deterministic-intermediate}
\end{align}
where $C > 0$ is some positive constant. Moreover, under~\cref{assumption:reward_ordering_preservation},~\cref{lemma:multi_arm_sum_to_infty} shows that for any $i \in [K]$, $\lim_{t \to \infty} \sum_{s=1}^t (1-\pit(i)) = \infty$. Together with~\cref{eq:general-deterministic-intermediate}, this implies that $\lim_{t \to \infty} \frac{\pit(1)}{\pit(k)} = \infty$ and therefore $\lim_{t \to \infty} \frac{\pit(k)}{\pit(1)} = 0$ for all $k \in [a+1, K]$. As a result, there exists a large enough iteration $\tau^\prime > \tau$ such that 
\begin{align*}
r(a) - \dpd{\pi_{\theta_{\tau^\prime}}, r}  < \pi_{\theta_{\tau^\prime}}(1) \, (r(1) - r(a)) \left[ \sum_{i=a+1}^{K} \underbrace{\frac{\pi_{\theta_{\tau^\prime}}(i)}{\pi_{\theta_{\tau^\prime}}(1)} }_{\to 0} \, \underbrace{\frac{r(a) - r(i)}{r(1) - r(a)}}_{>0} - 1 \right]< 0. 
\end{align*}
Therefore, we conclude that $\dpd{\pit, r} > r(a)$ for all $t \geq \tau^\prime$. This contradicts the assumption that $\lim_{t \to \infty} \dpd{\pit, r} = r(a)$. Hence, for all $a \in \{2, 3, \dots, K\}$, $\lim_{t \to \infty} \dpd{\pit, r} \neq r(a)$ and consequently, $\lim_{t \to \infty} \pit(a^\star) = 1$. 
\end{proofsketch}

\begin{remark}
The non-domination condition in \citet[Theorem 1]{mei2024ordering} is not enough to prove convergence to the globally optimal policy from any initialization. The authors of \cite{mei2024ordering} constructed a counterexample (\cref{eg:necesity_feature_conditions} in this paper) for their Theorem 1, and found that \cref{assumption:feature_conditions_for_three_armed_linear_bandits} can be used to fix the proofs when $K = 3$. We simplify their proofs for $K = 3$, and then further generalize the results to handle $K > 3$. 
\end{remark}

In \cref{fig:pg_karm} of \cref{subsec:experiments_deterministic_linear_bandits}, we empirically verify the above theorem. In the next section, we analyze the convergence of $\LinSPG$ in the more practical stochastic setting, where the algorithm only has access to noisy estimates of the true mean rewards. 


\section{Global Convergence: Stochastic Setting}
\label{sec:stochastic_linear_bandits}

In~\cref{subsec:stochastic_algorithm}, we introduce the $\LinSPG$ method (\cref{alg:sto_spg}) in the stochastic setting. We first show that under the same assumptions as in~\cref{sec:deterministic_linear_bandits},~\cref{alg:sto_spg} with a suitably chosen constant learning rate guarantees monotonic improvement of the expected reward and convergence to a one-hot policy. In~\cref{subsec:decomposition_of_stochastic_process,subsec:small_step_size_global_convergence}, we use this property to prove that~\cref{alg:sto_spg} achieves almost-sure asymptotic global convergence to the optimal policy. Finally, we characterize the algorithm's convergence rate in~\cref{subsec:small_step_size_global_convergence_rate}. 

\subsection{$\LinSPG$ in the Stochastic Setting}
\label{subsec:stochastic_algorithm}
In~\cref{alg:det_spg}, we instantiated $\LinSPG$ in the exact setting where the algorithm has access to the true mean rewards. We now focus on the standard stochastic bandit setting~\citep{lattimore2020bandit} and consider the stochastic version of $\LinSPG$ (\cref{alg:sto_spg}). In this setting, at each iteration $t \in [T]$, the algorithm samples an action $a_t \sim \pit$ and receives a noisy reward $R_t(a_t)$ sampled from an unknown distribution $P_{a_t}$. The reward $R_t(a_t)$ is then used to construct an unbiased gradient estimator using on-policy importance sampling (IS) reward estimates~\citep{sutton2018reinforcement,mei2023stochastic}.

\begin{algorithm}[H]
    \centering
    \caption{$\LinSPG$ in the Stochastic Setting}
    \label{alg:sto_spg}
    \begin{algorithmic}
        \STATE {\bfseries input:} Initial parameters $\theta_1 \in \R^d$, learning rate $\eta > 0$
        \FOR{$t = 1, 2, \cdots, T$}
            \STATE Sample an action $a_t \sim \pi_{\theta_t}(\cdot)$ and observe reward $R_t(a_t)\sim P_{a_t}$
            \STATE $\theta_{t+1} = \theta_{t} + \eta \, X^\top (\mathrm{diag}(\pit) - \pit \pit^\top) \, \hat{r}_t$, where $\hat{r}_t(a) \coloneq \frac{\indicator{a = a_t}}{\pit(a)} \, R_t(a_t)$ for each $a \in [K]$
        \ENDFOR
        \STATE {\bfseries return:} Final policy $\pi_{\theta_{T+1}} = \softmax(X \theta_{T+1})$
    \end{algorithmic}
\end{algorithm}
Analogous to~\cref{lemma:monotonicity_and_onehotpolicy}, we first prove~\cref{lemma:monotonicity_for_stochastic_bandits}, showing that~\cref{alg:sto_spg} with a constant learning rate guarantees monotonic improvement of the expected reward. The complete proof of this lemma is provided in~\cref{appendix:stochastic_linear_bandits_additional_lemmas}.
\begin{restatable}{lemma}{monotonicityforstochasticbandits}
\label{lemma:monotonicity_for_stochastic_bandits}
Under~\cref{assumption:no_identical_arms,assumption:general_feature_conditions}, if $\rho \coloneq \frac{8 \, R^3_{\max} \, K^{3/2}}{\Delta^2}$ and $\kappa \coloneq \frac{\lambda_{\max}(X^\top X)}{\lambda_{\min}(X^\top X)}$, then~\cref{alg:sto_spg} with the learning rate,
\begin{align}
0 < \eta \leq \min\left\{\frac{1}{6 \, (\lambda_{\max}(X^\top X))^{3/2} \, \sqrt{2 \, R_{\max}}}, \frac{\lambda_{\min}(X^\top X)}{6 \, \rho \, [\lambda_{\max}(X^\top X)]^2}\right\}, \label{eq:step_size_for_stochastic_bandits}
\end{align}
ensures that
\begin{enumerate}[nolistsep,leftmargin=*,label=(\roman*)]
\item For all $t \geq 1$, $\E_t \left[ \dpd{\pitt, r} \right] - \dpd{\pit, r} \geq \frac{1}{6 \, \rho \, \kappa^2} \, \normsq{\frac{d \, \dpd{\pit, r}}{d \, (X\tht)}}$, where $\E_t[\cdot]$ denotes the conditional expectation with respect to the randomness in iteration $t$. 
\item There exists a (possibly random) action $a \in [K]$ such that $\lim_{t \to \infty} \pit(a) = 1$.
\end{enumerate}
\end{restatable}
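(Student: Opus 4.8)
The plan is to prove part (i) as a one-step stochastic ascent-lemma computation, and then bootstrap part (ii) from it via martingale convergence, paralleling the deterministic \cref{lemma:monotonicity_and_onehotpolicy}. The first thing I would record is the algebraic simplification that makes the importance-sampling estimator tractable. Writing $H_t \coloneqq \mathrm{diag}(\pit) - \pit\pit^\top$ and $g_t \coloneqq X^\top H_t \hat r_t$ for the stochastic update direction, I would observe that the importance weights cancel exactly, $H_t \hat r_t = R_t(a_t)\,(e_{a_t} - \pit)$, so that $g_t = R_t(a_t)\, X^\top (e_{a_t} - \pit)$. This yields two facts: $g_t$ is unbiased, $\E_t[g_t] = X^\top H_t r = \frac{d \, \dpd{\pit, r}}{d\tht}$, and its conditional second moment is bounded and vanishes at the vertices of the simplex, $\E_t \norm{g_t}^2 \le R_{\max}^2\, \lambda_{\max}(X^\top X)\,(1 - \norm{\pit}^2)$.

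For part (i), I would apply the smoothness of $\theta \mapsto \dpd{\pi_\theta, r}$ (the analogue of \cref{lemma:smoothness_expected_reward_log_linear_policy} used in the exact setting) to obtain the one-step inequality $\dpd{\pitt, r} \ge \dpd{\pit, r} + \eta \dpd{\frac{d \, \dpd{\pit, r}}{d\tht}, g_t} - \frac{L}{2}\eta^2 \norm{g_t}^2$, take the conditional expectation $\E_t[\cdot]$, and use unbiasedness to turn the first-order term into $\eta \norm{X^\top H_t r}^2$. Two reductions then remain: first, convert the $\theta$-gradient into the logit-space gradient via $\norm{X^\top H_t r}^2 \ge \lambda_{\min}(X^\top X)\,\normsq{\frac{d \, \dpd{\pit, r}}{d\, (X\tht)}}$ (the feasible logit gradient, up to the projection onto $\mathrm{col}(X)$), which is where the conditioning $\kappa = \lambda_{\max}(X^\top X)/\lambda_{\min}(X^\top X)$ enters; second, choose $\eta$ as in \cref{eq:step_size_for_stochastic_bandits} so that the $O(\eta^2)$ noise term is dominated, leaving a net improvement that is a positive multiple of the squared logit gradient, of the stated form $\frac{1}{6\rho\kappa^2}\normsq{\frac{d \, \dpd{\pit, r}}{d\, (X\tht)}}$.

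The \textbf{main obstacle} is this second reduction near the boundary of the simplex. The naive bound $\E_t\norm{g_t}^2 = O(1 - \norm{\pit}^2)$ is of \emph{lower order} in $(1 - \norm{\pit}^2)$ than the squared gradient $\norm{H_t r}^2 = O\!\big((1-\norm{\pit}^2)^2\big)$, so a uniform smoothness constant alone does not guarantee a net positive step arbitrarily close to a vertex. Resolving this requires the reward gap: using the distinctness of the rewards (\cref{assumption:no_identical_arms}) one establishes a gap-dependent lower bound of the form $\norm{H_t r}^2 \ge c\,\Delta^2 (1 - \norm{\pit}^2)^2$ for a constant $c$ polynomial in $K$ (equivalently, exploiting that the softmax curvature also scales down near vertices), which is precisely the source of the factor $\rho = 8 R_{\max}^3 K^{3/2}/\Delta^2$ appearing in both the learning-rate ceiling and the final constant. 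Carrying out this quantitative noise-versus-signal comparison cleanly is the technical heart of the lemma.

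For part (ii), part (i) shows that $\{\dpd{\pit, r}\}_{t \ge 1}$ is a submartingale bounded above by $r(a^\star)$, so it converges almost surely to a (random) limit $J_\infty$ by the martingale convergence theorem. Summing the part-(i) inequality and applying Tonelli's theorem gives $\frac{1}{6\rho\kappa^2}\sum_{t\ge1} \E\normsq{\frac{d\,\dpd{\pit,r}}{d\,(X\tht)}} \le r(a^\star) - \E[\dpd{\pi_{\theta_1}, r}] < \infty$, hence $\sum_{t\ge1}\norm{X^\top H_t r}^2 < \infty$ and in particular the gradient $\frac{d\,\dpd{\pit,r}}{d\tht} = X^\top H_t r \to 0$ almost surely. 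By the covariance-structure characterization used in the exact setting (\cref{lemma:alternative_expression_covariance}, \cref{lemma:monotonicity_and_onehotpolicy}), the gradient vanishes only as $\pit$ approaches a one-hot policy, so $\pit$ converges to the vertex set of the simplex; since $\dpd{\pit, r} \to J_\infty$ takes a single value while distinct vertices carry distinct rewards (\cref{assumption:no_identical_arms}), $\pit$ must converge to the unique one-hot policy $e_a$ with $r(a) = J_\infty$. This establishes $\lim_{t\to\infty}\pit(a) = 1$ for a possibly random action $a$, completing the argument.
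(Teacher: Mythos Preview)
Your diagnosis of the obstacle is right, but the remedy you propose does not close it. With the \emph{uniform} smoothness constant $L$ from \cref{lemma:smoothness_expected_reward_log_linear_policy}, the one-step bound after taking $\E_t$ reads
\[
\E_t[\dpd{\pitt,r}] - \dpd{\pit,r} \;\ge\; \eta\,\lambda_{\min}(X^\top X)\,\|\nabla J(z_t)\|^2 \;-\; \tfrac{L}{2}\,\eta^2\,\E_t\|g_t\|^2 .
\]
The tightest way to relate $\E_t\|g_t\|^2$ back to the signal is precisely the strong growth condition $\E_t\|g_t\|^2 \le \rho\,\lambda_{\max}(X^\top X)\,\|\nabla J(z_t)\|$ (first power, not squared), and this is where $\Delta$ enters---your lower bound $\|H_t r\|\gtrsim \Delta\,(1-\|\pit\|^2)$ is exactly how one proves it. Substituting, the two competing terms are of order $\eta\,\|\nabla J(z_t)\|^2$ versus $\eta^2\,\|\nabla J(z_t)\|$. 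As $\|\nabla J(z_t)\|\to 0$ near a vertex, the second dominates the first for \emph{any} fixed $\eta>0$, so the claimed inequality in (i) cannot hold uniformly in $t$. Your lower bound on $\|H_t r\|^2$ does not change this arithmetic; it only re-expresses the same mismatch.

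What the paper actually uses---and what you relegated to a parenthetical ``equivalently''---is not equivalent but an additional, indispensable ingredient: the \emph{non-uniform} smoothness of the softmax objective, namely that the Hessian satisfies $\|\nabla^2 f(\theta)\|\le 3\,\lambda_{\max}(X^\top X)\,\|\nabla J(z)\|$ (\cref{lemma:lpg_ns}). Using this in place of a constant $L$ (via \cref{lemma:bound_gtheta_zeta}, which also needs the bounded-gradient fact $\|g_t\|\le B$ to control the smoothness along the step), the second-order term becomes $\eta^2\cdot 3\lambda_{\max}\,\|\nabla J(z_t)\|\cdot \E_t\|g_t\|^2$; combining with the SGC (\cref{lemma:sgc}) then yields $3\,\eta^2\,\rho\,\lambda_{\max}^2\,\|\nabla J(z_t)\|^2$, which finally has the same order as the gain term and is dominated by the choice in \cref{eq:step_size_for_stochastic_bandits}. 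You need both tools simultaneously, each contributing one factor of $\|\nabla J(z_t)\|$; neither alone suffices.

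Your part (ii) is essentially the paper's argument. One small alignment point: the ``no interior stationary points'' fact must be argued under \cref{assumption:general_feature_conditions} (the paper takes $w=x_{a^\star}-x_K$ in \cref{lemma:alternative_expression_covariance}), not via \cref{lemma:monotonicity_and_onehotpolicy}, which relies on \cref{assumption:reward_ordering_preservation}.
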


\begin{proofsketch}
The proof relies on the following properties of the stochastic gradient estimates. First, according to~\cref{lemma:unbiased_gradient}, the stochastic gradient is unbiased, i.e. $\E_t [ \dpd{\pit, \hat{r}_t}] = \dpd{\pit, r}$. Secondly, according to~\cref{lemma:sgc}, the stochastic gradients satisfy a variant of the \textit{strong growth condition} (SGC)~\citep{mei2023stochastic,schmidt2013fast,vaswani2019fast}:
\[\E_t\normsq{\frac{d \dpd{\pit, \hat{r}_t}}{d \tht}} \leq \rho \, \lambda_{\max}(X^\top X) \norm{\frac{d\dpd{\pit, r}}{d (X \tht)}}.\] 
The above inequality implies that the variance in the stochastic gradients decreases as the algorithm approaches a stationary point. Additionally, the objective also satisfies a \textit{non-uniform smoothness} property: 
\[ 
\norm{\frac{d^2 \, \dpd{\pitheta, r}}{d \, \theta^2}} \leq 3 \, \lambda_{\max}(X^\top X) \, \norm{\frac{d \, \dpd{\pitheta, r}}{d \, (X \theta)}}.
\]
The non-uniform smoothness property suggests that the optimization landscape becomes flatter as it gets closer to any stationary point. Using the above properties and following a proof similar to that in the tabular setting~\citep[Lemma 4.6]{mei2023stochastic}, we can prove that $\LinSPG$ can use a constant learning rate and ensure monotonic improvement of the expected reward, i.e., 
\begin{equation*}
\E_t[\dpd{\pitt, r}] - \dpd{\pit, r} \geq \frac{1}{6 \, \rho \, \kappa^2} \, \norm{\frac{d \, \dpd{\pit, r}}{d \, (X\tht)}}_2^2.
\end{equation*}
The above inequality implies that $\E_t[\dpd{\pitt, r}] \geq \dpd{\pit, r}$ for all finite $t \geq 1$. 
Hence, the sequence $\{\dpd{\pit, r}\}_{t=1}^\infty$ satisfies the condition of a sub-martingale. 
Using Doob’s martingale theorem (\cref{theorem:doobs}), we know that $\lim_{t \to \infty} \dpd{\pit, r}$ exists and is finite. This, along with the special co-variance structure of $\LinSPG$ (\cref{lemma:alternative_expression_covariance}), further implies that $\pi_{\theta_t}$ approaches a one-hot policy as $t \to \infty$.
\end{proofsketch}

Given that the expected reward is guaranteed to increase monotonically and the policy is guaranteed to approach a one-hot policy asymptotically, we now need to make sure that the policy does not converge to any suboptimal action. In order to show this, in the next section, we analyze the stochastic process corresponding to~\cref{alg:sto_spg} and handle the randomness arising from the sampling of actions and the noise in the rewards.

\subsection{Decomposition of Stochastic Process}
\label{subsec:decomposition_of_stochastic_process}

To prove the global convergence of~\cref{alg:sto_spg}, we need to show that $\lim_{t \to \infty} \pit(a^\star) = 1$ almost surely. For this, we will prove that $\lim_{t \to \infty} z_t(a^\star) = \lim_{t \to \infty} \langle x_{a^\star}, \tht \rangle = \infty$ and $\lim_{t \to \infty} z_t(a) = \lim_{t \to \infty} \langle x_a, \tht \rangle < \infty$ for all suboptimal actions $a \neq a^\star$.
To establish this, we consider the stochastic process corresponding to the logit $z_t(a)$ for an action $a \in [K]$. In particular, we define $\gF_t$ as the $\sigma$-algebra generated by $\{\theta_1, a_1$, $R_1(a_1)$, $\cdots$, $a_{t-1}$, $R_{t-1}(a_{t-1})\}$:
\begin{align*}
\gF_t = \sigma( \{ \theta_1, a_1, R_1(a_1), \cdots, a_{t-1}, R_{t-1}(a_{t-1}) \} ),
\end{align*}
where $\theta_1\in \R^d$ is the (random) policy parameter at initialization. Note that for all finite $t \geq 1$, $\theta_{t}$ and $z_t$ are $\gF_t$-measurable, and $\hat{r}_t$ is $\gF_{t+1}$-measurable. We use $\mathbb{E}_t$ to denote the conditional expectation with respect to $\gF_t$, i.e., for any random variable $Z$, $\mathbb{E}_t[Z] \coloneq \mathbb{E}[Z|\gF_t]$. Based on the above $\sigma$-algebra, we decompose the difference between $z_{t+1}(a)$ and $z_t(a)$ into two components, the ``progress'' and ``noise'':
\begin{align*}
P_t(a) &\coloneq \E_{t+1}[z_{t+1}(a)] - z_t(a), \tag{progress} \\
W_t(a) &\coloneq z_t(a) - \E_t{[ z_t(a)]}. \tag{noise}
\end{align*}
For any action $a \in [K]$ and $t > 1$, we can decompose the stochastic process for $\z_t(a)$:
\begin{align*}
z_t(a) = W_t(a) + P_{t-1}(a) + z_{t-1}(a).
\end{align*}
Note that there is no randomness in the progress term, and its subsequent analysis will be similar to the exact setting in~\cref{sec:deterministic_linear_bandits}. Consider a specific iteration $\tau \geq 1$. For any finite $t > \tau$, using the above decomposition and recursing it from $s=\tau$ to $t$, we can obtain that
\begin{align*}
z_t(a) = z_\tau(a) + \underbrace{\sum_{s=\tau}^{t-1} P_s(a)}_{\text{cumulative progress}} + \underbrace{\sum_{s=\tau+1}^{t} W_{s}(a)}_{\text{cumulative noise}}. \numberthis \label{eq:stochastic_logit_decomposition}
\end{align*}
Furthermore, for any two distinct actions $a_1, a_2 \in [K]$, using~\cref{eq:stochastic_logit_decomposition}, for all finite $t > \tau$,
\begin{align}
z_t(a_1) - z_t(a_2) = z_\tau(a_1) - z_\tau(a_2) + \underbrace{\sum_{s=\tau}^{t-1} \left[P_s(a_1) - P_s(a_2)\right]}_{\rm \text{Term (i)}} + \underbrace{\sum_{s=\tau+1}^{t} \left[ W_{s}(a_1) - W_{s}(a_2)\right]}_{\rm \text{Term (ii)}}.  \label{eq:stochastic_logit_difference_decomposition}
\end{align}

This decomposition sets up a framework for most convergence analyses of Softmax PG in previous works~\cite{mei2021understanding,mei2023stochastic,mei2024stepsize}, and will be subsequently used to prove guarantees for $\LinSPG$ in the stochastic setting.  

\subsection{Guarantee of Global Convergence}\label{subsec:small_step_size_global_convergence}

Using the decomposition of the stochastic process in~\cref{subsec:decomposition_of_stochastic_process}, we now proceed to show that~\cref{alg:sto_spg} is guaranteed to achieve almost-sure global convergence to the optimal policy. 
The complete proof is provided in \cref{subsec:stochastic_linear_bandits}.
\begin{restatable}{theorem}{stochasticlinearbandits}
\label{theorem:stochastic_linear_bandits}
Given a reward vector $r \in \mathbb{R}^{K}$ and a feature matrix $X \in \mathbb{R}^{K \times d}$ such that $d \leq K$ and \cref{assumption:no_identical_arms,assumption:general_feature_conditions} are satisfied,~\cref{alg:sto_spg} with the constant learning rate as in \cref{eq:step_size_for_stochastic_bandits} almost surely converges to the optimal policy.
\end{restatable}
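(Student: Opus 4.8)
The plan is to upgrade the exact-setting argument of \cref{theorem:deterministic_linear_bandits} to the stochastic trajectory by pushing everything through the progress/noise decomposition of \cref{eq:stochastic_logit_difference_decomposition}. By \Cref{lemma:monotonicity_for_stochastic_bandits}(ii), almost surely $\pit$ converges to the one-hot policy at some (random) action $a\in[K]$; writing $\mathcal{E}_a=\{\lim_{t\to\infty}\pit(a)=1\}$, it then suffices to show $\Pr(\mathcal{E}_a)=0$ for every suboptimal $a\neq a^\star$. I would fix such an $a$, work on $\mathcal{E}_a$, and track the logit differences $z_t(a^\star)-z_t(k)=\dpd{x_{a^\star}-x_k,\thetat}$.

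First I would record the explicit logit increment $z_{t+1}(a)-z_t(a)=\eta\,R_t(a_t)\,\dpd{x_{a_t}-\bar{x}_t,x_a}$ with $\bar{x}_t=\sum_{a'}\pit(a')x_{a'}$, where the $1/\pit$ factor of the importance-sampling estimate cancels so that increments are \emph{uniformly bounded}. Taking $\E_t[\cdot]$ and using $\E_t[R_t(a_t)\mid a_t]=r(a_t)$, the progress term satisfies $P_s(a^\star)-P_s(k)=\eta\,\Cov_{a'\sim\pis}\!\big(r(a'),\dpd{x_{a'},x_{a^\star}-x_k}\big)$, which is exactly the deterministic drift of \cref{theorem:deterministic_linear_bandits} and admits the symmetrised form $\eta\sum_{i,j:\,r(i)>r(j)}\pis(i)\pis(j)(r(i)-r(j))\dpd{x_i-x_j,x_{a^\star}-x_k}$ on which \cref{assumption:general_feature_conditions} acts termwise. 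The noise term $W_s(a^\star)-W_s(k)$ is a bounded martingale difference whose conditional variance I would bound by $O(1-\pis(a))$, since $\Var_{a'\sim\pis}\!\big(\dpd{x_{a'},v}\big)\le\sum_{a'\neq a}\pis(a')\dpd{x_{a'}-x_a,v}^2=O(1-\pis(a))$. This scale match (progress and noise-variance both of order $1-\pis(a)$ near the vertex $e_a$) is exactly what lets the stochastic argument close.

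The core is a two-phase drift analysis. In \textbf{Phase 1}, for every $k$ with $r(k)<r(a)$, the feature-condition terms give, in the regime $\pis(a)\approx1$, a progress lower bound $P_s(a^\star)-P_s(k)\ge\eta\,c\,(1-\pis(a))$, while the feature-uncontrolled pairs have \emph{both} indices of reward below $r(a)$ and hence contribute only $O\big((1-\pis(a))^2\big)$. Combining this with a stochastic analogue of \cref{lemma:multi_arm_sum_to_infty} (that $\sum_s(1-\pis(a))=\infty$ a.s.\ on $\mathcal{E}_a$) and a martingale strong law (using $\sum_s\Var_s\lesssim\sum_s(1-\pis(a))$, so the cumulative noise is lower order than the cumulative progress) yields $z_t(a^\star)-z_t(k)\to\infty$, i.e.\ $\pit(k)/\pit(a^\star)\to0$ for all $k$ with $r(k)<r(a)$. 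In \textbf{Phase 2} I apply the same decomposition to $z_t(a^\star)-z_t(a)$: the positive contribution now comes from the pairs $(i,a)$ with $r(i)>r(a)$, on which the feature condition (with $j=k=a$, strict since $i=a^\star$ is available) gives $\dpd{x_i-x_a,x_{a^\star}-x_a}>0$, producing a drift of order $\sum_{i:\,r(i)>r(a)}\pis(i)=(1-\pis(a))(1-o(1))$; the only uncontrolled pairs involve actions $j$ with $r(j)<r(a)$, which the Phase 1 conclusion forces to be $o(\pis(a^\star))$. Hence $z_t(a^\star)-z_t(a)\to\infty$, so $\pit(a)\to0$, contradicting $\pit(a)\to1$ on $\mathcal{E}_a$; therefore $\Pr(\mathcal{E}_a)=0$ and $\pit(a^\star)\to1$ almost surely.

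The hard part is the noise control together with closing the contradiction. Unlike the exact setting there is no pathwise monotonicity of $\dpd{\pit,r}$, so I cannot argue by showing $\dpd{\pit,r}>r(a)$: along a stochastic path that excess over $r(a)$ vanishes and stays compatible with $\dpd{\pit,r}\to r(a)$, so the contradiction must be extracted directly at the logit level, which is precisely why the argument is organised to force the divergence $z_t(a^\star)-z_t(a)\to+\infty$. Making this rigorous needs (i) the stochastic divergence $\sum_s(1-\pis(a))=\infty$, which I would obtain by noting that $\sum_s(1-\pis(a))<\infty$ would make every logit gap $z_s(a)-z_s(a')$ have summable drift and summable conditional variance, hence converge, contradicting $\pis(a')\to0$; and (ii) a martingale strong-law estimate showing the accumulated noise is $o\!\big(\sum_s(1-\pis(a))\big)$, which hinges on the variance-versus-drift scale match above and on localising the ``eventually in the regime $\pis(a)\approx1$'' statements through stopping times adapted to $\mathcal{E}_a$.
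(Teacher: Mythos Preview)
Your plan shares the paper's skeleton—reduce to a one-hot limit via \cref{lemma:monotonicity_for_stochastic_bandits}(ii), decompose the logits as in \cref{eq:stochastic_logit_difference_decomposition}, and show cumulative progress dominates cumulative noise by martingale concentration—but the drift/variance scale match you rely on does not hold as stated. From \cref{assumption:general_feature_conditions} the only pairs in the symmetrised covariance with \emph{strictly} positive sign are those hitting $i=a^\star$ or $j=k$, so the guaranteed Phase~1 drift is only $\ge c\,(\pis(a^\star)+\pis(k))$, not $\ge c\,(1-\pis(a))$: an action $m\notin\{a^\star,a,k\}$ can absorb most of $1-\pis(a)$ without contributing to the drift while still inflating your $O(1-\pis(a))$ variance bound. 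The paper resolves this by centering both the drift and the variance at the limit action and working with the single scale $\Gamma_s=\sum_{i\in\gX}\pis(i)$, where $\gX=\{i:|\dpd{x_i-x_{\text{limit}},\,x_{a^\star}-x_k}|>0\}$; under the two-sided localisation of $\dpd{\pis,r}$ near $r(a)$ every centered term is nonnegative, the drift is $\ge c\,\Gamma_s$, and \cref{lemma:stochastic-noise-difference-bound} gives conditional variance $\le C\,\Gamma_s$, so \cref{lemma:martingale} yields noise $O(\sqrt{S_t\log S_t})$ against progress $\Theta(S_t)$ with $S_t=\sum_s\Gamma_s$. Divergence of $S_t$ then comes from $a^\star\in\gX$ together with $\sum_s\pis(a^\star)=\infty$, proved separately in \cref{lemma:a_star_in_A_infty}; your sketch does not supply that ingredient.

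On the contradiction mechanism: the paper does \emph{not} run your Phase~2. After establishing $\pit(k)/\pit(a^\star)\to0$ for every $k$ with $r(k)<r(a)$, it argues directly that $\dpd{\pit,r}>r(a)$ (the sub-$r(a)$ mass is $o(\pit(a^\star))$) and contradicts the asserted localisation $\dpd{\pit,r}<r(a)$. Your worry that this upper localisation is not pathwise guaranteed by submartingale convergence is legitimate, and the logit-level Phase~2 you propose is a clean alternative; just note that Phase~2 inherits the same scale issue (drift only $\ge c\,\pis(a^\star)$, variance $O(1-\pis(a))$, with actions $m$ having $r(m)>r(a)$ uncontrolled). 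The paper's later proof for arbitrary learning rates (\cref{lemma:property_of_suboptimal_infinite_arms} feeding \cref{theorem:stochastic_linear_bandits_with_arbitrary_learning_rates}) is structurally much closer to your two-phase induction and would be a better template for making Phase~2 rigorous.
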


\begin{proofsketch}
We prove the result by contradiction. Assume that $\lim_{t \to \infty} \dpd{\pit, r} = r(k)$ for some $k > 1$. We know that there exists a $\tau > 1$ such that for all large enough but finite $t \geq \tau$ and $0 < \epsilon < r(k) - r(k+1)$,
\begin{align*}
r(k) > \dpd{\pit, r} > r(k+1) + \epsilon. 
\end{align*}
Next, we prove that $\lim_{t \geq 1}\frac{\pit(a^\star)}{\pit(a)} \to \infty$ for any action $a > k$. For this, we express the ratio in terms of the logits corresponding to actions $a$ and $a^\star$. Specifically, we have, 
\begin{align*}
   \frac{\pit(a^\star)}{\pit(a)} &= \exp\parens*{[X \tht](a^\star) - [X \tht](a)} = \exp (z_t(a^\star) - z_t(a)).
\end{align*}
Using the decomposition in \cref{subsec:decomposition_of_stochastic_process} and setting $a_1 = a^\star$ and $a_2 = a$ in~\cref{eq:stochastic_logit_difference_decomposition}, we get,
\begin{align*}
z_t(a^\star) - z_t(a) & = z_\tau(a^\star) - z_\tau(a) + \sum_{s=\tau}^{t-1} \left[P_s(a^\star) - P_s(a)\right] + \sum_{s=\tau+1}^{t} \left[ W_{s}(a^\star) - W_{s}(a)\right] \\
&\geq z_\tau(a^\star) - z_\tau(a) + \underbrace{\sum_{s=\tau}^{t-1} \left[P_s(a^\star) - P_s(a)\right]}_{\rm \text{Term (i)}} - \underbrace{\sum_{s=\tau+1}^{t} \abs{ W_{s}(a^\star) - W_{s}(a) }}_{\rm \text{Term (ii)}}. \numberthis \label{eq:ps_logit_difference}
\end{align*}
To bound the cumulative progress and noise terms, we introduce the following definition:
\begin{align*}
S_t &\coloneq \sum_{s=\tau}^{t-1} \sum_{i \in \gX(k, a)} \pis(i),
\end{align*}
where $\gX(k, a) \coloneq \{ i \in [K] \mid \abs{ \dpd{x_i - x_k, x_{a^\star} - x_a} } > 0 \}$ represents the set of actions that have a non-zero contribution to Terms (i) and (ii). 
By analyzing Term (i) similar to the proof of~\cref{theorem:deterministic_linear_bandits} (see the details in \cref{subsec:stochastic_linear_bandits}), we conclude that 
\begin{align*}
\sum_{s=\tau}^{t-1} \left[P_s(a^\star) - P_s(a)\right] \in \Theta(\eta \, S_t)
\end{align*}
We can also bound Term (ii) by using the martingale concentration result from~\cref{lemma:martingale} and prove that with probability $ 1-\delta$, 
\begin{align*}
\sum_{s=\tau}^t \abs{W_{s+1}(a^\star) - W_{s+1}(a)} &\in \Theta(\sqrt{S_t \log(\nicefrac{S_t}{\delta})})
\end{align*}
Furthermore, we note that~\cref{assumption:general_feature_conditions} ensures that $a^\star \in \gX(k, a)$ and hence, 
\begin{align*}
\lim_{t \to \infty} S_t &= \sum_{s=\tau}^\infty \sum_{\substack{i \in \gX_a(x_k)}} \pis(i) \geq \sum_{s=\tau}^\infty \pis(a^\star). 
\end{align*}

In~\cref{lemma:a_star_in_A_infty}, we prove that the optimal action $a^\star$ has to be sampled infinitely many times as $t \to \infty$ and hence according to the Borel-Cantelli lemma (\cref{lemma:extended_borel_cantelli}), we have that $\sum_{s=\tau}^\infty \pis(a^\star) = \infty$. Therefore, $\lim_{t \to \infty} S_t = \infty$.
\vspace{0.5ex}

Plugging the bounds on Term (i) and (ii) into \cref{eq:ps_logit_difference} and using the fact that $ \sqrt{S_t \log(\nicefrac{S_t}{\delta})} \in o(\eta \, S_t)$ as $S_t \to \infty$, we conclude that the cumulative progress asymptotically dominates the cumulative noise. Therefore, with probability $1-\delta$,
\begin{align*}
\lim_{t \to \infty} z_t(a^\star) - z_t(a) = \infty.
\end{align*}
Thus, by taking $\delta \to 0$, we can get that almost surely,
\begin{equation*}
    \forall a > k, \lim_{t \to \infty} \frac{\pit(a^\star)}{\pit(a)} = \infty \implies \lim_{t \to \infty} \frac{\pit(a)}{\pit(a^\star)} = 0.
\end{equation*}
Using the above result, we conclude that for all $k > 1$, for large enough $t \geq \tau$, almost surely, 
\begin{align*}
    r(k) - \dpd{\pit, r} 
    &= \sum_{i=1}^K \pit(i) \, (r(k) - r(i)) < \pit(1) \, (r(k) - r(1)) + \sum_{i=k+1}^{K} \pit(i) \, (r(k) - r(i)) \\
    &= \pit(1) \, \underbrace{(r(1) - r(k))}_{> 0} \left[\sum_{i=k+1}^{K} \underbrace{\frac{\pit(i)}{\pit(1)}}_{\to 0} \, \underbrace{\frac{r(k) - r(i)}{r(1) - r(k)}}_{> 0}  - 1\right] < 0.
\end{align*}

This contradicts the assumption that $\lim_{t \to \infty} \dpd{\pit, r} = r(k)$ where $k > 1$. Hence, almost surely, for all $k \neq a^\star$, $\lim_{t \to \infty} \pit(k) \neq 1$, implying that $\lim_{t \to \infty} \pit(a^\star) = 1$.
\end{proofsketch}

When using the above result for the tabular parameterization (i.e., setting $d=K$ and $X = \mathbf{I}_d$), we can recover the asymptotic convergence guarantee in~\cite{mei2023stochastic}. In particular,~\citet{mei2023stochastic} considers the multi-armed bandit setting and show that Softmax PG with the tabular parameterization converges to the optimal action. At a high level, their proof technique analyzes the dynamics for each action, while our proof considers pairs of actions and analyzes the difference in the logits for the corresponding pair. This results in a shorter and arguably more elegant proof. 

Next, we characterize the rate at which~\cref{alg:sto_spg} converges to the optimal action.

\subsection{Rate of Convergence}
\label{subsec:small_step_size_global_convergence_rate}

The following theorem shows that~\cref{alg:sto_spg} converges at a sub-linear rate for stochastic bandits. The complete proof is provided in~\cref{appendix:softmax_pg_sublinear_rate}.

\begin{restatable}{theorem}{softmaxpgsublinearrate}
\label{theorem:softmax_pg_sublinear_rate}
Given a reward vector $r \in \mathbb{R}^{K}$ and a feature matrix $X \in \mathbb{R}^{K \times d}$ such that $d \leq K$ and \cref{assumption:no_identical_arms,assumption:general_feature_conditions} are satisfied, \cref{alg:sto_spg} with the constant learning as in~\cref{eq:step_size_for_stochastic_bandits}
results in the following sub-linear convergence rate:
\begin{equation*}
   \E[\dpd{\pistar, r} - \dpd{\pi_{\theta_{T+1}}, r}] \leq \frac{6 \, \rho \, \kappa^2}{\mu\, T},
\end{equation*}
where $\rho \coloneq \frac{8 \, R^3_{\max} \, K^{3/2}}{\Delta^2}$, $\kappa \coloneq \frac{\lambda_{\max}(X^\top X)}{\lambda_{\min}(X^\top X)}$ and $\mu \coloneq [ \E[\inf_{t \geq 1} [\pit(a^\star)]^{-2}]]^{-1}$.
\end{restatable}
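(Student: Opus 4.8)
The plan is to reduce the stochastic $O(1/T)$ claim to a purely deterministic scalar recursion on the averaged suboptimality $d_t \coloneq \E[\delta_t]$, where $\delta_t \coloneq \dpd{\pistar, r} - \dpd{\pit, r} \geq 0$. The two ingredients are the per-step progress bound already established in \cref{lemma:monotonicity_for_stochastic_bandits}(i) and a gradient-domination (non-uniform \L ojasiewicz) inequality for the logit gradient. Once I have a one-step contraction of the form $d_{t+1} \leq d_t - c\,\mu\,d_t^2$ with $c = \tfrac{1}{6\,\rho\,\kappa^2}$, the standard reciprocal-telescoping argument closes the proof.

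First I would establish the \L ojasiewicz inequality $\norm{\frac{d\,\dpd{\pit, r}}{d\,(X\tht)}}_2 \geq \pit(a^\star)\,\delta_t$. This follows by inspecting the $a^\star$-coordinate of the logit gradient: since $\frac{d\,\dpd{\pi_z, r}}{dz} = (\mathrm{diag}(\pi_z) - \pi_z \pi_z^\top)\,r$, its $a^\star$-entry equals $\pi_z(a^\star)\,(r(a^\star) - \dpd{\pi_z, r}) = \pi_z(a^\star)\,\delta_t$ (using $\dpd{\pistar,r}=r(a^\star)$), and the Euclidean norm dominates any single coordinate. Substituting this into \cref{lemma:monotonicity_for_stochastic_bandits}(i) and using the identity $\delta_t - \E_t[\delta_{t+1}] = \E_t[\dpd{\pitt, r}] - \dpd{\pit, r}$ yields the conditional recursion $\E_t[\delta_{t+1}] \leq \delta_t - c\,[\pit(a^\star)]^2\,\delta_t^2$.

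The crux is decoupling the correlated factors $\pit(a^\star)$ and $\delta_t$ after taking a full expectation; this is where the quantity $\mu$ enters. Taking $\E[\cdot]$ of the recursion leaves $\E[\delta_{t+1}] \leq \E[\delta_t] - c\,\E\big[[\pit(a^\star)]^2\,\delta_t^2\big]$, so I need a lower bound on $\E[[\pit(a^\star)]^2\delta_t^2]$ in terms of $(\E[\delta_t])^2$. Applying Cauchy--Schwarz to the factorization $\delta_t = \big(\pit(a^\star)\,\delta_t\big)\cdot\big(\pit(a^\star)^{-1}\big)$ gives $(\E[\delta_t])^2 \leq \E[[\pit(a^\star)]^2\delta_t^2]\cdot\E[[\pit(a^\star)]^{-2}]$, and since $\pit(a^\star) \geq \inf_{s\geq 1}\pis(a^\star)$ pointwise we have $\E[[\pit(a^\star)]^{-2}] \leq \E[\inf_{s\geq 1}(\pis(a^\star))^{-2}] = 1/\mu$. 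Here I would invoke \cref{theorem:stochastic_linear_bandits} to guarantee $\inf_{s\geq 1}\pis(a^\star) > 0$ almost surely so that $\mu$ is well-defined (and if $\E[\inf_{s\geq 1}(\pis(a^\star))^{-2}] = \infty$ the claimed bound is vacuously true). Combining the two displays gives $\E[[\pit(a^\star)]^2\delta_t^2] \geq \mu\,(\E[\delta_t])^2$, hence the deterministic recursion $d_{t+1} \leq d_t - c\,\mu\,d_t^2$.

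Finally, since $d_{t+1} \leq d_t$, dividing through shows $\tfrac{1}{d_{t+1}} - \tfrac{1}{d_t} \geq \tfrac{c\,\mu\,d_t^2}{d_t\,d_{t+1}} \geq c\,\mu$; telescoping from $1$ to $T$ and dropping the nonnegative term $1/d_1$ gives $\tfrac{1}{d_{T+1}} \geq c\,\mu\,T$, i.e. $\E[\delta_{T+1}] = d_{T+1} \leq \tfrac{1}{c\,\mu\,T} = \tfrac{6\,\rho\,\kappa^2}{\mu\,T}$. I expect the main obstacle to be the Cauchy--Schwarz decoupling step together with its justification (the almost-sure positivity, and implicit integrability, of $\inf_{s\geq 1}\pis(a^\star)$); by contrast, the \L ojasiewicz inequality and the final telescoping are routine.
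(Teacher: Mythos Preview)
Your proposal is correct and matches the paper's proof essentially line for line: the paper likewise combines \cref{lemma:monotonicity_for_stochastic_bandits}(i) with the non-uniform \L ojasiewicz inequality (cited there as \cref{lemma:non_uniform_l}), applies Cauchy--Schwarz to decouple $\pit(a^\star)$ from $\delta_t$, replaces $[\pit(a^\star)]^2$ by the pathwise infimum $\nu \coloneq \inf_{s\geq 1}[\pis(a^\star)]^2$ to introduce $\mu = (\E[\nu^{-1}])^{-1}$, and finishes with the same reciprocal-telescoping argument. The only cosmetic difference is ordering: the paper first lower-bounds $[\pit(a^\star)]^2 \geq \nu$ and then applies Cauchy--Schwarz to $\delta_t = \nu^{-1/2}\cdot\nu^{1/2}\delta_t$, whereas you apply Cauchy--Schwarz with $\pit(a^\star)$ directly and bound $\E[[\pit(a^\star)]^{-2}]$ afterward---both routes land on the identical recursion.
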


\begin{proofsketch}
Under \cref{assumption:no_identical_arms,assumption:general_feature_conditions}, according to \cref{lemma:monotonicity_for_stochastic_bandits}, for all finite $t \geq 1$,
\begin{equation*}
\E_t[\dpd{\pitt, r}] - \dpd{\pit, r} \geq \frac{1}{6 \, \rho \, \kappa^2} \, \normsq{\frac{d \, \dpd{\pit, r}}{d \, (X\tht)}}.
\end{equation*}
To show convergence to the optimal policy $\pistar$, we can rewrite the above inequality as
\begin{equation*}
\E_t[\dpd{\pistar, r} - \dpd{\pitt, r}] \leq \E_t[\dpd{\pistar, r} - \dpd{\pit, r}] - \frac{1}{6 \, \rho \, \kappa^2} \, \normsq{\frac{d \dpd{\pit, r}}{d (X \tht)}}.
\end{equation*}
By~\cref{lemma:non_uniform_l}, $\dpd{\pitheta, r}$ satisfies the non-uniform \L ojasiewciz condition with $\xi = 0$ and $C(\theta) = \pitheta(a^\star)$. Using this property, we have,
\begin{equation*}
\E[\dpd{\pistar, r} - \dpd{\pitt, r}] \leq \E[\dpd{\pistar, r} - \dpd{\pit, r}] - \frac{\mu}{6 \, \rho \, \kappa^2} \, (\E[\dpd{\pistar, r} - \dpd{\pit, r}])^2,
\end{equation*}
where the expectation is with respect to all previous iterations $t \geq 1$.
Note that since the convergence to the optimal action is guaranteed in~\cref{theorem:stochastic_linear_bandits}, $\mu = [\E[\inf_{t \geq 1}[\pit(a^\star)^{-2}]]^{-1} > 0$. Solving the above recursive inequality, we can finally obtain:
\begin{align*}
\E[\dpd{\pistar, r} - \dpd{\pi_{\theta_{T+1}}, r}] \leq \frac{6 \, \rho \, \kappa^2}{\mu\, T}.
\end{align*}
\end{proofsketch}

We note that the above convergence rate matches that of Softmax PG with the tabular parameterization~\citep[Theorem 5.5]{mei2023stochastic}. 

The convergence result in~\cref{theorem:stochastic_linear_bandits,theorem:softmax_pg_sublinear_rate} relies on carefully chosen learning rates that depend on unknown quantities such as the true mean reward gap. This limits the practical utility of the resulting algorithm. Consequently, in the next section, we leverage a recent result by~\citet{mei2024stepsize} and develop a different proof technique to show the asymptotic global convergence of $\LinSPG$ with \textit{any} constant learning rate.

\section{Global Convergence for Arbitrary Learning Rates}
\label{sec:stochastic_linear_bandits_with_arbitrary_learning_rates}
Recently,~\citet{mei2024stepsize} proved that in the bandit setting, stochastic Softmax PG with a tabular parameterization and any \textit{arbitrary} large constant learning rate is guaranteed to asymptotically converge to the optimal policy. In~\cref{subsec:large_step_size_global_convergence}, we first generalize this result to $\LinSPG$ and prove the asymptotic convergence of~\cref{alg:sto_spg} with arbitrary constant learning rates. Subsequently, we characterize the algorithm's asymptotic rate of convergence in~\cref{subsec:large_step_size_rate}. Finally, in~\cref{fig:arbitrary_learning_rates} of \cref{subsec:experiments_stochastic_linear_bandits}, we empirically evaluate~\cref{alg:sto_spg} with different learning rates. 

\subsection{Guarantee of Global Convergence}
\label{subsec:large_step_size_global_convergence}
The proof of~\cref{theorem:stochastic_linear_bandits} heavily relied on using a learning rate that guarantees monotonic improvement in the expected reward. Since~\cref{alg:sto_spg} with an arbitrary constant learning rate does not have such a guarantee, we use a different proof technique to show the algorithm's asymptotic global convergence. The complete proof is provided in~\cref{appendix:stochastic_linear_bandits_with_arbitrary_learning_rates_proof}.

\begin{restatable}{theorem}{stochasticlinearbanditswitharbitrarylearningrates}
\label{theorem:stochastic_linear_bandits_with_arbitrary_learning_rates}
Given a reward vector $r \in \mathbb{R}^{K}$ and a feature matrix $X \in \mathbb{R}^{K \times d}$ such that $d \leq K$ and \cref{assumption:no_identical_arms,assumption:general_feature_conditions} are satisfied, \cref{alg:sto_spg} with any arbitrary but constant learning rate converges to the optimal policy almost surely.
\end{restatable}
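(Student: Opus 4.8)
The plan is to prove the statement directly rather than by assuming convergence to a specific suboptimal reward level: I would show that for every suboptimal action $a \neq a^\star$, the logit gap $z_t(a^\star) - z_t(a) \to \infty$ almost surely, which immediately forces $\pit(a)/\pit(a^\star) \to 0$ for all $a \neq a^\star$ and hence $\pit(a^\star) \to 1$. The entire argument would live inside the logit-difference decomposition of~\cref{eq:stochastic_logit_difference_decomposition}, writing $z_t(a^\star) - z_t(a)$ as an initial term plus the cumulative progress $\sum_s [P_s(a^\star) - P_s(a)]$ plus the cumulative noise $\sum_s [W_s(a^\star) - W_s(a)]$, exactly as in the proof of~\cref{theorem:stochastic_linear_bandits}. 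The conceptual departure is that we may no longer invoke~\cref{lemma:monotonicity_for_stochastic_bandits}: with an arbitrary $\eta$ we cannot assume that the expected reward converges, nor that $\pit$ converges to a one-hot policy, so the ``trapped'' condition $r(k) > \dpd{\pit, r} > r(k+1)$ that fixed the sign of the progress in that proof is unavailable.

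The first key step is a structural observation that holds for \emph{any} learning rate. Writing the drift explicitly and symmetrizing over the action indices gives
\[
P_s(a^\star) - P_s(a) = \frac{\eta}{2} \sum_{i,j} \pis(i)\,\pis(j)\, \dpd{x_i - x_j,\, x_{a^\star} - x_a}\,\big(r(i) - r(j)\big).
\]
Under~\cref{assumption:general_feature_conditions}, every summand with $r(i), r(j) > r(a)$ is nonnegative, and the $(i,j) = (a^\star, a)$ term is strictly positive, lower bounded by $c\,\pis(a^\star)\,\pis(a)$ with $c = \normsq{x_{a^\star} - x_a}\,(r(a^\star) - r(a)) > 0$. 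Crucially, this sign control does not depend on $\eta$ or on the current value of $\dpd{\pis, r}$, so it is the $\eta$-free substitute for the monotonic-improvement argument. The only summands whose sign is not governed by~\cref{assumption:general_feature_conditions} are those indexed by pairs $i, j$ with $r(i) \leq r(a)$, each of which carries a factor $\pis(m)$ for some action $m$ strictly worse than $a$. I would therefore proceed by a reverse induction on the actions, eliminating them from the worst ($K$) up to the second best ($2$): maintaining the hypothesis that $\sum_t \pit(a') < \infty$ almost surely for every $a' > a$, the uncontrolled summands are summable, so $z_t(a^\star) - z_t(a)$ becomes an asymptotic submartingale.

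To drive this submartingale to $+\infty$ I need the cumulative progress to diverge and dominate the noise. The progress is lower bounded in terms of the effective-mass sum $S_t$ from the proof of~\cref{theorem:stochastic_linear_bandits}, whose index set $\gX$ contains $a^\star$ by~\cref{assumption:general_feature_conditions}, so that $S_t \geq \sum_s \pis(a^\star)$; meanwhile the noise, controlled by the martingale concentration inequality of~\cref{lemma:martingale}, grows only as $O(\sqrt{S_t \log S_t})$. Since both the progress and the noise scale linearly in $\eta$, the learning rate cancels, and the progress dominates whenever $S_t \to \infty$. This cancellation is precisely the mechanism that renders an arbitrary constant $\eta$ admissible, and it reduces the whole theorem to establishing $S_t \to \infty$, i.e.\ that the optimal action is sampled infinitely often, $\sum_t \pit(a^\star) = \infty$ almost surely.

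The main obstacle is exactly this last claim, which without~\cref{lemma:monotonicity_for_stochastic_bandits} is genuinely hard, since a priori $\pit(a^\star)$ could decay fast enough to be summable. I would establish it by contradiction via the extended Borel--Cantelli lemma (\cref{lemma:extended_borel_cantelli}): on the event $\{\sum_t \pit(a^\star) < \infty\}$ the optimal action is sampled only finitely often, after which I would exploit that $z_t(a^\star) - z_t(K)$ is a \emph{bona fide} submartingale (for the worst action $K$ all relevant feature inner products are sign-controlled, with no uncontrolled pairs), whose nonnegative, $\eta$-independent drift is bounded below by $c'\,\pit(a^\star)\pit(K)$ and therefore keeps the optimal logit gap from collapsing, contradicting $\pit(a^\star) \to 0$. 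Once $\sum_t \pit(a^\star) = \infty$ is secured, the Borel--Cantelli argument of~\cref{theorem:stochastic_linear_bandits} yields $S_t \to \infty$, closing each inductive step. I expect the delicate bookkeeping to be twofold: (i) propagating the finitely-sampled hypothesis through the induction so that the asymptotic-submartingale property is quantitative enough for the concentration bound to apply, and (ii) making the ``sampled infinitely often'' contradiction hold uniformly over the randomness rather than only in expectation.
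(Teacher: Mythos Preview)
Your high-level framing is right: the $\eta$-linearity of both progress and noise is what makes an arbitrary constant step admissible, and the bilinear rewriting of the drift via \cref{lemma:alternative_expression_covariance} is correct. The gap is in the inductive hypothesis. You want to maintain that $\sum_t \pit(a') < \infty$ for every $a' > a$, but this is simply false: by \cref{lemma:2_arms_pulled_infinitely_many_times} at least two actions lie in $\gA_\infty$, and by \cref{lemma:extended_borel_cantelli} any $a' \in \gA_\infty$ has $\sum_t \pit(a') = \infty$ (indeed \cref{theorem:asymptotic_convergence_rate} shows the partial sums grow like $\ln t$). So the ``uncontrolled'' pairs $(i,j)$ with $i \geq a$ in your bilinear form cannot be absorbed as a summable perturbation, and the reverse induction collapses at the first action of $\gA_\infty$ you meet from below. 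A secondary issue: your contradiction for $\sum_t \pit(a^\star) = \infty$ does not close either, since on the event $\{\sum_t \pit(a^\star) < \infty\}$ the drift lower bound $c'\pit(a^\star)\pit(K)$ is itself summable, so the submartingale has bounded total drift and nothing forces $z_t(a^\star) - z_t(K)$ to stay large.

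The paper's proof avoids the summability trap altogether by a device you are missing: a \emph{time-varying} reference action. In \cref{lemma:property_of_suboptimal_infinite_arms}, at each step $s$ one locates the level $j_s$ with $r(j_s) > \dpd{\pis, r} > r(j_s+1)$ and subtracts $x_{p_s}$ (with $p_s \in \{j_s, j_s+1\}$ chosen so that $p_s < k$) from every feature in the drift. Because $p_s < k$, \cref{assumption:general_feature_conditions} forces \emph{every} resulting summand to be nonnegative, and the leftover $q_s$-term is handled by the ``closer side'' choice. There are no uncontrolled terms to bound. The surviving lower bound is $\eta\, C_3 \sum_{i \in \gX(p_s,k)} \pis(i)$, and the key is that $k$ itself lies in $\gX(p_s,k)$, so divergence follows from $k \in \gA_\infty$ via Borel--Cantelli --- not from any statement about $a^\star$. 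The induction is then \emph{forward} over $\gA_\infty$ ordered by reward, with hypothesis ``eventually $\dpd{\pit, r} > r(i_m)$'' (which feeds \cref{lemma:property_of_suboptimal_infinite_arms}) rather than summability; and $a^\star \in \gA_\infty$ is obtained in \cref{lemma:a_star_in_A_infty} by upper-bounding $z_t(i_1) - z_t(a^\star)$ and contradicting \cref{lemma:infinite_i_over_finite_j}.
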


\begin{proofsketch}
We first introduce the following definitions. We define $N_t(a)$ as the number of times action $a$ has been sampled until iteration $t$ and $N_\infty(a) \coloneq \lim_{t \to \infty} N_t (a)$. We further define $\gA_\infty$ as the set of actions that are sampled infinitely many times as $t \to \infty$, i.e.,
\begin{align*}
\gA_\infty \coloneq \{ a \in [K] \mid N_\infty(a) = \infty \}.
\end{align*}

According to~\cref{lemma:2_arms_pulled_infinitely_many_times,lemma:a_star_in_A_infty}, we first establish that $\abs{\gA_\infty} \geq 2$ and $a^\star \in \gA_\infty$. We also sort the action indices in $\gA_\infty$ such that $r(a^\star) = r(i_{\abs{\gA_\infty}}) > r(i_{\abs{\gA_\infty} -1 }) > \cdots > r(i_2) > r(i_1)$.

In order to show that $\lim_{t \to \infty} \dpd{\pit, r} = r(a^\star)$ almost surely, we need to prove that for all suboptimal actions $a \neq a^\star$:
\begin{equation*}
\sup_{t \geq 1} \frac{\pit(a^\star)}{\pit(a)} = \infty. \numberthis \label{eq:sup_a_star_over_a}
\end{equation*}
To that end,~\cref{lemma:infinite_i_over_finite_j} has already showed that~\cref{eq:sup_a_star_over_a} is true for all $a \notin \gA_\infty$. Therefore, it suffices to show that it is also true for all $a \in \gA_\infty - \{a^\star\}$.

Using a similar structure as the proof of~\cref{theorem:stochastic_linear_bandits}, we require the following claim. \\

\noindent \textbf{Claim}: If there exists a $\tau \geq 1$ and an action $a \in \gA_\infty - \{a^\star\}$ such that $\inf_{t \geq \tau} \dpd{\pit, r} - r(a) > 0$, we have, almost surely,
\begin{equation*}
\sup_{t \geq \tau} \frac{\pit(a^\star)}{\pit(a)} = \infty.
\end{equation*}

The formal version of this claim is stated in~\cref{lemma:property_of_suboptimal_infinite_arms}, and its proof is provided in~\cref{appendix:stochastic_linear_bandits_with_arbitrary_learning_rates_proof}. Given the above claim, we will then use strong induction to show that, almost surely, for all $m \in \{1, 2, \dots, \abs{\gA_\infty} -1\}$,
\begin{align*}
\sup_{t \geq 1} \frac{\pit(a^\star)}{\pit(i_m)} = \infty
\end{align*}

\noindent \textit{Base Case}: When $m=1$, according to~\cref{lemma:pit_r_>_r_i_1}, there exists a large enough $\tau_1$ such that $\dpd{\pit, r} > r(i_1)$ for all $t \geq \tau_1$. Using the above claim, we have that $\sup_{t \geq 1} \frac{\pit(a^\star)}{\pit(i_1)} = \infty$. \\

\noindent \textit{Induction Hypothesis}: For some $ m \in [1,  |\gA_\infty| - 1)$, we assume that $\sup_{t \geq 1} \frac{\pit(a^\star)}{\pit(i_{m^\prime})} = \infty$ is true for all $m^\prime \leq m$ almost surely. \\

We will now show that it is also true for $m+1$ almost surely.

\noindent \textit{Inductive Step}: 
Using the inductive hypothesis and \cref{lemma:infinite_i_over_finite_j}, we have, almost surely,
\begin{align*}
\forall a > i_{m+1}, \ \sup_{t \geq 1} \frac{\pit(a^\star)}{\pit(a)} = \infty \implies \ \lim_{t \to \infty} \frac{\pit(a)}{\pit(a^\star)} = 0. 
\end{align*}
We now show there exists a large enough $\tau_{m+1}$ such that $\dpd{\pit, r} > r(i_{m+1})$ for all $t > \tau_{m+1}$.
\begin{align*}
\MoveEqLeft
r(i_{m+1}) - \dpd{\pit, r} \\
= \, &  \sum_{a=1, a \neq i_{m+1}}^K \pit(i) \, (r(i_{m+1}) - r(a)) \\
< \, & \pit(a^\star) \, (r(i_{m+1}) - r(a^\star)) - \sum_{a = i_{m+1}+ 1}^K \pit(a) \, (r(a) - r(i_{m+1})) \\
= \, & \pit(a^\star) \, \underbrace{(r(i_{m+1}) - r(a^\star))}_{ < 0} \left[ 1 - \sum_{a = i_{m+1}+ 1}^K \underbrace{\frac{\pit(a)}{\pit(a^\star)}}_{\to 0 \text{ as } t \to \infty} \, \underbrace{\frac{r(i_{m+1}) - r(a)}{r(a^\star) - r(i_{m+1})}}_{> 0} \right] \\
< \, & 0 \tag{for large enough $t > \tau_{m+1}$}
\end{align*}
Therefore, we have $ \inf_{t \geq \tau_{m + 1}} \dpd{\pit, r} - r(i_{m + 1}) > 0$. Given that, by setting $\tau = \max_{m^\prime \in [1, m+1]} \tau_{m^\prime}$ and using the claim above, we can conclude that $\sup_{t \geq 1} \frac{\pit(a^\star)}{\pit(i_{m+1})} = \infty$, which completes the inductive proof. Hence, we have that, almost surely, $\sup_{t \geq 1} \frac{\pit(a^\star)}{\pit(a)} = \infty$ for all $a \in \gA_\infty - \{a^\star\}$.
Combining the above results, we conclude that, almost surely, 
\begin{equation*}
\forall a \in [K] - \{a^\star\}, \ \sup_{t \geq 1} \frac{\pit(a^\star)}{\pit(a)} = \infty \implies \ \lim_{t \to \infty} \frac{\pit(a)}{\pit(a^\star)} = 0.
\end{equation*}
Finally, we have, almost surely,
\begin{align*}
\lim_{t \to \infty} \pit(a^\star) = \lim_{t \to \infty} \frac{\pit(a^\star)}{\sum_{a \in [K]} \pit(a)} = \frac{1}{1 + \sum_{a \neq a^\star} \lim_{t \to \infty} \frac{\pit(a)}{\pit(a^\star)} } = 1,
\end{align*}
which completes the proof.
\end{proofsketch}

In the special case of the tabular parameterization (i.e., setting $d=K$ and $X = \mathbf{I}_d$), the above result recovers the asymptotic convergence guarantee in~\citet{mei2024stepsize}. 

\subsection{Rate of Convergence}
\label{subsec:large_step_size_rate}

Although using arbitrary constant learning rates can guarantee asymptotic global convergence to the optimal action, the resulting algorithm does not share the same convergence rate as in~\cref{theorem:softmax_pg_sublinear_rate}. This is because the expected reward is not guaranteed to increase monotonically, but can oscillate or get stuck on plateaus (see the experiments in~\cref{fig:arbitrary_learning_rates} for examples of such behaviour). However, we can still establish an \textit{asymptotic convergence rate} on the average suboptimality. In particular,~\cref{theorem:asymptotic_convergence_rate} shows that asymptotically, the average sub-optimality converges at an $O(\ln(T)/T)$ rate. The complete proof is provided in~\cref{subsec:asymptotic_convergence_rate}.

\begin{restatable}{theorem}{asymptoticconvergencerate}
\label{theorem:asymptotic_convergence_rate}
Using \cref{alg:sto_spg} with any constant learning rate, there exists a large enough $\tau \geq 1$ such that for all $T > \tau$,
\begin{align*}
\frac{\sum_{s=\tau}^T r(a^\star) - \dpd{\pis, r}}{T - \tau} \leq \frac{2 R_{\max} \left[ \frac{K-1}{C} \ln \left( C \, T + e^C  \right) + \frac{\pi^2 \, (K-1)}{6 \, C} \right] }{T - \tau},
\end{align*}
where $C > 0$ is a positive constant.
\end{restatable}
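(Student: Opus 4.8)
The plan is to reduce the average suboptimality to the average probability mass placed on suboptimal actions, and then to bound this mass for each suboptimal action by controlling the growth rate of its logit gap relative to $a^\star$. Since all rewards lie in $[-R_{\max},R_{\max}]$, for every $s$ we have the identity $r(a^\star) - \dpd{\pis, r} = \sum_{a \neq a^\star}\pis(a)\,(r(a^\star)-r(a)) \leq 2\,R_{\max}\sum_{a\neq a^\star}\pis(a)$. Summing over $s$ and exchanging the order of summation, it suffices to show that for each of the $K-1$ suboptimal actions $a$,
\[
\sum_{s=\tau}^{T}\pis(a) \leq \frac{1}{C}\,\ln\!\left(C\,T + e^C\right) + \frac{\pi^2}{6\,C}.
\]
To access the logit dynamics I would use $\pis(a) = \pis(a^\star)\,\exp\!\left(z_s(a)-z_s(a^\star)\right) \leq \exp\!\left(-(z_s(a^\star)-z_s(a))\right)$, so that an upper bound on $\pis(a)$ follows from a lower bound on the logit gap $z_s(a^\star)-z_s(a)$. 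Actions $a \notin \gA_\infty$ are sampled only finitely often, so by \cref{lemma:extended_borel_cantelli} their total mass $\sum_s \pis(a)$ is finite and the bound holds trivially; the content is in the surviving suboptimal actions $a \in \gA_\infty - \{a^\star\}$.

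The core step is to show that the logit gap grows like $\ln s$, equivalently that $\pis(a) = O(1/s)$. First I would invoke \cref{theorem:stochastic_linear_bandits_with_arbitrary_learning_rates}: since the algorithm converges almost surely, on the convergence event there is a finite (random) time $\tau$ past which the policy is concentrated on $a^\star$ and, by the claim/inductive argument behind that theorem, $\inf_{t\geq\tau}\dpd{\pit,r} - r(a) > 0$ for every surviving suboptimal action. Applying the progress/noise decomposition of \cref{subsec:decomposition_of_stochastic_process} to $z_t(a^\star)-z_t(a)$, I would lower bound the cumulative progress $\sum_{s=\tau}^{t-1}\left[P_s(a^\star)-P_s(a)\right]$ by a positive multiple of $\eta\sum_{s=\tau}^{t-1}\pis(a^\star)$: once the policy concentrates, the $i=a^\star$ term dominates the feature-weighted sum $\sum_i \dpd{x_{a^\star}-x_a, x_i}\,\pis(i)\,(r(i)-\dpd{\pis,r})$, and \cref{assumption:general_feature_conditions} guarantees its sign. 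Simultaneously I would bound the cumulative noise $\sum_{s=\tau+1}^{t}\left[W_s(a^\star)-W_s(a)\right]$ via the martingale concentration inequality \cref{lemma:martingale}, showing it is of lower order than the progress. The key heuristic closing the argument is that the update makes $\exp(z_t(a^\star)-z_t(a))$ increase by roughly a constant $\eta\,c$ each step once $\pit(a^\star)\to 1$ and $\pit(a)\leq\exp(-(z_t(a^\star)-z_t(a)))$; hence $\exp(z_t(a^\star)-z_t(a))$ grows linearly in $t-\tau$, giving $z_t(a^\star)-z_t(a) \geq \ln\!\left(C\,(t-\tau)+e^C\right)$ for a constant $C>0$, where $\tau$ is chosen large enough that $z_\tau(a^\star)-z_\tau(a) \geq C$.

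Plugging the per-step bound $\pis(a)\leq\exp(-(z_s(a^\star)-z_s(a)))$ into the sum, I would evaluate $\sum_{s=\tau}^{T}$ in two pieces. The leading $O(1/s)$ contribution is controlled by comparison with $\int \frac{dx}{C x + e^C}$, producing the $\frac{1}{C}\ln(C\,T + e^C)$ term, while the square-summable lower-order correction (the finite-sample gap between the cumulative progress and its leading linear approximation, together with the residual noise handled deterministically on the good event) sums via $\sum_{n\geq 1} n^{-2} = \pi^2/6$ to the $\frac{\pi^2}{6\,C}$ term. Summing over the $K-1$ suboptimal actions, multiplying by $2\,R_{\max}$, and dividing by $T-\tau$ yields exactly the claimed rate.

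The main obstacle is the logit-gap growth estimate under an \emph{arbitrary} (possibly large) constant learning rate. Unlike in \cref{theorem:softmax_pg_sublinear_rate}, we cannot rely on monotone improvement of the expected reward: the progress term need not be positive before $\tau$, and the expected reward can oscillate or stall on plateaus. The delicate point is to show that, past the random concentration time $\tau$, the cumulative progress of the logit gap grows at precisely the linear-in-$(t-\tau)$ rate — equivalently $\pit(a)=\Theta(1/t)$ — and dominates the accumulated martingale noise, thereby closing the self-referential loop between the progress rate and the very probability mass that drives it.
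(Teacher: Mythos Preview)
Your overall architecture—reducing to $\sum_s \pis(a)$ for each suboptimal $a$, splitting into $a\notin\gA_\infty$ versus $a\in\gA_\infty\setminus\{a^\star\}$, and controlling the latter via the progress/noise decomposition of the logit gap—matches the paper. The genuine gap is in your lower bound on the progress term. You assert that $\sum_{s=\tau}^{t-1}[P_s(a^\star)-P_s(a)]$ is bounded below by a positive multiple of $\eta\sum_{s=\tau}^{t-1}\pis(a^\star)$, with the $i=a^\star$ summand dominating. This cannot hold: the $i=a^\star$ term carries the factor $r(a^\star)-\dpd{\pis,r}\to 0$, and indeed every summand in $P_s(a^\star)-P_s(a)$ is $O(1-\pis(a^\star))$, so the per-step progress vanishes as the policy concentrates. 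Worse, if your bound were true, $z_t(a^\star)-z_t(a)$ would grow \emph{linearly} in $t$, forcing $\pit(a)$ to decay exponentially and hence $\sum_t\pit(a)<\infty$; by \cref{lemma:extended_borel_cantelli} this would contradict $a\in\gA_\infty$.

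What the paper actually does is lower-bound the per-step progress by $C\,\pis(a)$ (via the $\Gamma_s$ bound from the proof of \cref{lemma:property_of_suboptimal_infinite_arms}, using that $a\in\gX(p_s,a)$), and then, after absorbing the lower-order martingale noise into the constant for large $\tau$, obtain $\frac{\pit(a^\star)}{\pit(a)}\geq\exp\bigl(C\sum_{s=\tau}^{t-1}\pis(a)\bigr)$, i.e.\ the \emph{self-referential} inequality $\pit(a)<\exp\bigl(-C\sum_{s=\tau}^{t-1}\pis(a)\bigr)$. With $x_n\coloneq\sum_{s=\tau}^{\tau+n}\pis(a)$ this is the recursion $x_n-x_{n-1}\leq e^{-Cx_{n-1}}$, solved by \cref{lemma:sequence_equality,lemma:sequence_inequality}: comparison with the ODE $f'=e^{-Cf}$ gives the $\tfrac{1}{C}\ln(CT+e^C)$ term, and the discretization error is controlled by a Basel sum, producing the $\tfrac{\pi^2}{6C}$. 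Your later heuristic that $\exp(z_t(a^\star)-z_t(a))$ increases by roughly a constant per step is in fact \emph{equivalent} to this picture—since $\exp(z_t(a^\star)-z_t(a))\cdot C\pit(a)=C\pit(a^\star)\to C$—but it is a consequence of progress $\sim C\pit(a)$, not of the incorrect bound $\sim C\pit(a^\star)$ you stated earlier.
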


\begin{proofsketch}
Following the proof of~\cref{lemma:property_of_suboptimal_infinite_arms} (\cref{subsec:asymptotic_convergence_rate}), we can prove that there exists a large enough $\tau > 0$ and $C > 0$ such that the cumulative progress term in~\cref{eq:stochastic_logit_decomposition} dominates the cumulative noise and consequently, for any action $k \neq a^\star$ and all $t \geq \tau$,
\begin{align*}
&\pit(k) < \exp\parens*{-C \sum_{s=\tau}^{t-1} \pis(k)} \implies \sum_{s=\tau}^t \pis(k) - \sum_{s=\tau}^{t-1} \pis(k) < \exp(- C \, \sum_{s=\tau}^{t-1} \pis(k)).
\end{align*}
Using \cref{lemma:sequence_equality,lemma:sequence_inequality} to solve the above recursive inequality, we have,
\begin{align}
&\sum_{s=\tau}^t \pis(k) \leq \frac{1}{C} \ln \left( C \, t + e^C  \right) + \frac{\pi^2}{6 \, C} \nonumber \\
\implies &\sum_{s=\tau}^t (1 - \pis(a^\star)) = \sum_{k \neq a^\star} \sum_{s=\tau}^t \pis(k) \frac{K-1}{C} \ln \left( C \, t + e^C  \right) + \frac{\pi^2 \, (K-1)}{6 \, C}. \label{eq:large-sketch-inter}
\end{align}
In order to use the above inequality, we note that the suboptimality gap can be written as:
\begin{align*}
r(a^\star) - \dpd{\pis, r} &= \sum_{a \neq a^\star} \pis(a) (r(a^\star) - r(a)) \leq 2 R_{\max} \, (1 - \pis(a^\star))
\end{align*}
Averaging the suboptimality gap from $s = \tau$ to $T$ and using~\cref{eq:large-sketch-inter} with $t = T$:
\begin{align*}
\frac{\sum_{s=\tau}^T r(a^\star) - \dpd{\pis, r}}{T - \tau} &\leq \frac{2 R_{\max} \sum_{s=\tau}^T (1 - \pis(a^\star)) }{T - \tau} \leq \frac{2 R_{\max} \left[ \frac{K-1}{C} \ln \left( C \, T + e^C  \right) + \frac{\pi^2 \, (K-1)}{6 \, C} \right] }{T - \tau},
\end{align*}
which completes the proof.
\end{proofsketch}

In the special case of the tabular parameterization (i.e., setting $d=K$ and $X = \mathbf{I}_d$), the above result recovers the asymptotic convergence guarantee in~\citet{mei2024stepsize}. 

\section{Conclusions and Future Work}

Although the approximation error has been commonly used in analyses of PG methods, we show that it is not a reliable metric for characterizing the global convergence of $\LinSPG$. Therefore, we focus on the simple multi-armed bandit setting and identify the conditions on the feature representation under which $\LinSPG$ is guaranteed for global convergence. Furthermore, we characterize the convergence rates of $\LinSPG$ when using either problem-specific small enough or arbitrarily large constant learning rates. Our work has made great progress towards understanding the global convergence of PG methods with linear function approximation, going well beyond the conventional approximation error-based analyses.

In the future, extending the results and techniques to general Markov decision processes is an important and challenging next step. Additionally, investigating whether our feature conditions can be used for better representation learning is an interesting question. Finally, another ambitious goal is to generalize the proof techniques to handle non-linear complex function approximation.

\bibliography{references}

\begin{thebibliography}{35}
\providecommand{\natexlab}[1]{#1}
\providecommand{\url}[1]{\texttt{#1}}
\expandafter\ifx\csname urlstyle\endcsname\relax
  \providecommand{\doi}[1]{doi: #1}\else
  \providecommand{\doi}{doi: \begingroup \urlstyle{rm}\Url}\fi

\bibitem[Abbasi-Yadkori et~al.(2019{\natexlab{a}})Abbasi-Yadkori, Bartlett, Bhatia, Lazic, Szepesvari, and Weisz]{abbasi2019politex}
Yasin Abbasi-Yadkori, Peter Bartlett, Kush Bhatia, Nevena Lazic, Csaba Szepesvari, and Gell{\'e}rt Weisz.
\newblock Politex: Regret bounds for policy iteration using expert prediction.
\newblock In \emph{International Conference on Machine Learning}, pages 3692--3702. PMLR, 2019{\natexlab{a}}.

\bibitem[Abbasi-Yadkori et~al.(2019{\natexlab{b}})Abbasi-Yadkori, Lazic, Szepesvari, and Weisz]{abbasi2019exploration}
Yasin Abbasi-Yadkori, Nevena Lazic, Csaba Szepesvari, and Gellert Weisz.
\newblock Exploration-enhanced politex.
\newblock \emph{arXiv preprint arXiv:1908.10479}, 2019{\natexlab{b}}.

\bibitem[Agarwal et~al.(2020)Agarwal, Henaff, Kakade, and Sun]{agarwal2020pc}
Alekh Agarwal, Mikael Henaff, Sham Kakade, and Wen Sun.
\newblock Pc-pg: Policy cover directed exploration for provable policy gradient learning.
\newblock \emph{Advances in neural information processing systems}, 33:\penalty0 13399--13412, 2020.

\bibitem[Agarwal et~al.(2021)Agarwal, Kakade, Lee, and Mahajan]{agarwal2021theory}
Alekh Agarwal, Sham~M Kakade, Jason~D Lee, and Gaurav Mahajan.
\newblock On the theory of policy gradient methods: Optimality, approximation, and distribution shift.
\newblock \emph{Journal of Machine Learning Research}, 22\penalty0 (98):\penalty0 1--76, 2021.

\bibitem[Alfano and Rebeschini(2022)]{alfano2022linear}
Carlo Alfano and Patrick Rebeschini.
\newblock Linear convergence for natural policy gradient with log-linear policy parametrization.
\newblock \emph{arXiv preprint arXiv:2209.15382}, 2022.

\bibitem[Asad et~al.(2024)Asad, Babanezhad, Laradji, Roux, and Vaswani]{asad2024fast}
Reza Asad, Reza Babanezhad, Issam Laradji, Nicolas~Le Roux, and Sharan Vaswani.
\newblock Fast convergence of softmax policy mirror ascent.
\newblock \emph{arXiv preprint arXiv:2411.12042}, 2024.

\bibitem[Bhandari and Russo(2021)]{bhandari2021linear}
Jalaj Bhandari and Daniel Russo.
\newblock On the linear convergence of policy gradient methods for finite mdps.
\newblock In \emph{International Conference on Artificial Intelligence and Statistics}, pages 2386--2394. PMLR, 2021.

\bibitem[Cayci et~al.(2021)Cayci, He, and Srikant]{cayci2021linear}
Semih Cayci, Niao He, and Rayadurgam Srikant.
\newblock Linear convergence of entropy-regularized natural policy gradient with linear function approximation.
\newblock \emph{arXiv preprint arXiv:2106.04096}, 2021.

\bibitem[Chen et~al.(2022)Chen, Khodadadian, and Maguluri]{chen2022finite}
Zaiwei Chen, Sajad Khodadadian, and Siva~Theja Maguluri.
\newblock Finite-sample analysis of off-policy natural actor--critic with linear function approximation.
\newblock \emph{IEEE Control Systems Letters}, 6:\penalty0 2611--2616, 2022.

\bibitem[Doob(2012)]{doob2012measure}
Joseph~L Doob.
\newblock \emph{Measure theory}, volume 143.
\newblock Springer Science \& Business Media, 2012.

\bibitem[Haarnoja et~al.(2018)Haarnoja, Zhou, Abbeel, and Levine]{haarnoja2018soft}
Tuomas Haarnoja, Aurick Zhou, Pieter Abbeel, and Sergey Levine.
\newblock Soft actor-critic: Off-policy maximum entropy deep reinforcement learning with a stochastic actor.
\newblock In \emph{International conference on machine learning}, pages 1861--1870. PMLR, 2018.

\bibitem[Kakade(2001)]{kakade2001natural}
Sham~M Kakade.
\newblock A natural policy gradient.
\newblock \emph{Advances in neural information processing systems}, 14, 2001.

\bibitem[Kober et~al.(2013)Kober, Bagnell, and Peters]{kober2013reinforcement}
Jens Kober, J~Andrew Bagnell, and Jan Peters.
\newblock Reinforcement learning in robotics: A survey.
\newblock \emph{The International Journal of Robotics Research}, 32\penalty0 (11):\penalty0 1238--1274, 2013.

\bibitem[Konda and Tsitsiklis(1999)]{konda1999actor}
Vijay Konda and John Tsitsiklis.
\newblock Actor-critic algorithms.
\newblock \emph{Advances in neural information processing systems}, 12, 1999.

\bibitem[Lan(2023)]{lan2023policy}
Guanghui Lan.
\newblock Policy mirror descent for reinforcement learning: Linear convergence, new sampling complexity, and generalized problem classes.
\newblock \emph{Mathematical programming}, 198\penalty0 (1):\penalty0 1059--1106, 2023.

\bibitem[Lattimore and Szepesv{\'a}ri(2020)]{lattimore2020bandit}
Tor Lattimore and Csaba Szepesv{\'a}ri.
\newblock \emph{Bandit algorithms}.
\newblock Cambridge University Press, 2020.

\bibitem[Lu et~al.(2024)Lu, Aghaei, Raj, and Vaswani]{lu2024towards}
Michael Lu, Matin Aghaei, Anant Raj, and Sharan Vaswani.
\newblock Towards principled, practical policy gradient for bandits and tabular mdps.
\newblock \emph{arXiv preprint arXiv:2405.13136}, 2024.

\bibitem[Mei et~al.(2020)Mei, Xiao, Szepesvari, and Schuurmans]{mei2020global}
Jincheng Mei, Chenjun Xiao, Csaba Szepesvari, and Dale Schuurmans.
\newblock On the global convergence rates of softmax policy gradient methods.
\newblock In \emph{International Conference on Machine Learning}, pages 6820--6829. PMLR, 2020.

\bibitem[Mei et~al.(2021)Mei, Dai, Xiao, Szepesvari, and Schuurmans]{mei2021understanding}
Jincheng Mei, Bo~Dai, Chenjun Xiao, Csaba Szepesvari, and Dale Schuurmans.
\newblock Understanding the effect of stochasticity in policy optimization.
\newblock \emph{Advances in Neural Information Processing Systems}, 34:\penalty0 19339--19351, 2021.

\bibitem[Mei et~al.(2022)Mei, Chung, Thomas, Dai, Szepesvari, and Schuurmans]{mei2022role}
Jincheng Mei, Wesley Chung, Valentin Thomas, Bo~Dai, Csaba Szepesvari, and Dale Schuurmans.
\newblock The role of baselines in policy gradient optimization.
\newblock \emph{Advances in Neural Information Processing Systems}, 35:\penalty0 17818--17830, 2022.

\bibitem[Mei et~al.(2023{\natexlab{a}})Mei, Dai, Agarwal, Ghavamzadeh, Szepesv{\'a}ri, and Schuurmans]{mei2024ordering}
Jincheng Mei, Bo~Dai, Alekh Agarwal, Mohammad Ghavamzadeh, Csaba Szepesv{\'a}ri, and Dale Schuurmans.
\newblock Ordering-based conditions for global convergence of policy gradient methods.
\newblock \emph{Advances in Neural Information Processing Systems}, 36, 2023{\natexlab{a}}.

\bibitem[Mei et~al.(2023{\natexlab{b}})Mei, Zhong, Dai, Agarwal, Szepesvari, and Schuurmans]{mei2023stochastic}
Jincheng Mei, Zixin Zhong, Bo~Dai, Alekh Agarwal, Csaba Szepesvari, and Dale Schuurmans.
\newblock Stochastic gradient succeeds for bandits.
\newblock In \emph{International Conference on Machine Learning}, pages 24325--24360. PMLR, 2023{\natexlab{b}}.

\bibitem[Mei et~al.(2024)Mei, Dai, Agarwal, Vaswani, Raj, Szepesv{\'a}ri, and Schuurmans]{mei2024stepsize}
Jincheng Mei, Bo~Dai, Alekh Agarwal, Sharan Vaswani, Anant Raj, Csaba Szepesv{\'a}ri, and Dale Schuurmans.
\newblock Small steps no more: Global convergence of stochastic gradient bandits for arbitrary learning rates.
\newblock \emph{Advances in Neural Information Processing Systems}, 2024.

\bibitem[Schmidt and Roux(2013)]{schmidt2013fast}
Mark Schmidt and Nicolas~Le Roux.
\newblock Fast convergence of stochastic gradient descent under a strong growth condition.
\newblock \emph{arXiv preprint arXiv:1308.6370}, 2013.

\bibitem[Schulman et~al.(2015)Schulman, Levine, Abbeel, Jordan, and Moritz]{schulman2015trust}
John Schulman, Sergey Levine, Pieter Abbeel, Michael Jordan, and Philipp Moritz.
\newblock Trust region policy optimization.
\newblock In \emph{International conference on machine learning}, pages 1889--1897, 2015.

\bibitem[Schulman et~al.(2017)Schulman, Wolski, Dhariwal, Radford, and Klimov]{schulman2017proximal}
John Schulman, Filip Wolski, Prafulla Dhariwal, Alec Radford, and Oleg Klimov.
\newblock Proximal policy optimization algorithms.
\newblock \emph{arXiv preprint arXiv:1707.06347}, 2017.

\bibitem[Sutton et~al.(1999)Sutton, McAllester, Singh, and Mansour]{sutton1999policy}
Richard~S Sutton, David McAllester, Satinder Singh, and Yishay Mansour.
\newblock Policy gradient methods for reinforcement learning with function approximation.
\newblock \emph{Advances in neural information processing systems}, 12, 1999.

\bibitem[Sutton et~al.(2000)Sutton, McAllester, Singh, and Mansour]{sutton2000policy}
Richard~S Sutton, David~A McAllester, Satinder~P Singh, and Yishay Mansour.
\newblock Policy gradient methods for reinforcement learning with function approximation.
\newblock In \emph{Advances in Neural Information Processing Systems}, pages 1057--1063, 2000.

\bibitem[Sutton et~al.(2018)Sutton, Barto, et~al.]{sutton2018reinforcement}
Richard~S Sutton, Andrew~G Barto, et~al.
\newblock Reinforcement learning: An introduction 2nd ed.
\newblock \emph{MIT press Cambridge}, 1\penalty0 (2):\penalty0 25, 2018.

\bibitem[Uc-Cetina et~al.(2023)Uc-Cetina, Navarro-Guerrero, Martin-Gonzalez, Weber, and Wermter]{uc2023survey}
Victor Uc-Cetina, Nicol{\'a}s Navarro-Guerrero, Anabel Martin-Gonzalez, Cornelius Weber, and Stefan Wermter.
\newblock Survey on reinforcement learning for language processing.
\newblock \emph{Artificial Intelligence Review}, 56\penalty0 (2):\penalty0 1543--1575, 2023.

\bibitem[Vaswani et~al.(2019)Vaswani, Bach, and Schmidt]{vaswani2019fast}
Sharan Vaswani, Francis Bach, and Mark Schmidt.
\newblock Fast and faster convergence of sgd for over-parameterized models and an accelerated perceptron.
\newblock In \emph{The 22nd international conference on artificial intelligence and statistics}, pages 1195--1204. PMLR, 2019.

\bibitem[Williams(1992)]{williams1992simple}
Ronald~J Williams.
\newblock Simple statistical gradient-following algorithms for connectionist reinforcement learning.
\newblock \emph{Machine learning}, 8:\penalty0 229--256, 1992.

\bibitem[Williams and Peng(1991)]{williams1991function}
Ronald~J Williams and Jing Peng.
\newblock Function optimization using connectionist reinforcement learning algorithms.
\newblock \emph{Connection Science}, 3\penalty0 (3):\penalty0 241--268, 1991.

\bibitem[Yuan et~al.(2022{\natexlab{a}})Yuan, Du, Gower, Lazaric, and Xiao]{yuan2022linear}
Rui Yuan, Simon~S Du, Robert~M Gower, Alessandro Lazaric, and Lin Xiao.
\newblock Linear convergence of natural policy gradient methods with log-linear policies.
\newblock \emph{arXiv preprint arXiv:2210.01400}, 2022{\natexlab{a}}.

\bibitem[Yuan et~al.(2022{\natexlab{b}})Yuan, Gower, and Lazaric]{yuan2022general}
Rui Yuan, Robert~M. Gower, and Alessandro Lazaric.
\newblock A general sample complexity analysis of vanilla policy gradient, 2022{\natexlab{b}}.

\end{thebibliography}

\newpage
\appendix
\addcontentsline{toc}{section}{Appendix}
\part{Appendix}
\parttoc

\section{Definitions}
\label{appendix:definitions}

\noindent \textbf{[Smoothness]} A function $f$ is $L$-smooth if for all $\theta$ and $\theta'$
\begin{equation*}
    \abs{f(\theta) - f(\theta') - \dpd{\gradf{\theta'}, \theta - \theta'}} \leq \frac{L}{2} \, \normsq{\theta - \theta'}.
\end{equation*}

\noindent \textbf{[Non-uniform smoothness]} A function $f$ is $L$-non-uniform smooth if for all $\theta$ and $\theta'$
\begin{equation*}
    \abs{f(\theta) - f(\theta') - \dpd{\gradf{\theta'}, \theta - \theta'}} \leq \frac{L \norm{\gradf{\theta'}}}{2} \, \normsq{\theta - \theta'}.
\end{equation*}

\noindent \textbf{[Polyak-\L ojasiewciz condition]} A function $f$ satisfies the non-uniform Polyak-\L ojasiewciz condition of degree $\xi \in [0, 1]$ if for all $\theta$,
\begin{equation*}
     \norm{\gradf{\theta}}\geq C(\theta) \abs{f^* - f(\theta)}^{1 - \xi},
\end{equation*}
where $f^* := \sup_{\theta} f(\theta)$ and $C: \theta  \rightarrow \R^+$.

\section{Proofs of~\cref{sec:limitations} }
\label{appendix:characterizing_convergence}

\convergencewithapproxmiationerror*
\begin{proof}
Let $w = (-1, -1)^\top \in \sR^d$. We have
\begin{align*}
r^\prime \coloneqq X w = \left(2, 1, -1, -2 \right)^\top,
\end{align*}
which preserves the ordering of $r \in \sR^K$, such that for all $i, j \in [K]$, $r(i) > r(j)$ if and only if $r^\prime(i) > r^\prime(j)$, which means \cref{eg:first_example} satisfies the \cref{assumption:reward_ordering_preservation}. Moreover, we can verify that \cref{eg:first_example} also satisfies \cref{assumption:general_feature_conditions}. Given these conditions, \cref{theorem:deterministic_linear_bandits} shows that the global convergence is guaranteed in \cref{eg:first_example}.
\end{proof}
\section{Proofs of~\cref{sec:deterministic_linear_bandits}}
\label{appendix:deterministic_linear_bandits}

\subsection{Warm-Up: Global Convergence for $K = 3$}
\label{appendix:three_armed_deterministic_linear_bandits_sufficiency}

\subsubsection{Sufficiency}

\threearmeddeterministiclinearbandits*
\begin{proof}
Under~\cref{assumption:no_identical_arms,assumption:reward_ordering_preservation}, according to~\cref{lemma:monotonicity_and_onehotpolicy}, for all finite $t \geq 1$,
\begin{align*}
\dpd{\pi_{\thtt}, r} > \dpd{\pit, r}, \numberthis \label{eq:monotonicity_det}
\end{align*}
and $\lim_{t \to \infty} \pit(a) = 1$ for some action $a \in \{1, 2, 3\}$.
We will prove $\lim_{t \to \infty} \pit(1) = 1$ by showing that $\lim_{t \to \infty} \pit(2) \neq 1$ and $\lim_{t \to \infty} \pit(3) \neq 1$.

For any bounded initialization $\theta_1$, we have $\dpd{\pi_{\theta_1}, r} > r(3)$. From~\cref{eq:monotonicity_det}, we know that for all finite $t \geq 1$,
\begin{equation*}
\dpd{\pit, r} > \dpd{\pi_{\theta_1}, r} > r(3).
\end{equation*} 
Therefore, $\lim_{t \to \infty} \pit(3) \neq 1$.

Suppose that $\lim_{t \to \infty} \pit(2) = 1$. Given this assumption and~\cref{eq:monotonicity_det}, we know that for all finite $t \geq 1$, $\dpd{\pit, r} < r(2)$. In this case, we will show that, 
\begin{equation*}
    \lim_{t \to \infty} \frac{\pit(1)}{\pit(3)} = \infty,
\end{equation*}
and prove that this implies that for all large enough $t$, $\dpd{\pit, r} > r(2)$.
Hence, this results in a contradiction proving that $\lim_{t \to \infty} \pit(2) \neq 1$. 
To start, we consider the following ratio,
\begin{align*}
   \frac{\pitt(1)}{\pitt(3)}  &= \exp\parens*{[X \, \thtt](1) - [X \thtt](3)}  \\
   &= \exp\parens*{[X \, \tht](1) - [X \tht](3) + \eta \, \parens*{\sum_{i=1}^3 \dpd{x_i, x_1 - x_3} \, \pit(i) \, (r(i) - \dpd{\pit, r})}}  \tag{by the update in~\cref{alg:det_spg}} \\
   &= \frac{\pit(1)}{\pit(3)} \, \exp\parens*{\eta \, \underbrace{\parens*{\sum_{i=1}^3 \dpd{x_i, x_1 - x_3} \, \pit(i) \, (r(i) - \dpd{\pit, r})}}_{\coloneq P_t}}, \numberthis \label{eq:ratio_for_three_armed_deterministic_bandits}
\end{align*}
and the sign of $P_t$ will dictate whether $\frac{\pi_{\tht}(1)}{\pi_{\tht}(3)}$ will increase or decrease. Then, we will further look into $P_t$. For all finite $t \geq 1$, we have,
\begin{align*}
    P_t &= \sum_{i=1}^{3}{ \dpd{x_i, x_1 - x_3} \, \pi_{\theta_t}(i) \, (r(i) - \dpd{\pi_{\theta_t}, r}) }  \\
    &= \dpd{x_1 - x_3, x_1 - x_3} \, \pi_{\theta_t}(1) \, (r(1) - \dpd{\pi_{\theta_t}, r}) + \dpd{x_2 - x_3, x_1 - x_3} \, \pi_{\theta_t}(2) \, (r(2) - \dpd{\pi_{\theta_t}, r}) \tag{$\sum_{i=1}^3 \dpd{x_3, x_1 - x_3} \, \pit(i) \, (r(i) - \dpd{\pit, r}) = 0$}  \\
    & > \dpd{x_1 - x_3, x_1 - x_3} \, \pi_{\theta_t}(1) \, (r(1) - r(2)) \tag{under~\cref{assumption:feature_conditions_for_three_armed_linear_bandits}, $\dpd{x_2 - x_3, x_1 - x_3} > 0$ and for all finite $t \geq 1$, $r(2) > \dpd{\pit, r}$}  \\
    &= \normsq{x_1 - x_3} \, \pi_{\theta_t}(1) \, (r(1) - r(2)) > 0 \numberthis \label{eq:P_t_for_three_armed_deterministic_bandits_lower_bound}.
\end{align*}
By recursing~\cref{eq:ratio_for_three_armed_deterministic_bandits}, we get that,  \begin{align*}
\frac{\pit(1)}{\pit(3)} &= \frac{\pi_{\theta_1}(1)}{\pi_{\theta_1}(3)} \, \exp (\eta \, \sum_{s=1}^{t-1} P_s)  \\
&> \frac{\pi_{\theta_1}(1)}{\pi_{\theta_1}(3)} \, \exp(\eta \, \normsq{x_1 - x_3}  \, (r(1) - r(2)) \, \sum_{s=1}^{t-1} \pi_{\theta_s}(1))  \tag{by~\cref{{eq:P_t_for_three_armed_deterministic_bandits_lower_bound}}}
\end{align*}
Next, we will prove $\sum_{s = 1}^{\infty} { \pi_{\theta_s}(1) } = \infty$. Since $P_t > 0$, $\frac{\pit(1)}{\pit(3)}$ is monotonically increasing. Hence, we have that $\frac{\pitt(3)}{\pitt(1)} < \frac{\pit(3)}{\pit(1)}$ for all finite $t \geq 1$. As a result,
\begin{align*}
    \sum_{s = 1}^{t}{ ( 1 - \pi_{\theta_s}(2) ) } &= \sum_{s = 1}^{t}{ \big( \pi_{\theta_s}(1) + \pi_{\theta_s}(3) \big) }  \\
    &= \sum_{s = 1}^{t}{ \bigg( \pi_{\theta_s}(1) + \pi_{\theta_s}(1) \, \frac{ \pi_{\theta_s}(3) }{ \pi_{\theta_s}(1) }  \bigg) }  \\
    &<  \sum_{s = 1}^{t}{ \bigg( \pi_{\theta_s}(1) + \pi_{\theta_s}(1) \, \frac{ \pi_{\theta_1}(3) }{ \pi_{\theta_1}(1) } \bigg) }  \\
    &= \bigg( 1 + \frac{ \pi_{\theta_1}(3) }{ \pi_{\theta_1}(1) } \bigg) \, \sum_{s = 1}^{t}{ \pi_{\theta_s}(1) }, 
\end{align*}
For the LHS,~\cref{lemma:multi_arm_sum_to_infty} shows that $\sum_{s=1}^{\infty} (1 - \pi_{\theta_s}(2)) = \infty$. Therefore, $\sum_{s = 1}^{\infty} { \pi_{\theta_s}(1) } = \infty$. 
Using the equation above, we conclude that $\lim_{t \to \infty} \frac{\pit(1)}{\pit(3)} = \infty$. Moreover, 
\begin{align*}
    r(2) - \dpd{\pit, r} &= \pi_{\theta_t}(1) \, ( r(2) - r(1)) + \pi_{\theta_t}(3) \, ( r(2) - r(3))  \\
    &= \pi_{\theta_t}(3) \, ( r(2) - r(3)) \, \bigg[ - \underbrace{\frac{ r(1) - r(2) }{ r(2) - r(3) }}_{> 0} \, \underbrace{\frac{ \pi_{\theta_t}(1) }{ \pi_{\theta_t}(3) }}_{\to \infty} + 1 \bigg] \\
    & < 0. \tag{for large enough $t$} 
\end{align*}

Therefore, we know that $\dpd{\pit, r} > r(2)$ for all large enough $t$. This, combined with~\cref{eq:monotonicity_det}, contradicts our assumption that $\lim_{t \to \infty} \pit(2) = 1$. Putting everything together, we can draw the conclusion that $\lim_{t \to \infty} \pi_{\theta_t}(1) = 1$.
\end{proof}

\subsubsection{Necessity}
\label{appendix:three_armed_deterministic_linear_bandits_necessity}

Given~\cref{assumption:reward_ordering_preservation,assumption:feature_conditions_for_three_armed_linear_bandits}, we next investigate if these assumptions are required for global convergence. 
The following is an ideal example where all assumptions are satisfied.
\begin{example}
\label{eg:sufficient_feature_conditions}
Let $K = 3$ $d = 2$, $X^\top = \begin{bmatrix} 0 & -0.3 & 1\\ -1 & 0.6 & 0\end{bmatrix}$ and $r = (1, 0.5, 0)^\top$. 
\cref{assumption:reward_ordering_preservation} can be satisfied by setting $w = (-2, -1)^\top$ since $r^{\prime} = X \, w = (1, 0, -2)^\top$, and~\cref{assumption:feature_conditions_for_three_armed_linear_bandits} is satisfied since $\dpd{x_2 - x_3, x_1 - x_3} = 0.7 > 0$.
\end{example}

In the above example,~\cref{alg:det_spg} is guaranteed to converge to the optimal policy for any initialization (as illustrated in \cref{fig:feature_condition_satisfied}). Furthermore, we will prove that \cref{assumption:feature_conditions_for_three_armed_linear_bandits} is a necessary condition for global convergence in 3-armed bandits. By ``necessary'', we do not claim that a violation of this condition guarantees failure of the algorithm in all cases. Rather, we assert that if this condition is omitted while the others are satisfied, it is always possible to construct a specific counterexample on which the algorithm fails to converge. In other words, each condition is essential in the sense that leaving any one of them out allows for the existence of a problem instance that breaks global convergence.

\begin{figure}[htbp]
  \centering
  \begin{subfigure}[b]{0.45\textwidth}
    \includegraphics[width=\textwidth]{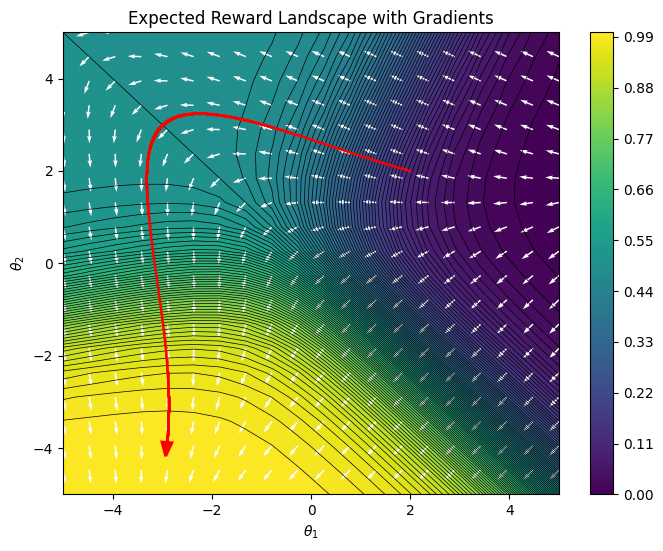}
    \caption{\cref{alg:det_spg} running on~\cref{eg:sufficient_feature_conditions}}
    \label{fig:feature_condition_satisfied}
  \end{subfigure}
  \hfill
  \begin{subfigure}[b]{0.45\textwidth}
    \includegraphics[width=\textwidth]{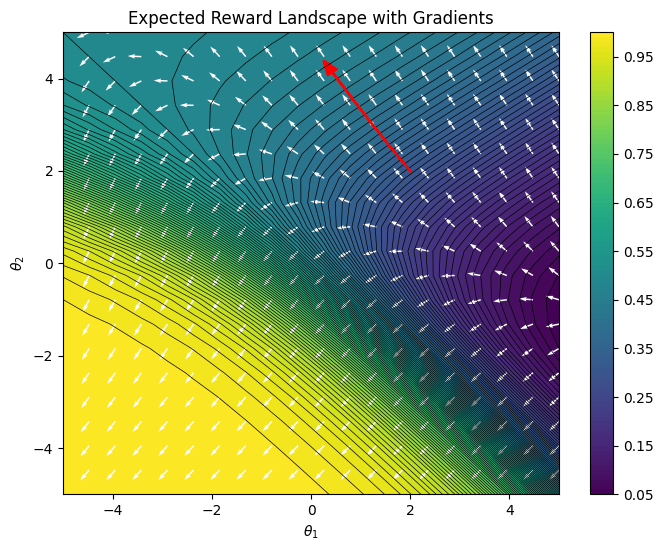}
    \caption{\cref{alg:det_spg} running on~\cref{eg:necesity_feature_conditions}}
    \label{fig:feature_condition_not_satisfied}
  \end{subfigure}
  \caption{The effect of feature conditions on the global convergence.}
  \label{fig:feature_conditions}
\end{figure}

We now show that for the three-armed bandit setting, ~\cref{assumption:feature_conditions_for_three_armed_linear_bandits} is necessary for achieving global convergence. Specifically, the following proposition allows construction of examples where only~\cref{assumption:no_identical_arms,assumption:reward_ordering_preservation} are satisfied while~\cref{alg:det_spg} fails to converge to the optimal policy.

\begin{proposition}
\label{prop:necessity_for_three_armed_feature_conditions}
Given a reward vector $r \in \mathbb{R}^{3}$ and a feature matrix $X \in \mathbb{R}^{3 \times d}$ such that~\cref{assumption:no_identical_arms,assumption:reward_ordering_preservation} are satisfied but~\cref{assumption:feature_conditions_for_three_armed_linear_bandits} is not. 
Using~\cref{alg:det_spg} with a constant learning rate as in~\cref{eq:step_size_for_deterministic_bandits} and
initialization $\theta_1 = C \, (x_3 - x_1)$, such that $C > \frac{- \log(\zeta)}{\normsq{x_3 - x_1}}$, where $\zeta \coloneq \frac{\dpd{x_3 - x_2, x_1 - x_3}}{\dpd{x_1 - x_2, x_1 - x_3}} \, \frac{\dpd{\pi_{\theta_1}, r} - r(3)}{r(1) - \dpd{\pi_{\theta_1}, r}}$, fails to converge to the optimal policy.
\end{proposition}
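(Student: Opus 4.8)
The plan is to exploit the \emph{strict} failure of \cref{assumption:feature_conditions_for_three_armed_linear_bandits}, namely $B \coloneqq \dpd{x_2-x_3,\,x_1-x_3} < 0$ (this strictness is exactly what makes $\zeta>0$ and hence the stated condition on $C$ non-vacuous), and to show that the prescribed initialization traps the iterates in the ``basin'' of the suboptimal action $2$. I would first invoke \cref{lemma:monotonicity_and_onehotpolicy}: since \cref{assumption:no_identical_arms,assumption:reward_ordering_preservation} hold, $\dpd{\pit,r}$ increases monotonically and $\pit$ converges to some one-hot policy. Because any finite $\theta_1$ gives $\dpd{\pi_{\theta_1},r}>r(3)$ and the reward is nondecreasing, the limit cannot be action $3$; hence it suffices to \emph{rule out} convergence to the optimal action $1$, i.e.\ to prove $\pit(1)\not\to 1$.

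To that end I would track the two logit ratios $\rho_t \coloneqq \pit(1)/\pit(3)$ and $\sigma_t \coloneqq \pit(2)/\pit(3)$, which by the update in \cref{alg:det_spg} obey $\rho_{t+1}=\rho_t\exp(\eta P_t)$ and $\sigma_{t+1}=\sigma_t\exp(\eta Q_t)$, where $P_t=\sum_{i=1}^3 \dpd{x_i,\,x_1-x_3}\,\pit(i)\,(r(i)-\dpd{\pit,r})$ and $Q_t$ is the analogous quantity with $x_1-x_3$ replaced by $x_2-x_3$. The crucial computation --- the same expansion used in \cref{theorem:three_armed_deterministic_linear_bandits}, but now with a \emph{negative} cross term --- writes $P_t=(A-B)\,\pit(1)\pit(2)(r(1)-r(2))+A\,\pit(1)\pit(3)(r(1)-r(3))+B\,\pit(2)\pit(3)(r(2)-r(3))$ with $A\coloneqq\normsq{x_1-x_3}>0$. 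From this I read off two clean facts: (i) $\dpd{\pit,r}<r(2)\iff \rho_t<\rho^\star\coloneqq (r(2)-r(3))/(r(1)-r(2))$; and (ii) $P_t<0\iff \rho_t<g(\sigma_t)$, where $g$ is strictly increasing in $\sigma_t$ with $\sup_\sigma g(\sigma)=\frac{-B}{A-B}\,\rho^\star<\rho^\star$.

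With these facts the argument becomes a forward-invariance statement. The initialization $\theta_1=C(x_3-x_1)$ yields $\rho_1=\exp(-C\normsq{x_1-x_3})$, which is \emph{exponentially small}, and $\sigma_1=\exp(-C\,B)$, which is \emph{exponentially large} (as $B<0$); moreover the stated threshold $C>-\log(\zeta)/\normsq{x_3-x_1}$ is precisely the condition $\rho_1<\zeta$, and one checks directly that $\zeta<\rho^\star$ and that $\rho_1<\zeta$ forces $\rho_1<g(\sigma_1)$, hence $P_1<0$ and $\dpd{\pi_{\theta_1},r}<r(2)$. I would then prove by induction the joint invariant $\rho_t\le\rho_1$ and $\sigma_t\ge\sigma_1$: under the hypothesis, monotonicity of $g$ gives $g(\sigma_t)\ge g(\sigma_1)> \rho_1\ge\rho_t$, so fact (ii) yields $P_t<0$ and $\rho_{t+1}<\rho_t$; and since $\rho_t<\rho^\star$ forces $\dpd{\pit,r}<r(2)$, a short sign analysis shows the positive term of $Q_t$ dominates (because $\pit(1)$ is negligible), giving $Q_t>0$ and $\sigma_{t+1}>\sigma_t$. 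Thus $\rho_t$ stays bounded (indeed nonincreasing), so $\pit(1)\not\to1$; combined with the first paragraph, $\pit$ must converge to action $2$, a suboptimal policy, which is the claim.

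The main obstacle is the coupling in the inductive step: the sign of $P_t$ depends on \emph{both} $\rho_t$ and $\sigma_t$, so neither ratio can be controlled in isolation, and I must certify that the region $\{\rho\text{ small},\ \sigma\text{ large}\}$ is genuinely forward-invariant. The most delicate point is the base case --- verifying that the closed-form constant $\zeta$ (which involves the initial expected reward $\dpd{\pi_{\theta_1},r}$) really does guarantee $\rho_1<g(\sigma_1)$, i.e.\ that the constants line up for \emph{every} admissible $C$ and not merely asymptotically as $C\to\infty$ --- together with keeping $Q_t>0$ uniformly as $\dpd{\pit,r}\uparrow r(2)$ and $r(2)-\dpd{\pit,r}\to0$. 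Everything else is bookkeeping that mirrors the sufficiency proof of \cref{theorem:three_armed_deterministic_linear_bandits}, run with the inequalities reversed.
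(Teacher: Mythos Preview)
Your high-level strategy—show $\rho_t=\pit(1)/\pit(3)$ stays bounded, whence $\pit(1)\not\to 1$—is exactly the paper's, and your reading of the base case is correct: $\rho_1<\zeta$ is \emph{equivalent} to $P_1<0$, which is equivalent to $\rho_1<g(\sigma_1)$. Where you diverge is in the inductive machinery. You carry a \emph{joint} invariant on $(\rho_t,\sigma_t)$ and must therefore control $Q_t$; you yourself flag the delicate point of keeping $Q_t>0$ uniformly as $\dpd{\pit,r}\uparrow r(2)$.

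The paper bypasses this entirely with one algebraic move: it centers the expansion of $P_t$ at $x_2$ rather than $x_3$. Since $\sum_i\pit(i)(r(i)-\dpd{\pit,r})=0$, subtracting $\dpd{x_2,x_1-x_3}$ from each coefficient kills the $i=2$ term and leaves the \emph{two}-term identity
\[
P_t=\dpd{x_1-x_2,\,x_1-x_3}\,\pit(1)\,(r(1)-\dpd{\pit,r})\;+\;\dpd{x_3-x_2,\,x_1-x_3}\,\pit(3)\,(r(3)-\dpd{\pit,r}),
\]
from which $P_t<0$ is equivalent to
\[
\rho_t \;<\; \frac{\dpd{x_3-x_2,\,x_1-x_3}}{\dpd{x_1-x_2,\,x_1-x_3}}\cdot\frac{\dpd{\pit,r}-r(3)}{r(1)-\dpd{\pit,r}}.
\]
Now the monotonicity $\dpd{\pit,r}\ge\dpd{\pi_{\theta_1},r}$ from \cref{lemma:monotonicity_and_onehotpolicy} (which you already invoke) makes the right-hand side $\ge\zeta$ for every $t$. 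Hence the \emph{single} invariant $\rho_t<\zeta$ is self-sustaining: $\rho_t<\zeta\Rightarrow P_t<0\Rightarrow\rho_{t+1}<\rho_t<\zeta$. There is no need to track $\sigma_t$, no need for $Q_t$, and the uniformity concern you raise evaporates. Your route would ultimately work (indeed $Q_t>0$ can be \emph{deduced} a posteriori from $\rho_{t+1}<\rho_t$ together with $\dpd{\pitt,r}>\dpd{\pit,r}$), but the ``short sign analysis because $\pit(1)$ is negligible'' is not the right justification and is where the argument would stall if attempted directly; the paper's pivot to $x_2$ is the clean fix.
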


\begin{proof}
Based on~\cref{alg:det_spg}, we have,
\begin{align*}
    X \theta_{t+1} &= X \theta_t + \eta \, X X^\top \left( \text{diag}( \pi_{\theta_t} ) - \pi_{\theta_t} \pi_{\theta_t}^\top \right).
\end{align*}
We then show that if $\dpd{x_2 - x_3, x_1 - x_3} < 0$, then there exists an initialization such that global convergence cannot happen. To show this, we choose an appropriate initialization $\theta_1$ such that $\frac{ \pi_{\theta_1}(1) }{ \pi_{\theta_1}(3) } < \zeta$, where
\begin{equation*}
    \zeta \coloneq \frac{\dpd{x_3 - x_2, x_1 - x_3}}{\dpd{x_1 - x_2, x_1 - x_3}} \, \frac{\dpd{\pi_{\theta_1}, r} - r(3)}{r(1) - \dpd{\pi_{\theta_1}, r}}.
\end{equation*}
We will show that if $\frac{ \pi_{\theta_{t}}(1) }{ \pi_{\theta_{t}}(3) } < \zeta$, then $\frac{ \pi_{\theta_{t+1}}(1) }{ \pi_{\theta_{t+1}}(3) } < \frac{ \pi_{\theta_{t}}(1) }{ \pi_{\theta_{t}}(3) }$ for all finite large enough $t$. This would mean that $\frac{ \pi_{\theta_{t}}(1) }{ \pi_{\theta_{t}}(3) } < \zeta$ for all large enough $t$ and thus $\lim_{t \to \infty} \pi_{\theta_{t}}(1) \neq 1$.
To start, we have,
\begin{align*}
    \frac{ \pi_{\theta_1}(1) }{ \pi_{\theta_1}(3) } = &\exp( [X \theta_1](1) - [X \theta_1](3)) \\ 
    = \, &\exp(\dpd{x_1 - x_3, \theta_1}) \\
    = \, &\exp( - C \, \normsq{x_3 - x_1})  \tag{$\theta_1 = C \, (x_3 - x_1)$} \\
    < \, &\exp( \log(\zeta) ) = \zeta. \tag{$ C > \frac{- \log(\zeta)}{\normsq{x_3 - x_1}}$}
\end{align*}
Suppose that $\frac{ \pi_{\theta_{t}}(1) }{ \pi_{\theta_{t}}(3) } < \zeta$. Then, we have,
\begin{align*}
    \frac{ \pi_{\theta_t}(1) }{ \pi_{\theta_t}(3) } < \, &\frac{\dpd{x_3 - x_2, x_1 - x_3}}{\dpd{x_1 - x_2, x_1 - x_3}} \, \frac{\dpd{\pi_{\theta_1}, r} - r(3)}{r(1) - \dpd{\pi_{\theta_1}, r}} \\
    \leq \, &\frac{\dpd{x_3 - x_2, x_1 - x_3}}{\dpd{x_1 - x_2, x_1 - x_3}} \, \frac{\dpd{\pi_{\theta_t}, r} - r(3)}{r(1) - \dpd{\pi_{\theta_t}, r}}. \tag{$\dpd{\pi_{\theta_t}, r} > \dpd{\pi_{\theta_1}, r}$}
\end{align*}
Furthermore, we consider the following ratio:
\begin{align*}
    \frac{ \pi_{\theta_{t+1}}(1) }{ \pi_{\theta_{t+1}}(3) } = \exp( [X \theta_{t+1}](1) - [X \theta_{t+1}](3)).
\end{align*}
Using the update of~\cref{alg:det_spg},
\begin{align*}
    [X \theta_{t+1}](1) - [X \theta_{t+1}](3) &= [X \theta_t](1) - [X \theta_t](3) + \eta \, \sum_{i=1}^{3}{ \dpd{x_i, x_1 - x_3} \, \pi_{\theta_t}(i) \cdot (r(i) - \pi_{\theta_t}^\top r) }.
\end{align*}
If $\dpd{x_2 - x_3, x_1 - x_3} < 0$, then we have, $\dpd{x_3 - x_2,x_1 - x_3} > 0$, which implies that
\begin{align*}
    \dpd{x_1 - x_2, x_1 - x_3} &= \dpd{x_1 - x_3, x_1 - x_3} + \dpd{x_3 - x_2, x_1 - x_3} \\
    &\ge \dpd{x_3 - x_2, x_1 - x_3} > 0.
\end{align*}
Therefore, we have,
\begin{align*}
&\sum_{i=1}^{3}{ \dpd{x_i, x_1 - x_3} \, \pi_{\theta_t}(i) \, (r(i) - \dpd{\pi_{\theta_t}, r} } \\
= \, & \dpd{x_1 - x_2, x_1 - x_3} \, \pi_{\theta_t}(1) \, 
(r(1) - \dpd{\pit, r}) + \dpd{x_3 - x_2, x_1 - x_3} \, \pi_{\theta_t}(3) \, (r(3) - \dpd{\pit, r}) \\
= \, & - \underbrace{\dpd{x_3 - x_2, x_1 - x_3}}_{> 0} \, \pi_{\theta_t}(3) \, (\dpd{\pit, r} - r(3) ) \, \bigg[ - \frac{ \dpd{x_1 - x_2, x_1 - x_3} }{ \dpd{x_3 - x_2, x_1 - x_3} } \, \frac{\pi_{\theta_t}(1) }{\pi_{\theta_t}(3)} \, \frac{ r(1) - \dpd{\pit, r} }{ \dpd{\pit, r} - r(3) } + 1 \bigg] \\
< \, & - \underbrace{\dpd{x_3 - x_2, x_1 - x_3}}_{> 0} \, \pi_{\theta_t}(3) \, (\dpd{\pit, r} - r(3) ) \, [- 1 + 1 ] \tag{$\frac{ \pi_{\theta_t}(1) }{ \pi_{\theta_t}(3) } < \frac{\dpd{x_3 - x_2, x_1 - x_3}}{\dpd{x_1 - x_2, x_1 - x_3}} \, \frac{\dpd{\pi_{\theta_t}, r} - r(3)}{r(1) - \dpd{\pi_{\theta_t}, r}}$} \\
= \, & 0,
\end{align*}
which implies that,
\begin{align*}
    \frac{ \pi_{\theta_{t+1}}(1) }{ \pi_{\theta_{t+1}}(3) } &= \exp( [X \theta_{t+1}](1) - [X \theta_{t+1}](3)) \\
    &= \exp( [X \theta_t](1) - [X \theta_t](3)) + \eta \, \sum_{i=1}^{3}{ \dpd{x_i, x_1 - x_3} \, \pi_{\theta_t}(i) \, (r(i) - \dpd{\pit, r}) } \\
    &< \exp( [X \theta_t](1) - [X \theta_t](3) ) = \frac{ \pi_{\theta_t}(1) }{ \pi_{\theta_t}(3) }.
\end{align*}
This indicates that $\frac{ \pi_{\theta_{t}}(1) }{ \pi_{\theta_{t}}(3) } < \zeta$ for all large enough $t$. Finally, we have $\lim_{t \to \infty} \pi_{\theta_{t}}(1) \neq 1$.
\end{proof}

We can then instantiate \cref{prop:necessity_for_three_armed_feature_conditions} to a concrete example which is only slightly different from~\cref{eg:sufficient_feature_conditions}.

\begin{example}
\label{eg:necesity_feature_conditions}
Suppose $K = 3$, $d = 2$,
$X^\top = \begin{bmatrix}
   0 & 0.6 & 1 \\ -1 & 0.6 & 0 
\end{bmatrix}$, and $r = (1, 0.5, 0)^\top$.
\cref{assumption:no_identical_arms,assumption:reward_ordering_preservation} can be satisfied by setting $w = (-2, -1)^\top$ since $r^{\prime} = X \, w = (1, -1.8, -2)^\top$, but \cref{assumption:feature_conditions_for_three_armed_linear_bandits} is not since $\dpd{x_2 - x_3, x_1 - x_3} = - 0.2 < 0$.
\end{example}

In ~\cref{eg:necesity_feature_conditions}, we can set $C = 2$, resulting in $\theta_1 = C \, (x_3 - x_1) = [2, 2]^\top$. We also know that $C = 2 > - \frac{\log(\zeta)}{\normsq{x_3 - x_1}} \approx 1.61$. This satisfies the conditions in~\cref{prop:necessity_for_three_armed_feature_conditions}, thereby demonstrating that Softmax PG must fail in this specific case (as illustrated in \cref{fig:feature_condition_not_satisfied}).

On the other hand, we can construct another example to show that \cref{assumption:reward_ordering_preservation} is still required, even if \cref{assumption:feature_conditions_for_three_armed_linear_bandits} is satisfied, thus reinforcing that each of these assumptions is independently necessary.
\begin{proposition}
\label{prop:necessity_for_reward_ordering}
Suppose $K = 3$, $d = 2$, $X^\top = \begin{bmatrix} 3 & 5 & 1 \ \vspace{0.5ex}\\
4 & 6 & 2 \ \end{bmatrix}  \in \sR^{d \times K}$, and $r = \left(3, 2, 1\right)^\top $. In this case,~\cref{assumption:no_identical_arms,assumption:feature_conditions_for_three_armed_linear_bandits} are satisfied, but~\cref{assumption:reward_ordering_preservation} is not, and the features do not allow the optimal reward to be achieved for any set of finite or infinite parameters. Therefore,~\cref{alg:det_spg} does not achieve global convergence for any initialization.
\end{proposition}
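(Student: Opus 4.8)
The plan is to verify the three assumption-level claims by direct computation and then isolate the single geometric fact that makes the optimal policy unreachable, from which the failure of global convergence follows immediately and independently of the optimization dynamics. The feature vectors here are $x_1 = (3,4)^\top$, $x_2 = (5,6)^\top$, and $x_3 = (1,2)^\top$. First I would note that \cref{assumption:no_identical_arms} holds trivially since $r = (3,2,1)^\top$ has distinct entries, and that \cref{assumption:feature_conditions_for_three_armed_linear_bandits} holds because $x_2 - x_3 = (4,4)^\top$ and $x_1 - x_3 = (2,2)^\top$ give $\dpd{x_2 - x_3, x_1 - x_3} = 16 > 0$. To show that \cref{assumption:reward_ordering_preservation} fails, I would write $r' = Xw$ for $w = (w_1, w_2)^\top$ and observe that preserving the order $r'(1) > r'(2) > r'(3)$ demands simultaneously $r'(1) - r'(2) = -2(w_1 + w_2) > 0$ and $r'(2) - r'(3) = 4(w_1 + w_2) > 0$, i.e. both $w_1 + w_2 < 0$ and $w_1 + w_2 > 0$, which is impossible.

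The crux of the argument is to observe that the three features are collinear with the optimal action sitting at the midpoint, $x_1 = \tfrac{1}{2}(x_2 + x_3)$. Consequently, for every parameter $\theta \in \sR^2$ the logit of the optimal action is the average of the other two logits:
\begin{equation*}
z_\theta(a^\star) = \dpd{x_1, \theta} = \tfrac{1}{2}\parens*{\dpd{x_2, \theta} + \dpd{x_3, \theta}} = \tfrac{1}{2}\parens*{z_\theta(2) + z_\theta(3)}.
\end{equation*}
Exponentiating shows that $\exp(z_\theta(a^\star))$ is the geometric mean of $\exp(z_\theta(2))$ and $\exp(z_\theta(3))$, so the AM--GM inequality gives $\exp(z_\theta(2)) + \exp(z_\theta(3)) \geq 2\exp(z_\theta(a^\star))$. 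Dividing by the softmax normalizing constant then yields the uniform bound $\pi_\theta(a^\star) \leq 1/3$ for all $\theta \in \sR^2$, with equality only at the uniform policy.

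With this bound the conclusion follows without reference to the iterates of \cref{alg:det_spg}. The optimal policy $\pi^*$ requires $\pi^*(a^\star) = 1$, yet $\pi_\theta(a^\star) \leq 1/3$ holds for every finite $\theta$ and, by continuity and taking limits along any sequence, for every limiting (infinite-parameter) configuration as well; hence the optimal reward $r(a^\star) = 3$ is never attainable. In particular, for any initialization $\theta_1$, the iterates satisfy $\pi_{\theta_t}(a^\star) \leq 1/3$ at every step, so $\lim_{t \to \infty} \pit(a^\star) \leq 1/3 < 1$ whenever the limit exists, and \cref{alg:det_spg} cannot converge to $\pi^*$ regardless of initialization.

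I would expect no serious technical obstacle: the entire difficulty is compressed into recognizing the midpoint structure $x_1 = \tfrac{1}{2}(x_2 + x_3)$, which is exactly the geometric manifestation of the failure of \cref{assumption:reward_ordering_preservation} -- no projection direction can rank the collinear middle point first. Once this is spotted, the AM--GM step and the final bound are routine, and the only care needed is to state the bound $\pi_\theta(a^\star) \leq 1/3$ \emph{uniformly} over all $\theta$ (including the degenerate case $\dpd{(1,1),\theta} = 0$, which produces the uniform policy) so that it simultaneously rules out convergence for every initialization.
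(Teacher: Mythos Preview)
Your proof is correct. For the verification that \cref{assumption:no_identical_arms,assumption:feature_conditions_for_three_armed_linear_bandits} hold and \cref{assumption:reward_ordering_preservation} fails, your computations coincide with the paper's (the paper checks $r'(1)>r'(2)$ and $r'(1)>r'(3)$ rather than the adjacent pairs you use, but both yield the same contradiction on the sign of $w_1+w_2$). The genuine divergence is in the unreachability argument. The paper repeats the same linear contradiction: achieving $\pi_\theta(a^\star)\to 1$ would require $[X\theta](1)$ to dominate both $[X\theta](2)$ and $[X\theta](3)$, which forces $\theta(1)+\theta(2)$ to be simultaneously negative and positive. Your route via the midpoint identity $x_1=\tfrac{1}{2}(x_2+x_3)$ and AM--GM is different and strictly more informative: it delivers the explicit uniform bound $\pi_\theta(a^\star)\le 1/3$, whereas the paper only rules out $\pi_\theta(a^\star)\to 1$. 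Your geometric framing also isolates the underlying obstruction more transparently---the optimal feature lies at the midpoint of the suboptimal ones, so no linear score can place it first. The paper's version has the minor expository advantage of reusing the same two-line calculation for both claims, but your argument gives a sharper conclusion with essentially the same effort.
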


\begin{proof}
We first show that~\cref{assumption:feature_conditions_for_three_armed_linear_bandits} is satisfied, but~\cref{assumption:reward_ordering_preservation} is not. For~\cref{assumption:feature_conditions_for_three_armed_linear_bandits}, we have $\dpd{x_2 - x_3, x_1 - x_3} = 16 > 0$. Now, suppose that $r^{\prime} = X w$ preserves the reward ordering where $w = (w(1), w(2))^\top$. In that case, the order of the optimal action must also be preserved, i.e. $r^{\prime}(1) > r^{\prime}(2)$ and $r^{\prime}(1) > r^{\prime}(3)$. Therefore,
\begin{align*}
    \dpd{x_1, w} > \dpd{x_2, w} \quad &\text{and} \quad \dpd{x_1, w} > \dpd{x_3, w} \\
    \implies 3 \, w(1) + 4 \, w(2) > 5 \, w(1) + 6 \, w(2) \quad &\text{and} \quad 3 \, w(1) + 4 \, w(2) > w(1) + 2 \, w(2) \\
    \implies w(1) + w(2) < 0 \quad &\text{and} \quad w(1) + w(2) > 0
\end{align*}
Therefore, there is no $w$ that preserves the order of the optimal action, so~\cref{assumption:reward_ordering_preservation} is not satisfied.
Furthermore, to achieve the optimal reward, we need parameters $\theta$, such that
\begin{align*}
    \pi_\theta(1) >> \pi_\theta(2) \quad &\text{and} \quad \pi_\theta(1) >> \pi_\theta(3) \\
    \implies [X \theta](1) >> [X \theta](2) \quad &\text{and} \quad [X \theta](1) >> [X \theta](3) \\
    \implies \dpd{x_1, \theta} >> \dpd{x_2, \theta} \quad &\text{and} \quad \dpd{x_1, \theta} >> \dpd{x_3, \theta} \\
    \implies 3 \, \theta(1) + 4 \, \theta(2) >> 5 \, \theta(1) + 6 \, \theta(2) \quad &\text{and} \quad 3 \, \theta(1) + 4 \, \theta(2) >> \theta(1) + 2 \, \theta(2) \\
    \implies \theta(1) + \theta(2) << 0 \quad &\text{and} \quad \theta(1) + \theta(2) >> 0
\end{align*}
Therefore, such a $\theta$ cannot exist, and the optimal reward cannot be achieved for any set of parameters. Hence,~\cref{alg:det_spg} does not achieve global convergence for any initialization.

\end{proof}

\subsection{Guarantee of Global Convergence for $K \geq 3$}
\label{appendix:elb_general}

\deterministiclinearbandits*

\begin{proof}
Under~\cref{assumption:reward_ordering_preservation}, according to~\cref{lemma:monotonicity_and_onehotpolicy}, we know that for all finite $t \geq 1$, \begin{align}
\dpd{\pi_{\thtt}, r} > \dpd{\pit, r}, \label{eq:monotonicity-det-general}
\end{align}
and $\lim_{t \to \infty} \pit(a) = 1$ for some action $a \in [K]$.
For any bounded initialization $\theta_1$, we have $\dpd{\pi_{\theta_1}, r} > r(K)$. The above two inequalities imply that $\lim_{t \to \infty} \pit(K) \neq 1$. Next, we show that $\lim_{t \to \infty} \dpd{\pit, r} \neq r(a)$ for any $a \in \{2, 3, \dots, K-1\}$.

We will prove this by contradiction. For this, in the subsequent proof, we assume that $\lim_{t \to \infty} \dpd{\pit, r} = r(a)$ for some $a \in \{2, 3, \dots, K-1\}$.  Therefore, there exists a large enough finite $\tau$ such that for all finite $t \geq \tau$, $r(a) > \dpd{\pi_{\theta_t}, r} > r(a + 1)$.

We will first prove that $\lim_{t \to \infty} \frac{\pit(1)}{\pit(k)} = \infty$ for all $k \in [a+1, K]$. Considering a fixed action $k \in [a+1, K]$, we have, for all finite $t \geq \tau$,
\begin{align*}
   \frac{\pitt(1)}{\pitt(k)}  &= \exp\parens*{[X \, \thtt](1) - [X \thtt](k)}  \\
   &= \exp\parens*{[X \, \tht](1) - [X \tht](k) + \eta \, \parens*{\sum_{i=1}^K \dpd{x_i, x_1 - x_{k}} \, \pit(i) \, (r(i) - \dpd{\pit, r})}}  \tag{by the update in~\cref{alg:det_spg}} \\
   &= \frac{\pit(1)}{\pit(k)} \, \exp\parens*{\eta \, \underbrace{\parens*{\sum_{i=1}^K \dpd{x_i, x_1 - x_{k}} \, \pit(i) \, (r(i) - \dpd{\pit, r})}}_{\coloneq P_t}}, \numberthis \label{eq:pi_1_pi_k+1_ratio}
\end{align*}
and the sign of $P_t$ will dictate whether $\frac{\pit(1)}{\pit(k)}$ will increase or decrease.

Next, to examine the sign of $P_t$, we have, for all finite $t \geq \tau$,
\begin{align*}
    P_t &= \sum_{i=1}^{K}{ \dpd{x_i, x_1 - x_k} \, \pi_{\theta_t}(i) \, (r(i) - \dpd{\pi_{\theta_t}, r}) }  \\
    &= \sum_{\substack{i=1 \\ i \neq a}}^{K}{ \dpd{x_i - x_{a}, x_1 - x_k} \, \pi_{\theta_t}(i) \, (r(i) - \dpd{\pi_{\theta_t}, r}) } \tag{$\sum_{i=1}^K \dpd{x_a, x_1 - x_k} \, \pit(i) \, (r(i) - \dpd{\pit, r}) = 0$}  \\
    &= \sum_{i=1}^{a-1} { \underbrace{\dpd{x_i - x_a, x_1 - x_k}}_{\substack{>0 \text{ due to~\cref{assumption:general_feature_conditions} } \\ (\text{since } i < a \text{ and } k \ge a+1 > a)}}} \, \pi_{\theta_t}(i) \, \underbrace{(r(i) - \dpd{\pi_{\theta_t}, r})}_{>0 \text{ (since } i < a)} \\
    & \quad + \sum_{i=a+1}^{K} { \underbrace{\dpd{x_a - x_i, x_1 - x_k}}_{\substack{>0 \text{ due to~\cref{assumption:general_feature_conditions} } \\ (\text{since } i > a \text{ and } k \ge a+1 > a)}}} \, \pi_{\theta_t}(i) \, \underbrace{(\dpd{\pi_{\theta_t}, r} - r(i))}_{>0 \text{ (since } i > a)}  \\
    &> \sum_{i=1}^{a-1} { \dpd{x_i - x_a, x_1 - x_k} }  \, \pi_{\theta_t}(i) \, (r(i) - r(a)) + \sum_{i=a+1}^{K} { \dpd{x_a - x_i, x_1 - x_k} } \, \pi_{\theta_t}(i) \, (\dpd{\pi_{\theta_\tau}, r} - r(i)). \tag{$r(a) > \dpd{\pi_{\theta_t},r}$ and $\dpd{\pi_{\theta_t}, r} \geq \dpd{\pi_{\theta_\tau}, r}$ for all finite $t \geq \tau$}  \\
\end{align*}
We further define that 
\begin{align*}
C_1 &\coloneq \min_{1 \leq i \leq a-1} \dpd{x_i - x_a, x_1 - x_k} \, (r(i) - r(a)) > 0, \\
C_2 &\coloneq \min_{a+1 \leq i \leq K} \dpd{x_a - x_i, x_1 - x_{k}} \, (\dpd{\pi_{\theta_\tau}, r} - r(i)) > 0, \\
C &\coloneq \min\{C_1, C_2\} > 0.
\end{align*}
Hence, we have,
\begin{align*}
    P_t &> C_1 \, \sum_{i=1}^{a-1} \pit(i) + C_2 \, \sum_{i=a+1}^{K} \pit(i)  \\
    &> C \, \sum_{i \neq a} \pit(i) \\
    &= C \, (1 - \pit(a)) \numberthis \label{eq:lower_bound_P_t}.
\end{align*}
By recursing~\cref{eq:pi_1_pi_k+1_ratio}, we get that, for all finite $t \geq \tau$, 
\begin{align*}
\frac{\pi_{\theta_{t}}(1)}{\pi_{\theta_{t}}(k)} &= \frac{\pi_{\theta_\tau}(1)}{\pi_{\theta_\tau}(k)} \, \exp (\eta \, \sum_{s=\tau}^{t -1} P_s)  \\
&> \frac{\pi_{\theta_\tau}(1)}{\pi_{\theta_\tau}(k)} \, \exp(\eta \, C \, \sum_{s=\tau}^{t -1} (1 - \pi_{\theta_s}(a))). \tag{by~\cref{eq:lower_bound_P_t}} 
\end{align*}
\cref{lemma:multi_arm_sum_to_infty} shows that for any $i \in [K]$, $\sum_{s=1}^{\infty} (1 - \pi_{\theta_s} (i)) = \infty$. Combining the above equations, we conclude that $\lim_{t \to \infty} \frac{\pit(1)}{\pit(k)} = \infty$ and hence $\lim_{t \to \infty} \frac{\pit(k)}{\pit(1)} = 0$ for all $k \in [a+1, K]$. As a result, there exists a $\tau^\prime \geq \tau$ such that
\begin{align*}
\MoveEqLeft
r(a) - \dpd{\pi_{\theta_{\tau^\prime}}, r}  \\
= \, & \sum_{i=1}^{K} \pi_{\theta_{\tau^\prime}}(i) \, (r(a) - r(i)) =  \sum_{i=1}^{a-1} \pi_{\theta_{\tau^\prime}}(i) \, \underbrace{(r(a) - r(i))}_{<0} + \sum_{i=a+1}^{K} \pi_{\theta_{\tau^\prime}}(i) \, \underbrace{(r(a) - r(i))}_{>0}  \\
< \, & \pi_{\theta_{\tau^\prime}}(1) \, (r(a) - r(1)) + \sum_{i=a+1}^{K} \pi_{\theta_{\tau^\prime}}(i) \, (r(a) - r(i))  \\
= \, & \pi_{\theta_{\tau^\prime}}(1) \, (r(1) - r(a)) \left[ \sum_{i=a+1}^{K} \underbrace{\frac{\pi_{\theta_{\tau^\prime}}(i)}{\pi_{\theta_{\tau^\prime}}(1)} }_{\to 0} \, \underbrace{\frac{r(a) - r(i)}{r(1) - r(a)}}_{>0} - 1 \right]\\
< \, & 0.  \tag{$\tau^\prime$ is large enough} 
\end{align*}

Therefore, we know that $\dpd{\pi_{\theta_{\tau^\prime}}, r} > r(a)$. Combined with~\cref{eq:monotonicity-det-general}, we know that for all $t \geq \tau^\prime$, $\dpd{\pi_{\theta_{t}}, r} > r(a)$. This contradicts the assumption that $\lim_{t \to \infty}  \dpd{\pit, r} = r(a)$. This implies that $\lim_{t \to \infty}  \dpd{\pit, r} \neq r(a)$ for all $a \in \{2, 3, \cdots, K\}$, and hence the only possible scenario left is $\lim_{t \to \infty} \dpd{\pit, r} = r(1)$, which completes the proof. 

\end{proof}

\subsection{Additional Lemmas}
\label{appendix:deterministic_linear_bandit_additional_lemmas}
\monotonicityandonehotpolicy*
\begin{proof}
According to~\cref{lemma:smoothness_expected_reward_log_linear_policy}, we have, for all $t \ge 1$,
\begin{align*}
    \bigg| \dpd{\pi_{\theta_{t+1}} - \pi_{\theta_t}, r} - \Big\langle \frac{d \, \dpd{\pit, r}}{d \theta_t}, \theta_{t+1} - \theta_t \Big\rangle \bigg| \le \frac{9}{4} \, R_{\max} \, \lambda_{\max}(X^\top X) \, \| \theta_{t+1} - \theta_t \|_2^2,
\end{align*}
which implies that,
\begin{align*}
\dpd{\pi_{\theta_{t+1}}, r} - \dpd{\pi_{\theta_t}, r} &\ge \Big\langle \frac{d \, \dpd{\pit, r}}{d \theta_t}, \theta_{t+1} - \theta_t \Big\rangle -  \frac{9}{4} \, R_{\max} \, \lambda_{\max}(X^\top X) \, \| \theta_{t+1} - \theta_t \|_2^2 \\
&= \Big( \eta  - \eta^2 \, \frac{9}{4} \, R_{\max} \, \lambda_{\max}(X^\top X) \Big) \, \bigg\| \frac{d \, \dpd{\pit, r}}{d \theta_t} \bigg\|_2^2 \tag{by the update in~\cref{alg:det_spg}}.
\end{align*}
We consider a constant learning rate in the following range,
\begin{align*}
    0< \eta < \frac{4}{ 9 \, R_{\max} \, \lambda_{\max}(X^\top X)}.
\end{align*}
Then, we have,
\begin{align*}
    \dpd{\pitt, r} - \dpd{\pit, r} \ge \eta \, \Big( 1  - \eta \, \frac{9 \, R_{\max} \, \lambda_{\max}(X^\top X)}{4}  \Big) \, \bigg\| \frac{d \,  \dpd{\pit, r}}{d \theta_t} \bigg\|_2^2 \ge 0.
\end{align*}
Note that $\dpd{\pit, r} \le r(a^\star) < \infty$. According to the monotone convergence, $\lim_{t \to \infty} \dpd{\pit, r} \leq r(a^\star)$. 
Using the above inequality, we know,
\begin{align*}
    \lim_{t \to \infty}{ \bigg\| \frac{d \ \dpd{\pit, r}}{d \theta_t} \bigg\|_2^2 } = 0. \numberthis \label{eq:gradient_equals_zero_at_infinity}
\end{align*}
Next, we prove that there is no stationary points in finite region by contradiction. Suppose there exists $\theta^\prime \in \sR^d$ ($\| \theta^\prime \|_2 < \infty$), such that,
\begin{align*}
    \frac{d  \, \dpd{\pi_{\theta^\prime}, r} }{d \theta^\prime} = X^\top \left( \mathrm{diag}(\pi_{\theta^\prime}) - \pi_{\theta^\prime} \pi_{\theta^\prime}^\top \right) \ r = \bm{0}.
\end{align*}
Suppose $r^\prime \coloneqq X w$. Taking the inner product with $w$ on both sides of the above equation,
\begin{align*}
    w^\top X^\top \left( \mathrm{diag}(\pi_{\theta^\prime}) - \pi_{\theta^\prime} \pi_{\theta^\prime}^\top \right) \ r = {r^\prime}^\top \left( \mathrm{diag}(\pi_{\theta^\prime}) - \pi_{\theta^\prime} \pi_{\theta^\prime}^\top \right) \ r =  w^\top \bm{0} = 0.
    \numberthis \label{eq:r_prime_H_r_=_0}
\end{align*}
Since $\| \theta^\prime \|_2 < \infty$ and $X$ is bounded ($\max_{i \in [K], \ j \in [d]}{ |X_{i,j}|} \le C$ for some $C < \infty$), we have,
\begin{align*}
    \forall i \in [K],\, \pi_{\theta^{\prime}}(i) = \frac{ \exp\parens*{[X \theta^\prime](i)} }{ \sum_{j \in [K]}{ \exp\parens*{[X \theta^\prime](j)} } } > 0.
\end{align*}
Next, according to~\cref{lemma:alternative_expression_covariance}, we have,
\begin{align*}
    {r^\prime}^\top \left( \text{diag}(\pi_{\theta^\prime}) - \pi_{\theta^\prime} \pi_{\theta^\prime}^\top \right) \ r =  \sum_{i=1}^{K-1}{ \pi_{\theta^\prime}(i) \, \sum_{j = i+1}^{K}{\pi_{\theta^\prime}(j) \, \left( r^\prime(i) - r^\prime(j) \right) \, \left( r(i) - r(j) \right) }  }. \numberthis \label{eq:r_prime_H_r}
\end{align*}

Consider a non-trivial reward vector, i.e., $r \neq c \, \bm{1}$ for any $c \in \sR$. Under~\cref{assumption:reward_ordering_preservation}, there exists $r^\prime \in \sR^K$ that preserves the order of $r \in \sR^K$, i.e., for all $i, j \in [K]$, $r(i) > r(j)$ if and only if $r^\prime(i) > r^\prime(j)$. This implies that for all $i, j \in [K]$, $\left( r^\prime(i) - r^\prime(j) \right) \, \left( r(i) - r(j) \right) \geq 0$. On the other hand, since $r \neq c \, \bm{1}$, there exists at least one pair of $i \neq j$, such that, $ \left( r^\prime(i) - r^\prime(j) \right) \, \left( r(i) - r(j) \right) > 0$. Therefore, we can conclude that
\begin{align*}
    {r^\prime}^\top \left( \text{diag}(\pi_{\theta^\prime}) - \pi_{\theta^\prime} \pi_{\theta^\prime}^\top \right) \ r > 0.
\end{align*}
which is a contradiction with \cref{eq:r_prime_H_r_=_0}. Therefore, for any $\theta^\prime \in \sR^d$ ($\| \theta^\prime \|_2 < \infty$), $\theta^\prime$ is not a stationary point.

Next, we show that $\lim_{t \to \infty} \| \theta_t \|_2 = \infty$ also by contradiction. Suppose there exists $C < \infty$, such that for all $t \ge 1$, 
\begin{align*}
    \theta_t \in S_C \coloneqq \{ \theta \in \sR^d: \| \theta \|_2 \le C \}.
\end{align*}
From the above arguments, we have, for all $\theta \in S_C$, $\Big\| \frac{d \ \dpd{\pit, r} }{d \theta} \Big\|_2 > 0$. Since $S_C$ is compact, we have,
\begin{align*}
    \inf_{\theta \in S_C}{ \bigg\| \frac{d \ \dpd{\pit, r} }{d \theta} \bigg\|_2 } \ge \varepsilon > 0,
\end{align*}
for some $\varepsilon > 0$, which implies that, for all $t \ge 1$,
\begin{align*}
    \bigg\| \frac{d \ \dpd{\pit, r} }{d \theta_t} \bigg\|_2 \ge \varepsilon > 0,
\end{align*}
contradicting \cref{eq:gradient_equals_zero_at_infinity}. Therefore, we have, $\lim_{t \to \infty} \| \theta_t \|_2 = \infty$.

Next, we show that $\lim_{t \to \infty} \pi_{\theta_t}(a) = 1$ for an action $a \in [K]$. Suppose $\lim_{t \to \infty} \pi_{\theta_t}(a) \not\to 1$ for any action $a \in [K]$, then there exists at least two different actions $i \neq j$ such that $\lim_{t \to \infty} \pi_{\theta_t}(i) > 0$ and $\lim_{t \to \infty} \pi_{\theta_t}(j) > 0$. Using similar calculations as in~\cref{lemma:alternative_expression_covariance}, we have, $\lim_{t \to \infty} \Big\| \frac{d \ \dpd{\pi_{\theta_t}, r} }{d \theta_t} \Big\|_2 > 0$, contradicting~\cref{eq:gradient_equals_zero_at_infinity}. Therefore, there exist an action $a \in [K]$ such that $\lim_{t \to \infty} \pi_{\theta_t}(a) = 1$, i.e., $\pi_{\theta_t}$ approaches a one-hot policy as $t \to \infty$.
\end{proof}

\begin{lemma}
\label{lemma:alternative_expression_covariance}
Given any vectors $x \in \sR^K$, $y \in \sR^K$, we have, for all policy $\pi \in \Delta(K)$,
\begin{align*}
    \dpd{x, \left( \mathrm{diag}(\pi) - \pi \pi^\top \right) y} = \sum_{i=1}^{K-1}{ \pi(i) \, \sum_{j = i+1}^{K}{\pi(j) \, \left( x(i) - x(j) \right) \, \left( y(i) - y(j) \right) } }.
\end{align*}
\end{lemma}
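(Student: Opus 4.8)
The plan is to recognize the left-hand side as the covariance of $x$ and $y$ with respect to the distribution $\pi$, and then convert that covariance into a sum over unordered pairs via a standard symmetrization trick. No probabilistic machinery is actually needed; the whole argument is elementary algebra once the right form is set up.

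First I would expand the quadratic form directly. Since $\mathrm{diag}(\pi)$ is diagonal and $\pi \pi^\top$ is rank one,
\[
\dpd{x, \left( \mathrm{diag}(\pi) - \pi \pi^\top \right) y} = \sum_{i=1}^{K} \pi(i) \, x(i) \, y(i) - \left( \sum_{i=1}^{K} \pi(i) \, x(i) \right) \left( \sum_{j=1}^{K} \pi(j) \, y(j) \right).
\]
This is precisely the covariance of $x$ and $y$ under $\pi$. The key step is then to homogenize: because $\pi \in \Delta(K)$ satisfies $\sum_{j} \pi(j) = 1$, I can multiply the first term by $\sum_j \pi(j)$ without changing its value, turning both terms into double sums indexed by $(i,j)$. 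This yields
\[
\dpd{x, \left( \mathrm{diag}(\pi) - \pi \pi^\top \right) y} = \sum_{i=1}^{K} \sum_{j=1}^{K} \pi(i) \, \pi(j) \, x(i) \, \big( y(i) - y(j) \big).
\]

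Next I would symmetrize. Relabeling the summation indices $i \leftrightarrow j$ shows that the same quantity equals $\sum_{i,j} \pi(i)\,\pi(j)\,x(j)\,(y(j) - y(i))$. Averaging these two identical expressions gives
\[
\dpd{x, \left( \mathrm{diag}(\pi) - \pi \pi^\top \right) y} = \frac{1}{2} \sum_{i=1}^{K} \sum_{j=1}^{K} \pi(i) \, \pi(j) \, \big( x(i) - x(j) \big) \big( y(i) - y(j) \big).
\]
Finally, the diagonal terms $i = j$ vanish since $x(i) - x(i) = 0$, and the summand is symmetric under $i \leftrightarrow j$, so the full double sum equals twice the sum over ordered pairs with $i < j$. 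The factor of $\tfrac12$ then cancels, producing exactly $\sum_{i=1}^{K-1} \pi(i) \sum_{j=i+1}^{K} \pi(j)\,(x(i) - x(j))(y(i) - y(j))$, as claimed.

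There is no substantive obstacle in this argument, as the statement is a classical covariance identity. The only point requiring genuine care is the bookkeeping in the symmetrization step — specifically, justifying the index relabeling, confirming that the diagonal contributions are zero, and checking that the collapse from the symmetric double sum $\sum_{i,j}$ to the ordered sum $\sum_{i<j}$ correctly absorbs the factor of $\tfrac{1}{2}$.
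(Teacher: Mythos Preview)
Your proof is correct. Both you and the paper start from the same expansion of the bilinear form into $\sum_i \pi(i)x(i)y(i) - (\sum_i \pi(i)x(i))(\sum_j \pi(j)y(j))$, but then diverge. The paper separates the diagonal of the product term early, writes $1 - \pi(i) = \sum_{j \neq i} \pi(j)$, and then manually regroups the two resulting off-diagonal double sums into ordered pairs $i < j$ before recognizing the factorization $(x(i)-x(j))(y(i)-y(j))$. You instead homogenize the single sum by inserting $\sum_j \pi(j) = 1$, then symmetrize by averaging with the index swap $i \leftrightarrow j$; the factor $\tfrac12$ and the collapse to $i<j$ come out automatically. Your route is the textbook covariance symmetrization argument and is slightly cleaner, with less bookkeeping; the paper's route is more hands-on but equally short. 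Neither gains anything substantive over the other for this identity.
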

\begin{proof}
\begin{align*}
\MoveEqLeft
    \dpd{x, \left( \mathrm{diag}(\pi) - \pi \pi^\top \right) y} = \sum_{i=1}^{K}{  \pi(i) \, x(i) \, y(i)  } - \sum_{i=1}^{K}{ \pi(i) \, y(i) } \, \sum_{j=1}^{K}{ \pi(j) \, x(j) } \\
    &= \sum_{i=1}^{K}{  \pi(i) \, x(i) \, y(i)  } - \sum_{i=1}^{K}{ \pi(i)^2 \, x(i) \, y(i)  } - \sum_{i=1}^{K}{ \pi(i) \, y(i) } \, \sum_{j \neq i}{ \pi(j) \, x(j) } \\
    &= \sum_{i=1}^{K}{  \pi(i) \, x(i) \, y(i) \, \left( 1 - \pi(i) \right) } - \sum_{i=1}^{K}{ \pi(i) \, y(i) } \, \sum_{j \neq i}{ \pi(j) \, x(j) } \\
    &= \sum_{i=1}^{K}{  \pi(i) \, x(i) \, y(i)  } \, \sum_{j \neq i}{ \pi(j) } - \sum_{i=1}^{K}{ \pi(i) \, y(i) } \, \sum_{j \neq i}{ \pi(j) \, x(j) } \\
    &= \sum_{i=1}^{K-1}{ \pi(i) \, \sum_{j = i+1}^{K}{\pi(j) \, \left( x(i) \, y(i) + x(j) \, y(j) \right) }  } - \sum_{i=1}^{K-1}{ \pi(i) \, \sum_{j = i+1}^{K}{\pi(j) \, \left( x(j) \, y(i) + x(i) \, y(j) \right)}  } \\
    &= \sum_{i=1}^{K-1}{ \pi(i) \, \sum_{j = i+1}^{K}{\pi(j) \, \left( x(i) - x(j) \right) \, \left( y(i) - y(j) \right) }  }.
\end{align*}
\end{proof}

\begin{lemma}
\label{lemma:multi_arm_sum_to_infty} 
Given a reward vector $r \in \mathbb{R}^{d}$ and a feature matrix $X \in \mathbb{R}^{K \times d}$ such that $d \leq K$ and \cref{assumption:no_identical_arms,assumption:reward_ordering_preservation} are satisfied, \cref{alg:det_spg} guarantees that $\sum_{t=1}^\infty (1 - \pit(a)) = \infty$ for all $a \in [K]$.
\end{lemma}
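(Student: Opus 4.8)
The plan is to reduce the statement to a single nontrivial case and then control how fast the logits can grow. By Lemma~\ref{lemma:monotonicity_and_onehotpolicy}(ii), there is an action $a'$ with $\lim_{t\to\infty}\pit(a')=1$, so that $\lim_{t\to\infty}\pit(a)=0$ for every $a\neq a'$. For such $a$ we have $1-\pit(a)\to1$, and hence $\sum_{t}(1-\pit(a))=\infty$ trivially. The only action requiring real work is therefore $a=a'$, and it suffices to prove $\sum_{t}(1-\pit(a'))=\infty$; intuitively this says that the policy cannot concentrate on $a'$ faster than the borderline $1/t$ rate.

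The key step I would establish is a bound on the per-step growth of the logit gap $d_t\coloneq z_t(a')-z_t(b)$ for an arbitrarily fixed competitor $b\neq a'$. Writing $u_t\coloneq(\mathrm{diag}(\pit)-\pit\pit^\top)\,r\in\R^K$, the update in Algorithm~\ref{alg:det_spg} gives $d_{t+1}-d_t=\eta\sum_{j=1}^K\dpd{x_{a'}-x_b,x_j}\,u_t(j)$. The crucial observation is that every coordinate of $u_t$ is controlled by $1-\pit(a')$: for $j\neq a'$ we have $\abs{u_t(j)}=\pit(j)\,\abs{r(j)-\dpd{\pit,r}}\le 2R_{\max}(1-\pit(a'))$ since $\pit(j)\le1-\pit(a')$, and for $j=a'$ we have $\abs{r(a')-\dpd{\pit,r}}=\abs{\sum_{i\neq a'}\pit(i)(r(a')-r(i))}\le 2R_{\max}(1-\pit(a'))$, so again $\abs{u_t(a')}\le 2R_{\max}(1-\pit(a'))$. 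Since $X$ is fixed, the inner products $\dpd{x_{a'}-x_b,x_j}$ are bounded by a finite constant, giving some $C'<\infty$ with $\abs{d_{t+1}-d_t}\le\eta\,C'\,(1-\pit(a'))$.

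Summing this increment bound yields $d_t\le d_1+\eta\,C'\sum_{s=1}^{t-1}(1-\pis(a'))$ for all $t$, and I would finish by contradiction. If $\sum_{s\ge1}(1-\pis(a'))<\infty$, then $d_t$ is bounded above by some finite $D$, whence $\pit(b)=\pit(a')\,\exp(-d_t)\ge\pit(a')\,e^{-D}$. Because $\pit(a')\to1$, this forces $\pit(b)$ to remain bounded away from $0$, contradicting $\pit(b)\to0$. Hence $\sum_t(1-\pit(a'))=\infty$, which together with the trivial cases covers all $a\in[K]$.

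I expect the main obstacle to be exactly the increment bound $\abs{d_{t+1}-d_t}\le\eta\,C'(1-\pit(a'))$. The naive estimate on $u_t$ only yields a constant ($2R_{\max}$ times bounded features), which would permit the logit gap to grow linearly and thereby make $\sum_t(1-\pit(a'))$ potentially convergent; the argument hinges on recognizing that as the policy concentrates, the gradient signal $u_t$ itself vanishes at rate $1-\pit(a')$, so the logits grow only logarithmically. This is the discrete analogue of $\dot m=c\,e^{-m}$ giving $m_t\sim\ln t$ and $1-\pit(a')\sim1/t$, precisely the borderline rate at which the series diverges.
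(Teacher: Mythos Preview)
Your proposal is correct and takes essentially the same approach as the paper: both bound the per-step logit change by a constant times $1-\pit(a')$ via the observation that every coordinate of $u_t=(\mathrm{diag}(\pit)-\pit\pit^\top)r$ is $O(1-\pit(a'))$, and then derive a contradiction to one-hot convergence from summability of $\sum_t(1-\pit(a'))$. The only cosmetic differences are your preliminary reduction to the single nontrivial action $a=a'$ and your tracking of a single logit \emph{difference} $z_t(a')-z_t(b)$, whereas the paper handles all $a$ uniformly and bounds each absolute logit $[X\theta_t](a')$ separately.
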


\begin{proof}
We prove this by contradiction. Under~\cref{assumption:reward_ordering_preservation}, according to~\cref{lemma:monotonicity_and_onehotpolicy}, we have $\lim_{t \to \infty} \pit(a) = 1$ for some action $a \in [K]$. For a fixed $a \in[K]$, suppose $\sum_{t\ge 1}{ ( 1 - \pi_{\theta_t}(a) ) } < \infty$. Then, for all $a^\prime \in [K]$, we have,
\begin{align*}
\MoveEqLeft
    \abs{[X \theta_{t+1}](a^\prime) - [X \theta_t](a^\prime)}  \\
    &= \eta \,\abs{ \sum_{i=1}^{K}{ \dpd{x_{a^\prime}, x_i} \, \pi_{\theta_t}(i) \, (r(i) - \dpd{\pit, r}) } }  \tag{by the update in~\cref{alg:det_spg}} \\
    &\leq C \, \sum_{i=1}^{K}{ \pi_{\theta_t}(i) \, \bigg| (r(i) - \dpd{\pit, r}) } \bigg| \tag{setting $C \coloneq \eta \, \max_{i \in [K]} \abs{\dpd{x_{a^\prime}, x_i}} > 0$ and using triangle inequality}  \\
    &\leq C \left[  \sum_{\substack{i=1 \\ i \neq a}}^{K}{ \pi_{\theta_t}(i) \, \bigg| \underbrace{(r(i) - \dpd{\pit, r})}_{\leq r(1) - r(K)} } \bigg| + \underbrace{\pi_{\theta_t}(a)}_{\leq 1} \, \big| (r(a) - \dpd{\pit, r} \big| \right] \\
    &\le C \, \Big( (r(1) - r(K)) \, \sum_{\substack{i = 1 \\ i \neq a}}^{K} \pi_{\theta_t}(i) + \big| r(a) -  \dpd{\pit, r} \big| \Big)  \\
    &= C \, \Big( (r(1) - r(K)) \, (1 - \pit(a))  + \Big| \sum_{\substack{i=1 \\ i \neq a}}^K \pit(i) \, (r(a) - r(i)) \Big| \Big)  \\
    &\le C \, \Big( (r(1) - r(K)) \, (1 - \pit(a) + \sum_{\substack{i=1 \\ i \neq a}}^K \pit(i) \big| \underbrace{(r(a) - r(i)}_{\leq r(1) - r(K)}) \big| \Big) \tag{using triangle inequality}  \\
    &\le 2 \, C \, (r(1) - r(K)) \, \big( 1 - \pi_{\theta_t}(a) \big).
\end{align*}
This implies that, for all $t > 1$,
\begin{align*}
    | [X \theta_{t}](a^\prime) - [X \theta_{1}](a^\prime) | \le 2 \, C \, (r(1) - r(K)) \, \sum_{s=1}^{t-1}{ \big( 1 - \pi_{\theta_s}(a) \big) }. 
\end{align*}
Therefore, if $\sum_{t\ge 1}{ ( 1 - \pi_{\theta_t}(a) ) } < \infty$, then we have,
\begin{align*}
    \sup_{t \ge 1}{ | [X \theta_t](a^\prime) | } &\le \sup_{t \ge 1}{ | [X \theta_{t}](a^\prime) - [X \theta_1](a^\prime) |} + |  [X \theta_1](a^\prime) | < \infty, 
\end{align*}
Therefore, there exists $\epsilon > 0$, such that, for all $a \in [K]$,
\begin{align*}
    \inf_{t \ge 1}{ \pi_{\theta_t}(a) } = \inf_{t \ge 1}{ \frac{ \exp\parens*{ [X \theta_t ](a)} }{ \sum_{a^\prime \in [K]}{ \exp\parens*{ [X \theta_t](a^\prime) }} } } \ge \epsilon > 0, 
\end{align*}
This implies that the algorithm does not converge to a one-hot policy, which leads to a contradiction. Hence, $\sum_{t=1}^\infty (1 - \pit(a)) = \infty$ for all $a \in [K]$.
\end{proof}

\begin{lemma}[Smoothness]
\label{lemma:smoothness_expected_reward_log_linear_policy}
Given any reward vector $r \in \sR^K$ and feature matrix $X \in \sR^{K \times d}$. The expected reward function $\theta \mapsto \dpd{\pitheta, r}$ with $\pi_{\theta} = \softmax{( X \theta ) }$ is $L$-smooth where 
\begin{align*}
    L = \frac{9\, R_{\max} \, \lambda_{\max}(X^\top X)}{2} . \numberthis \label{eq:smoothness_expected_reward_log_linear_policy_results_1_appendix}
\end{align*}
\end{lemma}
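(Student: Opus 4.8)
The plan is to establish $L$-smoothness by bounding the operator norm of the Hessian uniformly over $\theta$, i.e.\ by showing $\sup_{\theta} \norm{\nabla^2_\theta \dpd{\pi_\theta, r}}_2 \le L$, from which the smoothness inequality in \cref{appendix:definitions} follows by the standard fundamental-theorem-of-calculus / mean-value argument. The first step is to separate the feature matrix from the softmax via the chain rule. Writing $g(z) \coloneq \dpd{\softmax(z), r}$ for the logit-space objective and $z = X\theta$, we have $\nabla_\theta \dpd{\pi_\theta, r} = X^\top \nabla_z g(z)$ and hence $\nabla^2_\theta \dpd{\pi_\theta, r} = X^\top \bigl(\nabla^2_z g(z)\bigr) X$. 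Consequently $\norm{\nabla^2_\theta \dpd{\pi_\theta, r}}_2 \le \norm{X}_2^2 \, \norm{\nabla^2_z g(z)}_2 = \lambda_{\max}(X^\top X)\, \norm{\nabla^2_z g(z)}_2$, so it remains only to bound the logit-space Hessian by $\tfrac{9}{2}R_{\max}$, uniformly over all $z$.

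Next I would compute $\nabla^2_z g$ explicitly. Using that the softmax Jacobian is $\mathrm{diag}(\pi) - \pi\pi^\top$, the logit gradient has entries $[\nabla_z g]_i = \pi_i\,(r(i) - \dpd{\pi, r})$, and differentiating once more yields the symmetric, structured form $\nabla^2_z g = \mathrm{diag}(v) - v\pi^\top - \pi v^\top$, where $v \in \R^K$ has entries $v_i \coloneq \pi_i\,(r(i) - \dpd{\pi, r})$. The key quantitative input is that, since all rewards lie in $[-R_{\max}, R_{\max}]$, the centered rewards satisfy $\abs{r(i) - \dpd{\pi, r}} \le 2 R_{\max}$ for every $i$ and every policy $\pi$ in the simplex.

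The bound then follows from the triangle inequality applied to the three pieces, controlling $\norm{\mathrm{diag}(v)}_2 = \norm{v}_\infty$ and the two rank-one pieces via $\norm{v\pi^\top}_2 = \norm{\pi v^\top}_2 = \norm{v}_2\,\norm{\pi}_2$, together with $\norm{\pi}_1 = 1$ (whence $\norm{\pi}_2 \le 1$) and the reward bound above. To sharpen the estimate of the cross terms I would additionally exploit the zero-mean identity $\sum_i \pi_i\,(r(i) - \dpd{\pi, r}) = 0$. Assembling these estimates gives $\norm{\nabla^2_z g}_2 \le \tfrac{9}{2} R_{\max}$ uniformly in $z$, and combining with the chain-rule reduction yields $L = \tfrac{9}{2} R_{\max}\,\lambda_{\max}(X^\top X)$.

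I expect the main obstacle to be the constant-chasing in this final step: the conceptual ingredients (the chain rule and the explicit softmax Hessian) are routine, but obtaining exactly $\tfrac{9}{2}R_{\max}$ rather than the cruder $6R_{\max}$ delivered by the most naive triangle-inequality bound requires carefully allocating the reward bound $2R_{\max}$ and the zero-mean identity across the diagonal and rank-one terms. A secondary point to verify is that each estimate holds uniformly over the entire probability simplex (equivalently, over all $z \in \R^K$), so that the resulting Hessian bound is a genuine global smoothness constant and not merely a pointwise one.
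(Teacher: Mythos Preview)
Your approach is correct and essentially identical to the paper's: both bound the Hessian operator norm, and your clean chain-rule factorization $\nabla^2_\theta = X^\top(\nabla^2_z g)X$ together with the explicit logit-space formula $\nabla^2_z g = \mathrm{diag}(v) - v\pi^\top - \pi v^\top$ is exactly what the paper reaches after a longer coordinate-wise computation of $|y^\top S y|$ (their $H(\pi_\theta)r$ is your $v$).

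One small correction on the constant-chasing you flag as uncertain: the improvement from $6R_{\max}$ to $\tfrac{9}{2}R_{\max}$ does not come from the zero-mean identity applied to the rank-one terms, but from sharpening the \emph{diagonal} term. Writing $r(i)-\langle\pi,r\rangle = \sum_{j\neq i}\pi_j(r(i)-r(j))$ gives $|v_i| = \pi_i|r(i)-\langle\pi,r\rangle| \le 2R_{\max}\,\pi_i(1-\pi_i) \le R_{\max}/2$, hence $\norm{\mathrm{diag}(v)}_2 = \norm{v}_\infty \le R_{\max}/2$. The two rank-one terms are then bounded in the naive way, $2\norm{v}_2\norm{\pi}_2 \le 2\norm{v}_1 \le 4R_{\max}$, and $\tfrac{1}{2}+4 = \tfrac{9}{2}$. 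This is precisely the paper's route: it uses $\|H_{a,:}(\pi)\|_1 = 2\pi(a)(1-\pi(a)) \le \tfrac12$ to get $\|H(\pi)r\|_\infty \le R_{\max}/2$ and $\|H(\pi)r\|_1 \le 2R_{\max}$ for the remainder.
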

\begin{proof}
Let  $S \coloneqq S(X, r, \theta)\in \R^{d \times d}$ be 
the second-order derivative of the value map $\theta \mapsto \dpd{\pi_\theta, r}$.
By Taylor's theorem, it suffices to show that the spectral radius of $S$ (regardless of $\theta$) is bounded by $L$. Now, by its definition, we have,
\begin{align*}
    S &= \frac{d }{d \theta } \left\{ \frac{d \dpd{\pi_\theta, r}}{d \theta} \right\} = \frac{d }{d \theta } \left\{ X^\top ( \mathrm{diag}(\pi_\theta) - \pi_\theta \pi_\theta^\top) \ r \right\}.
\end{align*}
Continue our calculation with a pair of fixed $i, j \in [d]$. Then, we have, 
\begin{align*}
    S_{i, j} &= \frac{d \big\{ \sum_{a = 1}^{K} X_{a, i} \, \pi_\theta(a) \,  ( r(a) - \pi_\theta^\top r ) \big\} }{d \theta(j)} \numberthis \label{eq:smoothness_expected_reward_log_linear_policy_intermediate_2} \\
    &= \sum_{a = 1}^{K} X_{a, i} \, \frac{ d \pi_\theta(a) }{d \theta(j)} \, \left( r(a) - \pi_\theta^\top r \right) -  \sum_{a = 1}^{K} X_{a, i} \, \pi_\theta(a) \, \sum_{a^\prime = 1}^{K} \frac{ d \pi_\theta(a^\prime) }{d \theta(j)} \, r(a^\prime).
\end{align*}
For all $a \in [K]$ and $j \in [d]$, we have,
\begin{align*}
\MoveEqLeft
    \frac{ d \pi_\theta(a) }{d \theta(j)} = \frac{d}{d \theta(j)} \bigg\{ \frac{\exp\{ [X \theta](a) \} }{ \sum_{a^\prime \in [K]}{\exp\{ [X \theta](a^\prime) \} } } \bigg\} \numberthis \label{eq:smoothness_expected_reward_log_linear_policy_intermediate_3} \\
    &= \frac{ \frac{d \ \exp\{ [X \theta](a) \}}{d \theta(j)} \, \sum_{a^\prime \in [K]}{\exp\{ [X \theta](a^\prime) \}} - \exp\{ [X \theta](a) \} \, \frac{d \ \sum_{a^\prime \in [K]}{\exp\{ [X \theta](a^\prime) \}} }{d \theta(j)} }{ \big( \sum_{a^\prime \in [K]}{\exp\{ [X \theta](a^\prime) \} } \big)^2 } \\
    &= \frac{ \exp\{ [X \theta](a) \} \, X_{a,j} \, \sum_{a^\prime \in [K]}{\exp\{ [X \theta](a^\prime) \}} - \exp\{ [X \theta](a) \} \, \sum_{a^\prime \in [K]}{\exp\{ [X \theta](a^\prime) \}} \, X_{a^\prime,j} }{ \big( \sum_{a^\prime \in [K]}{\exp\{ [X \theta](a^\prime) \} } \big)^2 } \\
    &=  \frac{ \exp\{ [X \theta](a) \} \, X_{a,j}  - \exp\{ [X \theta](a) \} \, \sum_{a^\prime \in [K]}{\pi_\theta(a^\prime) \, X_{a^\prime,j} } }{ \sum_{a^\prime \in [K]}{\exp\{ [X \theta](a^\prime) \} } } \\
    &= \pi_{\theta}(a) \, \Big( X_{a,j} - \sum_{a^\prime \in [K]}{ \pi_\theta(a^\prime) \, X_{a^\prime,j} } \Big).
\end{align*}
Combining \cref{eq:smoothness_expected_reward_log_linear_policy_intermediate_2,eq:smoothness_expected_reward_log_linear_policy_intermediate_3}, we have,
\begin{align*}
    S_{i, j} &=  \sum_{a = 1}^{K} X_{a, i} \, \pi_\theta(a) \,  ( r(a) - \pi_\theta^\top r ) \, X_{a, j} - \sum_{a = 1}^{K} X_{a, i} \, \pi_\theta(a) \,  ( r(a) - \pi_\theta^\top r ) \, \sum_{a^\prime = 1}^{K}{ \pi_\theta(a^\prime) \, X_{a^\prime,j} } \\
    &\qquad -  \sum_{a = 1}^{K} X_{a, i} \, \pi_\theta(a) \, \sum_{a^\prime = 1}^{K} \pi_\theta(a^\prime) \, \Big( X_{a^\prime,j} - \sum_{a^{\prime\prime} = 1}^{K}{ \pi_\theta(a^{\prime\prime}) \, X_{a^{\prime\prime},j} } \Big) \, r(a^\prime).
\end{align*}
To show the bound on 
the spectral radius of $S$, pick $y \in \sR^d$. Then, we have,
\begin{align*}
\MoveEqLeft
    \left| y^\top S y \right| = \bigg| \sum\limits_{i=1}^{d}{ \sum\limits_{j=1}^{d}{ S_{i,j} \, y(i) \, y(j)} } \bigg| \\
    &= \bigg| 
    \sum_{i=1}^{d} \sum_{j=1}^{d} \sum_{a = 1}^{K} y(i) \, X_{a, i} \, \pi_\theta(a) \,  ( r(a) - \pi_\theta^\top r ) \, X_{a, j} \, y(j) \\
    &\qquad - \sum_{i=1}^{d} \sum_{j=1}^{d} \sum_{a = 1}^{K} y(i) \, X_{a, i} \, \pi_\theta(a) \,  ( r(a) - \pi_\theta^\top r ) \, \sum_{a^\prime = 1}^{K}{ \pi_\theta(a^\prime) \, X_{a^\prime,j} \, y(j) } \\
    &\qquad - \sum_{i=1}^{d} \sum_{j=1}^{d} \sum_{a = 1}^{K} y(i) \, X_{a, i} \, \pi_\theta(a) \, \sum_{a^\prime = 1}^{K} \pi_\theta(a^\prime) \, \Big( X_{a^\prime,j} - \sum_{a^{\prime\prime} = 1}^{K}{ \pi_\theta(a^{\prime\prime}) \, X_{a^{\prime\prime},j} } \Big) \, r(a^\prime) \, y(j) \bigg| \\
    &= \bigg| \sum_{a = 1}^{K} [ X y ](a) \, \pi_\theta(a) \,  ( r(a) - \pi_\theta^\top r ) \, [ X y ](a) \\
    &\qquad - \sum_{a = 1}^{K} [ X y ](a) \, \pi_\theta(a) \,  ( r(a) - \pi_\theta^\top r ) \, \sum_{a^\prime = 1}^{K} \pi_\theta(a^\prime) \, [ X y ](a^\prime) \\
    &\qquad - \sum_{a = 1}^{K} [ X y ](a) \, \pi_\theta(a) \, \sum_{a^\prime = 1}^{K} \pi_\theta(a^\prime) \, r(a^\prime) \, \Big( [ X y ](a^\prime) - \sum_{a^{\prime\prime} = 1}^{K} \pi_\theta(a^{\prime\prime}) \, [ X y ](a^{\prime\prime}) \Big) \bigg|.
\end{align*} 
By defining $H(\pi_\theta)$ as $H(\pi_\theta) \coloneqq \mathrm{diag}(\pi_\theta) - \pi_\theta \pi_\theta^\top \in \sR^{K \times K}$, we have,
\begin{align*}
    \left| y^\top S y \right| &= \bigg| \big( H(\pi_\theta) \ r \big)^\top \left( X y \odot X y \right) - \big( H(\pi_\theta) \ r \big)^\top \big( X y \big) \, \big( \pi_\theta^\top X y \big) - \big( \pi_\theta^\top X y \big) \, \big( H(\pi_\theta) X y \big)^\top r \bigg| \\
    &= \bigg| \big( H(\pi_\theta) \ r \big)^\top \left( X y \odot X y \right) - 2 \, \big( H(\pi_\theta) \ r \big)^\top \big( X y \big) \, \big( \pi_\theta^\top X y \big)  \bigg|,
\end{align*}
where $\odot$ is Hadamard (component-wise) product. Using the triangle inequality and H{\" o}lder's inequality, we have,
\begin{align*}
\MoveEqLeft
    \left| y^\top S y \right| \le \Big| \big( H(\pi_\theta) \ r \big)^\top \left( X y \odot X y \right) \Big| + 2 \, \Big| \big( H(\pi_\theta) \ r \big)^\top \big( X y \big) \Big| \, \big| \pi_\theta^\top X y  \big| \\
    &\le \left\| H(\pi_\theta) r \right\|_\infty \, \left\| X y \odot X y \right\|_1 + 2 \, \left\| H(\pi_\theta) r \right\|_1 \, \left\| X y \right\|_\infty \, \left\| \pi_\theta \right\|_1 \, \left\| X y \right\|_\infty \tag{using Cauchy-Schwarz} \\
    &= \left\| H(\pi_\theta) r \right\|_\infty \, \left\| X y \right\|_2^2 + 2 \, \left\| H(\pi_\theta) r \right\|_1 \, \left\| X y \right\|_\infty^2 \tag{$ \| X y \odot X y \|_1 = \| X y \|_2^2, \ \| \pi_\theta \|_1 = 1 $ } \\
    &\le \left\| H(\pi_\theta) r \right\|_\infty \, \left\| X y \right\|_2^2 + 2 \, \left\| H(\pi_\theta) r \right\|_1 \, \left\| X y \right\|_2^2. \tag{$\| X y \|_\infty \le \| X y \|_2$ }
\end{align*}
For $a \in [K]$, denote by $H_{a,:}(\pi_\theta)$ the $a$-th row of $H(\pi_\theta)$ as a row vector. Then, we get,
\begin{align*}
    \left\| H_{a,:}(\pi_\theta) \right\|_1 &= \pi_\theta(a) - \pi_\theta(a)^2 + \pi_\theta(a) \, \sum_{a^\prime \neq a}{ \pi_\theta(a^\prime) } \\
    &= \pi_\theta(a) - \pi_\theta(a)^2 +  \pi_\theta(a) \, ( 1 - \pi_\theta(a) ) \\
    &= 2 \, \pi_\theta(a) \, ( 1 - \pi_\theta(a) ) \\
    &\le \frac{1}{2}. \tag{$x \, (1 - x ) \le 1/4 \text{ for all } x \in [0, 1]$}
\end{align*}
On the other hand,
\begin{align*}
    \left\| H(\pi_\theta) r \right\|_1 &= \sum_{a \in [K]}{ \pi_\theta(a) \, \left| r(a) - \pi_\theta^\top r \right| } \\
    &\le \max_{a \in [K]}{ \left| r(a) - \pi_\theta^\top r \right| } \\
    &\le 2 \, R_{\max}. \tag{$r \in [- R_{\max}, R_{\max} \big]^K$} 
\end{align*}
Therefore, we have,
\begin{align*}
\MoveEqLeft
    \left| y^\top S(X, r, \theta) \ y \right| \le \left\| H(\pi_\theta) r \right\|_\infty \, \left\| X y \right\|_2^2 + 2 \, \left\| H(\pi_\theta) r \right\|_1 \, \left\| X y \right\|_2^2 \\
    &= \max_{a \in [K]} \left| \left( H_{a,:}(\pi_\theta) \right)^\top r \right| \, \left\| X y \right\|_2^2 + 2 \, \left\| H(\pi_\theta) r \right\|_1 \, \left\| X y \right\|_2^2 \\
    &\le \max_{a \in [K]} \left\| H_{a,:}(\pi_\theta) \right\|_1 \, R_{\max} \, \left\| X y \right\|_2^2 + 4 \, R_{\max} \, \left\| Xy \right\|_2^2 \\
    &\le \Big( \frac{1}{2} + 4 \Big) \, R_{\max} \, \left\| X y \right\|_2^2 \\
    &\le \frac{9}{2} \, R_{\max} \, \left\| X \right\|_\text{op}^2 \, \left\| y \right\|_2^2 \\
    &= \frac{9}{2} \, R_{\max} \, \lambda_{\max}(X^\top X) \, \left\| y \right\|_2^2,
\end{align*}
where $\left\| X \right\|_\text{op}$ is the operator norm of $X \in \sR^{K \times d}$ (squared root of largest eigenvalue of $X^\top X$),
\begin{align*}
    \left\| X \right\|_\text{op} = \sup\big\{ \| X v \|_2: \| v \|_2 \le 1, \ v \in \sR^d \big\}.
\end{align*}
According to Taylor's theorem, for all $ \theta, \ \theta^\prime \in \sR^d$, there exists $\theta_{\zeta} \coloneqq \zeta \, \theta + \left( 1 - \zeta \right) \, \theta^{\prime}$ with $\zeta \in [0, 1]$, such that
\begin{align*}
    \left| \dpd{\pi_{\theta^\prime} - \pi_\theta, r} - \Big\langle \frac{d \dpd{\pi_\theta, r}}{d \theta}, \theta^\prime - \theta \Big\rangle \right| &= \frac{1}{2} \, \left| \left( \theta^\prime - \theta \right)^\top S(X, r,\theta_\zeta) \left( \theta^\prime - \theta \right) \right| \\
    &\le \frac{9}{4} \, R_{\max} \, \lambda_{\max}(X^\top X) \, \| \theta^\prime - \theta \|_2^2.
\end{align*}
\end{proof}


\section{Proofs of~\cref{sec:stochastic_linear_bandits}}

\subsection{Guarantee of Global Convergence}

\label{subsec:stochastic_linear_bandits}

\stochasticlinearbandits*
\begin{proof}
According to~\cref{lemma:monotonicity_for_stochastic_bandits}, we know that there exists an action $a \in [K]$, such that, almost surely, $\lim_{t \to \infty} \dpd{\pit, r} = r(a)$. We will prove that almost surely, $\lim_{t \to \infty} \dpd{\pit, r} = r(1)$.
Formally, we define $\gC_k \coloneq \{a = k\}$ as an event that the policy converges to action $k \in [K]$. We will show that, almost surely, $\sP[\gC_k] = 0$ for all $k \neq a^\star$ which implies that $\sP[\gC_{a^\star}] = 1$ almost surely.

We will prove this by contradiction. For this, assume $\lim_{t \to \infty} \dpd{\pit, r} = r(k)$ for some $k > 1$.
Under this assumption, we know that there exists an iteration $\tau > 1$ such that for all large enough finite $t \geq \tau$,
\begin{align*}
r(k) > \dpd{\pit, r} > r(k+1) + \epsilon, \numberthis \label{eq:pit^T r - r_k+1>0}
\end{align*}
where $\epsilon \in (0, r(k) - r(k+1))$ is some positive constant.

Next, we will prove that $\lim_{t \geq 1}\frac{\pit(a^\star)}{\pit(a)} \to \infty$ for any action $a > k$. We can rewrite the ratio in terms of logit difference as:
\begin{align}
   \frac{\pit(a^\star)}{\pit(a)} &= \exp\parens*{[X \tht](a^\star) - [X \tht](a)} = \exp (z_t(a^\star) - z_t(a)). \label{eq:pi-to-z}
\end{align}
Using the decomposition of the stochastic process in~\cref{subsec:decomposition_of_stochastic_process}, setting $a_1 = a^\star$ and $a_2 = a$, and recursing~\cref{eq:stochastic_logit_decomposition} from $t=\tau$ to $1$, we have,
\begin{align}
z_t(a^\star) - z_t(a) = z_\tau(a^\star) - z_\tau(a) + \underbrace{\sum_{s=\tau}^{t-1} \left[P_s(a^\star) - P_s(a)\right]}_{\text{(i)}} + \underbrace{\sum_{s=\tau}^{t} \left[ W_{s+1}(a^\star) - W_{s+1}(a)\right]}_{\text{(ii)}}.
\numberthis \label{eq:logit_difference_astar_and_a}
\end{align}
In the following proof, we will show that Term (i) dominates Term (ii).
We first investigate Term (i), the cumulative progress, and bound it similarly to the exact setting in~\cref{theorem:deterministic_linear_bandits}. 
\begin{align*}
\MoveEqLeft
P_s(a^\star) - P_s(a) = \E_s [z_{s+1}(a^\star)] - z_s(a^\star) - (\E_s [z_{s+1}(a)] - z_s(a)) \\
&= \E_s \left[[X \thetass](a^\star) - [X \thetas](a^\star)\right] - \E_s \left[[X \thetass](a) - [X \thetas](a)\right]  \tag{$z_s(a) = [X \thetas](a)$}\\
&= \eta \, \dpd{x_{a^\star}, \E_s \left[\frac{d \dpd{\pis, \hat{r}_s}}{d \thetas}\right]}  - \eta \, \dpd{x_a, \E_s \left[\frac{d \dpd{\pis, \hat{r}_s}}{d \thetas}\right]} \tag{by the update in~\cref{alg:sto_spg}} \\
&= \eta \, \dpd{x_{a^\star} - x_{a}, \frac{d \dpd{\pis, r}}{d \thetas}} \tag{by~\cref{lemma:unbiased_gradient}} \\
&= \eta \, \sum_{i \in [K]} \dpd{x_i, x_{a^\star} - x_{a}} \, \pis(i) \, (r(i) - \dpd{\pis, r}) \tag{using the definition of the deterministic gradient} \\
&= \eta \, \sum_{\substack{i\in [K] \\ i \neq k}} \dpd{x_i - x_{k}, x_{a^\star} - x_{a}} \, \pis(i) \, (r(i) - \dpd{\pis, r}) \tag{$\sum_{i\in [K]} \dpd{x_{k}, x_{a^\star} - x_{a}} \, \pis(i) \, (r(i) - \dpd{\pis, r}) = 0$} \\
& = \eta \, \Bigg[ \sum_{i = 1}^{k-1} \underbrace{\dpd{x_i - x_{k}, x_{a^\star} - x_{a}}}_{\substack{\geq 0 \text{ due to~\cref{assumption:general_feature_conditions} } \\ (\text{since } i < k < a)}} \, \pis(i) \, \underbrace{(r(i) - \dpd{\pis, r})}_{>0 \text{ (since } \dpd{\pis, r} < r(k) < r(i))} \\
& \qquad + \sum_{i = k+1}^K \dpd{x_k - x_i, x_{a^\star} - x_{a}}\, \pis(i) \, \dpd{\pis, r} - r(i)) \Bigg]  \\
& > \eta \, \Bigg[ \sum_{i = 1}^{k-1} \underbrace{\dpd{x_i - x_{k}, x_{a^\star} - x_{a}}}_{\substack{\geq 0 \text{ due to~\cref{assumption:general_feature_conditions} } \\ (\text{since } i < k < a)}} \, \pis(i) \, \underbrace{(r(i) - r(k))}_{>0 \text{ (since } i < k)} \\
& \qquad + \sum_{i = k+1}^K \underbrace{\dpd{x_k - x_i, x_{a^\star} - x_{a}}}_{\substack{\geq 0 \text{ due to~\cref{assumption:general_feature_conditions}} \\ (\text{since } a > k, \, i > k)}} \, \pis(i) \, \underbrace{ \dpd{\pis, r} - r(i))}_{>0 \text{ (since } \dpd{\pis, r} > r(k+1) + \epsilon)}  \Bigg] \tag{$\dpd{\pis, r} < r(k)$} \\
& > \eta \left[ \sum_{i=1}^{k-1} \dpd{x_i - x_{k}, x_{a^\star} - x_{a}} \, \pis(i) \, (r(i) - r(k) \right. \\
& \qquad + \left. \sum_{i=k+1}^K \dpd{x_{k} - x_i, x_{a^\star} - x_{a}} \, \pit(i) \, (\dpd{\pi_{\theta_{\tau}}, r} - r(i))  \right]  
\end{align*}
According to \cref{assumption:general_feature_conditions}, not all feature weights are strictly positive. Therefore, we define the set to represent the actions that contribute to the progress as: 
\begin{align*}
\gX(k, a) \coloneq \{i \in [K] \mid \abs{\dpd{x_i - x_k, x_{a^\star} - x_a} } > 0\}.
\end{align*}
Note that $\gX(k, a)$ is non-empty since $\dpd{x_{a^\star} - x_k, x_{a^\star} - x_a} > 0$ and hence $a^\star \in \gX(k, a)$.
Additionally, since $\dpd{x_k - x_k, x_{a^\star} - x_a} = 0$, we know $k \not\in \gX(k, a)$. We further define that
\begin{align*}
C_1 &\coloneq \min_{a_1, a_2 \in [K]} \{ \abs{ \dpd{x_{a_1} - x_{a_2}, x_{a^\star} - x_k} } \mid \abs{ \dpd{x_{a_1} - x_{a_2}, x_{a^\star} - x_k} } > 0 \} \\
C_2 &\coloneq \min_{1 \leq a \leq K-1} r(a) - r(a+1) > 0 \\
C_3 &\coloneq \frac{C_1 \, \min \{ C_2, \eps \} }{2} > 0.
\end{align*}
Then, we have,
\begin{align*}
P_s(a^\star) - P_s(a) &> \eta \, \left[C_1 \sum_{ \substack{i \leq k-1 \\ i \in \gX(k, a)} } \pis(i) +  C_2 \sum_{ \substack{i \geq k+1 \\ i \in \gX(k, a)} } \pis(i) \right] \\
&> \eta \, C_3 \, \sum_{\substack{i \in \gX(k, a) \\ i \neq k}} \pis(i) \\
&> \eta \, C_3 \, \underbrace{\sum_{i \in \gX(k, a)} \pis(i)}_{\coloneq \Gamma_s}. \tag{$k \not\in \gX(k, a)$}
\end{align*}
By summing the above inequality from $\tau$ to $t-1$, we get,
\begin{align}\label{eq:progress_lower_bound_small_step_size}
\sum_{s=\tau}^{t-1} \left[P_s(a^\star) - P_s(a)\right] > \eta \sum_{s=\tau}^{t-1} C_3 \, \Gamma_s. 
\end{align}

Next, we bound Term (ii), the cumulative noise. We will first prove some useful properties of $W_s(a)$, which will be used to bound Term (ii).
According to~\cref{lemma:stochastic-noise-difference-bound}, we know that for all $s \geq 1$, $\E_s[W_{s+1}(a^\star) - W_{s+1}(a)] = 0$ and
\begin{align*}
   \abs{W_{s+1}(a^\star) - W_{s+1}(a)} &\leq 4 \, \eta \, R_{\max} \, \norm{y_{a^\star, a}}_{1} \leq 4 \, \eta \, R_{\max} \, C_4,
\end{align*}
where $C_4 \coloneq \max_a \norm{y_{a^\star, a}}_{1} > 0$ and $y_{a^\star, a} \coloneq (X - \textbf{1} {x_k}^\top) (x_{a^\star} - x_a)$.

Therefore, $\{\abs{W_{s+1}(a^\star) - W_{s+1}(a)}\}_{s \geq 1}$ is a martingale difference sequence with respect to filtration $\{ \gF\}_{s \geq 1}$ that can be normalized to be in the range of $[0, \nicefrac{1}{2}]$ since $W_{s+1}(a)$ is bounded. For this, define $\widetilde{W}_{s+1}(a^\star, a) \coloneq \frac{\abs{W_{s+1}(a^\star) - W_{s+1}(a)}}{8 \, \eta \, R_{\max} \, C_4}$. Additionally, we have,
\begin{align*}
\Var[\widetilde{W}_{s+1}(a^\star, a)] &= \frac{\Var[\abs{W_{s+1}(a^\star) - W_{s+1}(a)}]}{(8 \, \eta \, R_{\max} \, C_4)^2} \\
&\leq \frac{2 \eta^2 \, R^2_{\max}}{(8 \, \eta \, R_{\max} \, C_4)^2} \,  \sum_{\substack{j \in [K] \\ j \neq k}} (\dpd{x_j - x_k, x_{a^\star} - x_a})^2 \, \pis(j) \, (1 - \pis(j)) \tag{by~\cref{lemma:stochastic-noise-difference-bound}}\\
&\leq \frac{2 \eta^2 \, R^2_{\max}}{(8 \, \eta \, R_{\max} \, C_4)^2} \,  \sum_{\substack{j \in [K] \\ j \neq k}} (\dpd{x_j - x_k, x_{a^\star} - x_a})^2 \, \pis(j) \tag{$1 - \pis(j) \leq 1$}
\intertext{Recall that $\gX(k, a) \coloneq \{i \in [K] \mid \abs{\dpd{x_i - x_k, x_{a^\star} - x_a} } > 0\}$. We also define that $C_5 \coloneq \max_{j \in \gX(k, a)} (\dpd{x_j - x_k, x_{a^\star} - x_a})^2$. Then,}
&\leq \frac{2 \eta^2 \, R^2_{\max} \, C_5}{(8 \, \eta \, R_{\max} \, C_4)^2} \sum_{\substack{j \in \gX(k, a)}} \pis(j) \\
&\leq \frac{C_5}{32 \, C_4^2} \sum_{\substack{j \in \gX(k, a)}} \pis(j)
\intertext{Recall that $\Gamma_s \coloneq \sum_{\substack{j \in \gX(k, a)}} \pis(j)$. We further define that $C_6 \coloneq \frac{C_5}{32 \, C_4^2} > 0$. Then,}
&= C_6 \, \Gamma_s.
\end{align*}
Using the above equation in combination with~\cref{lemma:martingale}, for any $\delta \in (0, 1)$, there exists an event $\cE$ such that with probability $1 - \delta$, for all $s \geq \tau$,
\begin{align*}
\abs{\widetilde{W}_{s+1}(a^\star, a)} \leq & \, 6 \ \sqrt{ \left( C_6 \sum_{s=\tau}^t \Gamma_s + \frac{4}{3}\right) \, \log \left( \frac{ C_6 \sum_{s=\tau}^t \, \Gamma_s +1}{ \delta } \right) } \\ &+ 2 \, \log\left(\frac{1}{\delta}\right) + \frac{4}{3} \log(3).
\end{align*}
Recall that $\widetilde{W}_{s+1}(a^\star, a) \coloneq \frac{\abs{W_{s+1}(a^\star) - W_{s+1}(a)}}{8 \, \eta \, R_{\max} \, C_4}$. Set $C_7 \coloneq 8 \, \eta \, R_{\max} \, C_4$. Then, we have,
\begin{align*}
\sum_{s=\tau}^t \abs{W_{s+1}(a^\star) - W_{s+1}(a)} \leq & \, 6 \, C_7  \, \sqrt{ \left( C_6 \sum_{s=\tau}^t  \Gamma_s + \frac{4}{3}\right) \, \log \left( \frac{ C_6 \sum_{s=\tau}^t \, \Gamma_s +1}{ \delta } \right) } \\ &+ 2 \, C_7  \, \log\left(\frac{1}{\delta}\right) + \frac{4 \, C_7}{3} \log(3). \numberthis \label{eq:cummulative_noise_abs_upper_bound_small_step_size}
\end{align*}

Recall that the above calculations are conditioned on the event $\gC_k \coloneq \{a = k\}$ where $k \neq a^\star$ is the action to which the policy converges. Now, we take any $\omega \in \gC_k$. Because $\sP (\gC_k \backslash (\gC_k \cap \cE)) \leq \sP (\Omega \backslash \cE) \leq \delta$ where $\Omega$ is the entire sample space, we have $\sP$-almost surely that for all $\omega \in \gC_k$, there exists a $\delta > 0$ such that $\omega \in \gC_k \cap \cE$, meaning that as $\delta \to 0$, \cref{eq:cummulative_noise_abs_upper_bound_small_step_size} holds almost surely given the event $\gC_k$.

Using the above results and combining it with  \cref{eq:logit_difference_astar_and_a}, we have,
\begin{align*}
\MoveEqLeft
z_t(a^\star) - z_t(a) \\
=& \, z_{\tau}(a^\star) - z_{\tau}(a) + \sum_{s={\tau}}^{t-1} [P_s(a^\star) - P_s(a)] + \sum_{s=\tau}^{t} [W_{s+1}(a^\star) - W_{s+1}(a)] \\
\geq& \, z_{\tau}(a^\star) - z_{\tau}(a) +  \sum_{s={\tau}}^{t-1} [P_s(a^\star) - P_s(a)] - \sum_{s=\tau}^{t} \abs{W_{s+1}(a^\star) - W_{s+1}(a)} \tag{$\forall u, v \in \R$, $u - v \geq - |u - v|$} \\
\intertext{Using~\cref{eq:progress_lower_bound_small_step_size} to lower-bound the progress term,}
\geq& \, z_{\tau}(a^\star) - z_{\tau}(a) +  \eta \, C_3 \, \sum_{s={\tau}}^{t-1} \Gamma_s  - \sum_{s=\tau}^{t} \abs{W_{s+1}(a^\star) - W_{s+1}(a)}  \\
\intertext{Using~\cref{eq:cummulative_noise_abs_upper_bound_small_step_size} to lower-bound the cumulative noise term,}
\geq& \, z_{\tau}(a^\star) - z_{\tau}(a) + \eta \, C_3 \, \sum_{s={\tau}}^{t-1} \Gamma_s \\
&-12 \, C_7  \, \sqrt{ \left( C_6 \sum_{s=\tau}^t \, \Gamma_s + \frac{4}{3}\right) \, \log \left( \frac{ C_6 \sum_{s=\tau}^t \, \Gamma_s +1}{ \delta } \right) } \\ &- 4 C_7  \, \log\left(\frac{1}{\delta}\right) - \frac{8 C_7}{3} \log(3) \numberthis \label{eq:lower_bound_logit_difference_astar_and_a}
\end{align*}
We define that
\begin{align*}
    \mathcal{P}(n) &\coloneq 12 \, C_7  \, \sqrt{ \left(  C_6 \, n + \frac{4}{3}\right) \, \log \left( \frac{ C_6 \, n +1}{ \delta } \right) }, \\
    \mathcal{Q}(n) &\coloneq \eta \, C_3 \, n. 
\end{align*}
We can then characterize the order complexity of the above expressions in terms of $n$, 
\begin{align*}
    \mathcal{P}(n) &\in \Theta(\sqrt{\log(n) \, n}),\\
    \mathcal{Q}(n) &\in \Theta(n).
\end{align*}
Additionally, we know,
\begin{align*}\label{eq:black_suit_dominate}
    \lim_{n \rightarrow \infty} \frac{\mathcal{P}(n)}{\mathcal{Q}(n)} = \frac{\sqrt{\ln(n) \, n}}{n} &= 0 \implies \mathcal{P}(n)  \in o(\mathcal{Q}(n)).
\end{align*}
This implies $\mathcal{Q}(n)$ dominates $\mathcal{P}(n)$ as $n \to \infty$.
Additionally, we have,
\begin{align*}
\sum_{s=\tau}^\infty \Gamma_s &= \sum_{s=\tau}^\infty \sum_{\substack{i \in \gX(k, a)}} \pis(i) \\
&\geq \sum_{s=\tau}^\infty \pis(a^\star) \tag{$a^\star \in  \gX(k, a)$}.
\end{align*}
According to~\cref{lemma:a_star_in_A_infty}, $a^\star$ will be sampled infinitely many times as $t \to \infty$. Given~\cref{lemma:extended_borel_cantelli}, we have $\sum_{s=\tau}^\infty \pis(a^\star) = \infty$. Therefore, we have $\sum_{s=\tau}^\infty \Gamma_s = \infty$.

Given that, using~\cref{eq:lower_bound_logit_difference_astar_and_a} and setting $n = \sum_{s=\tau}^\infty \Gamma_s$, we have that $\lim_{t \to \infty} z_t(a^\star) - z_t(a) = \infty$ almost surely.
Using~\cref{eq:pi-to-z}, we conclude that for all actions $a > k$, almost surely,
\begin{equation*}
    \lim_{t \to \infty} \frac{\pit(a^\star)}{\pit(a)} = \infty \implies \lim_{t \to \infty} \frac{\pit(a)}{\pit(a^\star)} = 0. \numberthis \label{eq:ratio_of_optimal_over_suboptimal}
\end{equation*}
Hence, for all $k > 1$, 
\begin{align*}
    r(k) - \dpd{\pit, r} 
    &= \sum_{i=1}^K \pit(i) \, (r(k) - r(i)) \\
    &= \sum_{i=1}^{k-1} \pit(i) \, \underbrace{(r(k) - r(i))}_{< 0} + \sum_{i=k+1}^{K} \pit(i) \, (r(k) - r(i)) \\
    &< \pit(1) \, (r(k) - r(1)) + \sum_{i=k+1}^{K} \pit(i) \, (r(k) - r(i)) \\
    &= \pit(1) \, \underbrace{(r(1) - r(k))}_{> 0} \left[\sum_{i=k+1}^{K} \underbrace{\frac{\pit(i)}{\pit(1)}}_{\to 0} \, \underbrace{\frac{r(k) - r(i)}{r(1) - r(k)}}_{> 0}  - 1\right] \\
    &< 0 \tag{for large enough $t \geq \tau$}
\end{align*}
This contradicts with the assumption that $\lim_{t \to \infty} \dpd{\pit, r} = r(k)$ where $k > 1$. Hence, almost surely, $\sP[\gC_k] = 0$ for all $k > 1$. Taking the union of all such events for $k > 1$ and using the union bound, we have,
\[
\sP\left[\gC_{a^\star}\right] = 1 - \sP\left[\bigcup_{k > 1} \gC_k\right] \geq 1 - \sum_{k > 1} \sP[\gC_k] = 1.
\]

Therefore, we have shown that $\lim_{t \to \infty} \pit(a^\star) = 1$ almost surely.
\end{proof}

\subsection{Rate of Convergence}
\label{appendix:softmax_pg_sublinear_rate}

\monotonicityforstochasticbandits*
\begin{proof}
To start, similar to the proof of~\cref{lemma:monotonicity_and_onehotpolicy}, we first show that there are no stationary points in the finite region. We will prove this by contradiction. Suppose there exists $\theta^\prime \in \sR^d$ ($\| \theta^\prime \|_2 < \infty$), such that,
\begin{align*}
    \frac{d  \, \dpd{\pi_{\theta^\prime}, r} }{d \theta^\prime} = X^\top \left( \mathrm{diag}(\pi_{\theta^\prime}) - \pi_{\theta^\prime} \pi_{\theta^\prime}^\top \right) \ r = \bm{0}.
\end{align*}
Let $r^\prime \coloneqq X w$ where $w = x_{a^\star} - X_K$. Taking the inner product with $w$ on both sides of the above equation, we have,
\begin{align*}
    w^\top X^\top \left( \mathrm{diag}(\pi_{\theta^\prime}) - \pi_{\theta^\prime} \pi_{\theta^\prime}^\top \right) \ r = {r^\prime}^\top \left( \mathrm{diag}(\pi_{\theta^\prime}) - \pi_{\theta^\prime} \pi_{\theta^\prime}^\top \right) \ r =  w^\top \bm{0} = 0.
    \numberthis \label{eq:r_prime_H_r_=_0_sto}
\end{align*}
Since $\| \theta^\prime \|_2 < \infty$ and $X$ is bounded ($\max_{i \in [K], \ j \in [d]}{ |X_{i,j}|} \le C$ for some $C < \infty$), we have,
\begin{align*}
    \forall i \in [K],\, \pi_{\theta^{\prime}}(i) = \frac{ \exp\parens*{[X \theta^\prime](i)} }{ \sum_{j \in [K]}{ \exp\parens*{[X \theta^\prime](j)} } } > 0.
\end{align*}
According to~\cref{lemma:alternative_expression_covariance}, we have,
\begin{align*}
    {r^\prime}^\top \left( \text{diag}(\pi_{\theta^\prime}) - \pi_{\theta^\prime} \pi_{\theta^\prime}^\top \right) \ r &=  \sum_{i=1}^{K-1}{ \pi_{\theta^\prime}(i) \, \sum_{j = i+1}^{K}{\pi_{\theta^\prime}(j) \, \left( r^\prime(i) - r^\prime(j) \right) \, \left( r(i) - r(j) \right) }  } \\
    &\geq \sum_{i=1}^{K-1}{ \pi_{\theta^\prime}(i) \, \sum_{j = i+1}^{K} { \underbrace{\dpd{x_i - x_j, x_{a^\star} - x_K}}_{ \geq 0 \text{ due to~\cref{assumption:general_feature_conditions}} } \, \pi_{\theta^\prime}(j) \, \left( r(i) - r(j) \right) }  } \\
    &\geq \pi_{\theta^\prime}(1) \, \sum_{j = 2}^{K} \underbrace{\dpd{x_1 - x_j, x_1 - x_K}}_{> 0 \text{ due to~\cref{assumption:general_feature_conditions}}} \, \pi_{\theta^\prime}(j) \, \left( r(1) - r(j) \right) \\
    &> 0,
\end{align*}
which is a contradiction with \cref{eq:r_prime_H_r_=_0_sto}. Therefore, any finite $\theta \in \sR^d$ ($\| \theta \|_2 < \infty$) is not a stationary point.

Next, we can use this property along with other properties of stochastic estimates to prove this lemma. For simplicity, we define the following notations:
\begin{align*}
    f(\theta) &\coloneq \dpd{\pitheta, r} \\
    \gradf{\theta} &\coloneq \frac{d \, \dpd{\pitheta, r}}{d \theta}  = X^\top \, (\mathrm{diag}(\pitheta) - \pitheta \, \pitheta^\top) \, r. \\
    \hgrad{\theta} &\coloneq \frac{d \, \dpd{\pitheta, \hat{r}}}{d \theta} = X^\top \, (\mathrm{diag}(\pitheta) - \pitheta \, \pitheta^\top) \, \hat{r}.
    \intertext{For $z \in \{X\theta \mid \theta \in \R^d\}$, define $\piz \coloneq \text{softmax}(z)$, implying $\piz = \pitheta$. Additionally, we have,}
    J(z) &\coloneq \dpd{\piz, r} \\
    \nabla J(z) &\coloneq \frac{d \, \dpd{\piz, r}}{d z} = (\mathrm{diag}(\piz) - \piz \, \piz^\top) \, r.
\end{align*}
According to~\cref{lemma:lpg_ns}, $f$ is $L_1$-non-uniform smooth, and by~\cref{lemma:linear_bandit_sg_bounded}, the stochastic gradients are bounded by $B > 0$ where 
\begin{equation*}
 L_1 \coloneq 3 \, \lambda_{\max}(X^\top X) \text{ and } B \coloneq \sqrt{2 \, \lambda_{\max}(X^\top X) \, R_{\max}}.
\end{equation*}
Using~\cref{alg:sto_spg} with $\eta \in \left(0, \frac{1}{L_1 \, B} \right)$, \cref{lemma:bound_gtheta_zeta} implies that

\begin{align*}
\abs{f(\thtt) - f(\tht) - \dpd{\nabla f(\tht), \thtt - \tht}} &\leq \frac{1}{2} \, \frac{L_1 \, \norm{\nabla J(z_t)}}{1 - L_1 \, B \, \eta} \, \normsq{\thtt - \tht} \\
&\le 2 \, L_1 \, \norm{\nabla J(z_t)} \, \normsq{\thtt - \tht} 
\tag{$\eta \leq \frac{1}{6(\lambda_{\max}(X^\top X)^{3 /2} \sqrt{2 \, R_{\max}}} = \frac{1}{L_1 B}$, $1 - L_1 \, B \, \eta \geq \frac{1}{2}$} \\
\implies f(\thtt) - f(\tht) - \dpd{\nabla f(\tht), \thtt - \tht} &\geq - L_1 \, \norm{\nabla J(z_t)} \, \normsq{\thtt - \tht}
\end{align*}
\begin{align*}
    &f(\thtt) - f(\tht) - \eta \, \dpd{\nabla f(\tht), \hgrad{\tht}} \geq -\eta^2 \, L_1 \, \norm{\nabla J(z_t)}  \, \normsq{\hgrad{\tht}} \tag{by the update in~\cref{alg:sto_spg}, $\thetatt = \thetat + \eta \, \hgrad{\thetat}$}\\
    \implies& f(\thtt) \geq f(\tht) + \eta \, \dpd{\nabla f(\tht), \hgrad{\tht}} - \eta^2 \, L_1 \norm{\nabla J(z_t)} \, \normsq{\hgrad{\tht}} \\
    &\E_t[f(\thtt)] \geq \E_t[f(\tht)] + \eta \, \dpd{\gradf{\tht}, \E_t[\hgrad{\tht}]} - \eta^2 \, L_1 \, \norm{\nabla J(z_t)} \, \E_t\left[\normsq{\hgrad{\tht}}\right] \tag{taking expectation with respect to the randomness in iteration $t$ on both sides} \\
    &\E_t[f(\thtt)] \geq \E_t[f(\tht)] + \eta \, \normsq{\gradf{\tht}} - \eta^2 \, L_1 \, \norm{\nabla J(z_t)} \, \E_t\left[\normsq{\hgrad{\tht}}\right] \tag{by~\cref{lemma:unbiased_gradient}} 
\end{align*}
Next, we will express the above inequality in terms of $z$. To simplify the second term in the RHS, we have,
\begin{align*}
\normsq{\nabla f(\tht)} & = \normsq{X^\top \nabla J(\zt)} \\
&\geq \lambda_{\min}(X^\top X) \, \normsq{\nabla J(\zt)} \\
\tag{$X^\top \nabla J(\zt) \neq 0$ since there is no stationary points in the finite region} 
\end{align*}

To simplify the third term in the RHS, according to~\cref{lemma:sgc}, the stochastic gradients satisfy the strong growth condition,
\begin{equation*}
    \E_t[\nabla \tilde{f}(\tht)] \leq \lambda_{\max}(X^\top X) \underbrace{\frac{8 \, R_{\max}^3 \, K^{3/2} \, }{\Delta^2}}_{\coloneq \rho} \, \norm{\nabla J(\z_t)}
\end{equation*}
Combining the above equations, we have,
\begin{align*}
   \E_t[J(z_{t+1})] &\geq \E_t[J(z_t)] + \eta \, \lambda_{\min}(X^\top X) \normsq{\nabla J(\tht)} - \eta^2 \, L_1 \, \lambda_{\max}(X^\top X) \, \rho \, \normsq{\nabla J(z_t)} \\
   &= \E_t[J(z_t)] + \parens*{\eta \, \lambda_{\min}(X^\top X) - 3 \, \eta^2 \, [\lambda_{\max}(X^\top X)]^2 \, \rho }\, \normsq{\nabla J(z_t)} \\
    \intertext{Since $\etat \leq \frac{\lambda_{\min}(X^\top X)}{6 \, \rho \, [\lambda_{\max}(X^\top X)]^2}$, by defining $\kappa \coloneq \frac{\lambda_{\max}(X^\top X)}{\lambda_{\min}(X^\top X)}$, we have,}
   &= \E_t[J(z_t)] + \frac{1}{6 \, \rho \, \kappa^2}\normsq{\nabla J(z_t)}.
\end{align*}
Thus, we have,
\begin{equation*}
\label{eq:stochastic_monotonicity}
    \E_t[\dpd{\pitt, r}] \geq \E_t[\dpd{\pit, r}] + \frac{1}{6 \, \rho \, \kappa^2} \normsq{\frac{d \, \dpd{\pit, r}}{d \ (X\theta_t)}}.
\end{equation*}
Following the proof of~\citet[Corollary 4.7]{mei2023stochastic}, let $Y_t = r(a^\star) - \dpd{\pitheta, r} \in [-R_{\max}, R_{\max}]$. Since $Y_t$ is $\gF_t$-measurable since $\theta_t$, $z_t$ is a deterministic function of $a_1, R_1(a_1)\, \dots, a_{t-1}, R_{t-1}(a_{t-1})$. By~\cref{eq:stochastic_monotonicity}, for all $t \geq 1$, 
$\dpd{\pit, r} - \E_t[\dpd{\pit, r}] \leq 0$
which indicates that $\E[Y_{t+1} \mid \gF_t] \leq Y_t$ is a super-martingale.
Hence, the conditions of Doob’s super-martingale theorem (\cref{theorem:doobs}) is satisfied and the the sequence $\{\dpd{\pitheta, r}\}_{t \geq 1}$ converges to some constant $C \in [-R_{\max}, R_{\max}]$ almost surely.
Since $\dpd{\pit, r} \in [-R_{\max}, R_{\max}]$ and $Z_t \coloneq \dpd{\pit, r}$ for $t \geq 1$ satisfies the conditions of~\citet[Corollary 3]{mei2022role}, we have, almost surely,
\begin{equation*}
    \lim_{t \to \infty} \E_t[\dpd{\pitt, r}] - \dpd{\pitt, r} = C - C = 0 \implies \lim_{t \to \infty}{ \bigg\| \frac{d \ \dpd{\pit, r}}{d \theta_t} \bigg\|_2^2 } = 0. \numberthis \label{eq:gradient_equals_zero_at_infinity_sto}
\end{equation*}

Finally, we show that $\lim_{t \to \infty} \pi_{\theta_t}(a) = 1$ for some action $a \in [K]$. Suppose $\lim_{t \to \infty} \pi_{\theta_t}(a) \neq 1$ for any action $a \in [K]$, then there exists at least two different actions $i \neq j$ such that $\pi_{\theta_t}(i) \not\to 0$ and $\pi_{\theta_t}(j) \not\to 0$. Using similar calculations in \cref{lemma:alternative_expression_covariance}, we have $\lim_{t \to \infty} \Big\| \frac{d \ \dpd{\pi_{\theta_t}, r} }{d \theta_t} \Big\|_2 \neq 0$, contradicting \cref{eq:gradient_equals_zero_at_infinity_sto}. Therefore, there exist an action $a \in [K]$ such that $\lim_{t \to \infty} \pi_{\theta_t}(a) = 1$, i.e., $\pi_{\theta_t}$ approaches a one-hot policy.
\end{proof}

\begin{remark}
The above proof did not require~\cref{assumption:reward_ordering_preservation}. To explain the relation between~\cref{assumption:reward_ordering_preservation} and~\cref{assumption:general_feature_conditions}, consider a stronger variant of~\cref{assumption:general_feature_conditions} where all the inequalities are strict. If we set $k = K$ (the action with the smallest reward) in~\cref{assumption:general_feature_conditions}, we can prove that $r^{\prime} \coloneq X (x_{a^\star} - x_K)$ preserves the ordering of the true reward $r$. For any $i, j \in [K]$ such that $r(i) > r(j)$, we have,
\begin{align*}
    r^{\prime}(i) - r^{\prime}(j) =  \dpd{x_i - x_j, x_{a^\star} - x_K} > 0.
\end{align*}
This implies that this slightly stronger variant of~\cref{assumption:general_feature_conditions} can exactly recover~\cref{assumption:reward_ordering_preservation}. In fact, as we show above and in the rest of the paper, we can replace~\cref{assumption:reward_ordering_preservation} with~\cref{assumption:general_feature_conditions}, and it is sufficient to prove all the desired properties for the guarantees of global convergence.
\end{remark} 

\softmaxpgsublinearrate*
\begin{proof}
Under~\cref{assumption:no_identical_arms,assumption:general_feature_conditions}, according to~\cref{lemma:monotonicity_for_stochastic_bandits}, for all $t \geq 1$
\begin{align*}
\E_t[\dpd{\pitt, r}] - \dpd{\pit, r} &\geq \frac{1}{6 \, \rho \, \kappa^2} \normsq{\frac{d \ \dpd{\pit, r}}{d (X\theta)}} \\
\implies \underbrace{\E_t[\dpd{\pistar - \pitt, r}]}_{\coloneq \delta(\thtt)} &\leq \underbrace{\dpd{\pistar - \pit, r}}_{\coloneq \delta(\tht)} - \frac{1}{6 \, \rho \, \kappa^2} \normsq{\frac{d \ \dpd{\pit, r}}{d (X\theta)}} \tag{multiplying both sides by $-1$ and adding $\pi^* \coloneq \sup_{\theta \in \R}\dpd{\pitheta, r}$} \\
&\leq \delta(\tht) - \frac{1}{6 \, \rho \, \kappa^2} \, [\pit(a^\star)]^2 [\delta(\tht)]^2 \tag{by~\cref{lemma:non_uniform_l}} \\
\intertext{Define that $\nu \coloneq \inf_{t \geq 1} [\pit(a^\star)]^2 $. Note that since the convergence to the optimal action is guaranteed in~\cref{theorem:stochastic_linear_bandits}, $\nu > 0$. Then, we have,}
&\leq \delta(\tht) - \frac{\nu}{6\, \rho \, \kappa^2} \, [\delta(\tht)]^2 \\
\intertext{Taking expectation with respect to all previous iterations $t \geq 1$ on both sides,}
\implies \E[\delta(\thtt)] &\leq \E[\delta(\tht)] - \frac{1}{6 \, \rho \, \kappa^2} \, \E[\nu \, [\delta(z_t)]^2] 
\intertext{To lower bound $\E[\nu \, [\delta(\tht)]^2]$,}
\E[\delta(\tht)] &= \E\left[ \frac{1}{\sqrt{\nu}} \, \sqrt{\nu} \, \delta(\tht)\right] \\
&\leq \sqrt{\E\left[\nu^{-1} \right]} \, \sqrt{\E[\nu \, [\delta(\tht)]^2]} \tag{using Cauchy-Schwarz since $\nu > 0$ and $\delta(\tht) > 0$} \\
\intertext{Define that $\mu \coloneq (\E \left[\nu^{-1} \right] )^{-1}$. Then, we have,}
\mu \, (\E[\delta(z_t)])^2 &\leq \E[\nu \, [\delta(z_t)]^2]
\end{align*}
Hence, we have,
\begin{align*}
   \E[\delta(\tht)] &\leq \E[\delta(\tht)] - \frac{\mu}{6 \, \rho \, \kappa^2} \, (\E[\delta(\tht)])^2 \\
   &= \E[\delta(\tht)] - \frac{1}{\alpha} \, (\E[\delta(\tht)])^2,
\end{align*}
where $\alpha \coloneq \frac{6 \, \rho \, \kappa^2}{\mu}$. Dividing each side by $\E[\delta(z_t)] \, \E[\delta(z_{t+1})]$,
\begin{equation*}
   \frac{1}{\E[\delta(z_t)]} \leq \frac{1}{\E[\delta(z_{t+1})]} - \frac{1}{\alpha} \, \frac{\E[\delta(z_t)]}{\E[\delta(z_{t+1})]}.
\end{equation*}
Using the above inequality and recursing from iteration $t=1$ to $T$,
\begin{align*}
    \frac{1}{\E[\delta(\theta_1)]} &\leq \frac{1}{\E[\delta(\theta_{T+1})]} - \frac{1}{\alpha} \sum_{t=1}^{T} \frac{\E[\delta(\tht)]}{\E[\delta(\thtt)]} \\
    &\leq \frac{1}{\E[\delta(\theta_{T+1})]} - \frac{T}{\alpha} \tag{$\E[\delta(\tht)] \geq \E[\delta(\thtt)]$} \\
    \implies \frac{T}{\alpha} &\leq \frac{1}{\E[\delta(\theta_{T+1})]}.
\end{align*}
Therefore, we finally have,
\begin{equation*}
   \E[\dpd{\pistar, r} - \dpd{\pi_{\theta_T}, r}] \leq \frac{6 \, \rho \, \kappa^2 }{\mu \, T}.
\end{equation*}
\end{proof}  

\subsection{Additional Lemmas}
\label{appendix:stochastic_linear_bandits_additional_lemmas}

\begin{restatable}[Unbiased Stochastic Gradient]{lemma}{unbiasedstochasticgradients}
\label{lemma:unbiased_gradient}
\cref{alg:sto_spg} ensures that for all $t \geq 1$,
\begin{align*}
\E_t\left[\frac{d \dpd{\pit, \hat{r}_t}}{d \tht}\right] = \frac{d \dpd{\pit, r}}{d \tht}.
\end{align*}
\end{restatable}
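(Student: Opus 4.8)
The plan is to exploit the linearity of the policy-gradient map in the reward vector and reduce the claim to the coordinate-wise unbiasedness of the importance-sampled estimate $\hat{r}_t$. First I would note that for any fixed $v \in \R^K$ the gradient has the form $\frac{d \dpd{\pit, v}}{d \tht} = X^\top (\mathrm{diag}(\pit) - \pit \pit^\top)\, v$, so both the stochastic and the exact gradients are obtained by applying the same linear operator $M_t \coloneq X^\top (\mathrm{diag}(\pit) - \pit \pit^\top)$ to $\hat{r}_t$ and to $r$ respectively. Since $\theta_t$ (hence $\pit$) and $X$ are $\gF_t$-measurable, the matrix $M_t$ is $\gF_t$-measurable and can be pulled out of the conditional expectation $\E_t[\cdot]$. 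It therefore suffices to prove $\E_t[\hat{r}_t] = r$.

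For this coordinate-wise unbiasedness, I would fix an action $a \in [K]$ and expand $\E_t[\hat{r}_t(a)]$ over the two sources of randomness in iteration $t$: the sampled action $a_t \sim \pit$ and the observed reward $R_t(a_t) \sim P_{a_t}$. Using the tower rule and pulling out the $\gF_t$-measurable factor $1/\pit(a)$,
\[
\E_t[\hat{r}_t(a)] = \sum_{a' \in [K]} \pit(a') \cdot \frac{\indicator{a = a'}}{\pit(a)} \cdot \E\!\left[R_t(a') \mid a_t = a'\right],
\]
where only the term $a' = a$ survives. The importance weight then cancels the sampling probability, leaving $\E_t[\hat{r}_t(a)] = \E\!\left[R_t(a)\right] = r(a)$, the last equality following from the definition $r(a) = \int x\, P_a(x)\, \mu(dx)$. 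Collecting coordinates gives $\E_t[\hat{r}_t] = r$, and applying $M_t$ to both sides yields $\E_t[\frac{d \dpd{\pit, \hat{r}_t}}{d \tht}] = M_t\, r = \frac{d \dpd{\pit, r}}{d \tht}$, completing the argument.

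There is no genuinely hard step here; the result is a standard unbiasedness computation. The only points requiring care are: (i) verifying that the importance weight $1/\pit(a)$ is well-defined, which holds because $\pit = \softmax(X\theta_t)$ assigns strictly positive probability to every action for any finite $\theta_t$; and (ii) correctly separating the two stages of randomness (action sampling and reward noise) when evaluating $\E_t[\cdot]$, so that $R_t(a_t)$ is replaced by its mean only after conditioning on the sampled action.
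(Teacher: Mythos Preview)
Your argument is correct. Both you and the paper reduce to the $z$-gradient level and then apply $X^\top$, but the decompositions differ. The paper computes the components of $(\mathrm{diag}(\pit)-\pit\pit^\top)\,\hat{r}_t$ directly---noting that the $a_t$-th entry equals $(1-\pit(a_t))R_t(a_t)$ and every other entry equals $-\pit(a)R_t(a_t)$---and then takes the conditional expectation of each component by first integrating out $R_t(a_t)$ and then averaging over $a_t\sim\pit$. You instead factor out the entire $\gF_t$-measurable matrix $M_t$ and reduce the claim to the coordinate-wise identity $\E_t[\hat{r}_t]=r$, which you verify with a single tower-rule computation. Your route is a bit more modular: it isolates the importance-sampling unbiasedness as a standalone fact and uses linearity once, whereas the paper's computation mixes the softmax-gradient structure with the expectation. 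Both are elementary, and neither buys anything substantive over the other for this lemma; your version is just slightly cleaner.
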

\begin{proof}
First, we show that $\E_t \left[ \frac{d \dpd{\pit, \hat{r}_t}}{d z_t} \right] = \frac{d \dpd{\pit, r}}{d z_t}$. For the sampled action $a_t$, we have,
\begin{align*}
    \E_{ R_t(a_t) \sim P_{a_t} }{ \bigg[ \frac{d \dpd{\pit, \hat{r}_t}}{d z_t(a_t)} \bigg] } &= \E_{ R_t(a_t) \sim P_{a_t} }{ \Big[ \left( 1 - \pit(a_t) \right) \, R_t(a_t) \Big] } \\
    &= \left( 1 - \pit(a_t) \right) \, \E_{ R_t(a_t) \sim P_{a_t} }{ \Big[ R_t(a_t) \Big] } \\
    &= \left( 1 - \pit(a_t) \right) \, r(a_t).
\end{align*}
For any other actions $a \not= a_t$ that are not sampled, we have,
\begin{align*}
    \E_{ R_t(a_t) \sim P_{a_t} }{ \bigg[ \frac{d \dpd{\pit, \hat{r}_t}}{d z_t(a)} \bigg] } &= \E_{ R_t(a_t) \sim P_{a_t} }{ \Big[ - \pit(a) \, R_t(a_t) \Big] } \\
    &= - \pit(a) \, \E_{ R_t(a_t) \sim P_{a_t} }{ \Big[ R_t(a_t) \Big] } \\
    &= - \pit(a) \, r(a_t).
\end{align*}
Combing the above two equations, we have, for all $a \in [K]$,
\begin{align*}
    \E_{ R_t(a_t) \sim P_{a_t} }{ \bigg[ \frac{d \dpd{\pit,  \hat{r}_t}}{d z_t(a)} \bigg] } = \left( \sI\left\{ a_t = a \right\} - \pit(a) \right) \, r(a_t).
\end{align*}
Taking expectation over $a_t \sim \pit$, we have,
\begin{align*}
    \mathbb{E}_t{ \bigg[ \frac{d \dpd{\pit,  \hat{r}_t} }{d z_t(a)} \bigg] } &= \Pr{\left( a_t = a \right) } \, \E_{ R_t(a_t) \sim P_{a_t} }{ \bigg[ \frac{d \dpd{\pit,  \hat{r}_t}}{d z_t(a)} \ \Big| \ a_t = a \bigg] } \\
    & \quad + \Pr{\left( a_t \not= a \right) } \, \E_{ R_t(a_t) \sim P_{a_t} }{ \bigg[ \frac{d \dpd{\pit, \hat{r}_t}}{d z_t(a)} \ \Big| \ a_t \not= a \bigg] } \\
    &= \pit(a) \, \left( 1 - \pit(a) \right) \, r(a) + \sum_{a^\prime \not= a}{ \pit(a^\prime) \, \left( - \pit(a) \right) \, r(a^\prime) } \\
    &= \pit(a) \, \sum_{a^\prime \not= a}{ \pit(a^\prime) \, \left( r(a) - r(a^\prime) \right) } \\
    &= \pit(a) \, \left( r(a) - \dpd{\pit, r} \right) \\
    &= \frac{d \dpd{\pit, r} }{d z_t(a)}.
\end{align*}
Therefore, we have,
\begin{align*}
\E_t \left[ \frac{d \dpd{\pit, \hat{r}_t}}{d \tht} \right] = X^\top \, \E_t \left[ \frac{d \dpd{\pit, \hat{r}_t}}{d z_t} \right] = X^\top \, \frac{d \dpd{\pit, r} }{d z_t(a)} = \frac{d \dpd{\pit, r}}{d \tht}.
\end{align*}

\end{proof}

\begin{lemma}\label{lemma:stochastic-noise-bound}
For an arbitrary action $a^\prime$, $\E_t[W_{t+1}(a^\prime)] = 0$, $ | W_{t+1}(a^\prime) |  \leq 4 \, \eta \, R_{\max} \, \norm{y_{a^\prime}}_{1}$ where $y_{a^\prime} \coloneq X x_{a^\prime}$, and
\begin{align*}
\Var[W_{t+1}(a^\prime)] & \leq 2 \eta^2 \, R^2_{\max} \sum_{\substack{j = 1 \\ j \neq i}}^{K} y^2_{a^\prime}(j) \, \pit(j) \, (1 - \pit(j)).
\end{align*}
\end{lemma}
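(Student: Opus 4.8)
The plan is to write the noise term explicitly via the logit update of \cref{alg:sto_spg} and then read off its conditional mean, magnitude, and variance from that expression. Setting $H_t \coloneq \mathrm{diag}(\pit) - \pit \pit^\top$, the update $\theta_{t+1} = \theta_t + \eta\, X^\top H_t \hat r_t$ gives $z_{t+1}(a^\prime) - z_t(a^\prime) = \eta \, \dpd{y_{a^\prime}, H_t \hat{r}_t}$, where $y_{a^\prime} = X x_{a^\prime}$ and I used $\dpd{x_{a^\prime}, X^\top H_t \hat r_t} = \dpd{X x_{a^\prime}, H_t \hat r_t}$. Since $\hat r_t$ is supported on the single sampled action $a_t$, a short calculation gives the clean coordinate form $(H_t \hat r_t)(i) = R_t(a_t)\,(\indicator{i = a_t} - \pit(i))$, so that $\dpd{y_{a^\prime}, H_t \hat r_t} = R_t(a_t)\,(y_{a^\prime}(a_t) - \bar y_{a^\prime})$ with $\bar y_{a^\prime} \coloneq \sum_{j} \pit(j)\, y_{a^\prime}(j)$. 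This explicit expression drives all three claims.

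For the mean, I would simply observe that $W_{t+1}(a^\prime) = z_{t+1}(a^\prime) - \E_t[z_{t+1}(a^\prime)]$ is by construction the $\gF_t$-centered version of $z_{t+1}(a^\prime)$, so $\E_t[W_{t+1}(a^\prime)] = 0$ is immediate; equivalently $W_{t+1}(a^\prime) = \eta\,\dpd{y_{a^\prime}, H_t(\hat r_t - r)}$ with $\E_t[\hat r_t] = r$ by \cref{lemma:unbiased_gradient} and $H_t \mathbf{1} = \mathbf{0}$. For the magnitude bound I would split $W_{t+1}(a^\prime) = \eta\,\dpd{y_{a^\prime}, H_t \hat r_t} - \eta\,\dpd{y_{a^\prime}, H_t r}$ and control each piece: using $\abs{R_t(a_t)} \le R_{\max}$ together with $\abs{y_{a^\prime}(a_t) - \bar y_{a^\prime}} \le 2\,\norm{y_{a^\prime}}_\infty$ for the first, and $\abs{r(j) - \dpd{\pit, r}} \le 2 R_{\max}$ with $\norm{\pit}_1 = 1$ for the second, each term is at most $2\eta R_{\max}\norm{y_{a^\prime}}_\infty \le 2 \eta R_{\max}\norm{y_{a^\prime}}_1$, so the triangle inequality yields $\abs{W_{t+1}(a^\prime)} \le 4\eta R_{\max}\norm{y_{a^\prime}}_1$.

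The variance bound is the substantive part. Since subtracting the $\gF_t$-measurable term $\E_t[z_{t+1}(a^\prime)]$ does not change the conditional variance, I would start from $\Var_t[W_{t+1}(a^\prime)] = \eta^2\,\Var_t[R_t(a_t)(y_{a^\prime}(a_t) - \bar y_{a^\prime})]$ and rewrite the centered factor as $y_{a^\prime}(a_t) - \bar y_{a^\prime} = \sum_j y_{a^\prime}(j)(\indicator{a_t = j} - \pit(j))$, a linear form in the multinomial indicators. Bounding the reward by $\abs{R_t(a_t)} \le R_{\max}$, separating the reward fluctuation (which supplies the factor of $2$) from the action-sampling fluctuation, and expanding the latter through $\Var_t[\indicator{a_t=j}] = \pit(j)(1-\pit(j))$ and $\Cov_t[\indicator{a_t=j},\indicator{a_t=l}] = -\pit(j)\pit(l)$ for $j \neq l$ exposes the diagonal terms $\sum_j y_{a^\prime}^2(j)\,\pit(j)(1-\pit(j))$. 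I would use the baseline invariance $H_t \mathbf{1} = \mathbf{0}$ (equivalently, that $\dpd{y_{a^\prime}, H_t \cdot}$ is unchanged by adding a constant to $y_{a^\prime}$) to shift $y_{a^\prime}$ so that the reference coordinate $i$ drops out of the sum, arriving at $2\eta^2 R_{\max}^2 \sum_{j \neq i} y_{a^\prime}^2(j)\,\pit(j)(1-\pit(j))$.

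The hard part will be the bookkeeping in this final variance step: simultaneously tracking the reward and action-sampling contributions with the correct constant, justifying that the off-diagonal covariance contributions of the indicators $\{\indicator{a_t=j}\}_j$ can be discarded in the upper bound, and invoking the constant-shift invariance of $\dpd{y_{a^\prime}, H_t \cdot}$ to eliminate the reference coordinate $i$ so that only the diagonal $\pit(j)(1-\pit(j))$ terms survive. By contrast, the zero-mean and magnitude claims are essentially routine once the explicit form $(H_t \hat r_t)(i) = R_t(a_t)(\indicator{i=a_t} - \pit(i))$ is in hand.
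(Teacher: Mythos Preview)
Your treatment of the mean and the magnitude bound is essentially the paper's and is fine. The gap is in the variance step.

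First, the factor of $2$ does not come from ``separating the reward fluctuation from the action-sampling fluctuation.'' Once you bound $R_t(a_t)^2 \le R_{\max}^2$, you are left with $\eta^2 R_{\max}^2\,\E_t\big[(y_{a'}(a_t) - \bar y_{a'})^2\big] = \eta^2 R_{\max}^2\,\Var_{\pit}\big[y_{a'}(a_t)\big]$, and there is no further reward decomposition to perform.

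Second, and this is the real issue, the off-diagonal covariance terms cannot simply be ``discarded.'' Since $\Cov_t[\indicator{a_t=j},\indicator{a_t=l}] = -\pit(j)\pit(l)$, the cross contribution $-\sum_{j\neq l} y_{a'}(j)y_{a'}(l)\pit(j)\pit(l)$ can be positive when $y_{a'}(j)y_{a'}(l) < 0$; e.g.\ with $K=2$, $\pit=(1/2,1/2)$, $y_{a'}=(1,-1)$ one has $\Var_{\pit}[y_{a'}(a_t)] = 1$ but $\sum_j y_{a'}(j)^2\pit(j)(1-\pit(j)) = 1/2$. The factor of $2$ is exactly what absorbs the cross terms: the inequality
\[
\Var_{\pit}\big[y_{a'}(a_t)\big] \le 2\sum_j y_{a'}(j)^2\,\pit(j)(1-\pit(j))
\]
holds because the difference equals $\tfrac12\sum_{j\neq l}(y_{a'}(j)+y_{a'}(l))^2\pit(j)\pit(l) \ge 0$. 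The paper reaches the same destination differently: it writes the second moment as $\Tr[Y\,\E[X]]$ with $Y = y_{a'}y_{a'}^\top$ and $X = (H_t\hat r_t)(H_t\hat r_t)^\top$, bounds $|\E[X_{jk}]| \le 2R_{\max}^2\pit(j)\pit(k)$ for $j\neq k$, and then uses a rearrangement trick (order the $|y_{a'}(j)|$ decreasingly and pair) to show $\sum_{j\neq k}|y_{a'}(j)y_{a'}(k)|\pit(j)\pit(k) \le \sum_j y_{a'}(j)^2\pit(j)(1-\pit(j))$. Either argument is needed; you flagged the difficulty but did not supply one.

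Third, your plan to ``shift $y_{a'}$ so that the reference coordinate $i$ drops out'' changes every coefficient: what you actually get is $2\eta^2R_{\max}^2\sum_{j\neq i}(y_{a'}(j)-y_{a'}(i))^2\pit(j)(1-\pit(j))$, not the stated bound with $y_{a'}(j)^2$. This is harmless in the only application (\cref{lemma:stochastic-noise-difference-bound}), where the relevant vector is built to have a zero at the reference index, but it does not match the lemma as literally stated with $y_{a'} = Xx_{a'}$. The paper instead centers on the \emph{reward} side to drop the $i$-th row/column of the trace, though its exposition of that step is itself somewhat opaque.
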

\begin{proof}
\begin{align*}
W_{t+1}(a^\prime) & = \ztt(a^\prime) - \E_t[\ztt(a^\prime)] = [X \thtt](a^\prime) - \E_t[[X \thtt](a^\prime)] \\
& = \langle x_{a^\prime}, \eta X^\top H_t \, (\rt  - r) \rangle = \eta \, [X x_{a^\prime}]^\top H_t \, (\rt - r) \\
& = \eta \, y_{a^\prime}^\top \, H_t \, (\rt - r)  \tag{$y_{a^\prime} = X x_{a^\prime}$} 
\end{align*}
We consider a centered version of the rewards formed by subtracting $r(i)$ from all the rewards. Specifically, we consider bounding the term, 
\begin{align*}
\eta \, y_{a^\prime}^\top \, H_t \, [(\rt - r) - (\rt(i) - r(i)) \textbf{1} ] = \eta \, y_{a^\prime}^\top \, H_t \, (\rt - r)  = W_{t+1}(a^\prime) \tag{$H_t \textbf{1} = 0$} 
\end{align*}
For convenience, we will overload the notation and subsequently use $\rt - r$ to refer to the centered rewards. This implies that $(\rt - r)(i) = 0$. With this in mind, we will show that $\E[W_{t+1}(a^\prime)] = 0$, $W_{t+1}(a^\prime)$ is bounded and upper-bound $\Var[W_{t+1}(a^\prime)]$. Since $y_{a^\prime}$ and $H_t$ are independent of the randomness and the importance-weighted reward estimate is unbiased, we have,
\begin{align*}
\E[W_{t+1}(a^\prime)] = \eta y_{a^\prime}^\top \, H_t \, \E[\rt - r ] = 0.  
\end{align*}
Then, we have,
\begin{align*}
| W_{t+1}(a^\prime) | & \leq \eta \, \norm{y_{a^\prime}}_{1} \, \norm{H_t \, (\rt - r)}_{\infty} \tag{using H{\" o}lder's inequality} \\
& = \eta \, \norm{y_{a^\prime}}_{1} \, \max_{a} \{ |I_t(a) - \pit(a)| \, R_t(a_t) - \pit(a) \, [r(a) - \langle \pit, r \rangle] \} \\
& \leq 4 \, \eta \, \norm{y_{a^\prime}}_{1} \, R_{\max} 
\end{align*}
Since all entries of $X$ are bounded, $y_{a^\prime}$ is bounded and thus $| W_{t+1}(a^\prime) |$ is bounded. Next, we will bound the variance of $W_{t+1}(a^\prime)$:
\begin{align*}
\Var[W_{t+1}(a^\prime)] & = \eta^2 \E \left[ [y_{a^\prime}^\top H_t \, (\rt - r)]^2 \right] \\
& \leq \eta^2 \, \E \left[ [y_{a^\prime}^\top H_t \, \rt]^2 \right] \tag{$\E[\rt] = r$} \\
& = \eta^2 \, \E[ (y_{a^\prime}^\top H_t \, \rt)^\top \, (y_{a^\prime}^\top H_t \, \rt) ] \\
& = \eta^2 \, \E[ \rt^\top \, H_t \, y_{a^\prime}  y_{a^\prime}^\top H_t \, \rt ] \tag{$H_t$ is symmetric} \\
& = \eta^2 \, \E \left[ \Tr[\rt^\top \, H_t \, y_{a^\prime}  y_{a^\prime}^\top H_t \, \rt] \right] \tag{trace of a scalar is equal to the scalar} \\
& = \eta^2 \, \E \left[ \Tr[[y_{a^\prime}  y_{a^\prime}^\top] \, [H_t \rt] \, [H_t \rt]^\top] \right] \tag{using cyclic property of trace} \\
& = \eta^2 \, \Tr\left[ \underbrace{[y_{a^\prime}  y_{a^\prime}^\top]}_{\coloneq Y} \, \E \left[ \underbrace{[H_t \rt] \, [H_t \rt]^\top}_{\coloneq X} \right] \right] \tag{trace is a linear operator and $y_{a^\prime}$ does not depend on the randomness} \\
& = \eta^2 \, \Tr[Y^\top \E[X]] \tag{$Y$ is symmetric} \\
& = \eta^2 \, \sum_{\substack{j = 1 \\ j \neq i}}^{K} \sum_{\substack{k = 1 \\ k \neq i}}^{K} Y_{j,k} \, \E[X_{j,k}] \tag{using definition of trace and since $\rt(i) = 0$ due to the centering} 
\end{align*}
\begin{align}
\implies \Var[W_{t+1}(a^\prime)] &\leq \eta^2 \,  \sum_{\substack{j = 1 \\ j \neq i}}^{K} Y_{j,j}^2 \, \E[X_{j,j}^2] + \eta^2 \, \sum_{\substack{j = 1 \\ j \neq i}}^{K} \sum_{\substack{k = 1 \\ k \neq i \\ k \neq j}}^{K} Y_{j,k} \, \E[X_{j,k}] \label{eq:variance-general}
\end{align}
We then need to upper-bound each entry in $\E[X]$: 
\begin{align*}
\E[X_{j,j}^2] &= \E[ (I_t(j) - \pit(j))^2 \, R^2_t(a_t) ] \tag{using the definition of $H_t \rt$} \\
& \leq \pit(j) \, \left[ [1 - \pit(j)]^2 \, r^2(j) \right] + \sum_{b \neq j} \pit(b) \left[ (\pit(j))^2 \, \, r^2(b) \right] \\
& \leq R^2_{\max} \, \left[\pit(j) \, (1 - \pit(j))^2 + (1 - \pit(j)) \, (\pit(j))^2  \right] \\
\implies \E[X_{j,j}^2] & \leq 2 \, R^2_{\max} \, \pit(j) \, (1 - \pit(j))
\end{align*}
For $j \neq k$, we have,
\begin{align*}
\E[X_{j,k}] & =  \E[ (I_t(j) - \pit(j)) \, (I_t(k) - \pit(k)) \, R^2_t(a_t) ]  \tag{using the definition of $H_t \rt$} \\
& = \pit(j) \left[ (1 - \pit(j)) \, (- \pit(k)) \, r^2(j) \right] + \pit(k) \left[ (1 - \pit(k)) \, (-\pit(j)) \, r^2(k) \right] \\ & + \sum_{\substack{b \neq j \\ b \neq k}} \pit(b) \left[ (-\pit(k)) \, (-\pit(j)) \, r^2(b) \right] \\
& \leq \sum_{\substack{b \neq j \\ b \neq k}} \pit(b) \left[ (-\pit(k)) \, (-\pit(j)) \, r^2(b) \right] \tag{the first two terms are negative} \\
& \leq R^2_{\max} \, (1 - \pit(j) - \pit(k)) \, \pit(j) \, \pit(k) \\
& \leq R^2_{\max} \,  \pit(j) \, \pit(k) \tag{bounding the negative terms by zero}
\intertext{Additionally,}
\E[X_{j, k}] &\geq \pit(j) \left[ (1 - \pit(j)) \, (- \pit(k)) \, r^2(j) \right] + \pit(k) \left[ (1 - \pit(k)) \, (-\pit(j)) \, r^2(k) \right] \\
&\geq -R^2_{\max} \left[\pit(j) \, (1 - \pit(j)) \, \pit(k) + \pit(k) \, (1 - \pit(k)) \, \pit(j) \right] \\
&\geq -2 R^2_{\max} \pit(j) \, \pit(k) \tag{$1 - \pit(a)) \leq 1$} \\
\implies \abs{\E[X_{j, k}]} &\leq 2 R^2_{\max} \pit(j) \, \pit(k)
\end{align*}
Combining the above relations with~\cref{eq:variance-general}, 
\begin{align*}
    \Var[W_{t+1}(a^\prime)] &\leq \eta^2 \,  \sum_{\substack{j = 1 \\ j \neq i}}^{K} Y_{j,j}^2 \, \E[X_{j,j}^2] + \eta^2 \, \sum_{\substack{j = 1 \\ j \neq i}}^{K} \sum_{\substack{k = 1 \\ k \neq i \\ k \neq j}}^{K} Y_{j,k} \, \E[X_{j,k}] \\
    &\leq \eta^2 \, \abs{\sum_{\substack{j = 1 \\ j \neq i}}^{K} Y_{j,j}^2 \, \E[X_{j,j}^2] + \sum_{\substack{j = 1 \\ j \neq i}}^{K} \sum_{\substack{k = 1 \\ k \neq i \\ k \neq j}}^{K} Y_{j,k} \, \E[X_{j,k}]} \\
    &\leq \eta^2 \, \sum_{\substack{j = 1 \\ j \neq i}}^{K} Y_{j,j}^2 \, \E[X_{j,j}^2] + \abs{\sum_{\substack{j = 1 \\ j \neq i}}^{K} \sum_{\substack{k = 1 \\ k \neq i \\ k \neq j}}^{K} Y_{j,k} \, \E[X_{j,k}]}  \tag{using triangle inequality } \\
    &\leq \eta^2 \, \sum_{\substack{j = 1 \\ j \neq i}}^{K} Y_{j,j}^2 \, \E[X_{j,j}^2] + \sum_{\substack{j = 1 \\ j \neq i}}^{K} \sum_{\substack{k = 1 \\ k \neq i \\ k \neq j}}^{K} \abs{Y_{j,k}} \, \abs{\E[X_{j,k}]}  \\
    & \leq \eta^2 \,  R^2_{\max} \left[ \sum_{\substack{j = 1 \\ j \neq i}}^{K} Y_{j,j}^2 \, 
\, \pit(j) \, (1 - \pit(j)) + \sum_{\substack{j = 1 \\ j \neq i}}^{K} \sum_{\substack{k = 1 \\ k \neq i \\ k \neq j}}^{K} \abs{Y_{j,k}} \, \pit(j) \, \pit(k) \right] \\
\end{align*}
In order to simplify the second term, without loss of generality, assume that the terms are ordered such that $\abs{y_{a^\prime}(1)} \geq \abs{y_{a^\prime}(2)} \ldots \geq \abs{y_{a^\prime}(K)}$, and recall that $Y_{j,k} = y_{a^\prime}(j) \, y_{a^\prime}(k)$. Hence, 
\begin{align*}
\sum_{\substack{j = 1 \\ j \neq i}}^{K} \sum_{\substack{k = 1 \\ k \neq i \\ k \neq j}}^{K} \abs{Y_{j,k}} \, \pit(j) \, \pit(k)  & = \sum_{\substack{j = 1 \\ j \neq i}}^{K} \sum_{\substack{k = 1 \\ k \neq i \\ k \neq j}}^{K}  \abs{y_{a^\prime}(j)} \, \abs{y_{a^\prime}(k)} \, \pit(j) \, \pit(k) \\
& = 2 \, \sum_{\substack{j = 1 \\ j \neq i}}^{K-1} \abs{y_{a^\prime}(j)} \, \pit(j) \, \sum_{\substack{k = j+1 \\ k \neq i}}^{K}  \, \abs{y_{a^\prime}(k)} \,  \pit(k)  \\
& \leq 2 \, \sum_{\substack{j = 1 \\ j \neq i}}^{K-1} y^2_{a^\prime}(j) \, \pit(j) \sum_{\substack{k = j+1 \\ k \neq i}}^{K}  \pit(k) \tag{$\abs{y_{a^\prime}(k)} \leq \abs{y_{a^\prime}(j)}$ for $k > j$} \\
& = \sum_{\substack{j = 1 \\ j \neq i}}^{K} y^2_{a^\prime}(j) \, \pit(j) \, \sum_{\substack{k = 1 \\ k \neq i \\ k \neq j}}^{K} \pit(k) \leq \sum_{\substack{j = 1 \\ j \neq i}}^{K} y^2_{a^\prime}(j) \, \pit(j) \, \sum_{\substack{k = 1 \\ k \neq j}}^{K} \pit(k) \\
\implies \sum_{\substack{j = 1 \\ j \neq i}}^{K} \sum_{\substack{k = 1 \\ k \neq i \\ k \neq j}}^{K} \abs{Y_{j,k}} \, \pit(j) \, \pit(k) & \leq \sum_{\substack{j = 1 \\ j \neq i}}^{K} y^2_{a^\prime}(j) \, \pit(j) \, (1 - \pit(j)) 
\end{align*}
Putting everything together, 
\begin{align*}
\Var[W_{t+1}(a^\prime)] & \leq \eta^2 \,  R^2_{\max} \left[ \sum_{\substack{j = 1 \\ j \neq i}}^{K} y^2_{a^\prime}(j) \, \pit(j) \, (1 - \pit(j)) + \sum_{\substack{j = 1 \\ j \neq i}}^{K} y^2_{a^\prime}(j) \, \pit(j) \, (1 - \pit(j))  \right] \nonumber \\
& \leq 2 \eta^2 \, R^2_{\max} \sum_{\substack{j = 1 \\ j \neq i}}^{K} y^2_{a^\prime}(j) \, \pit(j) \, (1 - \pit(j)).
\end{align*}
\end{proof}

\begin{corollary}
\label{lemma:stochastic-noise-difference-bound}
Suppose $y_{a, a^\prime} \coloneq (X - \textbf{1} x_k^\top) (x_a - x_{a^\prime})$ where $k \in [K]$. For an arbitrary action $a$ and $a^\prime$, $| W_{t+1}(a) - W_{t+1}(a^\prime) | \leq 4 \, \eta \, R_{\max} \, \norm{y_{a, a^\prime}}_{1}$, and
\begin{align*}
\Var[ | W_{t+1}(a) - W_{t+1}(a^\prime) |] & \leq 2 \eta^2 \, R^2_{\max} \sum_{\substack{j = 1 \\ j \neq i}}^{K} (y_{a, a^\prime} (j))^2 \, \pit(j) \, (1 - \pit(j)).
\end{align*}
\end{corollary}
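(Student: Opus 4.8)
The plan is to reduce this corollary directly to Lemma~\ref{lemma:stochastic-noise-bound} by exploiting the linearity of $W_{t+1}(\cdot)$ in the logit together with the fact that $H_t \coloneq \mathrm{diag}(\pit) - \pit \pit^\top$ annihilates the all-ones vector. First I would recall from the proof of Lemma~\ref{lemma:stochastic-noise-bound} that for any action $b$ we have $W_{t+1}(b) = \eta\, y_b^\top H_t (\hat{r}_t - r)$ with $y_b = X x_b$. Subtracting the two instances for $a$ and $a^\prime$ and using linearity gives
\[
W_{t+1}(a) - W_{t+1}(a^\prime) = \eta\,(X x_a - X x_{a^\prime})^\top H_t (\hat{r}_t - r) = \eta\,\bigl(X(x_a - x_{a^\prime})\bigr)^\top H_t (\hat{r}_t - r).
\]

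The key observation is then that $X(x_a - x_{a^\prime})$ and $y_{a, a^\prime} = (X - \mathbf{1} x_k^\top)(x_a - x_{a^\prime})$ differ only by the vector $\mathbf{1}\,\dpd{x_k, x_a - x_{a^\prime}}$, i.e.\ a constant multiple of $\mathbf{1}$. Since $H_t \mathbf{1} = \pit - \pit(\pit^\top \mathbf{1}) = \vzero$ and $H_t$ is symmetric, we have $\mathbf{1}^\top H_t = \vzero^\top$, so replacing $X(x_a - x_{a^\prime})$ by $y_{a, a^\prime}$ leaves the expression unchanged:
\[
W_{t+1}(a) - W_{t+1}(a^\prime) = \eta\, y_{a, a^\prime}^\top H_t (\hat{r}_t - r).
\]
This is exactly the form of $W_{t+1}(b)$ analysed in Lemma~\ref{lemma:stochastic-noise-bound}, with the fixed (non-random) vector $y_{a, a^\prime}$ playing the role of $y_b$.

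Therefore the boundedness and variance computations of that lemma apply verbatim with $y_b$ replaced by $y_{a, a^\prime}$: applying H\"older's inequality to $\abs{y_{a, a^\prime}^\top H_t (\hat{r}_t - r)}$ yields $\abs{W_{t+1}(a) - W_{t+1}(a^\prime)} \le 4\,\eta\,R_{\max}\,\norm{y_{a, a^\prime}}_1$, and the identical trace/centering argument yields $\Var_t[W_{t+1}(a) - W_{t+1}(a^\prime)] \le 2\,\eta^2 R_{\max}^2 \sum_{j \neq i} (y_{a, a^\prime}(j))^2\,\pit(j)\,(1 - \pit(j))$. Finally, since $\E_t[W_{t+1}(a) - W_{t+1}(a^\prime)] = 0$, I would transfer the bound to the absolute value via $\Var_t[\abs{Z}] \le \E_t[Z^2] = \Var_t[Z]$, giving the stated inequality. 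The only mild subtlety --- the ``main obstacle,'' though it is minor --- is verifying that the centering by $x_k$ is harmless; this is precisely where $H_t \mathbf{1} = \vzero$ is used, and it is what lets the corollary be phrased with the $x_k$-centered vector $y_{a, a^\prime}$ (whose entries are $y_{a, a^\prime}(j) = \dpd{x_j - x_k, x_a - x_{a^\prime}}$) rather than the raw $X(x_a - x_{a^\prime})$, so that the bound matches the feature-condition quantities used later in Theorem~\ref{theorem:stochastic_linear_bandits}. No genuinely new estimate is needed beyond Lemma~\ref{lemma:stochastic-noise-bound}.
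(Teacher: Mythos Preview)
Your proposal is correct and follows essentially the same route as the paper: express $W_{t+1}(a) - W_{t+1}(a^\prime) = \eta\, y_{a,a^\prime}^\top H_t(\hat r_t - r)$ by using $\mathbf{1}^\top H_t = 0$ to justify the $x_k$-centering, and then invoke Lemma~\ref{lemma:stochastic-noise-bound} with $y_{a,a^\prime}$ in place of $y_{a^\prime}$. Your explicit remark that $\Var_t[\abs{Z}] \le \E_t[Z^2] = \Var_t[Z]$ (since $\E_t[Z]=0$) is a nice clarification that the paper's proof leaves implicit.
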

\begin{proof}
Define that $\widetilde{W}_{t+1} (a, a^\prime) \coloneq | W_{t+1}(a) - W_{t+1}(a^\prime) |$. 
\begin{align*}
\widetilde{W}_{s+1} (a, a^\prime) & = \abs{\ztt(a) + \ztt(a^\prime) - \E[\ztt(a)] - \E[\ztt(a^\prime)]} \\
&= [X \thtt](a) + [X \thtt](a^\prime) - \E[[X \thtt](a)] - \E[[X \thtt](a^\prime)] \\
& = \langle x_a - x_{a^\prime}, \eta X^\top H_t \, (\rt  - r) \rangle \\
& = \langle x_a - x_{a^\prime}, \eta (X - \textbf{1} x_k^\top)^\top H_t \, (\rt  - r) \rangle \tag{$x_k \textbf{1}^\top H_t = 0$}\\
&= \eta \, [(X - \textbf{1} x_k^\top) (x_a - x_{a^\prime})]^\top H_t \, (\rt - r) \\
& = \eta \, y_{a, a^\prime}^\top \, H_t \, (\rt - r)  \tag{$y_{a, a^\prime} = (X - \textbf{1} x_k^\top) (x_a - x_{a^\prime})$} 
\end{align*}
The proof follows from~\cref{lemma:stochastic-noise-bound}, with $W_{t+1}(a^\prime) = \widetilde{W}_{t+1} (a, a^\prime)$.
\end{proof}

\begin{lemma}\label{lemma:finite_leads_to_infinite}
\cref{alg:sto_spg} ensures that if there exist a $\tau \geq 1$ such that $\dpd{\pi_{\theta_\tau}, r} \geq r(a)$, then almost surely,
\begin{align*}
\lim_{t \to \infty} \dpd{\pit, r} > r(a).
\end{align*}
\end{lemma}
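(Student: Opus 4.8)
The plan is to first use the monotonicity guarantee to reduce the statement to excluding convergence to a suboptimal action, and then re-run the progress-dominates-noise argument of \cref{theorem:stochastic_linear_bandits}. Write $Y_t \coloneq \dpd{\pit, r}$. By \cref{lemma:monotonicity_for_stochastic_bandits}, under the learning rate in \cref{eq:step_size_for_stochastic_bandits} the sequence $(Y_t)_{t \ge 1}$ is a bounded submartingale and the policy converges to a one-hot policy almost surely; hence $\lim_{t \to \infty} Y_t$ exists and equals $r(a')$ for some (random) action $a' \in [K]$. Since $\theta_\tau$ is finite, $\pi_{\theta_\tau}$ places strictly positive mass on every arm, so $Y_\tau < r(a^\star)$; together with the hypothesis $Y_\tau \ge r(a)$ this forces $r(a) < r(a^\star)$, i.e.\ $a \neq a^\star$. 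It therefore suffices to show that, almost surely on $\{Y_\tau \ge r(a)\}$, the limiting action $a'$ satisfies $r(a') > r(a)$.

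First I would argue by contradiction: suppose that with positive probability $Y_\tau \ge r(a)$ while $\lim_t Y_t = r(a')$ with $r(a') \le r(a)$. On this event there is a finite time $\tau_1 \ge \tau$ and an $\epsilon > 0$ such that $r(a'-1) > Y_t > r(a'+1) + \epsilon$ for all $t \ge \tau_1$, which is exactly the regime analysed in \cref{theorem:stochastic_linear_bandits} with $k = a'$. For any arm $b$ with $r(b) < r(a')$ I would track the logit gap $z_t(a^\star) - z_t(b)$ via the progress/noise decomposition of \cref{subsec:decomposition_of_stochastic_process}. Using \cref{assumption:general_feature_conditions}, the cumulative progress $\sum_s [P_s(a^\star) - P_s(b)]$ is bounded below by $\eta\,C\sum_s \Gamma_s$ for a constant $C > 0$, where $\Gamma_s$ is the mass on the contributing arm set, while \cref{lemma:stochastic-noise-difference-bound} combined with the martingale concentration bound (\cref{lemma:martingale}) controls the cumulative noise by a term of order $\sqrt{\sum_s \Gamma_s\,\log(\cdot)} = o\big(\sum_s \Gamma_s\big)$. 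Because $a^\star$ is sampled infinitely often (\cref{lemma:a_star_in_A_infty}), one has $\sum_s \Gamma_s \ge \sum_s \pis(a^\star) = \infty$, so the progress term dominates and $z_t(a^\star) - z_t(b) \to \infty$, i.e.\ $\pit(b)/\pit(a^\star) \to 0$ for every $b$ with $r(b) < r(a')$. Substituting this into $r(a') - Y_t = \sum_i \pit(i)\,(r(a') - r(i))$ and isolating the mass on $a^\star$ yields $Y_t > r(a')$ for all large $t$, contradicting $Y_t \to r(a')$. Hence $r(a') > r(a)$ almost surely on $\{Y_\tau \ge r(a)\}$, which is the claim.

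The main obstacle is the second step, establishing that the cumulative progress asymptotically dominates the cumulative noise in the logit gap, since this is where the feature structure and the stochasticity genuinely interact and where the concentration bound of \cref{lemma:martingale} is essential; the remaining algebra is bookkeeping. One caveat concerns circularity: if \cref{lemma:a_star_in_A_infty} is itself proved using the present lemma, then the divergence $\sum_s \Gamma_s = \infty$ should instead be obtained directly from the fact that $Y_t \to r(a') > r(K)$ prevents the policy from collapsing onto a single suboptimal arm (a Borel--Cantelli argument analogous to \cref{lemma:multi_arm_sum_to_infty}), so as to keep the chain of lemmas acyclic. Alternatively, if \cref{theorem:stochastic_linear_bandits} may be invoked, the lemma follows immediately, since $\lim_t Y_t = r(a^\star)$ almost surely and $Y_\tau < r(a^\star)$ at the finite time $\tau$ give $r(a) \le Y_\tau < r(a^\star) = \lim_t Y_t$.
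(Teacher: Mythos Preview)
Your approach is correct but is far more elaborate than the paper's. The paper's proof is essentially three lines and uses only the one-step monotonicity inequality \cref{eq:stochastic_monotonicity} from \cref{lemma:monotonicity_for_stochastic_bandits}: from $\E_t[\dpd{\pi_{\theta_{t+1}},r}] > \dpd{\pi_{\theta_t},r}$ for every finite $t$ together with $\E_t[\dpd{\pi_{\theta_{t+1}},r}] - \dpd{\pi_{\theta_t},r} \to 0$ (since the gradient norm vanishes in the limit), the paper argues directly that $\lim_t \dpd{\pi_{\theta_t},r} = \lim_t \E_t[\dpd{\pi_{\theta_{t+1}},r}] > \dpd{\pi_{\theta_\tau},r} \ge r(a)$. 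There is no progress/noise decomposition, no use of the feature inner-product structure in \cref{assumption:general_feature_conditions}, no martingale concentration via \cref{lemma:martingale}, and no appeal to \cref{lemma:a_star_in_A_infty}; the whole argument rests on the conditional monotonicity alone. Your route essentially re-derives \cref{theorem:stochastic_linear_bandits}, which is much stronger than what the lemma needs; the alternative you mention at the end (simply invoking \cref{theorem:stochastic_linear_bandits}) is already closer in spirit, but the paper's argument is more elementary still and does not depend on that theorem. On the other hand, the paper's passage from $\E_t[Y_{t+1}] > Y_t$ to $\lim_t Y_t > Y_\tau$ is terse and would not be valid for a general submartingale without further justification, so your longer argument has the advantage of being self-evidently rigorous, and your circularity caveat about \cref{lemma:a_star_in_A_infty} is well placed.
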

\begin{proof}
According to \cref{eq:stochastic_monotonicity}, we have, for all finite $t \geq 1$, $\E_t[\dpd{\pi_{\thtt}, r}] > \dpd{\pit, r}$, where $\E_t$ takes expectation w.r.t. the randomness in iteration $t$. Therefore, we have, for all finite $t > \tau$,
\begin{align*}
\E_t[\dpd{\pi_{\thtt}, r}] > \dpd{\pi_{\theta_\tau}, r} > r(a).
\end{align*}
According to \cref{eq:stochastic_monotonicity}, we also have,
\begin{align*}
\lim_{t \to \infty} (\E_t[\dpd{\pi_{\thtt}, r}] - \dpd{\pit, r}) &= 0 \\
\implies \lim_{t \to \infty} \dpd{\pit, r} = \lim_{t \to \infty} \E_t[\dpd{\pi_{\thtt}, r}] &> \dpd{\pi_{\theta_\tau}, r} \geq  r(a).
\end{align*}
\end{proof}

\begin{lemma}\label{lemma:linear_bandit_sg_bounded}
\cref{alg:sto_spg} ensures that
\begin{equation}
    \norm{\frac{d \dpd{\pi_\theta, \hat{r}}}{d \theta}} \leq \sqrt{2 \, \lambda_{\max}(X^\top X) \, R_{\max}}.
\end{equation}
\end{lemma}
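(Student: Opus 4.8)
The plan is to bound the stochastic gradient by factoring out the feature matrix and then exploiting the special structure of the on-policy importance-sampling estimate $\hat{r}$. Writing $H \coloneq \mathrm{diag}(\pi_\theta) - \pi_\theta \pi_\theta^\top$, the stochastic gradient is $\frac{d \dpd{\pi_\theta, \hat{r}}}{d \theta} = X^\top H \hat{r}$, so submultiplicativity of the operator norm gives $\norm{X^\top H \hat{r}}_2 \le \sqrt{\lambda_{\max}(X^\top X)}\,\norm{H \hat{r}}_2$, using $\norm{X^\top}_{\mathrm{op}} = \sqrt{\lambda_{\max}(X^\top X)}$. It therefore suffices to show that $\norm{H \hat{r}}_2 \le \sqrt{2}\,R_{\max}$, independently of $\theta$.

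The key observation — and the only step that requires genuine care — is that $\hat{r}$ is supported on the single sampled action $a_t$: by definition $\hat{r}(a) = \frac{\indicator{a = a_t}}{\pi_\theta(a)} R_t(a_t)$, so $\hat{r} = \frac{R_t(a_t)}{\pi_\theta(a_t)}\, e_{a_t}$, where $e_{a_t}$ denotes the corresponding standard basis vector. Applying $H$ to a basis vector yields $H e_{a_t} = \pi_\theta(a_t)\,(e_{a_t} - \pi_\theta)$, and hence the potentially dangerous importance weight $1/\pi_\theta(a_t)$ cancels exactly: $H \hat{r} = R_t(a_t)\,(e_{a_t} - \pi_\theta)$. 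This cancellation is what keeps the gradient uniformly bounded even as $\pi_\theta(a_t) \to 0$, and is really the crux of the lemma.

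It then remains to bound the two factors by elementary inequalities. Since rewards lie in $[-R_{\max}, R_{\max}]$, we have $\abs{R_t(a_t)} \le R_{\max}$. For the vector factor, $\norm{e_{a_t} - \pi_\theta}_2^2 = (1 - \pi_\theta(a_t))^2 + \sum_{j \neq a_t}\pi_\theta(j)^2 \le 2\,(1 - \pi_\theta(a_t)) \le 2$, where the first inequality uses $\pi_\theta(j) \le 1$ so that $\sum_{j\neq a_t}\pi_\theta(j)^2 \le \sum_{j\neq a_t}\pi_\theta(j) = 1 - \pi_\theta(a_t)$, together with $(1-\pi_\theta(a_t))^2 \le 1 - \pi_\theta(a_t)$. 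Combining these gives $\norm{H\hat{r}}_2 \le \sqrt{2}\,R_{\max}$, and therefore $\norm{\frac{d \dpd{\pi_\theta, \hat{r}}}{d \theta}}_2 \le \sqrt{2\,\lambda_{\max}(X^\top X)}\;R_{\max}$, which is the claimed bound $B$.

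There is essentially no hard step here once the structure is seen: the entire content lies in the algebraic cancellation of the importance weight described above, after which the estimate reduces to a reward bound, a simplex inequality on $\norm{e_{a_t} - \pi_\theta}_2$, and the operator-norm factorization through $\lambda_{\max}(X^\top X)$. The only points worth double-checking are that $\norm{X^\top}_{\mathrm{op}}^2 = \lambda_{\max}(X^\top X) = \lambda_{\max}(X X^\top)$ and that the bound $\norm{e_{a_t} - \pi_\theta}_2 \le \sqrt{2}$ holds uniformly over the simplex, both of which are routine.
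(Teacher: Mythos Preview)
Your proof is correct and follows essentially the same route as the paper's: factor out $\lambda_{\max}(X^\top X)$ via the operator norm of $X^\top$, use the structure of the IS estimator to obtain $H\hat r = R_t(a_t)\,(e_{a_t}-\pi_\theta)$, and bound $\norm{e_{a_t}-\pi_\theta}_2^2 \le 2$ on the simplex. The only cosmetic difference is in the simplex bound---the paper uses $\sum_{j\neq a_t}\pi_\theta(j)^2 \le \big(\sum_{j\neq a_t}\pi_\theta(j)\big)^2 = (1-\pi_\theta(a_t))^2$ via $\|\cdot\|_2\le\|\cdot\|_1$, whereas you use $\pi_\theta(j)^2 \le \pi_\theta(j)$---but both reach the same conclusion $\norm{\cdot}\le\sqrt{2\,\lambda_{\max}(X^\top X)}\,R_{\max}$, which is also what the paper's proof actually derives (the stated bound with $R_{\max}$ under the square root is a typo).
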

\begin{proof}
\begin{align*}
\normsq{\frac{d \dpd{\pitheta, \hat{r}}}{d \theta}}  
&= \normsq{X^\top (\mathrm{diag}(\pitheta) - \pitheta \, \pitheta^\top) \, \hat{r}} \tag{by the update in~\cref{alg:sto_spg}}\\
&\leq \lambda_{\max}(X^\top X) \, \normsq{(\mathrm{diag}(\pitheta) - \pitheta \, \pitheta^\top) \, \hat{r}} \\
&= \lambda_{\max}(X^\top X) \, \sum_{a \in [K]} \, \left(\indicator{a^\prime = a}  - \pitheta(a) \right)^2 \, (R(a))^2 \\
&\leq \lambda_{\max}(X^\top X) \, R^2_{\max} \sum_{a \in [K]} \, \left(\indicator{a^\prime = a}  - \pitheta(a) \right)^2 \\
&= \lambda_{\max}(X^\top X) \,R_{\max}^2 \left[ (1 - \pitheta(a^\prime))^2 + \sum_{a \neq a^\prime} \pitheta(a)^2 \right]\\
&\leq \lambda_{\max}(X^\top X) \,R_{\max}^2 \left[ (1 - \pitheta(a^\prime))^2 + \left(\sum_{a \neq a^\prime} \pitheta(a)\right)^2 \right] \tag{$\|\cdot \|_2 \leq \| \cdot \|_1$} \\
&= 2 \, \lambda_{\max}(X^\top X) \,R_{\max}^2 \, (1 - \pitheta(a^\prime))^2 \\
&\leq  2 \, \lambda_{\max}(X^\top X) \,R_{\max}^2 \, (1 - \pitheta(a^\prime))^2 \tag{$1 - \pitheta(a^\prime) \leq 1$}.
\end{align*}
\end{proof}

\begin{restatable}[Non-uniform Smoothness]{lemma}{nonuniformsmoothness}
\label{lemma:lpg_ns}
For all $\theta \in \R^d$, the spectral radius of Hessian matrix $\frac{d^2\{ \dpd{\pitheta, r}\}}{d \theta^2} \in \R^{d \times d}$ is upper bounded by $3 \, \lambda_{\max}(X^\top X) \norm{\frac{d \dpd{\piz, r}}{d z}}$. That is, for all $y \in \R^{d}$, 
\begin{equation*}
   \abs{y^\top \, \frac{d^2\{ \dpd{\pitheta, r}\}}{d \theta^2} \, y} \leq 3 \, \lambda_{\max}(X^\top X) \norm{\frac{d \dpd{\piz, r}}{d z}} \, \normsq{y},
\end{equation*}
where $\piz \coloneq \text{softmax}(z)$ and $z = X \, \theta$.
\end{restatable}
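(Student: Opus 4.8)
The plan is to reduce the statement to the corresponding \emph{logit-space} (tabular) non-uniform smoothness bound, after which the feature matrix factors out cleanly. Write $J(z) \coloneq \dpd{\piz, r}$ for the expected reward viewed as a function of the logits $z \in \R^K$, so that $\dpd{\pitheta, r} = J(X\theta)$. Since $\theta \mapsto X\theta$ is linear, the chain rule for the Hessian of a composition with a linear map gives $\frac{d^2 \dpd{\pitheta, r}}{d \theta^2} = X^\top \big(\frac{d^2 J}{d z^2}\big) X$. Hence, for any $y \in \R^d$, setting $v \coloneq Xy \in \R^K$,
\[
y^\top \frac{d^2 \dpd{\pitheta, r}}{d \theta^2}\, y = v^\top \frac{d^2 J}{d z^2}\, v .
\]
Because $\normsq{v} = \normsq{Xy} \le \lambda_{\max}(X^\top X)\,\normsq{y}$, it will suffice to prove the logit-space estimate $\big|v^\top \frac{d^2 J}{d z^2} v\big| \le 3\,\norm{\nabla_z J}\,\normsq{v}$, where $\nabla_z J = (\mathrm{diag}(\piz) - \piz \piz^\top)\, r$ is exactly $\frac{d \dpd{\piz, r}}{d z}$.

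The second step is to compute the logit-space Hessian explicitly. Starting from $\nabla_z J(a) = \piz(a)\,(r(a) - \dpd{\piz, r})$ and differentiating once more, using $\frac{d \piz(a)}{d z(b)} = \piz(a)(\indicator{a = b} - \piz(b))$ together with $\frac{d \dpd{\piz, r}}{d z(b)} = \piz(b)(r(b) - \dpd{\piz, r})$, I would collect terms to obtain the quadratic form
\[
v^\top \frac{d^2 J}{d z^2}\, v = \sum_{a=1}^{K} v(a)^2\, \piz(a)\,(r(a) - \dpd{\piz, r}) \;-\; 2\,(\piz^\top v)\,(\nabla_z J^\top v),
\]
where the two off-diagonal contributions coincide by symmetry and collapse into the single product $(\piz^\top v)(\nabla_z J^\top v)$.

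The final step bounds the two pieces. For the diagonal term, since $|\nabla_z J(a)| = \piz(a)\,|r(a) - \dpd{\piz, r}|$, I get $\big|\sum_a v(a)^2 \piz(a)(r(a) - \dpd{\piz, r})\big| \le \norm{\nabla_z J}_\infty \normsq{v} \le \norm{\nabla_z J}\,\normsq{v}$. For the cross term, Cauchy--Schwarz gives $|\nabla_z J^\top v| \le \norm{\nabla_z J}\,\norm{v}$, while $|\piz^\top v| \le \norm{\piz}_1 \norm{v}_\infty = \norm{v}_\infty \le \norm{v}$, so that $2\,|(\piz^\top v)(\nabla_z J^\top v)| \le 2\,\norm{\nabla_z J}\,\normsq{v}$. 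Adding the two bounds yields $\big|v^\top \frac{d^2 J}{d z^2} v\big| \le 3\,\norm{\nabla_z J}\,\normsq{v}$, and combining with the factorization from the first step gives the claimed bound with the $\lambda_{\max}(X^\top X)$ factor.

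The main obstacle is purely the bookkeeping in the second step: differentiating the softmax-weighted reward twice and verifying that the off-diagonal terms combine into the clean product form; everything else is a chain-rule reduction plus a pair of elementary norm inequalities ($\norm{\cdot}_\infty \le \norm{\cdot}$ and $|\piz^\top v| \le \norm{v}_\infty$). In fact, the logit-space bound is precisely the known tabular non-uniform smoothness estimate for softmax policy gradient, so the only genuinely new ingredient here is the observation that the linear map $X$ factors out of the Hessian, which is what contributes the $\lambda_{\max}(X^\top X)$ term.
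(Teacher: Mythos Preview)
Your proposal is correct and follows essentially the same approach as the paper: both arrive at the identity $y^\top S y = (H(\pi_\theta)r)^\top(Xy \odot Xy) - 2(\pi_\theta^\top Xy)\big((H(\pi_\theta)r)^\top Xy\big)$ and bound the two pieces with the same norm inequalities ($\|\cdot\|_\infty \le \|\cdot\|_2$, H{\"o}lder/Cauchy--Schwarz, $\|\pi_\theta\|_1 = 1$), giving the constant $3$ and the $\lambda_{\max}(X^\top X)$ factor from $\|Xy\|_2^2 \le \lambda_{\max}(X^\top X)\|y\|_2^2$. The only difference is presentational: you invoke the chain rule $\frac{d^2 J(X\theta)}{d\theta^2} = X^\top(\nabla_z^2 J)X$ up front and work in logit space, whereas the paper carries the $\theta$-coordinates through the differentiation before recognizing the same structure; your route is slightly cleaner but not a different argument.
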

\begin{proof}
Following the initial proof of~\cref{lemma:smoothness_expected_reward_log_linear_policy}, let $S \coloneq S(r, \theta) \in \R^{d \times d}$ be the second derivative of the map $\theta \to \dpd{\pitheta, r}$. Then, we have,
\begin{align*}
    S &= \frac{d}{d \theta} \left\{\frac{d \dpd{\pitheta, r}}{d \theta}\right\} = \frac{d}{d \theta} \left\{X^\top H(\pitheta) \, r \right\}.
\end{align*}
For fixed $i, j \in [d]$, we have,
\begin{align*}
    S_{i, j} &= \frac{d \, [X^\top H(\pitheta) \, r](i)}{d \theta(j)} \\
    &= \frac{d \, [\sum_{a=1}^K X_{a, i} \, \pitheta(a) \, (r(a) - \dpd{\pitheta, r})}{d \theta(j)} \\
    &= \sum_{a=1}^K X_{a, i} \, \frac{\pitheta(a)}{d \theta(j)} \, (r(a) - \dpd{\pitheta, r}) - \sum_{a=1}^{K} X_{a, i} \, \pitheta(a) \, \sum_{a^\prime = 1}^K \frac{d \pitheta(a^\prime)}{d \theta(j)} \, r(a^\prime).
\end{align*}
For all $a \in [K]$ and $j \in [d]$, we have,
\begin{align*}
    \frac{d \pitheta(a)}{d \theta(j)} &= \frac{d}{d \theta(j)} \left\{\frac{\exp([X\theta](a))}{\sum_{a^\prime \in [K]}\exp([X\theta](a^\prime))} \right\} \\
    &= \frac{\frac{d\exp([X\theta](a)) }{d \theta(j)} \sum_{a^\prime \in [K]} \exp([X\theta](a^\prime)) - \exp([X\theta](a)) \, \frac{d \sum_{a^\prime \in [K]} \exp([X\theta](a^\prime))}{d\theta(j)} }{(\sum_{a^\prime \in [K]}\exp([X\theta](a^\prime))^2} \\
    &= \frac{\exp([X\theta](a)) \, X_{a, j} \, \sum_{a^\prime \in [K]} \exp([X\theta](a^\prime)) - \exp([X\theta](a)) \, \sum_{a^\prime \in [K]} \exp([X\theta](a^\prime)) \, X_{a^\prime, j} }{(\sum_{a^\prime \in [K]}\exp([X\theta](a^\prime))^2} \\
    &= \frac{\exp([X\theta](a)) \, X_{a, j}  - \exp([X\theta](a)) \, \sum_{a^\prime \in [K]} \pitheta(a^\prime) \, X_{a^\prime, j} }{\sum_{a^\prime \in [K]}\exp([X\theta](a^\prime))} \\
    &= \pitheta(a) \, \left(X_{a, j} - \sum_{a^\prime \in [K]} \pitheta(a^\prime) \, X_{a^\prime, j} \right)
\end{align*}
Combining the above inequalities, 
\begin{align*}
    S_{i, j} &= \sum_{a=1}^K X_{a, i} \, \pitheta(a) \, (r(a) - \dpd{\pitheta, r}) \, X_{a, j} - \sum_{a=1}^K X_{a, i} \, \pitheta(a) \, (r(a) - \dpd{\pitheta, r}) \, \sum_{a^\prime = 1}^K \pitheta(a^\prime) \, X_{a^\prime, \, j} \\
    &\quad - \sum_{a=1}^{K} X_{a, i} \, \pitheta(a) \, \sum_{a^\prime=1}^K \pitheta(a^\prime) \, \left(X_{a^\prime, \, j} - \sum_{a^{\prime\prime} =1}^K \pitheta(a^{\prime\prime}) \, X_{a^{\prime\prime}, \, j} \right) \, r(a^\prime).
\end{align*}
To show the bound on the spectral radius of $S$, pick $y \in \R^d$. Then, we have,
\begin{align*}
    \abs{y^\top S y} &= \abs{\sum_{i=1}^d \sum_{j=1}^d S_{i, j} \, y(i) \, y(j)} \\
    &= \left|\sum_{i=1}^d \sum_{j=1}^d \sum_{a=1}^K y(i) \, X_{a, i} \, \pitheta(a) \, (r(a) - \dpd{\pitheta, r}) \, X_{a, j} \, y(j) \right. \\
    &\quad \left. -\sum_{i=1}^d \sum_{j=1}^d \sum_{a=1}^K        y(i) \, X_{a, i} \, \pitheta(a) \, (r(a) - \dpd{\pitheta, r}) \, \sum_{a^\prime=1}^K \pitheta(a^\prime) \, X_{a^\prime, j} \, y(j)  \right. \\
    &\quad \left. -\sum_{i=1}^d \sum_{j=1}^d \sum_{a=1}^K y(i) \, X_{a, i} \, \pitheta(a) \, \sum_{a^\prime=1}^K \pitheta(a^\prime) \, \left(X_{a^\prime, j} - \sum_{a^{\prime\prime} =1}^K \pitheta(a^{\prime\prime}) \, X_{a^{\prime\prime}, \, j} \right) \, r(a^\prime) \, y(j) \right| \\
    &= \left|\sum_{a=1}^K [Xy](a) \, \pitheta(a) \, (r(a) - \dpd{\pitheta, r}) \, [Xy](a)\right. \\
    &\quad -\left. \sum_{a=1}^K [Xy](a) \, \pitheta(a) \,  (r(a) - \dpd{\pitheta, r}) \, \sum_{a^\prime = 1}^K \pitheta(a^\prime) \, [Xy](a^\prime) \right. \\
    &\quad -\left. \sum_{a=1}^K [Xy](a) \, \pitheta(a) \, \sum_{a^\prime=1}^K \pitheta(a^\prime) \, r(a^\prime) \, \left([Xy](a^\prime) - \sum_{a^{\prime\prime} =1}^K \pitheta(a^{\prime\prime}) \, [Xy]{a^{\prime\prime}} \right) \right|.
\end{align*}
By defining that $H(\pitheta) \coloneq \text{diag}(\pitheta) - \pitheta \, \pitheta^\top \in \R^{K \times K}$, we then have,
\begin{align*}
    \abs{y^\top S y} &= \abs{\parens*{H(\pitheta) \, r}^\top \parens*{Xy \odot Xy} - \parens*{H(\pitheta) \, r}^\top (Xy) \, (\pitheta^\top Xy) - \parens*{\pitheta^\top Xy} \, \parens*{H(\pitheta) \, Xy}^\top r} \tag{$\odot$ is the Hadamard (component-wise) product} \\
    &= \abs{\parens*{H(\pitheta) \, r}^\top \parens*{Xy \odot Xy} - 2 \, \parens*{H(\pitheta) \, r}^\top (Xy) \, (\pitheta^\top Xy)} \\
    &\leq \abs{\parens*{H(\pitheta) \, r}^\top \parens*{Xy \odot Xy}} + 2\, \abs{\parens*{H(\pitheta) \, r}^\top (Xy) \, (\pitheta^\top Xy)}  \tag{using triangle inequality} \\
    &\leq \|H(\pitheta) \, r\|_\infty \|Xy \odot Xy\|_1 + 2\, \norm{H(\pitheta) \, r} \, \norm{Xy} \, \|\pitheta \|_1 \|Xy\|_\infty \tag{using H\"older's inequality}\\
    &\leq 3 \norm{H(\pitheta) \, r} \,  \normsq{Xy} \tag{$\|\cdot \|_\infty \leq \norm{\cdot}$, $\|Xy \odot Xy\|_1 = \normsq{Xy}$, $\|\pitheta\|_1 \leq 1$} \\
    &\leq 3 \,\lambda_{\max}(X^\top X) \, \norm{H(\pitheta) \, r} \,  \normsq{y} \\
    &= 3 \, \lambda_{\max}(X^\top X) \, \norm{\frac{d \dpd{\piz, r}}{d z}} \,\normsq{y}.
\end{align*}
\end{proof}

\begin{restatable}[Strong Growth Condition]{lemma}{stronggrowthcondition}
\label{lemma:sgc}
\cref{alg:sto_spg} ensures that for all $\theta \in \R^d$,
\begin{equation*}
    \E\left[\normsq{\frac{d \dpd{\pitheta, \hat{r}}}{d \theta}} \right] \leq \frac{8 \, R_{\max}^3 \, K^{3/2} \, \lambda_{\max}(X^\top X)}{\Delta^2} \, \norm{\frac{d \, \dpd{\bar{\pi}_z,r}}{d z}}
\end{equation*}
where $\bar{\pi}_z \coloneq \mathrm{softmax}(z)$ and $z = X \, \theta$.
\end{restatable}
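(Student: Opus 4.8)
The plan is to split the bound into a ``conditional variance'' step, controlled by \cref{lemma:linear_bandit_sg_bounded}, and a lower bound on the exact gradient norm that follows from the covariance identity in \cref{lemma:alternative_expression_covariance}. Throughout, write $\pi \coloneq \piz = \softmax(z)$, $\bar r \coloneq \dpd{\pi, r}$, and $H(\pi) \coloneq \mathrm{diag}(\pi) - \pi \pi^\top$, so that $\frac{d\dpd{\piz, r}}{dz} = H(\pi)\,r$ has $a$-th coordinate $\pi(a)(r(a)-\bar r)$ and $\norm{H(\pi)r}^2 = \sum_a \pi(a)^2 (r(a)-\bar r)^2$.

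First I would condition on the sampled action. The proof of \cref{lemma:linear_bandit_sg_bounded} in fact establishes the sharper, action-dependent bound $\normsq{\frac{d\dpd{\pitheta,\hat r}}{d\theta}} \le 2\,\lambda_{\max}(X^\top X)\,R_{\max}^2\,(1-\pitheta(a_t))^2$ conditioned on the realized sampled action $a_t$. Taking the expectation over $a_t \sim \pitheta$ then yields
\begin{equation*}
\E\!\left[\normsq{\frac{d\dpd{\pitheta,\hat r}}{d\theta}}\right] \le 2\,\lambda_{\max}(X^\top X)\,R_{\max}^2 \sum_{a\in[K]} \pi(a)\,(1-\pi(a))^2 \le 2\,\lambda_{\max}(X^\top X)\,R_{\max}^2 \sum_{a\in[K]} \pi(a)\,(1-\pi(a)),
\end{equation*}
where the last inequality uses $(1-\pi(a))^2 \le 1-\pi(a)$. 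Hence it suffices to prove $\sum_a \pi(a)(1-\pi(a)) \le \tfrac{4\,R_{\max}\sqrt{K}}{\Delta^2}\,\norm{H(\pi)r}$, which is stronger than the claimed statement since $\sqrt{K} \le K^{3/2}$.

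The heart of the argument is to route both sides through the reward variance $\mathrm{Var}_\pi(r) \coloneq \sum_a \pi(a)(r(a)-\bar r)^2$. Setting $x=y=r$ in \cref{lemma:alternative_expression_covariance} and using $|r(i)-r(j)| \ge \Delta$ for $i \ne j$ gives
\begin{equation*}
\dpd{r, H(\pi)r} = \sum_{i<j}\pi(i)\pi(j)(r(i)-r(j))^2 \ge \Delta^2 \sum_{i<j}\pi(i)\pi(j) = \tfrac{\Delta^2}{2}\Big(1 - \textstyle\sum_a \pi(a)^2\Big) = \tfrac{\Delta^2}{2}\sum_a \pi(a)(1-\pi(a)).
\end{equation*}
A one-line computation (subtracting $\bar r$ and using that $H(\pi)r$ has zero coordinate-sum) shows $\dpd{r, H(\pi)r} = \mathrm{Var}_\pi(r)$, so this is a \emph{lower} bound on the variance in terms of the policy spread. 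For the matching \emph{upper} bound I would use $|r(a)-\bar r| \le 2R_{\max}$ together with Cauchy--Schwarz (with the all-ones vector):
\begin{equation*}
\mathrm{Var}_\pi(r) \le 2R_{\max}\sum_a \pi(a)|r(a)-\bar r| \le 2R_{\max}\sqrt{K}\,\sqrt{\sum_a \pi(a)^2(r(a)-\bar r)^2} = 2R_{\max}\sqrt{K}\,\norm{H(\pi)r}.
\end{equation*}
Chaining the two displays gives $\sum_a \pi(a)(1-\pi(a)) \le \tfrac{4R_{\max}\sqrt{K}}{\Delta^2}\norm{H(\pi)r}$, which combined with the first step closes the proof.

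I expect the main obstacle to be the lower bound on $\norm{H(\pi)r}$: near a one-hot policy both the spread $\sum_a\pi(a)(1-\pi(a))$ and the gradient vanish, and the claim amounts to showing that the gradient cannot vanish strictly faster than the spread. The variance $\mathrm{Var}_\pi(r)$ is the correct intermediary precisely because \cref{lemma:alternative_expression_covariance} bounds it below by the spread through the gap $\Delta$, while Cauchy--Schwarz bounds it above by $\sqrt{K}\,\norm{H(\pi)r}$; the resulting factor $\sqrt{K}$ (hence a fortiori the stated $K^{3/2}$) is exactly the loss one incurs when the residual probability mass is spread out across all suboptimal actions rather than concentrated on one.
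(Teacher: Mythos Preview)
Your proof is correct and in fact yields a sharper bound (with $\sqrt{K}$ in place of the paper's $K^{3/2}$), but it proceeds quite differently from the paper's own argument. The paper's proof is essentially a one-liner: it bounds $\normsq{X^\top H(\pi)\hat r} \le \lambda_{\max}(X^\top X)\,\normsq{H(\pi)\hat r}$ and then invokes the tabular strong growth condition of \citet[Lemma~4.3]{mei2023stochastic} (stated in the paper as \cref{lemma:sgc_multi_arm}) as a black box, which already delivers $\E\normsq{H(\pi)\hat r} \le \tfrac{8R_{\max}^3 K^{3/2}}{\Delta^2}\norm{H(\pi)r}$. Your approach instead reproves the tabular content from scratch: you use the action-conditional form of \cref{lemma:linear_bandit_sg_bounded} to reduce to bounding the spread $\sum_a \pi(a)(1-\pi(a))$, then sandwich the reward variance $\mathrm{Var}_\pi(r)$ between $\tfrac{\Delta^2}{2}\sum_a \pi(a)(1-\pi(a))$ (via \cref{lemma:alternative_expression_covariance}) and $2R_{\max}\sqrt{K}\,\norm{H(\pi)r}$ (via Cauchy--Schwarz). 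What your route buys is self-containment and a factor-$K$ improvement in the constant; what the paper's route buys is brevity and a clean modular reduction to the known tabular case.
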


\begin{proof}
\begin{align*}
\E\left[\normsq{\frac{d \dpd{\pitheta, \hat{r}}}{d \theta}}\right] 
&= \E\left[\normsq{X^\top (\text{diag}(\pitheta) - \pitheta \pitheta^\top) \hat{r}}\right] \tag{by the update in~\cref{alg:sto_spg}}\\
&\leq \lambda_{\max}(X^\top X) \E\left[\normsq{(\text{diag}(\pitheta) - \pitheta \pitheta^\top) \hat{r}}\right]  \\
&= \lambda_{\max}(X^\top X) \E\left[\normsq{\frac{d \dpd{\bar{\pi}_z, \hat{r}}}{d z}}\right] \tag{$\bar{\pi}_z = \text{softmax}(z)$} \\
&\leq \frac{8 \, R^3_{\max} \, K^{3/2} \, \lambda_{\max}(X^\top X)}{\Delta^2} \, \norm{\frac{d \dpd{\bar{\pi}_z, \hat{r}}}{d z}}. \tag{using~\cref{lemma:sgc_multi_arm}}
\end{align*}
\end{proof}


\section{Proofs of~\cref{sec:stochastic_linear_bandits_with_arbitrary_learning_rates}}

\subsection{Guarantee of Global Convergence}
\label{appendix:stochastic_linear_bandits_with_arbitrary_learning_rates_proof}

Here, we will provide detailed proofs for~\cref{theorem:stochastic_linear_bandits_with_arbitrary_learning_rates}. First, we will prove~\cref{lemma:property_of_suboptimal_infinite_arms} to reveal an important property of every suboptimal action $k \neq a^*$ that are sampled infinitely many times as $t \to \infty$: if $\dpd{\pit, r}$ is greater than $r(k)$ for all large enough $t$, then $\pit(a^*)$ will eventually dominate $\pit(t)$. Second, we will prove~\cref{lemma:a_star_in_A_infty}, showing that $a^*$ has to be pulled infinitely many times as $t \to \infty$. Finally, using the above properties, we are able to prove the global convergence in~\cref{theorem:stochastic_linear_bandits_with_arbitrary_learning_rates} via strong induction.

\begin{lemma}
\label{lemma:property_of_suboptimal_infinite_arms}
Define the event $\cE_k \coloneq \{ N_\infty(k) = \infty \text{ and } \exists \, \tau \geq 1 \, s.t. \, \inf_{t \geq \tau} \dpd{\pit, r} - r(k) > 0 \}$ for some suboptimal action $k \neq a^\star$ in~\cref{alg:sto_spg}. Then, conditioned on $\cE_k$, almost surely,
\begin{equation*}
\sup_{t \geq \tau} \frac{\pit(a^\star)}{\pit(k)} = \infty.
\end{equation*}
\end{lemma}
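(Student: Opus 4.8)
The plan is to mirror the logit-difference argument used for \cref{theorem:stochastic_linear_bandits}, adapted to the regime where $\dpd{\pis, r}$ sits \emph{above} $r(k)$ rather than below it. I would work conditionally on $\cE_k$, fix the guaranteed $\tau$, and set $c \coloneq \inf_{t \geq \tau}\dpd{\pit, r} - r(k) > 0$, so that $\dpd{\pis, r} \geq r(k) + c$ for all $s \geq \tau$. Writing $\frac{\pit(a^\star)}{\pit(k)} = \exp\parens*{z_t(a^\star) - z_t(k)}$, it suffices to prove $\limsup_{t \to \infty}\bracks*{z_t(a^\star) - z_t(k)} = \infty$ almost surely, which immediately gives $\sup_{t \geq \tau}\frac{\pit(a^\star)}{\pit(k)} = \infty$. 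First I would invoke the decomposition \cref{eq:stochastic_logit_difference_decomposition} with $a_1 = a^\star$ and $a_2 = k$, splitting $z_t(a^\star) - z_t(k)$ into the initial gap, the cumulative progress $\sum_{s=\tau}^{t-1}\bracks*{P_s(a^\star) - P_s(k)}$, and the cumulative noise $\sum_{s}\bracks*{W_{s}(a^\star) - W_{s}(k)}$.

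The core step is a lower bound on the one-step progress. Using \cref{lemma:unbiased_gradient} and then \cref{lemma:alternative_expression_covariance} with $r' \coloneq X(x_{a^\star} - x_k)$, I would rewrite, with the sum taken in reward order,
\begin{align*}
P_s(a^\star) - P_s(k) = \eta \sum_{i < j}\pis(i)\,\pis(j)\,(r'(i) - r'(j))\,(r(i) - r(j)).
\end{align*}
By \cref{assumption:general_feature_conditions}, every pair involving $a^\star$ is strictly positive, every pair $(i,k)$ with $r(i) > r(k)$ is strictly positive, and every pair whose higher-reward index $i$ satisfies $r(i) > r(k)$ is nonnegative. The condition $\dpd{\pis, r} \geq r(k) + c$ forces a uniformly positive mass $\sum_{i : r(i) > r(k)}\pis(i) \geq \delta_0 \coloneq \tfrac{c}{r(1) - r(k)} > 0$ on the above-$r(k)$ actions, which I would use to lower bound the positive part (through the $(i,k)$ pairs and the $a^\star$-pairs) by a constant multiple of $\pis(k)$. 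I would then aim for a bound of the form $P_s(a^\star) - P_s(k) \geq \eta\,C\,\Gamma_s$ with $\Gamma_s \coloneq \sum_{i \in \gX}\pis(i)$ and $\gX \coloneq \{ i \in [K] : \dpd{x_i - x_k, x_{a^\star} - x_k} \neq 0 \}$, noting that $a^\star \in \gX$.

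The remaining two ingredients are identical in spirit to \cref{theorem:stochastic_linear_bandits}. The increments $W_{s+1}(a^\star) - W_{s+1}(k)$ form a bounded martingale-difference sequence whose conditional variance is controlled by $\Gamma_s$, so \cref{lemma:martingale} yields a cumulative-noise bound of order $\sqrt{\sum_s \Gamma_s \,\log(\sum_s \Gamma_s / \delta)}$, which is $o(\sum_s\Gamma_s)$. Since $N_\infty(k) = \infty$ forces $a^\star$ to be sampled infinitely often by \cref{lemma:a_star_in_A_infty}, the extended Borel--Cantelli lemma (\cref{lemma:extended_borel_cantelli}) gives $\sum_{s \geq \tau}\Gamma_s \geq \sum_{s \geq \tau}\pis(a^\star) = \infty$. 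Thus the cumulative progress diverges and asymptotically dominates the noise, so that $\limsup_t\bracks*{z_t(a^\star) - z_t(k)} = \infty$ almost surely on $\cE_k$ after sending $\delta \to 0$, exactly as in the proof of \cref{theorem:stochastic_linear_bandits}.

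The hard part will be the progress lower bound, precisely because the hypothesis defining $\cE_k$ is \emph{one-sided}: unlike the sandwich $r(k+1) + \epsilon < \dpd{\pis, r} < r(k)$ available in \cref{theorem:stochastic_linear_bandits}, here I only control $\dpd{\pis, r}$ from below, so the updates may place substantial persistent mass on actions with reward \emph{strictly below} $r(k)$. \cref{assumption:general_feature_conditions} does not sign the contributions $(r'(i) - r'(j))(r(i) - r(j))$ coming from pairs $i, j$ that both lie below $r(k)$, and these can be negative. The crux is therefore to show that the guaranteed positive contributions (scaling with $\delta_0\,\pis(k)$ and the $a^\star$-pairs) dominate these uncontrolled low-reward terms often enough to keep the cumulative progress divergent; I would handle this either by absorbing the low-reward contributions into $C$ using the minimal reward gap $\Delta$, or, if no clean termwise bound is available, by arguing by contradiction: assuming $\sup_{t \geq \tau}\frac{\pit(a^\star)}{\pit(k)} < \infty$ caps $z_t(a^\star) - z_t(k)$ from above and hence caps the cumulative progress, contradicting its divergence.
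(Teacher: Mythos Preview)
Your framework (logit decomposition, progress vs.\ noise, martingale concentration via \cref{lemma:martingale}) matches the paper, but the progress lower bound has a real gap that neither of your fallbacks closes. In the covariance form $\eta\sum_{i<j}\pis(i)\pis(j)\dpd{x_i - x_j, x_{a^\star}-x_k}(r(i)-r(j))$, \cref{assumption:general_feature_conditions} signs only pairs with $r(i) > r(k)$; pairs with $k \leq i < j$ are entirely unconstrained. On $\cE_k$ nothing prevents $\pis$ from concentrating its below-$r(k)$ mass on two actions $k+1,k+2$ while $\pis(a^\star)$ and $\pis(k)$ are tiny and the remaining above-$r(k)$ mass sits on an index whose cross terms are merely $\geq 0$ (not strict) --- then the unsigned block is $\Theta(1)$ while your guaranteed strict positives are $o(1)$, so no ``absorption into $C$'' is possible. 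The contradiction route is circular: bounding $z_t(a^\star)-z_t(k)$ from above caps cumulative progress only if you already know the cumulative noise is bounded below, which in turn requires the variance bound tied to $\Gamma_s$, and you still need $\sum_s\Gamma_s=\infty$ to reach a contradiction --- exactly the unproved claim. There is also an ordering issue: you invoke \cref{lemma:a_star_in_A_infty} to obtain $\sum_s\pis(a^\star)=\infty$, but in the paper that lemma's proof \emph{uses} the progress bound established inside the present lemma.

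The paper's missing idea is a \emph{time-varying} centering. Write the progress as $\eta\sum_{i\neq p_s}\dpd{x_i - x_{p_s}, x_{a^\star}-x_k}\,\pis(i)\,(r(i)-\dpd{\pis,r})$ and choose $p_s\in\{j_s,j_s+1\}$, where $j_s$ is the action whose reward sits just above $\dpd{\pis,r}$; since $\dpd{\pis,r}>r(k)$ on $\cE_k$, one always has $p_s<k$. Now \cref{assumption:general_feature_conditions} applies to \emph{every} term: for $i<j_s$ both factors are positive, for $i>j_s+1$ both are negative (product positive), and a short case split on which of $r(j_s)-\dpd{\pis,r}$ or $\dpd{\pis,r}-r(j_s+1)$ is larger handles the single leftover index $q_s$. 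This yields $P_s(a^\star)-P_s(k)\geq\eta\,C\,\Gamma_s$ with $\Gamma_s=\sum_{i\in\gX(p_s,k)}\pis(i)$, where the index set depends on $s$. Crucially $k\in\gX(p_s,k)$ (because $\dpd{x_{p_s}-x_k,x_{a^\star}-x_k}>0$ strictly by the $j=k$ case of \cref{assumption:general_feature_conditions}), so $\sum_s\Gamma_s\geq\sum_s\pis(k)=\infty$ follows directly from $N_\infty(k)=\infty$ and \cref{lemma:extended_borel_cantelli}, with no appeal to \cref{lemma:a_star_in_A_infty}. Your fixed set $\gX=\{i:\dpd{x_i-x_k,x_{a^\star}-x_k}\neq 0\}$ excludes $k$, so this route is unavailable with your centering.
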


\begin{proof}
We can rewrite the ratio using the difference of the logits:
\begin{align}
\frac{\pit(a^\star)}{\pit(k)} &= \exp\parens*{[X \tht](a^\star) - [X \tht](k)} = \exp (z_t(a^\star) - z_t(k)) \label{eq:claim_a_star_k_ratio}.
\end{align}

Using the decomposition of the stochastic process in \cref{subsec:decomposition_of_stochastic_process} with $a_1 = a^\star$ and $a_2 = a$ and recursing \cref{eq:stochastic_logit_difference_decomposition} until $t=\tau$, we have,
\begin{align}
z_t(a^\star) - z_t(k) = z_\tau(a^\star) - z_\tau(k) + \underbrace{\sum_{s=\tau}^{t-1} \left[P_s(a^\star) - P_s(k) \right]}_{\text{(i)}} + \underbrace{\sum_{s=\tau}^{t} \left[ W_{s+1}(a^\star) - W_{s+1}(k)\right]}_{\text{(ii)}}.
\numberthis \label{eq:logit_difference_astar_and_ik}
\end{align}
Similar to~\cref{theorem:stochastic_linear_bandits}, we will show that Term (i) dominates Term (ii).
We first investigate Term (i), the cumulative progress.
To start, let $j_s \coloneq \argmin_{ a \in [K] \mid r(a) > \dpd{\pis, r}} r(a)$ represent the index of the action with the smallest reward larger than $\dpd{\pis, r}$. Since $\dpd{\pit, r} > r(K)$ for all $t \geq 1$, $j_s < K$ and hence $j_s + 1 \leq K$. Since $\dpd{\pis, r} > r(k)$ for all $s \geq \tau$, we know that $r(j_s) > r(k)$ implying that $j_s < k$ and hence $j_s + 1 \leq k$. We also have for all $s \geq \tau$, 
\begin{equation}
r(j_s) > \dpd{\pis, r} > r(j_s + 1) \geq r(k). \label{eq:ub_and_lb_of_loss}
\end{equation}
We further define that
\begin{align*}
    p_s, q_s \coloneq \begin{cases}
        j_s, j_s + 1 & \text{If } j_s + 1 = k \\
        j_s, j_s + 1 & \text{If } j_s + 1 < k \text{ and } r(j_s) - \dpd{\pis, r} < \dpd{\pis, r} - r(j_s + 1) \\
        j_s +1, j_s & \text{If } j_s + 1 < k \text{ and } r(j_s) - \dpd{\pis, r} \geq \dpd{\pis, r} - r(j_s + 1)
    \end{cases}
\end{align*}

This construction ensures that $p_s < k$. Following the initial bound of the progress term in the proof of~\cref{theorem:stochastic_linear_bandits}, we have,
\begin{align*}
&P_s(a^\star) - P_s(k) = \eta \, \sum_{i \in [K]} \dpd{x_i, x_{a^\star} - x_{k}} \, \pis(i) \, (r(i) - \dpd{\pis, r}) \\
&= \eta \, \sum_{i \in [K], i \neq p_s} \dpd{x_i - x_{p_s}, x_{a^\star} - x_{k}} \, \pis(i) \, (r(i) - \dpd{\pis, r}) \\
\tag{$\sum_{i\in [K]} \dpd{x_{p_s}, x_{a^\star} - x_{k}} \, \pis(i) \, (r(i) - \dpd{\pis, r}) = 0$} \\
&= \eta \, \Biggl[ \sum_{i=1}^{j_s - 1} \dpd{x_i - x_{p_s}, x_{a^\star} - x_{k}}\, \pis(i) \, (r(i) - \dpd{\pis, r}) \\
& \qquad + \sum_{i=j_s + 2}^K \, \dpd{x_{p_s} - x_i, x_{a^\star} - x_{k}} \, \pis(i) \, (\dpd{\pis, r} - r(i)) \\
& \qquad + \dpd{x_{q_s} - x_{p_s}, x_{a^\star} - x_{k}} \, \pis(q_s) \, (r(q_s) - \dpd{\pis, r}) \Biggr] \\
& \geq \eta \, \Biggl[ \sum_{i=1}^{j_s - 1} \underbrace{\dpd{x_i - x_{p_s}, x_{a^\star} - x_{k}}}_{ \substack{\geq 0 \text{ due to \cref{assumption:general_feature_conditions}} \\ \text{(since $i < p_s <  k$)}}} \, \pis(i) \, (r(i) - r(j_s)) \\
& \qquad + \sum_{i=j_s + 2}^K \, \underbrace{\dpd{x_{p_s} - x_i, x_{a^\star} - x_{k}}}_{ \substack{\geq 0 \text{ due to \cref{assumption:general_feature_conditions}} \\ \text{(since $p_s < i$  and  $p_s < k$)}}} \, \pis(i) \, (r(j_{s} + 1) - r(i)) \\
& \qquad + \dpd{x_{q_s} - x_{p_s}, x_{a^\star} - x_{k}} \, \pis(q_s) (r(q_s) - \dpd{\pis, r}) \Biggr] \tag{by \cref{eq:ub_and_lb_of_loss}}
\end{align*}

We will next lower bound $\dpd{x_{q_s} - x_{p_s}, x_{a^\star} - x_{k}} \, (r(q_s) - \dpd{\pis, r})$ by considering the following two cases.

\noindent \textit{Case I:} If $p_s = j_s$ and $q_s = j_s + 1$, then by~\cref{eq:ub_and_lb_of_loss}, $r(q_s) - \dpd{\pis, r} = r(j_s + 1) - \dpd{\pis, r} < 0$. Additionally, $\dpd{x_{q_s} - x_{p_s}, x_{a^\star} - x_{k}} = \dpd{x_{j_s + 1} - x_{j_s}, x_{a^\star} - x_{k}}\leq 0$ which is due to \cref{assumption:general_feature_conditions} since $j_s < j_s +1$ and $j_s <  k$.

\noindent \textit{Case II:} If $p_s = j_s + 1$ and $q_s = j_s$, then by~\cref{eq:ub_and_lb_of_loss}, $r(q_s) - \dpd{\pis, r} = r(j_s) - \dpd{\pis, r} > 0$. Similarly, $\dpd{x_{q_s} - x_{p_s}, x_{a^\star} - x_{k}} = \dpd{x_{j_s} - x_{j_s + 1}, x_{a^\star} - x_{k}}\geq 0$ which is due to \cref{assumption:general_feature_conditions} since $j_s < j_s +1$ and $j_s <  k$.

Therefore, we have $\dpd{x_{q_s} - x_{p_s}, x_{a^\star} - x_{k}} \, (r(q_s) - \dpd{\pis, r}) \geq 0$. Next, we will lower bound $\abs{r(q_s) - \dpd{\pis, r} }$ by considering the following two cases.

\noindent \textit{Case I}: If $j_s + 1 = k$, then $p_s = j_s$ and $q_s = j_s + 1 = k$. 
Given the assumption in the claim that $\eps \coloneq \inf_{t \geq \tau} \dpd{\pit, r} - r(k) > 0$, we have,
\begin{align*}
\abs{r(q_s) - \dpd{\pis, r}} &\geq \eps. \numberthis \label{eq:diff_of_loss_and_r_q_s_1}
\end{align*}

\noindent \textit{Case II}: If $j_s +1 < k$, by construction of $p_s$ and $q_s$ we have that $\dpd{\pis, r}$ is closer to $r(p_s)$ than $r(q_s)$. This implies that $\abs{\dpd{\pis, r} - r(p_s)} < \abs{\dpd{\pis, r} - r(q_s)}$. Combining this relation with the fact that $\abs{r(p_s) - \dpd{\pis, r}} + \abs{\dpd{\pis, r} - r(q_s)} = r(j_s) - r(j_s + 1)$, we get
\begin{align*}
\abs{\dpd{\pis, r} - r(q_s)} &> \frac{r(j_s) - r(j_s + 1)}{2}. \numberthis \label{eq:diff_of_loss_and_r_q_s_2}  
\end{align*}
By combining \cref{eq:diff_of_loss_and_r_q_s_1,eq:diff_of_loss_and_r_q_s_2}, we have,
\begin{align*}
\dpd{x_{q_s} - x_{p_s}, x_{a^\star} - x_{k}} \, (r(q_s) - \dpd{\pis, r}) &= \abs{\dpd{x_{q_s} - x_{p_s}, x_{a^\star} - x_{k}} \, (r(q_s) - \dpd{\pis, r})} \\
 &> \abs{\dpd{x_{q_s} - x_{p_s}, x_{a^\star} - x_{k}}} \, \min \left\{ \frac{r(j_s) - r(j_s + 1)}{2}, \eps \right\}.
\end{align*}
Continuing to lower bound the progress term,
\begin{align*}
\MoveEqLeft
P_s(a^\star) - P_s(k) \\
&> \eta \, \Biggl[ \sum_{i=1}^{j_s - 1} \underbrace{\dpd{x_i - x_{p_s}, x_{a^\star} - x_{k}}}_{ \geq 0}\, \pis(i) \, (r(i) - r(j_s)) \\
& \qquad + \sum_{i=j_s + 2}^K \, \underbrace{\dpd{x_{p_s} - x_i, x_{a^\star} - x_{k}}}_{ \geq 0} \, \pis(i) \, (r(j_{s} + 1) - r(i)) \\
& \qquad + \abs{\dpd{x_{q_s} - x_{p_s}, x_{a^\star} - x_{k}}} \, \pis(q_s) \, \min \left\{\frac{ r(j_s) - r(j_s + 1) }{2}, \eps \right\} \Biggr].
\end{align*}
We then define that
\begin{align*}
C_1 &\coloneq \min_{a_1, a_2 \in [K] \text{ s.t } \abs{ \dpd{x_{a_1} - x_{a_2}, x_{a^\star} - x_k} } > 0 }  \abs{ \dpd{x_{a_1} - x_{a_2}, x_{a^\star} - x_k} } > 0, \\
C_2 &\coloneq \min_{1 \leq a \leq K-1} r(a) - r(a+1) > 0, \\
C_3 &\coloneq \frac{C_1 \, \min \{ C_2, \eps \} }{2} > 0.
\end{align*}
Similar to~\cref{theorem:stochastic_linear_bandits}, we also define \[ \gX(j, k) \coloneq \{ i \in [K] \mid \abs{\dpd{x_i - x_j, x_{a^\star} - x_k}} > 0 \} \] as the set of actions that contribute to the progress.
Note that under~\cref{assumption:general_feature_conditions}, since $p_s < k$, we have,
\begin{align*}
\dpd{x_{p_s} - x_k, x_{a^\star} - x_k} > 0 &\implies k \in \gX(p_s, k) \\
\dpd{x_{p_s} - x_{p_s}, x_{a^\star} - x_k} = 0 &\implies p_s \not\in \gX(p_s, k)
\end{align*}
Using this definition, we continue to bound the progress term as follows.
\begin{align*}
P_s(a^\star) - P_s(k) &> \eta \, C_3 \left[\sum_{\substack{i \in \gX(p_s, k) \\ i < j_s}} \pis(i) + \sum_{\substack{i \in \gX(p_s, k) \\ i > j_s + 1}} \pis(i) + \indicator{q_s \in \gX(p_s, k)} \, \pis(x_{q_s}) \right] \\
&= \eta \, C_3 \, \sum_{\substack{i \in \gX(p_s, k) \\ i \neq p_s}} \pis(i) \tag{$q_s$ is equal to either $j_s$ or $j_s + 1$} \\
&= \eta \, C_3 \, \underbrace{\sum_{i \in \gX(p_s, k)} \pis(i) }_{\coloneq \Gamma_s} \tag{$p_s \not\in \gX_{k}(x_{p_s})$}
\end{align*}
By summing up the above inequality from $\tau$ to $t-1$, we get,
\begin{align*}
\sum_{s=\tau}^{t-1} \left[P_s(a^\star) - P_s(k)\right] &> \eta \, C_3 \, \sum_{s = \tau}^{t-1} \Gamma_s. \numberthis \label{eq:large_step_size_progress_lb} 
\end{align*}

Similarly to \cref{theorem:stochastic_linear_bandits}, we will next bound Term (ii), the cumulative noise. We will first prove some useful properties of $W_s(a)$ which will be used to bound Term (ii). According to~\cref{lemma:stochastic-noise-difference-bound}, we know that for $a^\star$ and $k$, if $y_{a^\star, k} \coloneq (X - \textbf{1} x_{p_s}^\top) (x_{a^\star} - x_{k})$, $\E_s[W_{s+1}(a^\star) - W_{s+1}(k)] = 0$, for all $s \geq 1$ and is bounded by
\begin{align*}
   \abs{W_{s+1}(a^\star) - W_{s+1}(k)} &\leq 4 \, \eta \, R_{\max} \, \norm{y_{a^\star, k}}_{1} \leq 4 \, \eta \, R_{\max} \, C_4,
\end{align*}
where $C_4 \coloneq \max_a \norm{y_{a^\star, k}}_{1} > 0$. Therefore, $\{ \abs{W_{s+1}(a^\star) - W_{s+1}(k)} \}_{s \geq 1}$ is a martingale difference sequence with respect to filtration $\{ \gF\}_{s \geq 1}$. Since it is bounded, it can be normalized to be in the range of $[0, \nicefrac{1}{2}]$. For this, define $\widetilde{W}_{s+1}(a^\star, k) \coloneq \frac{\abs{W_{s+1}(a^\star) - W_{s+1}(k)}}{8 \, \eta \, R_{\max} \, C_4}$. Additionally,
\begin{align*}
\Var[\widetilde{W}_{s+1}(a^\star, k)] &= \frac{\Var[\abs{W_{s+1}(a^\star) - W_{s+1}(k)}]}{(8 \, \eta \, R_{\max} \, C_4)^2} \\
&\leq \frac{2 \eta^2 \, R^2_{\max}}{(8 \, \eta \, R_{\max} \, C_4)^2} \,  \sum_{\substack{j \in [K] \\ j \neq p_s}} (\dpd{x_j - x_{p_s}, x_{a^\star} - x_k})^2 \, \pis(j) \, (1 - \pis(j)) \tag{by~\cref{lemma:stochastic-noise-difference-bound}}\\
&\leq \frac{2 \eta^2 \, R^2_{\max}}{(8 \, \eta \, R_{\max} \, C_4)^2} \,  \sum_{\substack{j \in [K] \\ j \neq p_s}} (\dpd{x_j - x_{p_s}, x_{a^\star} - x_k})^2 \, \pis(j) \tag{$1 - \pis(j) \leq 1$}
\intertext{Recall that $\gX(p_s, k) \coloneq \{i \in [K] \mid \abs{\dpd{x_i - x_{p_s}, x_{a^\star} - x_k}} > 0\}$ and $p_s \neq \gX(p_s, k)$. Set $C_5 \coloneq \max_{i \neq p_s, i \in \gX(p_s, k)} (\dpd{x_i - x_{p_s}, x_{a^\star} - x_k})^2$. Then,}
&\leq \frac{2 \eta^2 \, R^2_{\max} \, C_5}{(8 \, \eta \, R_{\max} \, C_4)^2} \sum_{j \in \gX(p_s, k)} \pis(j) \\
&\leq \frac{C_5}{32 \, C_4^2} \sum_{j \in \gX(p_s, k)} \pis(j)
\end{align*}
Recall that $\Gamma_s = \sum_{j \in \gX(p_s, k)} \pis(j)$. Set $C_6 \coloneq \frac{C_5}{32 \, C_4^2} > 0$. Then, we have,
\begin{align*}
\implies \Var[\widetilde{W}_{s+1}(a^\star, k)] &\leq C_6 \, \Gamma_s.
\end{align*}
Using the above inequality in combination with~\cref{lemma:martingale} for any $\delta \in (0, 1)$, there exists an event $\cE$ such that with probability $1 - \delta$, for all $s \geq \tau$,
\begin{align*}
\abs{\widetilde{W}_{s+1}(a^\star, k)} \leq & \, 6 \ \sqrt{ \left( C_6 \sum_{s=\tau}^t \Gamma_s + \frac{4}{3}\right) \, \log \left( \frac{ C_6 \sum_{s=\tau}^t \, \Gamma_s +1}{ \delta } \right) } + 2 \, \log\left(\frac{1}{\delta}\right) + \frac{4}{3} \log(3).
\end{align*}
Recall that $\widetilde{W}_{s+1}(a^\star, k) \coloneq \frac{\abs{W_{s+1}(a^\star) - W_{s+1}(k)}}{8 \, \eta \, R_{\max} \, C_4}$. Set $C_7 \coloneq 8 \, \eta \, R_{\max} \, C_4$. Then, we have,
\begin{align*}
\sum_{s=\tau}^t \abs{W_{s+1}(a^\star) - W_{s+1}(k)} \leq & \, 6 \, C_7  \, \sqrt{ \left( C_6 \sum_{s=\tau}^t  \Gamma_s + \frac{4}{3}\right) \, \log \left( \frac{ C_6 \sum_{s=\tau}^t \, \Gamma_s +1}{ \delta } \right) } \\ &+ 2 \, C_7  \, \log\left(\frac{1}{\delta}\right) + \frac{4 \, C_7}{3} \log(3). \numberthis \label{eq:large_step_size_noise_ub}
\end{align*}
Using the above results and combining it with \cref{eq:logit_difference_astar_and_ik}, we have,
\begin{align*}
&z_t(a^\star) - z_t(k) \\
=& \, z_{\tau}(a^\star) - z_{\tau}(k) + \sum_{s={\tau}}^{t-1} [P_s(a^\star) - P_s(k)] + \sum_{s=\tau}^{t} [W_{s+1}(a^\star) - W_{s+1}(k)] \\
\geq& \, z_{\tau}(a^\star) - z_{\tau}(k) + \sum_{s={\tau}}^{t-1} [P_s(a^\star) - P_s(k)] - \sum_{s=\tau}^{t} |W_{s+1}(a^\star) - W_{s+1}(k)| \tag{$\forall u, v \in \R$, $u - v \geq - \abs{u -v}$} \\
\intertext{Using \cref{eq:large_step_size_progress_lb} to lower-bound the progress term,}
\geq& \, z_{\tau}(a^\star) - z_{\tau}(k) + \eta \, C_3 \, \sum_{s={\tau}}^{t-1} \Gamma_s - \sum_{s=\tau}^{t} |W_{s+1}(a^\star) - W_{s+1}(k)|  \\
\intertext{Using \cref{eq:large_step_size_noise_ub} to lower-bound the noise term,}
\geq& z_{\tau}(a^\star) - z_{\tau}(k) + \eta \, C_3 \, \sum_{s={\tau}}^t \Gamma_s \\
&- 12 \, C_7 \sqrt{ (  C_6 \, \sum_{s=\tau}^t \Gamma_s + \frac{4}{3})\log \left( \frac{ C_6 \, \sum_{s=\tau}^t \Gamma_s + 1  }{ \delta } \right) } - 6 \, C_7  \log(\frac{1}{\delta}) - \frac{8 \, C_7}{3} \, \log 3 \numberthis \label{eq:lower_bound_logit_difference}
\end{align*}
Next, we analyze the limit of this lower bound as $t \to \infty$. We introduce the following definitions:
\begin{align*}
    \mathcal{P}(n) &\coloneq 12 \, C_7  \, \sqrt{ \left(  C_6 \, n + \frac{4}{3}\right) \, \log \left( \frac{ C_6 \, n +1}{ \delta } \right) } \\
    \mathcal{Q}(n) &\coloneq \eta \, C_3 \, n 
\end{align*}
Let us characterize the order complexity of the above expressions in terms on $n$, 
\begin{align*}
    \mathcal{P}(n) &\in \Theta(\sqrt{\log(n) \, n}),\\
    \mathcal{Q}(n) &\in \Theta(n).
\end{align*}
Additionally, we know that,  
\begin{align*}
    \lim_{n \rightarrow \infty} \frac{\mathcal{P}(n)}{\mathcal{Q}(n)} = \frac{\sqrt{\ln(n) \, n}}{n} &= 0 \implies \mathcal{P}(n)  \in o(\mathcal{Q}(n)).
\end{align*}
This implies $\mathcal{Q}(n)$ dominates $\mathcal{P}(n)$ as $n \to \infty$. Additionally, note that
\begin{align*}
\sum_{s=\tau}^\infty \Gamma_s  &= \sum_{s=\tau}^\infty \sum_{i \in \gX(p_s, k)} \pis(i) \\
&\geq \sum_{s=\tau}^\infty \pis(k) \tag{$k \in \gX(p_s, k)$} \\
&= \infty \tag{by~\cref{lemma:extended_borel_cantelli} and $k \in \gA_\infty$}.
\end{align*}
Using \cref{eq:claim_a_star_k_ratio}, we conclude that, with probability $1 - \delta$,
\begin{align*}
\sup_{t \geq \tau} \frac{\pit(a^\star)}{\pit(k)} = \infty.
\end{align*}

Recall that the above calculations are conditioned on the event $\cE_k$. Because $\sP (\cE_k \backslash (\cE_k \cap \cE)) \leq \sP (\Omega \backslash \cE) \leq \delta$ where $\Omega$ is the entire sample space, we have $\sP$-almost surely that for all $\omega \in \cE_k$, there exists a $\delta > 0$ such that $\omega \in \cE_k \cap \cE$, meaning that as $\delta \to 0$, the above equation holds almost surely given the event $\cE_k$.
\end{proof}

\begin{lemma} \label{lemma:a_star_in_A_infty}
\cref{alg:sto_spg} ensures that $N_\infty(a^*) = \infty$ almost surely.
\end{lemma}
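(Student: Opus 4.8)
The plan is to argue by contradiction, supposing $\sP(N_\infty(a^\star) < \infty) > 0$ and working throughout on this event. The first step is to record the consequences of $a^\star$ being sampled only finitely often: by the extended Borel--Cantelli lemma (\cref{lemma:extended_borel_cantelli}), $N_\infty(a^\star) < \infty$ forces $\sum_{t \geq 1} \pit(a^\star) < \infty$, and in particular $\pit(a^\star) \to 0$; the same reasoning gives $\pit(a) \to 0$ and $\sum_{t} \pit(a) < \infty$ for every action $a \notin \gA_\infty$. Next I would invoke \cref{lemma:2_arms_pulled_infinitely_many_times} to get $\abs{\gA_\infty} \geq 2$, and set $b \coloneq \argmax_{a \in \gA_\infty} r(a)$ and $c \coloneq \argmin_{a \in \gA_\infty} r(a)$, the best and worst infinitely-sampled actions. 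Since $a^\star \notin \gA_\infty$ we have $r(b) < r(a^\star)$ and $r(b) > r(c)$. Because the mass on all actions outside $\gA_\infty$ tends to $0$, a direct estimate on $\dpd{\pit, r} = \sum_a \pit(a)\, r(a)$ yields the sandwich $r(c) \leq \liminf_t \dpd{\pit, r} \leq \limsup_t \dpd{\pit, r} \leq r(b)$.

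The heart of the proof is to show that, despite $a^\star$ being sampled rarely, its logit eventually dominates that of the worst infinitely-sampled action $c$, i.e. $\lim_{t \to \infty} \bigl(z_t(a^\star) - z_t(c)\bigr) = +\infty$, equivalently $\lim_{t \to \infty} \pit(a^\star)/\pit(c) = \infty$. To this end I would apply the machinery already packaged in \cref{lemma:property_of_suboptimal_infinite_arms} with $k = c$: that lemma converts the hypothesis that the expected reward eventually stays strictly above $r(c)$ into divergence of the ratio, by showing that the cumulative progress $\sum_s [P_s(a^\star) - P_s(c)]$ in the decomposition \cref{eq:stochastic_logit_difference_decomposition} grows like $\eta \sum_s \Gamma_s$ with $\Gamma_s \geq \pis(c)$, while the cumulative noise is $o(\sum_s \Gamma_s)$ by the martingale concentration of \cref{lemma:martingale}; since $c \in \gA_\infty$ we have $\sum_s \pis(c) = \infty$, so the progress wins. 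The sign control is exactly where \cref{assumption:general_feature_conditions} enters, through the boundary-reference choice $p_s$ and the fact that $a^\star$ is globally optimal, which makes the key inner product $\dpd{x_{a^\star} - x_{p_s}, x_{a^\star} - x_c}$ strictly positive.

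To close, once $\lim_t \pit(a^\star)/\pit(c) = \infty$ is established there is a $T$ with $\pit(a^\star) \geq \pit(c)$ for all $t \geq T$, whence $\sum_{t \geq T} \pit(a^\star) \geq \sum_{t \geq T} \pit(c) = \infty$ (again using $c \in \gA_\infty$ and \cref{lemma:extended_borel_cantelli}). This contradicts $\sum_t \pit(a^\star) < \infty$, so $\sP(N_\infty(a^\star) < \infty) = 0$ and $N_\infty(a^\star) = \infty$ almost surely.

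I expect the main obstacle to be verifying the hypothesis of \cref{lemma:property_of_suboptimal_infinite_arms} for $c$, namely exhibiting a $\tau$ with $\inf_{t \geq \tau} \dpd{\pit, r} - r(c) > 0$. The sandwich above only gives $\liminf_t \dpd{\pit, r} \geq r(c)$, and with an arbitrary (non-monotone) learning rate the expected reward may oscillate, so ruling out that it touches $r(c)$ infinitely often is delicate. This is the step that must lean on a dedicated bootstrapping argument, applying the same progress-dominates-noise estimate to separate the logit of $b$ from that of $c$ whenever $\dpd{\pit, r}$ dips toward $r(c)$, rather than on the monotonic improvement of \cref{lemma:monotonicity_for_stochastic_bandits}, which is available only in the small-learning-rate regime.
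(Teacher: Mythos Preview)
Your approach is essentially the paper's: argue by contradiction, work with the worst infinitely-sampled arm $c$ (the paper calls it $i_1$), and re-use the progress-vs-noise estimates packaged in \cref{lemma:property_of_suboptimal_infinite_arms} with $k = c$; the paper's contradiction closure via \cref{lemma:infinite_i_over_finite_j} is just the contrapositive of your summation $\sum_t \pit(a^\star) \geq \sum_t \pit(c) = \infty$. The obstacle you flag is exactly what the paper isolates into a separate \cref{lemma:pit_r_>_r_i_1}, and its proof is the ratio-comparison bootstrapping you anticipate: since every action $a$ with $r(a) < r(c)$ lies outside $\gA_\infty$, \cref{lemma:infinite_i_over_finite_j} forces $\pit(a)/\pit(b) \to 0$, and factoring $\pit(b)$ out of $\dpd{\pit,r} - r(c)$ yields strict positivity for all large $t$.
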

\begin{proof}
We will prove this by contradiction. Suppose that $N_\infty (a^*) < \infty$. Define that $i_1 \coloneq \argmin_{a \in [K] s.t. N_\infty(a) = \infty} r(a)$.  We will look into the ratio of $\frac{\pit(i_1)}{\pit(a^*)}$ and show that $\sup_{t \geq 1} \frac{\pit(i_1)}{\pit(a^*)} < \infty$.
According to~\cref{lemma:pit_r_>_r_i_1}, there exists a large enough $\tau$ such that for all $t \geq \tau$, $\dpd{\pis, r} > r(i_1)$.
Using the decomposition of the stochastic process in~\cref{subsec:decomposition_of_stochastic_process}, setting $a_1 = i_1$ and $a_2 = a^*$ and recursing until $t = \tau$, we have,
\begin{align}
z_t(i_1) - z_t(a^*) = z_\tau(i_1) - z_\tau(a^*) + \underbrace{\sum_{s=\tau}^{t-1} \left[P_s(i_1) - P_s(a^*)\right]}_{\text{(i)}} + \underbrace{\sum_{s=\tau}^{t} \left[ W_{s+1}(i_1) - W_{s+1}(a^*)\right]}_{\text{(ii)}}.
\numberthis \label{eq:logit_difference_i_1_and_astar}
\end{align}
We first investigate Term (i), which is the cumulative progress. Using \cref{eq:large_step_size_progress_lb} from \cref{lemma:property_of_suboptimal_infinite_arms} and setting $k = i_1$, we have,
\begin{align*}
\sum_{s=\tau}^{t-1} \left[P_s(a^*) - P_s(i_1)\right] &> \eta \, C_3 \, \sum_{s = \tau}^{t-1} \Gamma_s \\
\implies \sum_{s=\tau}^{t-1} \left[P_s(i_1) - P_s(a^*)\right] &< - \eta \, C_3 \, \sum_{s = \tau}^{t-1} \Gamma_s. \numberthis \label{eq:lem_large_step_size_progress_lb}
\end{align*}

We will next bound Term (ii), the cumulative noise. Similarly, using \cref{eq:large_step_size_noise_ub} from \cref{lemma:property_of_suboptimal_infinite_arms} and setting $k = i_1$, 
for any $\delta \in (0, 1)$, there exists an event $\cE$ such that with probability $1 - \delta$,
\begin{align*}
\sum_{s=\tau}^t \abs{W_{s+1}(i_1) - W_{s+1}(a^*)} \leq & \, 6 \, C_7  \, \sqrt{ \left( C_6 \sum_{s=\tau}^t  \Gamma_s + \frac{4}{3}\right) \, \log \left( \frac{ C_6 \sum_{s=\tau}^t \, \Gamma_s +1}{ \delta } \right) } \\ &+ 2 \, C_7  \, \log\left(\frac{1}{\delta}\right) + \frac{4 \, C_7}{3} \log(3). \numberthis \label{eq:lem_large_step_size_noise_ub}
\end{align*}
Using the above results and combining it with into \cref{eq:logit_difference_i_1_and_astar}, we have,
\begin{align*}
&z_t(i_1) - z_t(a^*) \\
=& \, z_{\tau}(i_1) - z_{\tau}(a^*) + \sum_{s={\tau}}^{t-1} [P_s(i_1) - P_s(a^*)] + \sum_{s=\tau}^{t} [W_{s+1}(i_1) - W_{s+1}(a^*)] \\
\leq& \, z_{\tau}(i_1) - z_{\tau}(a^*) + \sum_{s={\tau}}^{t-1} [P_s(i_1) - P_s(a^*)] + \sum_{s=\tau}^{t} \abs{W_{s+1}(i_1) - W_{s+1}(a^*)} \\
\leq& \, z_{\tau}(i_1) - z_{\tau}(a^*) - \eta \, C_3 \, \sum_{s={\tau}}^{t-1} \Gamma_s + \sum_{s=\tau}^{t} \abs{W_{s+1}(i_1) - W_{s+1}(a^*)} \tag{using~\cref{eq:lem_large_step_size_progress_lb} to upper bound the progress term} \\
\leq& z_{\tau}(i_1) - z_{\tau}(a^*) - \eta \, C_3 \, \sum_{s={\tau}}^{t-1} \Gamma_s \\
& + 12 \, C_7 \sqrt{ (  C_6 \, \sum_{s=\tau}^t \Gamma_s + \frac{4}{3})\log \left( \frac{ C_6 \, \sum_{s=\tau}^t \Gamma_s + 1  }{ \delta } \right) } + 6 \, C_7  \log(\frac{1}{\delta}) + \frac{8 \, C_7}{3} \, \log 3. \tag{using~\cref{eq:lem_large_step_size_noise_ub} to upper bound the noise term}
\end{align*}
Note that $\sum_{s=\tau}^\infty \Gamma_s < \infty$ or $\sum_{s=\tau}^\infty \Gamma_s = \infty$. In ether case, following the same complexity argument in \cref{lemma:property_of_suboptimal_infinite_arms}, we have $\lim_{t \to \infty} z_t(i_1) - z_t(a^*) < \infty$. Then, we have, with probability $1 - \delta$,
\[\sup_{t \geq 1} \frac{\pit(i_1)}{\pit(a^*)} < \infty.\]
Since $N_\infty(a^*) < \infty$ and $N_\infty(i_1) = \infty$, according to \cref{lemma:infinite_i_over_finite_j}, we have, almost surely,
\[\sup_{t \geq 1} \frac{\pit(i_1)}{\pit(a^*)} = \infty, \] which leads to a contradiction. 
Therefore, with probability $1 - \delta$, $N_\infty(a^*) = \infty$ and thus $a^* \in \gA_\infty$. As $\delta \to 0$, we have $a^* \in \gA_\infty$ almost surely.
\end{proof}

\stochasticlinearbanditswitharbitrarylearningrates*
\begin{proof}
To start, we introduce the following definitions. We define $N_t(a)$ as the number of times action $a$ has been sampled until iteration $t$ and $N_\infty(a) \coloneq \lim_{t \to \infty} N_t (a)$. We further define $\gA_\infty$ as the set of actions that are sampled infinitely many times as $t \to \infty$, i.e.,
\begin{align*}
\gA_\infty \coloneq \{ a \in [K] \mid N_\infty(a) = \infty \}.
\end{align*}

According to~\cref{lemma:2_arms_pulled_infinitely_many_times}, we have, almost surely, $\abs{\gA_\infty} \geq 2$. Moreover, according to \cref{lemma:a_star_in_A_infty}, $a^\star \in \gA_\infty$ almost surely.
We will then prove that $\lim_{t \to \infty} \pit(a^\star) = 1$ almost surely by showing
\begin{equation*}
\forall a \neq a^\star, \sup_{t \geq 1} \frac{\pit(a^\star)}{\pit(a)} = \infty.
\end{equation*}
Firstly, \cref{lemma:infinite_i_over_finite_j} has already shown that 
\begin{equation*}
\forall a \notin \gA_\infty, \sup_{t \geq 1} \frac{\pit(a^\star)}{\pit(a)} = \infty. \numberthis \label{eq:ratio_of_infinite_over_finite}
\end{equation*}
Therefore, it suffices to show that $\sup_{t \geq 1} \frac{\pis(a^\star)}{\pis(a)} = \infty$ is also almost surely true for all $a \in \gA_\infty - \{a^\star\}$.
To start, we first sort the action indices in $\gA_\infty$ such that,
\begin{equation*}
r(a^\star) = r(i_{\abs{\gA_\infty}}) > r(i_{\abs{\gA_\infty} -1 }) > \cdots > r(i_2) > r(i_1).
\end{equation*}
We also define the event $\cE_k \coloneq \{ N_\infty(k) = \infty \text{ and } \exists \, \tau \geq 1 \, s.t. \, \inf_{t \geq \tau} \dpd{\pit, r} - r(k) > 0 \}$ for some suboptimal action $k \neq a^\star$. We then show by strong induction that for $m \in \{1, 2, \dots, \abs{\gA_\infty} -2, \abs{\gA_\infty} -1\}$,
\begin{equation}
\sup_{t \geq 1} \frac{\pis(a^\star)}{\pis(i_m)} = \infty
\end{equation}

\noindent \textit{Base Case}: When $m = 1$, according to~\cref{lemma:pit_r_>_r_i_1}, there exists a large enough $\tau_1$ such that $\dpd{\pit, r} > r(i_1)$ for all $t \geq \tau_1$, which implies $\cE_{i_1}$ holds. Hence, according to \cref{lemma:property_of_suboptimal_infinite_arms}, $\sup_{t \geq 1} \frac{\pit(a^\star)}{\pit(i_1)} = \infty$ almost surely. \\

\noindent \textit{Induction Hypothesis}: Given a $m \in [1, |\gA_\infty| - 1)$, assume that $\sup_{t \geq 1} \frac{\pit(a^\star)}{\pit(i_{m^\prime})} = \infty$ is almost surely true for all $m^\prime \leq m$. \\

We will then show it is also almost surely true for $m+1$.

\noindent \textit{Inductive Step}: 
Combining the inductive hypothesis and~\cref{eq:ratio_of_infinite_over_finite}, we have, almost surely,
\begin{align*}
\forall a > i_{m+1}, \ \sup_{t \geq 1} \frac{\pit(a^\star)}{\pit(a)} = \infty \implies \lim_{t \to \infty} \frac{\pit(a)}{\pit(a^\star)} = 0. \numberthis \label{eq:lim_ratio_of_finite_arms_over_infinite_arms}
\end{align*}
Given that, we will show that there exists a large enough $\tau_{m+1} \geq 1$ such that $\dpd{\pit, r} > r(i_{m+1})$ for all $t > \tau_{m+1}$.
\begin{align*}
\MoveEqLeft
r(i_{m+1}) - \dpd{\pit, r} \\
= \, & \sum_{a=1, a \neq i_{m+1}}^K \pit(i) \, (r(i_{m+1}) - r(a)) \\
= \, & \sum_{a = 1}^{i_{m+1} - 1} \pit(a) \, \underbrace{(r(i_{m+1}) - r(a))}_{< 0} - \sum_{a = i_{m+1}+ 1}^K \pit(a) \, \underbrace{(r(a) - r(i_{m+1}))}_{<0} \\
< \, & \pit(a^\star) \, (r(i_{m+1}) - r(a^\star)) - \sum_{a = i_{m+1}+ 1}^K \pit(a) \, (r(a) - r(i_{m+1})) \\
= \, & \pit(a^\star) \, \underbrace{(r(i_{m+1}) - r(a^\star))}_{ < 0} \left[ 1 - \sum_{a = i_{k+1}+ 1}^K \underbrace{\frac{\pit(a)}{\pit(a^\star)}}_{\to 0 \text{ due to \cref{eq:lim_ratio_of_finite_arms_over_infinite_arms}}} \, \underbrace{\frac{r(i_{m+1}) - r(a)}{r(a^\star) - r(i_{m+1})}}_{> 0} \right] \\
< \, & 0 \tag{for large enough $t \geq \tau_{m+1}$}
\end{align*}
Therefore, we have $ \inf_{t \geq \tau_{m + 1}} \dpd{\pit, r} - r(i_{m+1}) > 0$. By setting $\tau = \max_{m^\prime \in [1, m+1]} \tau_{m^\prime}$, we know $\cE_{i_{m+1}}$ holds under the inductive hypothesis. Hence, using \cref{lemma:property_of_suboptimal_infinite_arms}, we have $\sup_{t \geq 1} \frac{\pit(a^\star)}{\pit(i_{m+1})} = \infty$ almost surely, which completes the inductive proof. This implies:
\begin{equation*}
\forall a \in \gA_\infty - \{a^\star\}, \ \sup_{t \geq 1} \frac{\pit(a^\star)}{\pit(a)} = \infty.
\end{equation*}
Combining the above result with~\cref{eq:ratio_of_infinite_over_finite}, we have, almost surely, 
\begin{equation*}
\forall a \in [K] - \{a^\star\}, \ \sup_{t \geq 1} \frac{\pit(a^\star)}{\pit(a)} = \infty \implies \ \lim_{t \to \infty} \frac{\pit(a)}{\pit(a^\star)} = 0.
\end{equation*}
Finally, we have,
\begin{align*}
\lim_{t \to \infty} \pit(a^\star) = \lim_{t \to \infty} \frac{\pit(a^\star)}{\sum_{a \in [K]} \pit(a)} = \frac{1}{1 + \sum_{a \neq a^\star} \lim_{t \to \infty} \frac{\pit(a)}{\pit(a^\star)} } = 1,
\end{align*}
which completes the proof.
\end{proof}

\subsection{Rate of Convergence}
\label{subsec:asymptotic_convergence_rate}

\asymptoticconvergencerate*
\begin{proof}
To start, we will show that there exists a large enough $\tau > 0$ and $C > 0$ such that for any action $k \neq a^\star$ and all $t \geq \tau$,
\begin{equation*}
    \pit(k) < \exp\parens*{-C \sum_{s=\tau}^{t-1} \pis(k)}.
\end{equation*}
From the proof of~\cref{theorem:stochastic_linear_bandits_with_arbitrary_learning_rates} there exists a $\tau \geq 1$ such that for any $k \in \cA_{\infty} - \{a^\star\}$, \begin{equation*}
   \sup_{t \geq \tau}  \frac{\pit(a^\star)}{\pit(k)} = \infty.
\end{equation*}
Additionally, using such $\tau$ for any $k \in \cA_\infty - \{a^\star\}$ and all $t \geq \tau$, $\dpd{\pit, r} - \pis(k) > 0$. 
Following the proof of~\cref{lemma:property_of_suboptimal_infinite_arms}, according to \cref{eq:lower_bound_logit_difference}, we have,  \begin{align*}
   z_t(a^\star) - z_t(k) \geq& z_{\tau}(a^\star) - z_{\tau}(k) + \eta \, C_3 \, \sum_{s={\tau}}^t \Gamma_s \\
&- 12 \, C_7 \sqrt{ (  C_6 \, \sum_{s=\tau}^t \Gamma_s + \frac{4}{3})\log \left( \frac{ C_6 \, \sum_{s=\tau}^t \Gamma_s + 1  }{ \delta } \right) } - 6 \, C_7  \log(\frac{1}{\delta}) - \frac{8 \, C_7}{3} \, \log 3.
\end{align*}
Additionally, we know that $z_t(a^\star) - z_t(k) \to \infty$ as $t \to \infty$, since term $\eta \, C_3 \sum_{s=\tau}^t \Gamma_s$ dominates the other terms. 
Hence, for all $k \in \cA_\infty - \{a^\star\}$, there exists a constant $C_k > 0$ and a large enough $\tau_k \geq 1$ such that for all $t > \tau_k$, we have,
\begin{align*}
z_t(a^\star) - z_t(k) &\geq C_k \, \sum_{s=\tau_k}^t \Gamma_s,
\end{align*}
which implies
\begin{align*}
\frac{\pit(a^\star)}{\pit(k)} &\geq \exp(C_k \, \sum_{s=\tau_k}^t \Gamma_s) \\
&\geq \exp(C_k \, \sum_{s=\tau_k}^t \sum_{i \in \gX(p_s, k)} \pis(i)) \tag{$\Gamma_s = \sum_{i \in \gX(p_s, k)} \pis(i)$} \\
&\geq \exp(C_k \, \sum_{s=\tau_k}^t \pis(k)) \tag{$k \in \gX(p_s, k)$} \\
&> \exp(C_k \, \sum_{s=\tau_k}^{t-1} \pis(k)). \tag{$\pit(k) > 0$}
\end{align*}
On the other hand, we consider when $k \not \in \gA_\infty$. Since $\lim_{t \to \infty} \frac{\pit(a^\star)}{\pit(k)} = \infty$ due to \cref{lemma:infinite_i_over_finite_j} and $\lim_{t \to \infty} \sum_{s=1}^t \pis(k) < \infty$ due to \cref{lemma:extended_borel_cantelli}, the above inequality stands for all $k \not \in \gA_\infty$ as well. Therefore, by defining that $C = \min_{k \neq a^\star} C_k$ and $\tau = \max_{k \neq a^\star} \tau_k$, for all $k \neq a^\star$
\begin{align*}
&\frac{\pit(a^\star)}{\pit(k)} > \exp(C \, \sum_{s=\tau}^{t-1} \pis(k)) \\
\implies &\frac{\pit(k)}{\pit(a^\star)} < \exp(- C \, \sum_{s=\tau}^{t-1} \pis(k)) \\
\implies &\pit(k) < \exp(- C \, \sum_{s=\tau}^{t-1} \pis(k)). \tag{$\pit(a^\star) \leq 1$}
\end{align*}
Therefore, we have,
\begin{align*}
\sum_{s=\tau}^t \pis(k) - \sum_{s=\tau}^{t-1} \pis(k) < \exp(- C \, \sum_{s=\tau}^{t-1} \pis(k))
\end{align*}
Using \cref{lemma:sequence_equality,lemma:sequence_inequality} with $x_n = \sum_{s = \tau}^{\tau + n} \pis(k)$, $y_0 = \max \{ x_0, 1\} = 1$, and $A = C$, we have,
\begin{align*}
\sum_{s=\tau}^t \pis(k) &\leq \frac{1}{C} \ln \left( C \, t + e^C  \right) + \frac{\pi^2}{6 \, C} \\
\implies \sum_{s=\tau}^t (1 - \pis(a^\star)) &= \sum_{k \neq a^\star} \sum_{s=\tau}^t \pis(k) \\
&\leq \frac{K-1}{C} \ln \left( C \, t + e^C  \right) + \frac{\pi^2 \, (K-1)}{6 \, C}
\end{align*}
Finally, the sub-optimality gap can be expressed as:
\begin{align*}
r(a^\star) - \dpd{\pis, r} &= \sum_{a \neq a^\star} \pis(a) (r(a^\star) - r(a)) \\
&\leq 2 R_{\max} \, (1 - \pis(a^\star)).
\end{align*}
Averaging the sub-optimality gap from $s = \tau$ to $T$, we finally have,
\begin{align*}
\frac{\sum_{s=\tau}^T r(a^\star) - \dpd{\pis, r}}{T - \tau} &\leq \frac{2 R_{\max} \sum_{s=\tau}^T (1 - \pis(a^\star)) }{T - \tau} \\
&\leq \frac{2 R_{\max} \left[ \frac{K-1}{C} \ln \left( C \, t + e^C  \right) + \frac{\pi^2 \, (K-1)}{6 \, C} \right] }{T - \tau},
\end{align*}
which completes the proof.
\end{proof}

\subsection{Additional Lemmas}

\begin{lemma} \label{lemma:2_arms_pulled_infinitely_many_times}
\cref{alg:sto_spg} with any constant learning rate $\eta > 0$ ensures that there exists at least a pair of two distinct actions $i, j \in [K]$ and $i \neq j$, such that, almost surely,
\begin{align*}
N_\infty (i) = \infty \text{ and } N_\infty (j) = \infty.
\end{align*}
\end{lemma}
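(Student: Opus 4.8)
The plan is to show that the set $\gA_\infty$ of actions sampled infinitely often contains at least two elements almost surely. Since there are only $K$ actions and the algorithm runs for infinitely many iterations, a pigeonhole argument gives that $\gA_\infty$ is non-empty, so it suffices to rule out the possibility that $\gA_\infty$ is a singleton. I would proceed by contradiction: fix an action $a$ and work on the event $\{\gA_\infty = \{a\}\}$, on which $N_\infty(a) = \infty$ while every $b \neq a$ has $N_\infty(b) < \infty$. By the extended Borel--Cantelli lemma (\cref{lemma:extended_borel_cantelli}), the event $\{N_\infty(b) < \infty\}$ agrees up to null sets with $\{\sum_{t} \pit(b) < \infty\}$, so on this event $\sum_t \pit(b) < \infty$ for every $b \ne a$, and hence $\sum_{t} (1 - \pit(a)) = \sum_t \sum_{b \ne a} \pit(b) < \infty$. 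In particular $\pit(a) \to 1$ and $\pit(b) \to 0$ for all $b\neq a$.

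The key step is then to track the logit gap $z_t(a) - z_t(b)$ for a fixed $b \neq a$ and derive two incompatible conclusions. On one hand, $\pit(a) \to 1$ and $\pit(b) \to 0$ force $\pit(b)/\pit(a) = \exp(z_t(b) - z_t(a)) \to 0$, so $z_t(a) - z_t(b) \to +\infty$. On the other hand, all but finitely many samples after some finite (random) time $\tau$ equal $a$, so I can compute the per-step increment of the gap exactly. Using that the update of \cref{alg:sto_spg} reads $\thtt - \tht = \eta\, X^\top g_t$ with $g_t \coloneq (\mathrm{diag}(\pit) - \pit\pit^\top)\hat{r}_t$ the logit-space importance-weighted gradient, one has $\ztt - z_t = \eta\, XX^\top g_t$ and therefore the gap increment collapses to $\eta\,\dpd{X(x_a - x_b), g_t}$. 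Evaluating $g_t$ when $a$ is the sampled action (so $\hat r_t(a) = R_t/\pit(a)$ and $\hat r_t(e)=0$ otherwise), a direct calculation gives, for all $t \ge \tau$,
\begin{align*}
\bigl(z_{t+1}(a) - z_{t+1}(b)\bigr) - \bigl(z_t(a) - z_t(b)\bigr) = \eta \, R_t \, \bigl(y(a) - \dpd{y, \pit}\bigr), \qquad y \coloneq X(x_a - x_b).
\end{align*}
Since $y(a) - \dpd{y, \pit} = \sum_{e \neq a} \pit(e)\,(y(a) - y(e))$, the magnitude of this increment is at most $\eta\, R_{\max} \, M \, (1 - \pit(a))$ with $M \coloneq \max_e |y(a) - y(e)|$. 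The finitely many steps before $\tau$ contribute a bounded total change, and the summability $\sum_t (1 - \pit(a)) < \infty$ makes the remaining increments absolutely summable, so $z_t(a) - z_t(b)$ converges to a finite limit. This contradicts $z_t(a) - z_t(b) \to +\infty$; hence $\{\gA_\infty = \{a\}\}$ has probability zero, and taking a union over the finitely many choices of $a$ shows $\abs{\gA_\infty} \geq 2$ almost surely.

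I expect the main obstacle to be the exact bookkeeping of the logit increment under the linear-function-approximation preconditioning: the update acts in $\theta$-space and must be mapped back through $X X^\top$, so the clean pairwise identity above hinges on the collapse $\eta\dpd{x_a - x_b, X^\top g_t} = \eta\dpd{X(x_a - x_b), g_t}$ and on the centering identity $\dpd{y, g_t} = R_t\,(y(a) - \dpd{y,\pit})$ when $a$ is sampled. Care is also needed in the interplay of the almost-sure statements: the passage from ``$b$ sampled finitely often'' to ``$\sum_t \pit(b) < \infty$'' holds only up to null sets, which is why I phrase the entire argument on the event $\{\gA_\infty = \{a\}\}$ for each fixed $a$ and conclude by a finite union bound rather than manipulating a single pathwise trajectory.
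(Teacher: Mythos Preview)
Your argument is correct and follows the same overall arc as the paper: pigeonhole to get one infinitely-sampled action, contradiction via extended Borel--Cantelli to get $\sum_t (1-\pit(a))<\infty$, and then a bounded-logit computation that clashes with the required one-hot convergence. The one notable difference is in how the contradiction is closed. The paper bounds every individual logit $z_t(\tilde a)$ over all time by splitting the update into ``$i$ was sampled'' versus ``some $j\neq i$ was sampled'' and using both $\sum_t\sum_{j\neq i}\pit(j)<\infty$ and $\sum_{j\neq i} N_\infty(j)<\infty$; this forces $\inf_t \pit(\tilde a)>0$ for all $\tilde a$, whence Borel--Cantelli gives $N_\infty(\tilde a)=\infty$ for every action. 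You instead pass to the random time $\tau$ after which only $a$ is ever sampled, obtain the clean one-line increment $\eta R_t(y(a)-\dpd{y,\pit})$ for the single gap $z_t(a)-z_t(b)$, and contrast its absolute summability with the divergence $z_t(a)-z_t(b)\to+\infty$ forced by $\pit(b)/\pit(a)\to 0$. Your route is a little more direct (one pair instead of all logits, and the post-$\tau$ restriction removes the case split), while the paper's version yields the stronger intermediate conclusion that all probabilities stay bounded away from zero; both reach the same contradiction.
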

\begin{proof}
By the pigeonhole principle, there exists at least one action $i \in [K]$, such that, almost surely,
\begin{align*}
N_\infty(i) \coloneq \lim_{t \rightarrow \infty} N_t(i) = \infty.
\end{align*}
We argue the existence of another action by contradiction. Suppose for all the other actions $j \in [K]$ and $j \neq i$, we have $N_\infty(j) < \infty$. According to~\cref{lemma:extended_borel_cantelli}, for all $j \neq i$, we have, almost surely,
\begin{align*}
    \sum_{t=1}^\infty \pit(j) \coloneq \lim_{t \to \infty} \sum_{s=1}^t \pis(j) < \infty.
\end{align*}
Recall that from~\cref{alg:sto_spg}, we have the following update:
\begin{align*}
    \thetatt &= \thetat + \eta \, X^\top (\text{diag}(\pitheta) - \pitheta \, \pitheta^\top) \hat{r}_t \\ \implies z_{t+1} &= z_t + \eta \, X X^\top (\text{diag}(\pitheta) - \pitheta \, \pitheta^\top) \hat{r}_t. 
\end{align*}
Then, for any action $\tilde{a} \in [K]$, 
\begin{align*}
    z_{t+1}(\tilde{a}) &= z_t(\tilde{a}) + \eta \, \sum_{a=1}^K \dpd{x_{\tilde{a}}, x_a} \, \pit(a) \, [\hat{r}(a) - \dpd{\pit, \hat{r}}]   \\
    &= z_t(\tilde{a}) + \eta \, \left[ \sum_{a=1}^K I_t(a) \parens*{ \dpd{x_{\tilde{a}}, x_a} \, (1 - \pit(a)) \, R_t - \sum_{j \neq a} \dpd{x_{\tilde{a}}, x_j} \, \pit(j) \, R_t } \right] \\
    &= z_t(\tilde{a}) + \eta \, \left[I_t(i) \parens*{ \dpd{x_{\tilde{a}}, x_i} \, (1 - \pit(i)) \, R_t - \sum_{j \neq i} \dpd{x_{\tilde{a}}, x_j} \, \pit(j) \, R_t } \right. \\
    & \quad + \left. \sum_{\substack{a=1 \\ a \neq i}}^K I_t(a) \parens*{ \dpd{x_{\tilde{a}}, x_a} \, (1 - \pit(a)) \, R_t - \sum_{j \neq a} \dpd{x_{\tilde{a}}, x_j} \, \pit(j) \, R_t } \right].
\end{align*}
Recursing the above equation from $1$ to $t-1$, and using the triangle inequality, we have,
\begin{align*}
\MoveEqLeft
    \abs{z_{t}(\tilde{a}) - z_1(\tilde{a})} \\ 
    &\leq \eta \, \sum_{s=1}^{t-1} \abs{I_t(i) 
    \parens*{\dpd{x_{\tilde{a}}, x_i} \, (1 - \pit(i)) \, R_t - \sum_{j \neq i} \dpd{x_{\tilde{a}}, x_j} \, \pit(j) \, R_t }} \\
    & \quad + \eta \, \sum_{s=1}^{t-1} \, \abs{\sum_{\substack{a=1 \\ a \neq i}}^K I_t(a) \parens*{ \dpd{x_{\tilde{a}}, x_a} \, (1 - \pit(a)) \, R_t - \sum_{j \neq a} \dpd{x_{\tilde{a}}, x_j} \, \pit(j) \, R_t }} \\
    \intertext{Set $C \coloneq \max_{a, a^\prime} \abs{\dpd{x_a, x_{a^\prime}}}$. Since $\abs{R_t} \leq R_{\max}$ and using triangle inequality, we have,}
    &\leq \eta \, R_{\max} \, C \, \sum_{s=1}^{t-1} \left[ I_s(i)  \parens*{(1 - \pis(i)) + \sum_{j \neq i} \pis(j)} + \sum_{\substack{a=1 \\ a \neq i}}^K I_s(a) \parens*{(1 - \pis(a)) + \sum_{j \neq a} \pis(j)}  \right]  \\
    &= 2 \, \eta \, R_{\text{max}} \, C \sum_{s=1}^{t-1} \left[I_s(i) \sum_{j \neq i} \pis(j) +  \sum_{\substack{a=1 \\ a \neq i}}^K I_s(a) \sum_{j \neq a} \pis(a) \right] \\
    &\leq 2 \eta \, R_{\text{max}} \, C \, \sum_{s=1}^{t-1} \left[\sum_{j \neq i} \pis(j) +  (K - 1) \, \sum_{\substack{a=1 \\ a \neq i}}^K I_s(a)\right] \\
    &= 2 \, \eta \, R_{\text{max}} \, C \left[ \sum_{j \neq i} \sum_{s=1}^{t-1} \pis(j) + (K - 1) \, \sum_{\substack{a=1 \\ a \neq i}}^K \sum_{s=1}^{t-1} I_s(a) \right] \\
    &= 2 \, \eta \, R_{\text{max}} \, C \left[ \sum_{j \neq i} \sum_{s=1}^{t-1} \pis(j) + (K - 1) \, \sum_{\substack{a=1 \\ a \neq i}}^K N_{t-1}(a) \right].
\end{align*}
From the assumption that $N_\infty(j) < \infty$, for any action $\tilde{a} \in [K]$, almost surely,
\begin{equation*}
    \sup_{t \geq 1} \abs{z_t(\tilde{a})} \leq \sup_{t \geq 1} \abs{z_t(\tilde{a}) - z_1(\tilde{a})} + \abs{z_1(\tilde{a})} < \infty. 
\end{equation*}
Since for all actions $\tilde{a} \in [K]$, the logit is always finite, there exists a finite constant $c_{\tilde{a}} \geq 0$, such that,
\begin{align*}
\inf_{t \geq 1} \pit(\tilde{a}) &= \inf_{t \geq 1} \frac{\exp(z_t (\tilde{a}))}{\sum_{a^\prime \in [K]} \exp( z_t (a^\prime))} \geq c_{\tilde{a}} > 0 \\
\implies \sum_{t=1}^\infty \pit(\tilde{a}) &= \lim_{t \to \infty} \sum_{s=1}^t \pis(a) \geq \lim_{t \to \infty} t \, c_{\tilde{a}} = \infty. \end{align*}
According to~\cref{lemma:extended_borel_cantelli}, we have, almost surely, for all $\tilde{a} \in [K]$, $N_\infty(\tilde{a}) = \infty$, which contradicts the assumption that $N_\infty(j) < \infty$ for all $j \neq i$. Therefore, there exists another action $j \neq i$ such that $N_\infty(j) = \infty$.
\end{proof}

\begin{lemma}
\label{lemma:infinite_i_over_finite_j}
U\cref{alg:sto_spg}, for any two different actions $i, j \in [K]$ with $i \neq j$, if $N_\infty (i) = \infty$ and $N_\infty (j) < \infty$, then we have, almost surely,
\begin{align*}
\sup_{t \geq 1} \frac{\pit(i)}{\pit(j)} = \infty.
\end{align*}
\end{lemma}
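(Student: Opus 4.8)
The plan is to argue by contradiction, translating the two hypotheses about the sampling counts $N_\infty$ into summability statements about the policy probabilities, and then deriving an immediate contradiction. The key tool is the extended Borel--Cantelli lemma (\cref{lemma:extended_borel_cantelli}): since at each iteration $\pit(a) = \sP(a_t = a \mid \gF_t)$ is exactly the conditional probability of sampling action $a$, this lemma identifies the event $\{N_\infty(a) = \infty\}$ with $\{\sum_{t=1}^\infty \pit(a) = \infty\}$ up to a null set (and symmetrically $\{N_\infty(a) < \infty\}$ with $\{\sum_{t=1}^\infty \pit(a) < \infty\}$). First I would apply this characterization to both actions, obtaining, almost surely, $\sum_{t=1}^\infty \pit(i) = \infty$ from $N_\infty(i) = \infty$, and $\sum_{t=1}^\infty \pit(j) < \infty$ from $N_\infty(j) < \infty$.

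Next I would suppose toward a contradiction that $\sup_{t \geq 1} \frac{\pit(i)}{\pit(j)} < \infty$ on an event of positive probability. On that event there exists a finite constant $M$ with $\pit(i) \leq M\, \pit(j)$ for all $t \geq 1$; here I would note that $\pit(j) > 0$ for every finite $t$ because the softmax assigns strictly positive mass to each action, so the ratio is well defined. Summing this pointwise inequality over $t$ gives $\sum_{t=1}^\infty \pit(i) \leq M \sum_{t=1}^\infty \pit(j) < \infty$, which contradicts $\sum_{t=1}^\infty \pit(i) = \infty$. Hence the supremum must be infinite almost surely, which is the claim.

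This proof is short because all the analytic work is absorbed into \cref{lemma:extended_borel_cantelli}; no control of the logits, the progress terms, or the martingale noise is needed, in sharp contrast to the harder companion result \cref{lemma:property_of_suboptimal_infinite_arms}. The only delicacy I anticipate is the bookkeeping of the almost-sure qualifiers: the contradiction must be drawn on the intersection of the two full-probability events produced by Borel--Cantelli, so that the divergence of $\sum_t \pit(i)$ and the convergence of $\sum_t \pit(j)$ hold simultaneously on a single full-probability set. I therefore expect the main (and essentially only) obstacle to be confirming that the available version of the extended Borel--Cantelli lemma genuinely characterizes $N_\infty(a)$ through $\sum_t \pit(a)$ in both directions; once that two-sided equivalence is in hand, the argument closes with the elementary summation comparison above.
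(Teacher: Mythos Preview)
Your proposal is correct and matches the paper's proof essentially line for line: both argue by contradiction, invoke the extended Borel--Cantelli lemma (\cref{lemma:extended_borel_cantelli}) to convert $N_\infty(i)=\infty$ and $N_\infty(j)<\infty$ into $\sum_t \pit(i)=\infty$ and $\sum_t \pit(j)<\infty$, and then use the bounded ratio to derive $\sum_t \pit(i)<\infty$. Your treatment of the almost-sure bookkeeping is, if anything, slightly more careful than the paper's.
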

\begin{proof}
We will prove this by contradiction. Assume that $\sup_{t \geq 1} \frac{\pit(i)}{\pit(j)} = C < \infty$ for some $C > 0$. According to the extended Borel-Cantelli Lemma (\cref{lemma:extended_borel_cantelli}), since $N_\infty(i) = \infty$, we have $\sum_{t=1}^\infty \pit(i) = \infty$. Similarly, since $N_\infty(j) < \infty$, we have $\sum_{t=1}^\infty \pit(j) < \infty$. Therefore,
\begin{align*}
\sum_{t = 1}^\infty \pit(i) &= \sum_{t = 1}^\infty \pit(j) \, \frac{\pit(i)}{\pit(j)} < C \, \sum_{t = 1}^\infty \pit(j) < \infty,
\end{align*}
which contradicts the fact that $\sum_{t=1}^\infty \pit(i) = \infty$. Therefore, we have $\sup_{t \geq 1} \frac{\pit(i)}{\pit(j)} = \infty$.
\end{proof}

\begin{lemma}
\label{lemma:pit_r_>_r_i_1}
Using~\cref{alg:sto_spg} with any constant $\eta > 0$, for all large enough $t \geq 1$, almost surely,
\[r(i_{|\gA_\infty|}) > \dpd{\pit, r} > r(i_1) \,,\]
where $i_1 \coloneq \argmin_{a \in \mathcal{A}_\infty} r(a)$ and $i_{|\gA_\infty|} \coloneq \argmax_{a \in \mathcal{A}_\infty} r(a)$.
\end{lemma}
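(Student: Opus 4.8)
The plan is to prove the two strict inequalities separately, reducing each to the statement that the policy's mass cannot asymptotically concentrate on an extreme action of $\gA_\infty$, and then to rule out such concentration using the same progress-versus-noise bookkeeping that drives \cref{lemma:property_of_suboptimal_infinite_arms,lemma:a_star_in_A_infty}. First I would record the basic consequence of $a \notin \gA_\infty$: since $N_\infty(a) < \infty$, the extended Borel--Cantelli lemma (\cref{lemma:extended_borel_cantelli}) gives $\sum_t \pit(a) < \infty$ almost surely, so $\pit(a) \to 0$ and hence $\delta_t := \sum_{a \notin \gA_\infty} \pit(a) \to 0$. By \cref{lemma:2_arms_pulled_infinitely_many_times} we have $\abs{\gA_\infty} \geq 2$, and by \cref{assumption:no_identical_arms} the rewards on $\gA_\infty$ are distinct, so $i_1$ and $i_{\abs{\gA_\infty}}$ are genuinely different actions and the within-$\gA_\infty$ gaps $\Delta^- := \min_{a \in \gA_\infty \setminus \{i_1\}} (r(a) - r(i_1))$ and $\Delta^+ := \min_{a \in \gA_\infty \setminus \{i_{\abs{\gA_\infty}}\}} (r(i_{\abs{\gA_\infty}}) - r(a))$ are strictly positive.

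Next I would carry out the algebraic reduction. Writing $\dpd{\pit, r} - r(i_1) = \sum_{a \in \gA_\infty} \pit(a)(r(a) - r(i_1)) + \sum_{a \notin \gA_\infty} \pit(a)(r(a) - r(i_1))$, the first sum is non-negative (all $r(a) \geq r(i_1)$ on $\gA_\infty$) and is at least $\Delta^-(1 - \delta_t - \pit(i_1))$, while the second is bounded in magnitude by $2 R_{\max}\, \delta_t \to 0$. Hence the lower bound follows once I show $\limsup_t \pit(i_1) < 1$. A symmetric computation reduces the upper bound to $\limsup_t \pit(i_{\abs{\gA_\infty}}) < 1$; moreover, if $a^\star \in \gA_\infty$ then $i_{\abs{\gA_\infty}} = a^\star$ and the upper bound is immediate, since a log-linear policy never reaches a vertex at finite $\theta$, so that case needs no further work.

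The core step is therefore to rule out $\pit(i_1) \to 1$ (and, in the non-trivial case, $\pit(i_{\abs{\gA_\infty}}) \to 1$), which I would do by contradiction. If $\pit(i_1) \to 1$, then every other action, including $a^\star$, has vanishing probability. I then study the logit gap $z_t(a^\star) - z_t(i_1)$ through the progress/noise decomposition of \cref{subsec:decomposition_of_stochastic_process}, using $a^\star$ as the reference so that \cref{assumption:general_feature_conditions} applies: for every action $i$ with $r(i) > r(i_1)$, the degenerate triple $j = k = i_1$ yields $\dpd{x_i - x_{i_1}, x_{a^\star} - x_{i_1}} > 0$, which signs the cumulative progress $\sum_s [P_s(a^\star) - P_s(i_1)]$ non-negatively with a strictly positive contribution collected whenever mass sits above $i_1$. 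Because $i_1 \in \gA_\infty$ guarantees $\sum_s \pis(i_1) = \infty$, this progress diverges, while the cumulative noise is controlled at the $\Theta(\sqrt{S \log S})$ scale by the martingale concentration of \cref{lemma:martingale} together with the variance bound of \cref{lemma:stochastic-noise-difference-bound}. As in \cref{lemma:property_of_suboptimal_infinite_arms}, progress dominates noise, so $z_t(a^\star) - z_t(i_1) \to \infty$, i.e. $\pit(a^\star)/\pit(i_1) \to \infty$, forcing $\pit(i_1) \to 0$ and contradicting the assumed concentration. The same argument with $i_{\abs{\gA_\infty}}$ in place of $i_1$ handles the remaining case of the upper bound.

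I expect the main obstacle to be exactly this divergence-of-progress step: near a vertex of the simplex the policy gradient is nearly zero, so the per-step drift $P_s(a^\star) - P_s(i_1)$ is small precisely in the regime that matters, and the delicate point is to show that it nonetheless accumulates faster than the noise. The resolution I would pursue is to keep the bookkeeping in terms of sampling frequencies rather than pointwise probabilities, leveraging $\sum_s \pis(i_1) = \infty$ to force the cumulative progress to infinity, and to apply the bound of \cref{lemma:martingale} uniformly so that the $o(\text{progress})$ noise estimate holds simultaneously over all the relevant times; sending the failure probability $\delta \to 0$ then upgrades the conclusion to an almost-sure statement.
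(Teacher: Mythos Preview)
Your proposal is far heavier than the paper's argument and, as written, has a genuine gap at the progress-divergence step. The paper never invokes the progress/noise decomposition here; it uses only the defining property of $i_1$ and $i_{|\gA_\infty|}$. Since every action $a$ with $r(a) < r(i_1)$ lies outside $\gA_\infty$, \cref{lemma:infinite_i_over_finite_j} (a one-line Borel--Cantelli consequence) controls the ratios $\pit(a)/\pit(i_{|\gA_\infty|})$, and the factorization
\[
\dpd{\pit,r} - r(i_1) \;>\; \pit(i_{|\gA_\infty|})\Bigl[\,(r(i_{|\gA_\infty|}) - r(i_1)) - \sum_{a:\,r(a)<r(i_1)} \frac{\pit(a)}{\pit(i_{|\gA_\infty|})}\,(r(i_1)-r(a))\Bigr]
\]
finishes the lower bound once the bracket is positive. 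The upper bound is handled symmetrically, comparing against $i_1$ in place of $i_{|\gA_\infty|}$. No feature assumption, no contradiction, no reduction to $\limsup_t\pit(i_1)<1$ is needed. Note also that this lemma is the \emph{input} to \cref{lemma:a_star_in_A_infty} and to the base case of the induction in \cref{theorem:stochastic_linear_bandits_with_arbitrary_learning_rates}, so deriving it from the same martingale machinery those results rely on is methodologically backwards.

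The concrete error in your argument is the line ``Because $i_1 \in \gA_\infty$ guarantees $\sum_s \pis(i_1) = \infty$, this progress diverges.'' When you center the progress expansion at $x_{i_1}$, the $i_1$-term vanishes (its coefficient is $\dpd{x_{i_1}-x_{i_1}, x_{a^\star}-x_{i_1}} = 0$), so $\pis(i_1)$ never appears in the progress sum; what matters is $\sum_s \sum_{i\neq i_1}\pis(i)$, and $\sum_s \pis(i_1)=\infty$ says nothing about that. What would actually rescue the step is that some \emph{other} element of $\gA_\infty$---namely $i_{|\gA_\infty|}$, which has $r(i_{|\gA_\infty|})>r(i_1)$ and hence sits in the positively-signed part---satisfies $\sum_s \pis(i_{|\gA_\infty|})=\infty$; but that is not the reason you gave. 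Two further issues: the negation of $\limsup_t \pit(i_1) < 1$ is convergence to $1$ along a subsequence, not $\pit(i_1)\to 1$, yet your signing of $r(i)-\dpd{\pis,r}$ requires $\dpd{\pis,r}$ to be close to $r(i_1)$ for \emph{all} large $s$, which a subsequence hypothesis does not supply; and your ``same argument'' for the upper bound with $i_{|\gA_\infty|}$ in place of $i_1$ stalls because \cref{assumption:general_feature_conditions} does not sign $\dpd{x_i - x_{i_{|\gA_\infty|}}, x_{a^\star} - x_{i_{|\gA_\infty|}}}$ for actions $i$ with $r(i) < r(i_{|\gA_\infty|})$.
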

\begin{proof}

\noindent \textbf{Part I}: $\dpd{\pit, r} > r(i_1)$.

According to~\cref{lemma:2_arms_pulled_infinitely_many_times}, we have at least another action $i_{|\gA_\infty|}$ such that $r(i_{|\gA_\infty|}) > r(i_1)$ and $N_\infty (i_{|\gA_\infty|}) = \infty$. Define that
\begin{equation*}
   \gA^+(i_1) \coloneq \left\{a^+ \in [K] \, : r(a^+) > r(i_1) \right\}, \quad  \gA^-(i_1) \coloneq \left\{a^- \in [K] \, : r(a^-) < r(i_1) \right\}. 
\end{equation*}
Then, we have, for all large enough $t$,
\begin{align*}
\dpd{\pit, r} - r(i_1) &= \sum_{a \in \mathcal{A}^+(i_1)} \pit(a) \, (r(a) - r(i_1)) - \sum_{a \in \mathcal{A}^-(i_1)} \pit(a) \, (r(i_1) - r(a)) \\
&> \pit(i_{|\gA_\infty|}) \, (r(i_{|\gA_\infty|}) - r(i_1)) - \sum_{a \in \mathcal{A}^-(i_1)} \pit(a) \, (r(i_1) - r(a)) \\
&= \pit(i_{|\gA_\infty|}) \left[ \underbrace{r(i_{|\gA_\infty|}) - r(i_1)}_{>0} - \sum_{a \in \mathcal{A}^-(i_1)} \frac{\pit(a)}{\pit(i_{|\gA_\infty|})} \underbrace{(r(i_1) - r(a))}_{>0} \right]
\end{align*}
Since $N_\infty(a) < \infty$ for all $a \in \gA^-(i_1)$, according to~\cref{lemma:infinite_i_over_finite_j}, we have $\sup_{t \geq 1} \frac{\pit(i_{|\gA_\infty|})}{\pit(a)} = \infty$. Therefore, for all large enough $t$, $\dpd{\pit, r} > r(i_1)$.

\noindent \textbf{Part II}: $r(i_{|\gA_\infty|}) > \dpd{\pit, r}$.
Similarly, we have,
\begin{align*}
r(i_{|\gA_\infty|}) - \dpd{\pit, r} &= \sum_{a \in \mathcal{A}^-(i_{|\gA_\infty|})} \pit(a) \, (r(i_{|\gA_\infty|}) - r(a)) - \sum_{a \in \mathcal{A}^+(i_{|\gA_\infty|})} \pit(a) \, (r(a) - r(i_{|\gA_\infty|})) \\
&> \pit(i_1) \, (r(i_{|\gA_\infty|}) - r(i_1)) - \sum_{a \in \mathcal{A}^+(i_{|\gA_\infty|})} \pit(a) \, (r(a) - r(i_{|\gA_\infty|})) \\
&= \pit(i_1) \left[ \underbrace{r(i_{|\gA_\infty|}) - r(i_1)}_{>0} - \sum_{a \in \mathcal{A}^+(i_1)} \frac{\pit(a)}{\pit(i_1)} \underbrace{(r(a) - r(i_{|\gA_\infty|}))}_{>0} \right]
\end{align*}
Since $N_\infty(a) < \infty$ for all $a \in \gA^+(i_{|\gA_\infty|})$, according to~\cref{lemma:infinite_i_over_finite_j}, we have $\sup_{t \geq 1} \frac{\pit(i_1)}{\pit(a)} = \infty$. Therefore, for all large enough $t$, $r(i_{|\gA_\infty|}) > \dpd{\pit, r}$.
\end{proof}

\begin{lemma}\label{lemma:sequence_equality}
Consider a sequence $\{y_n\}_{n=0}^\infty$ by the recurrence relation $y_{n+1} = y_n + e^{-A y_n}$ where $A > 0$. If $y_0 \geq \frac{\ln (A)}{A}$, then, for all $n > 0$,
\[
y_n \leq \frac{1}{A} \ln(A n + e^{A y_0})  + \frac{\pi^2}{6 \, A}.
\]
\end{lemma}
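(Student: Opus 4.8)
The plan is to pass to the exponentiated variable $u_n := e^{A y_n}$, which converts the additive recurrence into the multiplicative one $u_{n+1} = u_n\, e^{A/u_n}$, and to recast the target bound $A y_n \le \ln(An + e^{A y_0}) + \tfrac{\pi^2}{6}$ as a statement about the log-ratio $v_n := \ln\!\big(u_n/(An + u_0)\big)$, where $u_0 = e^{A y_0}$. Since $v_0 = 0$, it suffices to show $v_n \le \pi^2/6$ for all $n$; in fact I expect to obtain the slightly stronger $v_n \le \pi^2/12$, leaving room to spare.

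First I would establish the elementary lower bound $u_n \ge u_0 + A n$ by induction: from $e^x \ge 1 + x$ one gets $u_{n+1} = u_n e^{A/u_n} \ge u_n + A$, and the base case is trivial. This monotone lower bound is the engine that forces the leading-order growth of $u_n$ to cancel against the denominator $An + u_0$ in the definition of $v_n$.

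Next I would telescope $v_n$. Writing $v_{n+1} - v_n = \tfrac{A}{u_n} - \ln\!\big(1 + \tfrac{A}{An+u_0}\big)$, I would apply the inequality $\ln(1+x) \ge x - \tfrac{x^2}{2}$ (valid for $x \ge 0$, verified by checking that the derivative of the difference equals $\tfrac{x^2}{1+x} \ge 0$) with $x = \tfrac{A}{An + u_0}$. Combined with $\tfrac{A}{u_n} \le \tfrac{A}{An + u_0}$ from the lower bound of the previous step, the two first-order $O(1/n)$ terms cancel and leave only $v_{n+1} - v_n \le \tfrac{A^2}{2(An + u_0)^2}$.

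Finally I would sum from $0$ to $n-1$. Here the hypothesis $y_0 \ge \ln(A)/A$ enters precisely to guarantee $u_0 \ge A$, so that $Ak + u_0 \ge A(k+1)$ and each summand is bounded by $\tfrac{1}{2(k+1)^2}$; summing gives $v_n \le \tfrac12 \sum_{j\ge1} j^{-2} = \pi^2/12 \le \pi^2/6$, and undoing the substitution yields the claim. The one genuinely non-mechanical step—and the main obstacle—is choosing the right potential $v_n$ together with the sharp lower bound on $u_n$: a naive estimate of $u_{n+1}-u_n$ produces a harmonic ($\ln n$) correction rather than a convergent one, and it is the exact cancellation of the first-order terms in $v_{n+1}-v_n$ that upgrades this to a summable $O(k^{-2})$ tail, hence a constant additive gap capped by $\pi^2/6$.
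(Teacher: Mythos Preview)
Your proposal is correct and follows essentially the same route as the paper: both compare $y_n$ to the continuous-time solution $f(n)=\frac{1}{A}\ln(An+e^{Ay_0})$, establish the lower bound $y_n\ge f(n)$ (your $u_n\ge An+u_0$ is exactly $e^{Ay_n}\ge e^{Af(n)}$), telescope the gap, and use a log inequality plus $u_0\ge A$ to bound the remainder by a convergent $\sum 1/(k+1)^2$. Your packaging is slightly cleaner---the lower-bound induction via $e^x\ge 1+x$ avoids the paper's monotonicity argument for $g(x)=x+e^{-Ax}$, and your choice of $\ln(1+x)\ge x-\tfrac{x^2}{2}$ in place of the paper's $\ln(1+x)\ge\tfrac{x}{1+x}$ yields the sharper constant $\pi^2/(12A)$ rather than $\pi^2/(6A)$---but the structure is the same.
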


\begin{proof}
Define the function $f$ as $f(t) \coloneq \frac{1}{A} \ln (A t + e^{A y_0})$. Take the derivative of $f(t)$ w.r.t. $t$, then we have,
\begin{align*}
f^\prime(t) = \frac{1}{A t + e^{A y_0}} = e^{-A f(t)} > 0.
\end{align*}
Hence, $f(t)$ is increasing on $(0, +\infty)$. We then prove by induction $f(n) \leq y_n$ for all $n \in \mathbb{N}$.

\noindent \textit{Base Case}: $f(0) = y_0$. \\

\noindent \textit{Inductive Hypothesis}: Suppose $f(k) \leq y_k$ for some $k \geq 0$. \\

\noindent \textit{Inductive Step}: 
Consider the function $g(x) = x + e^{-A x}$. $g(x)$ is decreasing on $(-\infty, \frac{\ln (A)}{A})$ and is increasing on $(\frac{\ln (A)}{A}, \infty)$. Given that $f(0) = y_0 \geq \frac{\ln (A)}{A}$ and $f(t)$ is increasing on $(0, +\infty)$, we have $f(n) \geq \frac{\ln (A)}{A}$. Using the fundamental theorem of calculus, we have,
\begin{align*}
f(k+1) &= f(k) + \int_k^{k+1} f^\prime(s) \, ds \\
&= f(k) + \int_k^{k+1} e^{-A f(s)} \, ds \\
&\leq f(k) + e^{-A f(k)} \tag{$e^{-A f(s)}$ is decreasing for $s \in [k, k+1]$} \\
&= g(f(k)) \\
&\leq g(y_k) \\
\tag{$f(k) \leq y_k$, $f(k) \geq \frac{\ln (A)}{A}$, and $g(x)$ is increasing on $(\frac{\ln (A)}{A}, \infty)$} \\
&= y_{k+1} \tag{by the definition of $y_{k+1}$}
\end{align*}
which completes the inductive proof.

Next, we can upper-bound $y_n$ as follows.
\begin{align*}
y_n &= f(n) + \underbrace{y_n - f(n)}_{\coloneq \Delta_n} \\
&= f(n) + (\Delta_n - \Delta_{n-1}) + (\Delta_{n-1} - \Delta_{n-2}) + \cdots + (\Delta_1 - \Delta_0) + \Delta_0 \\
&= f(n) + \sum_{i=0}^{n-1} (\Delta_{i+1} - \Delta_i) \tag{$\Delta_0 = 0$} \\
&= f(n) + \sum_{i=0}^{n-1} e^{-A y_i} - \frac{1}{A} \ln (1 + \frac{A}{A i + e^{A y_0}}) \\
&\leq f(n) + \sum_{i=0}^{n-1} e^{-A y_i} - \frac{1}{A i + A + e^{A y_0}} \tag{$\forall x > -1, \ln(x + 1) \geq \frac{x}{1+x}$} \\
&\leq f(n) + \sum_{i=0}^{n-1} \frac{1}{A i + e^{A y_0}} - \frac{1}{A i + A + e^{A y_0}} \tag{$f(i) \leq y_i$}\\
&= f(n) + \sum_{i=0}^{n-1} \frac{A}{(A i + e^{A y_0}) (A i + A + e^{A y_0})} \\
&\leq f(n) + \frac{1}{A} \sum_{i=0}^{n-1} \frac{1}{(i + \frac{1}{A}  e^{A y_0})^2} \\
&\leq f(n) + \frac{1}{A} \sum_{i=0}^{n-1} \frac{1}{(i + 1)^2} \tag{$y_0 \geq \frac{\ln(A)}{A}$} \\
&\leq f(n) + \frac{\pi^2}{6 \, A} \tag{$\lim_{n \to \infty} \sum_{i=1}^{n} \frac{1}{i} = \frac{\pi^2}{6}$} \\
&= \frac{1}{A} \ln (A n + e^{A y_0}) + \frac{\pi^2}{6 \, A},
\end{align*}
which completes the proof.
\end{proof}

\begin{lemma} \label{lemma:sequence_inequality}
Given a sequence $\{y_n\}_{n=1}^\infty$ such that $y_{n+1} = y_n + e^{-A y_n}$ for all $n \geq 0$ where $A > 0$. Considering a nonnegative sequence $\{x_n\}_{n=1}^\infty$ such that $x_{n+1} \leq x_n + e^{-A x_n}$ for all $n \geq 0$. If $y_0 \geq \max(x_0, 1)$, then $x_n \leq y_n$ for all $n \geq 0$.
\end{lemma}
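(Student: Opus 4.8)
The plan is to prove the comparison $x_n \le y_n$ by induction on $n$, viewing both recurrences through the single map $g(x) := x + e^{-Ax}$: the $y$-sequence satisfies $y_{n+1} = g(y_n)$ exactly, while the $x$-sequence satisfies $x_{n+1} \le g(x_n)$. If $g$ were globally nondecreasing the induction would be immediate, since $x_n \le y_n$ would give $x_{n+1} \le g(x_n) \le g(y_n) = y_{n+1}$. The catch --- and the main obstacle --- is that $g$ is \emph{not} monotone: $g'(x) = 1 - A e^{-Ax}$ is negative for $x < x^\star := \frac{\ln A}{A}$ and positive for $x > x^\star$, so $g$ decreases then increases, with global minimum at $x^\star$. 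Thus I cannot chain the monotonicity bound when the iterate sits on the decreasing branch.

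Before the induction I would record two elementary facts that localize the iterates relative to $x^\star$. First, $x^\star = \frac{\ln A}{A} \le \frac{1}{e} < 1$ for every $A > 0$ (the maximum of $\frac{\ln A}{A}$ over $A>0$ is attained at $A = e$), so the hypothesis $y_0 \ge \max(x_0, 1) \ge 1$ already places $y_0$ strictly in the increasing branch. Second, since $y_{n+1} - y_n = e^{-A y_n} > 0$, the sequence $\{y_n\}$ is strictly increasing, whence $y_n \ge y_0 \ge 1 > x^\star$ for all $n$. Consequently every $y_n$ lives on the increasing branch of $g$, and in particular $y_{n+1} > 1$ for all $n$.

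With these in hand I would run the induction. The base case is $x_0 \le \max(x_0, 1) \le y_0$. For the inductive step, assume $x_n \le y_n$ and split on the location of $x_n$. If $x_n \ge x^\star$, then both $x_n$ and $y_n$ lie on the increasing branch, so $x_n \le y_n$ yields $g(x_n) \le g(y_n)$, and hence $x_{n+1} \le g(x_n) \le g(y_n) = y_{n+1}$. If instead $x_n < x^\star$ --- which can occur only when $A > 1$, since otherwise $x^\star \le 0 \le x_n$ --- then $x_n \in [0, x^\star)$ by nonnegativity of the $x$-sequence, and $g$ is decreasing on $[0, x^\star]$, so $g(x_n) \le g(0) = 1$; combining with $y_{n+1} > 1$ from the previous paragraph gives $x_{n+1} \le g(x_n) \le 1 \le y_{n+1}$. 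Either way $x_{n+1} \le y_{n+1}$, completing the induction. The only genuinely delicate point is this decreasing-branch case, and it is resolved precisely by the uniform bound $g \le g(0) = 1$ together with the fact that the dominating sequence never drops below $1$ --- which is exactly what the hypothesis $y_0 \ge \max(x_0,1)$ (rather than merely $y_0 \ge x_0$) is there to guarantee.
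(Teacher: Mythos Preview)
Your proposal is correct and follows essentially the same approach as the paper: induction using the map $g(x) = x + e^{-Ax}$, splitting on whether $x_n$ lies on the increasing or decreasing branch of $g$, and in the decreasing-branch case invoking $g(x_n) \le g(0) = 1 \le y_{n+1}$. Your write-up is in fact slightly more careful, explicitly noting that $x^\star = \tfrac{\ln A}{A} \le \tfrac{1}{e} < 1$ and that the decreasing-branch case can only arise when $A > 1$.
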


\begin{proof}
First, we know that $y_0 \geq 1 > \frac{\ln(A)}{A}$. We will then prove this lemma by induction. \\

\noindent \textit{Base Case}: $x_0 \leq y_0$. \\

\noindent \textit{Inductive Hypothesis}: Suppose that $x_k \leq y_k$ for some $k \geq 0$. \\

\noindent \textit{Inductive Step}:
Consider the function $g(x) = x + e^{-A x}$. $g(x)$ is decreasing on $(-\infty, \frac{\ln (A)}{A})$ and increasing on $(\frac{\ln (A)}{A}, \infty)$. Since $y_0 \geq \frac{\ln (A)}{A}$, $y_n \geq \frac{\ln (A)}{A}$ for all $n \geq 0$. Similarly, $y_n \geq 1$ for all $n \geq 0$. Then, we have the following two cases.

\noindent \textit{Case I}: If $0 < x_k \leq \frac{\ln (A)}{A}$,
\begin{align*}
x_{k+1} &\leq x_k + e^{-A x_k} \\
&\leq g(0) \\ \tag{$x_k \geq 0$ and $g(x)$ is decreasing on $(0, \frac{\ln (A)}{A})$}
&= 1  \\
&\leq y_{k+1} \tag{$y_n \geq 1$ for all $n \geq 0$}.
\end{align*}

\noindent \textit{Case II}: If $x_k > \frac{\ln (A)}{A}$,
\begin{align*}
x_{k+1} &\leq x_k + e^{-A x_k} \\
&= g(x_k) \\
&\leq g(y_k) \tag{$x_k \leq y_k$ and $g(x)$ is increasing on $(\frac{\ln (A)}{A}, \infty)$} \\
&= y_{k+1}.
\end{align*}

Combining both cases, we have $x_{k+1} \leq y_{k+1}$, which completes the inductive proof.
\end{proof}

\section{Additional Lemmas}
\label{appendix:additional_lemmas}

\begin{theorem}[Doob's supermartingale convergence~\citep{doob2012measure}]\label{theorem:doobs}
If $\{M_n\}_{n \geq 1}$ is an $\{\gF_n\}_{n \geq 1}$-adapted sequence such that $\E[M_{n+1} \mid \gF_n] \leq M_n$ and $\sup_t \E[\abs{M_n}] < \infty$, then almost surely, $M_\infty \coloneq \limsup M_n$ exists and is finite in expectation. That is, almost surely, $M_n \to M_\infty$ and $\E[\abs{M_\infty}] < \infty$.
\end{theorem}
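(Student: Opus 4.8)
The statement is the classical Doob supermartingale convergence theorem, so the plan is to reproduce its standard measure-theoretic proof built around the \emph{upcrossing inequality}, rather than to invoke anything specific to $\LinSPG$. Fix two rationals $a < b$ and, for each $N$, let $U_N[a,b]$ count the number of upcrossings of $[a,b]$ completed by the finite sequence $M_1, \dots, M_N$; set $U_\infty[a,b] \coloneq \lim_{N \to \infty} U_N[a,b]$, which exists (possibly $+\infty$) since $U_N[a,b]$ is nondecreasing in $N$. The heart of the argument is to show that $U_\infty[a,b] < \infty$ almost surely for every such pair, because a sequence in $[-\infty,+\infty]$ fails to converge precisely when it oscillates across some rational interval infinitely often.

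The key step, and the main obstacle, is establishing Doob's upcrossing inequality
\begin{equation*}
(b - a) \, \E[U_N[a,b]] \leq \E[(M_N - a)^-].
\end{equation*}
I would prove this by introducing a predictable $\{0,1\}$-valued ``betting'' process $H_n$ that switches on when $M$ first drops below $a$ and switches off when it next rises above $b$, so that $H_n$ is $\gF_{n-1}$-measurable. Since $H$ is nonnegative, bounded, and predictable, the discrete integral $(H \cdot M)_N \coloneq \sum_{n=2}^N H_n (M_n - M_{n-1})$ is itself a supermartingale null at the start, whence $\E[(H \cdot M)_N] \leq 0$. The pathwise lower bound $(H \cdot M)_N \geq (b - a) \, U_N[a,b] - (M_N - a)^-$, in which the correction term accounts for a possibly incomplete final upcrossing, then yields the inequality after taking expectations.

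Given the inequality, I would invoke the hypothesis $\sup_n \E[\abs{M_n}] < \infty$ to bound the right-hand side uniformly in $N$ via $\E[(M_N - a)^-] \leq \abs{a} + \E[\abs{M_N}] \leq \abs{a} + \sup_n \E[\abs{M_n}]$. By monotone convergence this gives $(b-a)\,\E[U_\infty[a,b]] < \infty$, hence $U_\infty[a,b] < \infty$ almost surely. Writing the non-convergence event as the countable union over rational pairs $a < b$ of the events $\{\liminf_n M_n < a < b < \limsup_n M_n\} \subseteq \{U_\infty[a,b] = \infty\}$, a union bound over this null collection shows that $M_n$ converges almost surely to a limit $M_\infty \in [-\infty,+\infty]$, so $\limsup_n M_n = \liminf_n M_n = M_\infty$ exists.

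Finally, to upgrade this to integrability and finiteness, I would apply Fatou's lemma to $\abs{M_n}$:
\begin{equation*}
\E[\abs{M_\infty}] = \E\Big[\liminf_{n \to \infty} \abs{M_n}\Big] \leq \liminf_{n \to \infty} \E[\abs{M_n}] \leq \sup_n \E[\abs{M_n}] < \infty,
\end{equation*}
so $\E[\abs{M_\infty}] < \infty$ and hence $M_\infty$ is finite almost surely. Apart from the upcrossing inequality, every step is routine; the only subtleties to verify are the predictability of $H_n$ and the pathwise bound, which together are exactly what make the gambling argument work.
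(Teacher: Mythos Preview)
Your proposal is the standard and correct proof of Doob's supermartingale convergence theorem via the upcrossing inequality. The paper does not prove this result at all; it is stated in the appendix as a classical theorem cited from \citet{doob2012measure} and used as a black-box tool in the analysis of \cref{lemma:monotonicity_for_stochastic_bandits}, so there is no paper proof to compare against.
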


\begin{lemma}[Extended Borel-Cantelli]\label{lemma:extended_borel_cantelli}
Suppose $\{\gF_n\}_{n \geq 1}$ is a filtration and $E_n \in \gF_n$. Then, almost surely,
\begin{equation*}
   \{\omega \, \colon \, \omega \in E_n \text{ infinitely often } \} = \left\{ \omega \, \colon \, \sum_{n=1}^\infty \mathbb{P} ( E_n \, \mid \, \gF_n) = \infty
   \right\}.
\end{equation*}
\end{lemma}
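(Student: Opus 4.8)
This is the Lévy (conditional) form of the second Borel--Cantelli lemma, so the plan is to reduce it to the almost-sure convergence behaviour of a single bounded-increment martingale. First I would pin down the implicit indexing: the events $E_n$ are to be read as $\gF_{n+1}$-measurable, so that $\mathbb{P}(E_n \mid \gF_n)$ is a genuine one-step-ahead conditional probability (matching the intended usage in the paper, e.g.\ $E_n = \{a_n = a\}$ with $\mathbb{P}(E_n \mid \gF_n) = \pi_{\theta_n}(a)$, since $a_n$ is $\gF_{n+1}$-measurable). Writing $p_k \coloneq \mathbb{P}(E_k \mid \gF_k)$, I would define $M_n \coloneq \sum_{k=1}^n (\mathbbm{1}_{E_k} - p_k)$ and check that it is a martingale with respect to $\gG_n \coloneq \gF_{n+1}$: each increment $\mathbbm{1}_{E_k} - p_k$ is $\gF_{k+1}$-measurable, bounded by $1$ in absolute value, and has zero conditional mean because $\mathbb{E}[\mathbbm{1}_{E_k} \mid \gF_k] = p_k$. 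Setting $S_n \coloneq \sum_{k \le n} \mathbbm{1}_{E_k}$ and $T_n \coloneq \sum_{k \le n} p_k$, the lemma is exactly the almost-sure identity $\{S_\infty = \infty\} = \{T_\infty = \infty\}$, since $\{E_n \text{ i.o.}\} = \{S_\infty = \infty\}$ and the right-hand side of the statement is $\{T_\infty = \infty\}$.

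The engine of the argument is an auxiliary fact I would prove first: \emph{any} martingale $N$ with increments bounded by $1$ converges to a finite limit almost surely on the event $\{\sup_n N_n < \infty\}$. To show this, for each $a \in \mathbb{N}$ I would introduce the stopping time $T_a \coloneq \inf\{n : N_n > a\}$ and observe that the stopped process satisfies $N_{n \wedge T_a} \le a + 1$ (the increments are at most $1$, so the overshoot past level $a$ is controlled). Then $a + 1 - N_{n \wedge T_a}$ is a nonnegative martingale, hence has uniformly bounded expected absolute value, so Doob's theorem (\cref{theorem:doobs}) gives its almost-sure convergence; consequently $N_{n \wedge T_a}$ converges. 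On $\{T_a = \infty\} = \{\sup_n N_n \le a\}$ we have $N_n = N_{n \wedge T_a}$, so $N_n$ converges there, and taking the union over $a$ yields convergence on all of $\{\sup_n N_n < \infty\}$.

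With this in hand the two inclusions are symmetric. On $\{T_\infty < \infty\}$ we have $-M_n = T_n - S_n \le T_n \le T_\infty < \infty$, so $\sup_n(-M_n) < \infty$; applying the auxiliary fact to $N = -M$ shows $M_n$ converges, whence $S_n = M_n + T_n$ converges and $S_\infty < \infty$. On $\{S_\infty < \infty\}$ we have $M_n = S_n - T_n \le S_n \le S_\infty < \infty$, so $\sup_n M_n < \infty$; applying the auxiliary fact to $N = M$ shows $M_n$ converges, whence $T_n = S_n - M_n$ converges and $T_\infty < \infty$. Together these give $\{S_\infty < \infty\} = \{T_\infty < \infty\}$ almost surely, and passing to complements proves $\{S_\infty = \infty\} = \{T_\infty = \infty\}$, which is the claim. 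I expect the main obstacle to be the auxiliary martingale step---specifically choosing the truncation so that an honestly nonnegative martingale appears and Doob's theorem applies in the correct direction---together with carefully justifying the measurability/indexing so that $M_n$ really is a martingale and $T_a$ really is a stopping time.
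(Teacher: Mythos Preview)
Your proof is correct and follows the standard martingale approach to L\'evy's extension of the Borel--Cantelli lemma. However, the paper does not actually prove this statement: it is listed in the appendix of auxiliary results (\cref{appendix:additional_lemmas}) without proof, treated as a known fact from probability theory, and only its consequence---that $N_\infty(a) = \infty$ iff $\sum_{t} \pit(a) = \infty$---is used. So there is no ``paper's own proof'' to compare against; you have supplied one where the paper simply cites the result. Your observation about the indexing (that $E_n$ should really be $\gF_{n+1}$-measurable, matching the paper's usage with $E_t = \{a_t = a\}$ and $\mathbb{P}(E_t \mid \gF_t) = \pit(a)$) is a genuine clarification of the statement as written.
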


In the context of~\cref{alg:sto_spg}, the above lemma implies that for any action $a \in [K]$, $N_\infty (a) = \infty$ if and only if $\sum_{t=1}^\infty \pit(a) = \infty$.

\begin{lemma}[{\citet[Lemma 3]{mei2020global}}]
\label{lemma:non_uniform_l}
Suppose~\cref{assumption:no_identical_arms} holds. Then, we have,
\begin{equation*}
    \norm{\frac{d \dpd{\piz, r}}{d z}} \geq \bar{\pi}_z(a^\star) \, \dpd{\pi^* - \bar{\pi}_z,  r},
\end{equation*}
where $\pi^* \coloneq \argmax_{\pi \in \Delta_K} \dpd{\pi, r}$ and $\bar{\pi}_z \coloneq \mathrm{softmax}(z)$ for $z \in \R^K$.
\end{lemma}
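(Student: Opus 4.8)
The plan is to prove this non-uniform \L ojasiewicz inequality directly, by lower-bounding the $\ell_2$ norm of the gradient using \emph{only} its coordinate at the optimal action $a^\star$. First I would recall the closed form of the softmax policy gradient in logit space. Since $\piz = \softmax(z)$, the Jacobian of the map $z \mapsto \piz$ is $\mathrm{diag}(\piz) - \piz \piz^\top$, so the $a$-th coordinate of the gradient is
\begin{equation*}
\left[\frac{d\dpd{\piz, r}}{dz}\right](a) = \piz(a)\,\big(r(a) - \dpd{\piz, r}\big).
\end{equation*}
This is precisely the per-coordinate expression already derived inside the unbiasedness computation of~\cref{lemma:unbiased_gradient}, so no new calculation is needed to obtain it.

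The key step is the elementary bound $\norm{v} \geq \abs{v(a^\star)}$, valid for any vector $v$, applied to the gradient above. Evaluating the coordinate expression at $a = a^\star$ gives
\begin{equation*}
\norm{\frac{d\dpd{\piz, r}}{dz}} \geq \abs{\piz(a^\star)\,\big(r(a^\star) - \dpd{\piz, r}\big)} = \piz(a^\star)\,\abs{r(a^\star) - \dpd{\piz, r}}.
\end{equation*}
Under~\cref{assumption:no_identical_arms}, $a^\star$ is the unique action with the largest reward, so $r(a^\star) = \max_{a} r(a)$; because the expected reward $\dpd{\piz, r}$ under any policy is a convex combination of the entries of $r$, it is at most this maximum, i.e. $r(a^\star) - \dpd{\piz, r} \geq 0$. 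This nonnegativity lets me drop the absolute value on the right-hand side.

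Finally I would identify this right-hand side with the suboptimality gap. Since $\pi^*$ is the one-hot policy at $a^\star$, we have $\dpd{\pi^*, r} = r(a^\star)$, and hence $r(a^\star) - \dpd{\piz, r} = \dpd{\pi^* - \piz, r}$. Chaining the three displays yields
\begin{equation*}
\norm{\frac{d\dpd{\piz, r}}{dz}} \geq \piz(a^\star)\,\dpd{\pi^* - \piz, r},
\end{equation*}
which is exactly the claim. I do not expect any genuine obstacle: the only subtlety is the deliberate choice to bound the full norm by the single coordinate $a^\star$ (rather than a different index or the whole sum), since that is what makes the lower bound factor as $\piz(a^\star)$ times the optimality gap; \cref{assumption:no_identical_arms} enters only to guarantee that $a^\star$ is well-defined and that the gap $r(a^\star) - \dpd{\piz, r}$ is nonnegative so the absolute value can be removed.
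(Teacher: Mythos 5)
Your proof is correct and matches the standard argument for this result: the paper itself states the lemma by citation to \citet[Lemma 3]{mei2020global} without reproducing a proof, and the proof given there is exactly your route --- compute the coordinate $\bar{\pi}_z(a^\star)\,(r(a^\star) - \dpd{\bar{\pi}_z, r})$ of the gradient in logit space, lower-bound the $\ell_2$ norm by that single coordinate, and identify $r(a^\star) - \dpd{\bar{\pi}_z, r}$ with $\dpd{\pi^* - \bar{\pi}_z, r}$. No gaps; your remark on where \cref{assumption:no_identical_arms} enters (well-definedness of $a^\star$ and nonnegativity of the gap) is also accurate.
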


\begin{lemma}[{\citet[Theorem C.3]{mei2023stochastic}}]
\label{lemma:martingale}
Suppose $\{M_n\}_{n \geq 1}$ is a sequence of random variables, such that for all finite $n \geq 1$, $|M_n| \leq \frac{1}{2}$. Define that
\begin{align*}
S_n \coloneq \abs{ \sum_{t=1}^n \E[M_t \mid M_1, \dots, M_{t-1}] - M_t } \text{ and } V_n \coloneq \sum_{t=1}^n \mathrm{Var}[M_t \mid M_1, \dots, M_{t-1}].
\end{align*}
Then, for all $\delta \in (0, 1)$, we have,
\begin{align*}
\mathbb{P} \left( \exists n: S_n \geq 6 \sqrt{ \left(V_n + \frac{4}{3}\right) \log \left( \frac{V_n + 1}{\delta}\right)} + 2 \log(\frac{1}{\delta}) + \frac{4}{3} \log 3 \right) \leq \delta.
\end{align*}
\end{lemma}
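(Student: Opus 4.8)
\begin{proofsketch}
The plan is to recast the statement as a time-uniform (anytime) deviation bound for a martingale with bounded increments, and then obtain the self-normalized curved boundary through an exponential supermartingale together with a mixture argument. First I would set $\gF_{t-1} \defeq \sigma(M_1,\dots,M_{t-1})$ and define the centered increments $D_t \defeq \E[M_t \mid \gF_{t-1}] - M_t$, so that $\E[D_t \mid \gF_{t-1}] = 0$ and, since $|M_t| \le \tfrac12$, also $|D_t| \le 1$ almost surely. Writing $X_n \defeq \sum_{t=1}^n D_t$, we have $S_n = |X_n|$, while $\sigma_t^2 \defeq \Var(D_t \mid \gF_{t-1}) = \Var(M_t \mid \gF_{t-1})$ is $\gF_{t-1}$-measurable and $V_n = \sum_{t=1}^n \sigma_t^2$ is predictable. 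This reduces the claim to bounding $\Pr(\exists n : |X_n| \ge u(V_n))$ for the stated boundary $u(\cdot)$.

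The second step is to build the driving supermartingale. Using Bennett's one-sided bound, for every $\lambda \ge 0$ and each $t$ one has $\E[e^{\lambda D_t} \mid \gF_{t-1}] \le \exp(\phi(\lambda)\,\sigma_t^2)$ with $\phi(\lambda) \defeq e^\lambda - 1 - \lambda$; since $V_n$ is predictable, this gives that $L_n(\lambda) \defeq \exp(\lambda X_n - \phi(\lambda) V_n)$ is a nonnegative supermartingale with $L_0(\lambda) = 1$. By Ville's maximal inequality, $\Pr(\sup_n L_n(\lambda) \ge 1/\delta) \le \delta$, which for any fixed $\lambda$ yields, with probability at least $1-\delta$ and simultaneously for all $n$, the linear bound $X_n \le \phi(\lambda) V_n / \lambda + \log(1/\delta)/\lambda$. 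The difficulty is that the optimal tilt $\lambda \approx \sqrt{2\log(1/\delta)/V_n}$ depends on the random, time-varying $V_n$, so a single fixed $\lambda$ cannot by itself produce the self-normalized boundary.

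To remove this dependence I would integrate out $\lambda$: choosing a probability density $f$ on $\lambda \ge 0$ and setting $\bar L_n \defeq \int L_n(\lambda)\, f(\lambda)\, d\lambda$ gives, by Tonelli, another nonnegative supermartingale with $\bar L_0 = 1$, so Ville again gives $\Pr(\exists n : \bar L_n \ge 1/\delta) \le \delta$. A Laplace-type lower bound on the integral around its maximizer $\lambda^\star(X_n,V_n)$ produces a factor of order $(V_n + c)^{-1/2}$ times the exponential of the Bennett rate function of $X_n$ given $V_n$; rearranging $\bar L_n < 1/\delta$ and inverting the rate via the sub-gamma estimate $\phi(\lambda) \le \tfrac{\lambda^2/2}{1-\lambda/3}$ then yields a bound of exactly the form $X_n \le 6\sqrt{(V_n+\tfrac43)\log((V_n+1)/\delta)} + 2\log(1/\delta) + \tfrac43\log 3$, where the $\tfrac43$ inside the square root and the additive constants track the denominator $1-\lambda/3$ and the normalizing factor of the chosen mixing density. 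Finally, applying the identical argument to $\{-D_t\}$ and combining the two one-sided statements controls $S_n = |X_n|$, with the constants absorbing the factor of two.

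The main obstacle I anticipate is precisely this last self-normalization step: converting the fixed-$\lambda$ linear boundary into the curved, data-dependent boundary with the stated explicit constants. The supermartingale construction and Ville's inequality are routine, but making the mixture (or, alternatively, a stitching/peeling union bound over a geometric grid of $\lambda$ with geometrically decaying confidence levels) yield the clean $\log((V_n+1)/\delta)$ dependence together with the specific coefficients $6$, $2$, $\tfrac43$, and $\tfrac43\log 3$ requires careful non-asymptotic control of the Laplace-approximation error and of the sub-gamma rate inversion, which is where essentially all of the bookkeeping will be concentrated.
\end{proofsketch}
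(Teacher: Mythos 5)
The paper does not actually prove this lemma: it is imported verbatim, with citation, as Theorem~C.3 of \citet{mei2023stochastic}, so there is no in-paper argument to compare against, and your attempt is in effect a blind re-proof of the cited theorem. Judged on its own terms, your reduction is correct and standard: setting $D_t \coloneq \E[M_t \mid \gF_{t-1}] - M_t$ gives a martingale difference sequence with $|D_t| \le 1$ and predictable variance process $V_n$; Bennett's moment-generating-function bound $\E[e^{\lambda D_t} \mid \gF_{t-1}] \le \exp(\phi(\lambda)\,\sigma_t^2)$ with $\phi(\lambda) = e^{\lambda} - \lambda - 1$ makes $L_n(\lambda)$ a nonnegative supermartingale, and Ville's maximal inequality yields the time-uniform fixed-$\lambda$ bound. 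This is indeed the family of techniques from which time-uniform Freedman-type bounds of the stated shape are obtained.

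The genuine gap is precisely the step you yourself flag: the passage from the fixed-$\lambda$ (or mixed) supermartingale to the specific boundary $6\sqrt{(V_n + \tfrac{4}{3})\log((V_n+1)/\delta)} + 2\log(1/\delta) + \tfrac{4}{3}\log 3$ is asserted, not derived. A Laplace mixture does not automatically produce these constants; the outcome depends delicately on the mixing density and the non-asymptotic lower bound on the integral, and generic mixtures yield boundaries with different corrections, so the claim that the stated bound appears ``exactly'' is unsupported. The constants here are characteristic of the stitching route you mention only in passing: apply Freedman's inequality, in the inverted form $x \le \sqrt{2v\log(1/\delta_k)} + \tfrac{2}{3}\log(1/\delta_k)$, on geometric epochs $V_n \in [v_k, v_{k+1})$ with summable levels $\delta_k$, whence the factor $\tfrac{4}{3} = 2 \cdot \tfrac{2}{3}$, the additive $\tfrac{4}{3}\log 3$, and the $+1$ and $+\tfrac{4}{3}$ offsets emerge from the epoch endpoints and the union bound. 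Separately, your handling of two-sidedness (``constants absorbing the factor of two'') needs verification: running each one-sided bound at level $\delta/2$ turns every $\log(1/\delta)$ into $\log(2/\delta)$, and you must check the stated constants leave room for that slack. Until this bookkeeping is actually executed, the proposal is a credible plan for a bound of this \emph{shape}, not a proof of the quantitative statement as written.
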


\begin{lemma}[{\citet[Lemma 4.3]{mei2023stochastic}}]\label{lemma:sgc_multi_arm}
\cref{alg:sto_spg} ensures that
\begin{equation*}
    \E\left[\normsq{\frac{d \dpd{\bar{\pi}_z, \hat{r}}}{d z}} \right] \leq \frac{8 \, R_{\max}^3 \, K^{3/2}}{\Delta^2} \, \norm{\frac{d \dpd{\bar{\pi}_z,r}}{d z}}
\end{equation*}
where $\Delta \coloneq \min_{i \neq j}\abs{r(i) - r(j)}$ and $\bar{\pi}_z \coloneq \mathrm{softmax}(z)$ for $z \in \R^K$.
\end{lemma}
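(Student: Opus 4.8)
The plan is to carry out the entire argument in logit space, reducing the claim to two matching estimates on the common scalar $1 - \normsq{\piz}$: an upper bound on the expected squared norm of the stochastic logit gradient, and a lower bound on the norm of the exact logit gradient. For the second estimate I would invoke the covariance identity of \cref{lemma:alternative_expression_covariance}. Note that unbiasedness plays no role here, since we only control a second moment against a first-power quantity.

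First I would compute the stochastic logit gradient explicitly. Writing the importance-weighted estimator as $\hat{r}(a) = \frac{\indicator{a = a'}}{\piz(a)}\,R$ for the sampled action $a' \sim \piz$ and its reward $R \sim P_{a'}$, a direct calculation shows that the coordinates of $\frac{d \dpd{\piz, \hat{r}}}{d z} = (\mathrm{diag}(\piz) - \piz\,\piz^\top)\,\hat{r}$ collapse to $(\indicator{i = a'} - \piz(i))\,R$. Squaring, taking the conditional expectation over $R$ and then over $a' \sim \piz$, and using $\E[R^2] \le R_{\max}^2$ together with the elementary simplex bound $(1 - \piz(a))^2 + \sum_{i \neq a}\piz(i)^2 \le 2(1 - \piz(a))^2 \le 2(1 - \piz(a))$, I would obtain $\E\normsq{\frac{d \dpd{\piz, \hat{r}}}{d z}} \le 2\,R_{\max}^2\,(1 - \normsq{\piz})$, since $\sum_a \piz(a)(1 - \piz(a)) = 1 - \normsq{\piz}$.

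Second, I would lower bound the exact gradient norm in terms of the same quantity. Feeding $x = y = r$ into \cref{lemma:alternative_expression_covariance} gives $\dpd{r, \frac{d \dpd{\piz, r}}{d z}} = \sum_{i < j}\piz(i)\,\piz(j)\,(r(i) - r(j))^2$, and bounding each squared gap below by $\Delta^2$ yields $\dpd{r, \frac{d \dpd{\piz, r}}{d z}} \ge \frac{\Delta^2}{2}(1 - \normsq{\piz})$. Pairing this with the Cauchy--Schwarz upper bound $\dpd{r, \frac{d \dpd{\piz, r}}{d z}} \le \norm{r}\,\norm{\frac{d \dpd{\piz, r}}{d z}} \le \sqrt{K}\,R_{\max}\,\norm{\frac{d \dpd{\piz, r}}{d z}}$ produces the key relation $1 - \normsq{\piz} \le \frac{2\sqrt{K}\,R_{\max}}{\Delta^2}\,\norm{\frac{d \dpd{\piz, r}}{d z}}$. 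Substituting this into the first estimate gives $\E\normsq{\frac{d \dpd{\piz, \hat{r}}}{d z}} \le \frac{4\sqrt{K}\,R_{\max}^3}{\Delta^2}\,\norm{\frac{d \dpd{\piz, r}}{d z}}$, which is stronger than the stated bound since $4\sqrt{K} \le 8\,K^{3/2}$ for all $K \ge 1$.

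I expect the main obstacle to be the lower bound of the second step. The essential insight is to take $r' = r$ in \cref{lemma:alternative_expression_covariance}, which turns the inner product into an exactly computable sum of squared reward gaps that the minimum gap $\Delta$ controls from below, and to recognize that this produces precisely the factor $1 - \normsq{\piz}$ that also governs the second moment of the stochastic gradient in the first step. Matching a squared-norm upper bound against a first-power lower bound would be impossible without routing both through this shared quantity; everything else reduces to the explicit coordinate computation of the importance-weighted gradient and elementary probability-simplex inequalities.
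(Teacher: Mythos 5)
Your argument is correct, and a point of order first: the paper never proves this statement itself --- it is imported verbatim as \citet[Lemma 4.3]{mei2023stochastic} and listed in the appendix of additional lemmas without proof --- so the relevant comparison is with the original tabular derivation. Checking your steps: the coordinates of the stochastic logit gradient do collapse to $(\indicator{i = a'} - \piz(i))\,R$ (exactly the computation the paper itself performs inside \cref{lemma:unbiased_gradient,lemma:linear_bandit_sg_bounded}); the bound $\sum_{i \neq a'} \piz(i)^2 \le (1-\piz(a'))^2$ and the identity $\sum_a \piz(a)(1-\piz(a)) = 1 - \normsq{\piz}$ give $\E\normsq{\tfrac{d\dpd{\piz,\hat r}}{dz}} \le 2R_{\max}^2(1-\normsq{\piz})$; taking $x=y=r$ in \cref{lemma:alternative_expression_covariance} yields $\dpd{r, \tfrac{d\dpd{\piz,r}}{dz}} = \sum_{i<j}\piz(i)\piz(j)(r(i)-r(j))^2 \ge \tfrac{\Delta^2}{2}(1-\normsq{\piz})$ via $\sum_{i<j}\piz(i)\piz(j) = \tfrac12(1-\normsq{\piz})$; and Cauchy--Schwarz with $\norm{r} \le \sqrt{K}\,R_{\max}$ closes the chain, giving the constant $4\sqrt{K}\,R_{\max}^3/\Delta^2 \le 8K^{3/2}R_{\max}^3/\Delta^2$. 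This is a genuinely different route from the original: \citet{mei2023stochastic} reach the gradient lower bound through coordinate-wise and norm-conversion estimates of the kind underlying their non-uniform \L ojasiewicz machinery, which is where the lossier $K^{3/2}$ factor in the imported constant comes from, whereas you test the gradient against the single direction $r$ and exploit the exactly computable quadratic form $r^\top(\mathrm{diag}(\piz)-\piz\piz^\top)r$. What your approach buys is a shorter proof, a strictly sharper constant, and the clean structural observation --- which you state explicitly and which is the real content of the lemma --- that both the second moment of the stochastic gradient and the true gradient norm are governed by the common scalar $1-\normsq{\piz}$, with unbiasedness playing no role. Nothing downstream is affected, since the paper's $\rho$ uses the weaker stated constant, which your bound implies. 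Two hygiene points worth making explicit in a final write-up: the inequality is vacuous unless $\Delta > 0$, so \cref{assumption:no_identical_arms} is implicitly in force, and $\E[R^2 \mid a'] \le R_{\max}^2$ uses that the reward distributions are supported on $[-R_{\max}, R_{\max}]$, both consistent with the paper's setup.
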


\begin{lemma}[{\citet[Lemma 5]{lu2024towards}}]
\label{lemma:bound_gtheta_zeta}
Assuming that $f$ is $L_1$-non-uniform smooth and the stochastic gradient is bounded, i.e. $\norm{\hgrad{\thetat}} \leq B$, \cref{alg:sto_spg} with $\etat \in (0, \frac{1}{L_1 \, B})$ ensures that
\begin{equation*}
    \abs*{f(\thetatt) - f(\thetat) - \dpd{\gradf{\thetat}, \thetatt - \thetat}} \leq \frac{1}{2} \,\frac{L_1 \, \norm{\gradf{\thetat}}}{1 - L_1 \, B \, \etat} \,  \normsq{\thetatt - \thetat},
\end{equation*}
where $f(\theta) \coloneq \dpd{\pitheta, r}$, $\tilde{f}(\theta) \coloneq \dpd{\pitheta, \hat{r}}$ and $\gradf{\theta} \coloneq X^\top (\mathrm{diag}(\pitheta) - \pitheta \, \pitheta^\top) r$.
\end{lemma}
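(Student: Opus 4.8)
The plan is to obtain the stated bound from the \emph{Hessian} form of non-uniform smoothness, combined with a self-bounding control of the gradient norm along the update segment. As made explicit in \cref{lemma:lpg_ns}, the $L_1$-non-uniform smoothness of $f(\theta) = \dpd{\pitheta, r}$ is witnessed by the pointwise spectral bound $\norm{\hessf{\theta}} \le L_1 \, \norm{\gradf{\theta}}$. The subtlety is that a second-order expansion of $f$ along the step naturally evaluates the Hessian at an intermediate point $\theta_\zeta$, so the error term carries $\norm{\gradf{\theta_\zeta}}$ rather than the desired $\norm{\gradf{\thetat}}$; the factor $\frac{1}{1 - L_1 B \etat}$ is precisely the price of replacing one by the other.

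First I would apply Taylor's theorem to $s \mapsto f(\thetat + s(\thetatt - \thetat))$, exactly as in \cref{lemma:smoothness_expected_reward_log_linear_policy}: there is a $\zeta \in [0,1]$ and a point $\theta_\zeta \coloneq \thetat + \zeta \, (\thetatt - \thetat)$ such that
\begin{equation*}
f(\thetatt) - f(\thetat) - \dpd{\gradf{\thetat}, \thetatt - \thetat} = \frac{1}{2} \, (\thetatt - \thetat)^\top \hessf{\theta_\zeta} \, (\thetatt - \thetat).
\end{equation*}
Bounding this quadratic form by $\frac{1}{2}\norm{\hessf{\theta_\zeta}} \, \normsq{\thetatt - \thetat}$ and invoking the Hessian bound yields
\begin{equation*}
\abs*{f(\thetatt) - f(\thetat) - \dpd{\gradf{\thetat}, \thetatt - \thetat}} \le \frac{L_1}{2} \, \norm{\gradf{\theta_\zeta}} \, \normsq{\thetatt - \thetat}.
\end{equation*}

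The crux is then to show $\norm{\gradf{\theta_\zeta}} \le \norm{\gradf{\thetat}} / (1 - L_1 B \etat)$. I would write the gradient increment via the fundamental theorem of calculus, $\gradf{\theta_\zeta} - \gradf{\thetat} = \int_0^\zeta \hessf{\thetat + s(\thetatt - \thetat)} \, (\thetatt - \thetat) \, ds$, take norms, and again apply the Hessian bound; setting $M \coloneq \sup_{s \in [0,1]} \norm{\gradf{\thetat + s(\thetatt - \thetat)}}$ this produces the self-referential inequality
\begin{equation*}
M \le \norm{\gradf{\thetat}} + L_1 \, \norm{\thetatt - \thetat} \, M.
\end{equation*}
Since the update is $\thetatt = \thetat + \etat \, \hgrad{\thetat}$ and $\norm{\hgrad{\thetat}} \le B$, we have $\norm{\thetatt - \thetat} \le \etat \, B$, so $L_1 \, \norm{\thetatt - \thetat} \le L_1 B \etat < 1$ under the step-size assumption. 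Rearranging gives $M \le \norm{\gradf{\thetat}} / (1 - L_1 B \etat)$, hence the same bound for $\norm{\gradf{\theta_\zeta}}$; substituting into the display above finishes the proof.

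The main obstacle is closing the self-bounding inequality for $M$ rather than bounding $\norm{\gradf{\theta_\zeta}}$ crudely: this is exactly the step that requires the stochastic-gradient bound $\norm{\hgrad{\thetat}} \le B$ (available here via \cref{lemma:linear_bandit_sg_bounded}) and that forces $\etat < 1/(L_1 B)$ so that the geometric factor $1 - L_1 B \etat$ remains strictly positive. Everything else reduces to a routine Taylor expansion together with the already-established spectral bound on the Hessian.
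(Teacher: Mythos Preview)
Your argument is correct and is precisely the standard proof of this result. The paper itself does not supply a proof of \cref{lemma:bound_gtheta_zeta}; it is quoted from \citet[Lemma~5]{lu2024towards} and placed in the auxiliary-lemma appendix without argument. Your three steps---Taylor's theorem at an intermediate point $\theta_\zeta$, the pointwise Hessian bound $\norm{\hessf{\theta}} \le L_1 \norm{\gradf{\theta}}$, and the self-bounding Gr\"onwall-type inequality $M \le \norm{\gradf{\thetat}} + L_1 \norm{\thetatt - \thetat}\, M$ solved under $\etat < 1/(L_1 B)$---are exactly the ingredients of that reference's proof, so there is nothing to contrast.

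One clarification worth recording: the paper's \emph{definition} of ``$L$-non-uniform smooth'' in \cref{appendix:definitions} is the function-value inequality, under which the conclusion of \cref{lemma:bound_gtheta_zeta} would be immediate (indeed with the stronger factor $1$ in place of $1/(1 - L_1 B \etat)$). As you correctly infer from \cref{lemma:lpg_ns} and from how the lemma is applied in the proof of \cref{lemma:monotonicity_for_stochastic_bandits}, the operative hypothesis is the \emph{Hessian} form $\norm{\hessf{\theta}} \le L_1 \norm{\gradf{\theta}}$ (in fact \cref{lemma:lpg_ns} bounds by $L_1 \norm{\nabla J(z)}$, and the paper silently uses that variant when invoking the lemma). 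Your reading resolves this ambiguity in the intended way.
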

\section{Experiments}
\label{appendix:experiments}

\subsection{Exact Setting}
\label{subsec:experiments_deterministic_linear_bandits}
\begin{figure}[H]
  \centering
  \begin{subfigure}[b]{0.42\textwidth}
    \includegraphics[width=\linewidth]{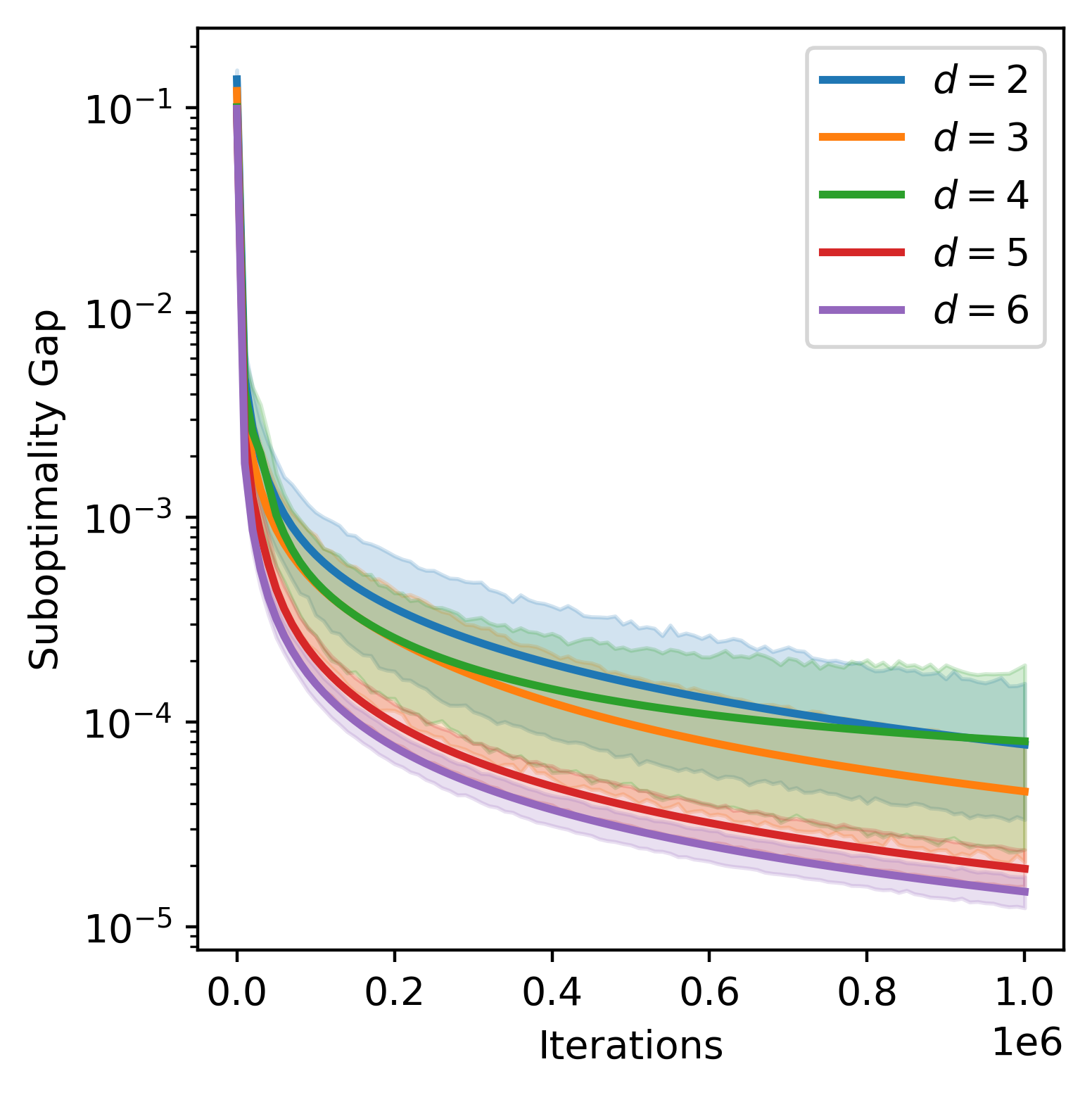}
    \caption{$K=3$}
    \label{fig:pg_threearm}
  \end{subfigure}
  \hspace{0.05\textwidth}
  \begin{subfigure}[b]{0.42\textwidth}
    \includegraphics[width=\linewidth]{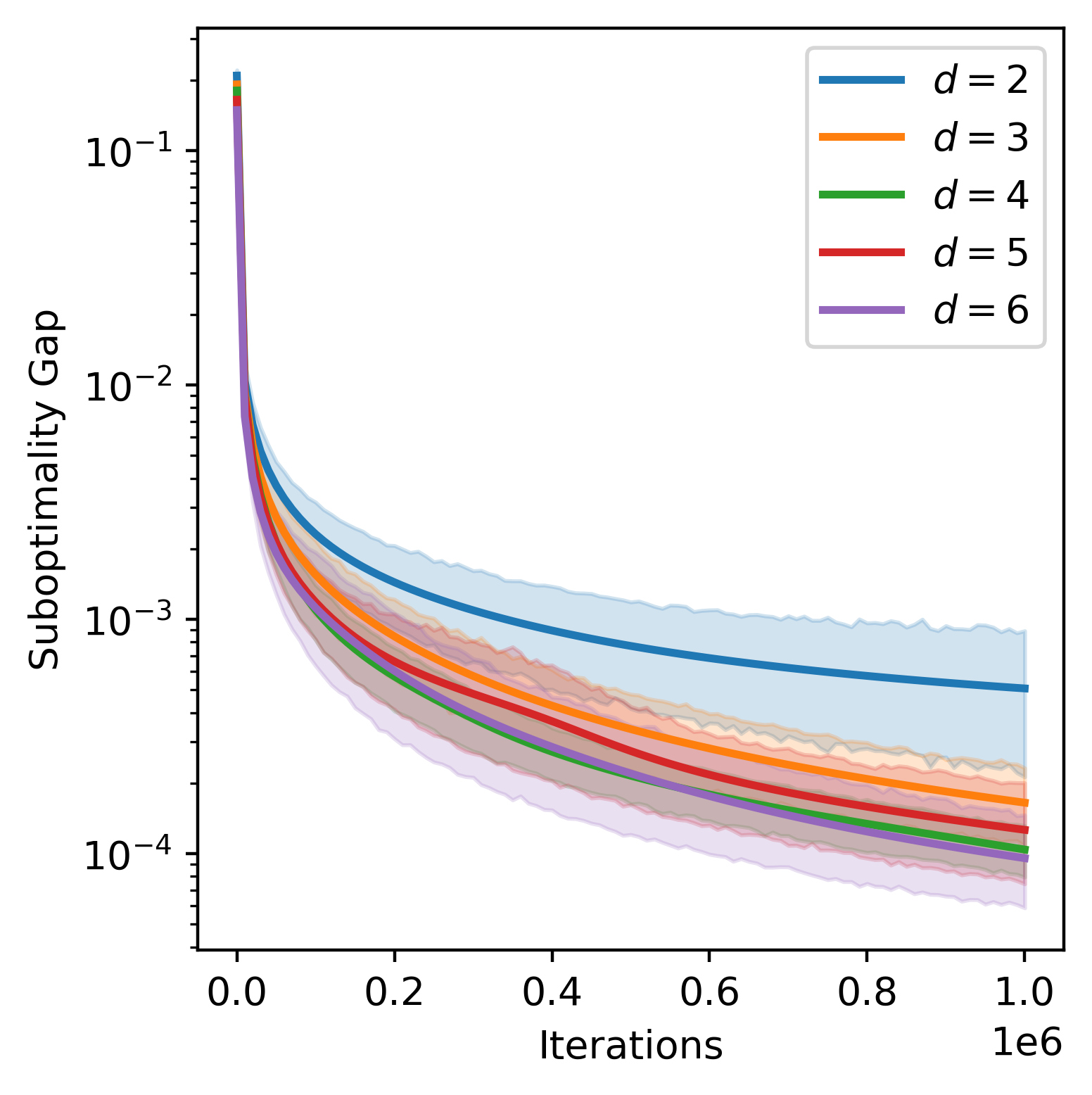}
    \caption{$K=6$}
    \label{fig:pg_karm}
  \end{subfigure}
  \caption{
   $\LinSPG$ in the exact setting. The learning rate is set by \cref{eq:step_size_for_deterministic_bandits}. Each experiment is run on 50 randomly generated environments for $10^6$ iterations. For each environment, the features $X$ and the reward vector $r$ are randomly generated such that \cref{assumption:reward_ordering_preservation} is satisfied, and the features satisfy \cref{assumption:feature_conditions_for_three_armed_linear_bandits} when (a) $K=3$ and satisfy \cref{assumption:general_feature_conditions} when (b) $K=6$. $\LinSPG$ converges to the optimal policy for different feature dimensions $d$, confirming the results of \cref{theorem:three_armed_deterministic_linear_bandits,theorem:deterministic_linear_bandits}.}
\end{figure}

\newpage

\subsection{Stochastic Setting}
\label{subsec:experiments_stochastic_linear_bandits}
\begin{figure}[H]
    \centering
    \includegraphics[width=0.84\linewidth]{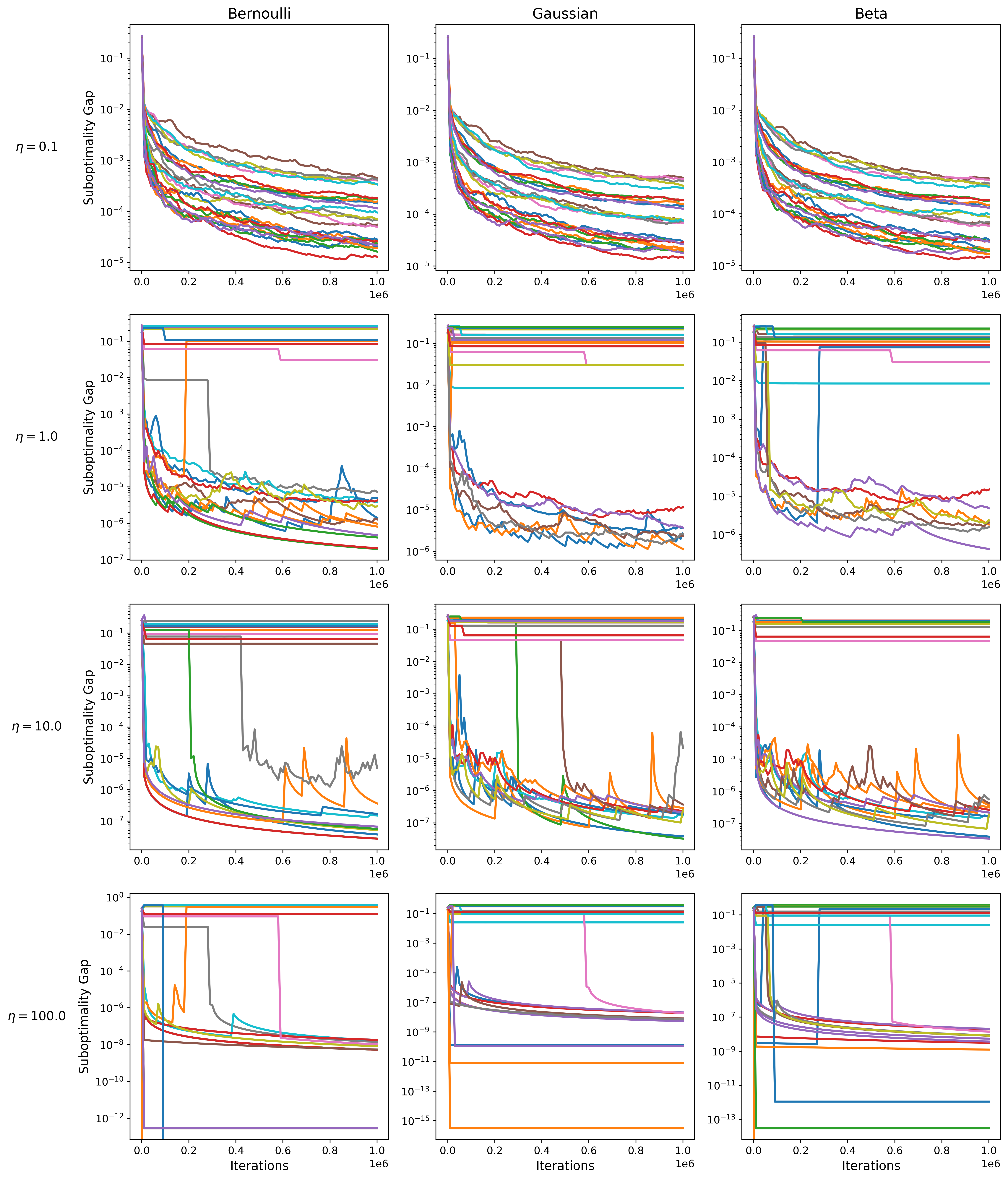}
    \caption{
    $\LinSPG$ in the stochastic setting ($K = 6$, $d = 3$) with different learning rates. We run the experiments $5$ times on each of the $5$ randomly generated environments ($25$ runs in total) for $10^6$ iterations. Each environment's underlying reward distribution is either a Bernoulli, Gaussian, or Beta distribution with a fixed mean reward vector $r \in [0, 1]^K$. For each environment, the features $X$ and the mean reward vector $r$ are randomly generated such that \cref{assumption:no_identical_arms,assumption:general_feature_conditions} are satisfied. As predicted in~\cref{theorem:stochastic_linear_bandits,theorem:stochastic_linear_bandits_with_arbitrary_learning_rates}, $\LinSPG$ converges to zero suboptimality within $10^6$ iterations for most of the runs, regardless of what learning rate is used.}
    \label{fig:arbitrary_learning_rates}
\end{figure}
\end{document}